\newcommand{\cmark}{\ding{51}}
\newcommand{\xmark}{\ding{55}}
\newtheorem{assumption}{Assumption}
\newtheorem{proposition}{Proposition}
\newtheorem{corollary}{Corollary}
\newtheorem{theorem}{Theorem}
\newtheorem{lemma}{Lemma}
\newcommand{\Var}{\mathbb{V}\mathrm{ar}}
\newcommand{\R}{\mathbb{R}}
\newcommand{\E}{\mathbb{E}}
\newcommand{\F}{\mathcal{F}}
\newcommand{\expec}[1]{\mathbb{E}\left[#1\right]}
\newcommand{\prob}[1]{\mathbb{P}\left\{#1\right\}}
\newcommand{\indi}[1]{\mathds{1}\left\{#1\right\}}
\newcommand{\argmin}{\operatornamewithlimits{argmin}}
\newcommand{\argmax}{\operatornamewithlimits{argmax}}
\newcommand{\red}[1]{#1}
\newcommand{\A}{\mathcal{A}}
\newcommand{\X}{\mathcal{X}}
\newcommand{\transp}{\mathsf{T}}
\renewcommand{\Re}{\mathbb{R}}
\newcommand{\algo}{\textsc{SOLID}\xspace}
\newcommand{\wb}[1]{\overline{#1}}
\newcommand{\wt}[1]{\widetilde{#1}}
\newcommand{\wh}[1]{\widehat{#1}}
\newcommand{\sumpull}{z^\star}
\renewcommand{\@algocf@capt@plain}{above}
\newcolumntype{L}[1]{>{\raggedright\let\newline\\\arraybackslash\hspace{0pt}}m{#1}}
\newcolumntype{C}[1]{>{\centering\let\newline\\\arraybackslash\hspace{0pt}}m{#1}}
\newcolumntype{R}[1]{>{\raggedleft\let\newline\\\arraybackslash\hspace{0pt}}m{#1}}
\definecolor{pastelgreen}{HTML}{03c03c}
\title{An Asymptotically Optimal Primal-Dual Incremental Algorithm for Contextual Linear Bandits}
\author{%
  Andrea Tirinzoni\thanks{Work done while at Facebook.}\\
  Politecnico di Milano\\
  \texttt{andrea.tirinzoni@polimi.it} \\
  \And
  Matteo Pirotta \\
  Facebook AI Research \\
  \texttt{pirotta@fb.com} \\
  \AND
  Marcello Restelli \\
  Politecnico di Milano \\
  \texttt{marcello.restelli@polimi.it} \\
  \And
  Alessandro Lazaric \\
  Facebook AI Research \\
  \texttt{lazaric@fb.com} \\
}
\begin{document}

\maketitle

\doparttoc 
\faketableofcontents 



\begin{abstract}
In the \red{contextual linear bandit} setting, algorithms built on the optimism principle fail to exploit the structure of the problem and have been shown to be asymptotically suboptimal. In this paper, we follow recent approaches of deriving asymptotically optimal algorithms from problem-dependent regret lower bounds and we introduce a novel algorithm improving over the state-of-the-art along multiple dimensions. We build on a reformulation of the lower bound, where context distribution and exploration policy are decoupled, and we obtain an algorithm robust to unbalanced context distributions. Then, using an incremental primal-dual approach to solve the Lagrangian relaxation of the lower bound, we obtain a scalable and computationally efficient algorithm. Finally, we remove forced exploration and build on confidence intervals of the optimization problem to encourage a minimum level of exploration that is better adapted to the problem structure. We demonstrate the \emph{asymptotic optimality} of our algorithm, while providing both problem-dependent and worst-case finite-time regret guarantees. Our bounds scale with the logarithm of the number of arms, thus avoiding the linear dependence common in all related prior works. \red{Notably, we establish \emph{minimax optimality} for any learning horizon in the special case of non-contextual linear bandits.} Finally, we verify that our algorithm obtains better empirical performance than state-of-the-art baselines.
\end{abstract}

\section{Introduction}

We study the contextual linear bandit (CLB) setting~\cite[e.g.,][]{lattimore2020bandit}, where at each time step $t$ the learner observes a context $X_t$ drawn from a context distribution $\rho$, pulls an arm $A_t$, and receives a reward $Y_t$ drawn from a distribution whose expected value is a linear combination between \red{$d$-dimensional} features $\phi(X_t,A_t)$ describing context and arm, and an unknown parameter $\theta^\star$. The objective of the learner is to maximize the reward over time, that is to minimize the cumulative regret w.r.t.\ an optimal strategy that selects the best arm in each context. This setting formalizes a wide range of problems such as online recommendation systems, clinical trials, dialogue systems, and many others~\cite{bouneffouf2019survey}. Popular algorithmic principles, such as optimism-in-face-of-uncertainty and Thompson sampling \citep{thompson1933likelihood}, have been applied to this setting leading to algorithms such as \textsc{OFUL}~\cite{abbasi2011improved} and \textsc{LinTS}~\cite{agrawal2013thompson,abeille2017linear} with strong finite-time worst-case regret guarantees. Nonetheless, Lattimore \& Szepesvari~\cite{lattimore2017end} recently showed that these algorithms are not asymptotically optimal (in a problem-dependent sense) as they fail to adapt to the structure of the problem at hand. In fact, in the CLB setting, the values of different arms are tightly connected through the linear assumption and a possibly suboptimal arm may provide a large amount of information about $\theta^\star$ and thus the optimal arm. Optimistic algorithms naturally discard suboptimal arms and thus may miss the chance to acquire information about $\theta^\star$ and significantly reduce the regret.

Early attempts to exploit general structures in MAB either adapted UCB-based strategies~\cite{azar2013sequential,lattimore2014bounded} or focused on different criteria, such as regret to information ratio~\cite{russo2014learning}. While these approaches succeed in improving the finite-time performance of optimism-based algorithms, they still do not achieve asymptotic optimality. An alternative approach to exploit the problem structure was introduced in~\cite{lattimore2017end} for (non-contextual) linear bandits. Inspired by approaches for regret minimization~\red{\cite{agrawal1988asymptotically,burnetas1996optimal,graves1997asymptotically} }and best-arm identification~\cite{garivier2016optimal} in MAB, Lattimore \& Szepesvari~\cite{lattimore2017end} proposed to compute an exploration strategy by solving the (estimated) optimization problem characterizing the asymptotic regret lower bound for linear bandits. While the resulting algorithm matches the asymptotic logarithmic lower bound with tight leading constant, it performs rather poorly in practice. Combes et al.~\cite{combes2017minimal} followed a similar approach and proposed \textsc{OSSB}, an asymptotically optimal algorithm for bandit problems with general structure (including, e.g., linear, Lipschitz, unimodal). Unfortunately, once instantiated for the linear bandit case, \textsc{OSSB} suffers from poor empirical performance due to the large dependency on the number of arms.
Recently, Hao et al.~\cite{hao2019adaptive} introduced \textsc{OAM}, an asymptotically optimal algorithm for the CLB setting. While \textsc{OAM} effectively exploits the linear structure and outperforms other bandit algorithms, it suffers from major limitations. From an algorithmic perspective, at each exploration step, \textsc{OAM} requires solving the optimization problem of the regret lower bound, which can hardly scale beyond problems with a handful of contexts and arms. Furthermore, \textsc{OAM} implements a forcing exploration strategy that often leads to long periods of linear regret and introduces a linear dependence on the number of arms $|\A|$. Finally, the regret analysis reveals a critical dependence on the inverse of the smallest probability of a context (i.e., $\min_x \rho(x)$), thus suggesting that \textsc{OAM} may suffer from poor finite-time performance in problems with unbalanced context distributions.\footnote{Interestingly, Hao et al.~\cite{hao2019adaptive} explicitly mention in their conclusions the importance of properly managing the context distribution to achieve satisfactory finite-time performance.
} Degenne et al.~\cite{degenne2020structure} recently introduced \textsc{SPL}, which significantly improves over previous algorithms for MAB problems with general structures. Inspired by algorithms for best-arm identification~\cite{degenne2019non}, Degenne et al. reformulate the optimization problem in the lower bound as a saddle-point problem and show how to leverage online learning methods to avoid recomputing the exploration strategy from scratch at each step. Furthermore, \textsc{SPL} removes any form of forced exploration by introducing optimism into the estimated optimization problem. As a result, \textsc{SPL} is computationally efficient and achieves better empirical performance in problems with general structures.

\textbf{Contributions.} In this paper, we follow similar steps as in~\cite{degenne2020structure} and introduce \algo, a novel algorithm for the CLB setting.
 Our main contributions can be summarized as follows.
\begin{itemize}[leftmargin=.3in,topsep=0pt,itemsep=2pt,partopsep=0pt, parsep=0pt]
	\item We first reformulate the optimization problem associated with the lower bound for contextual linear bandits~\cite{combes2017minimal,ok2018exploration,hao2019adaptive} by introducing an additional constraint to guarantee bounded solutions and by explicitly decoupling the context distribution and the exploration policy. While we bound the bias introduced by the constraint, we also illustrate how the resulting exploration policy is better adapted to unbalanced context distributions.
	\item Leveraging the Lagrangian dual formulation associated with the constrained lower-bound optimization problem, we derive \algo, an efficient primal-dual learning algorithm that  incrementally updates the exploration strategy at each time step. Furthermore, we replace forced exploration with an optimistic version of the optimization problem by specifically leveraging the linear structure of the problem. Finally, \algo does not require any explicit tracking step and it samples directly from the current exploration strategy. 
	\item \red{We establish the \emph{asymptotic optimality} of \algo, while deriving a finite-time problem-dependent regret bound that scales only with $\log |\A|$ and without any dependence on $\min_x \rho(x)$.
	To this purpose, we introduce a new concentration bound for regularized least-squares that scales as $\mathcal{O}(\log t + d\log\log t)$, hence removing the $d\log t$ dependence of the bound in \citep{abbasi2011improved}.
	{ Moreover, we establish a $\wt{\mathcal{O}}((\sqrt{d}+|\X|)\sqrt{dn})$ worst-case regret bound for any CLB problem with $|\X|$ contexts, $d$ features, and horizon $n$. Notably, this is implies that \algo is the first algorithm to be simultaneously \emph{asymptotically optimal} and \emph{minimax optimal} when $|\X| \leq \sqrt{d}$ (e.g., in non-contextual linear bandits, when $|\X| = 1$).}}
	\item We empirically compare to a number of state-of-the-art methods for contextual linear bandits and show how \algo is more computationally efficient and often has the smallest regret.
\end{itemize}

A thorough comparison between \algo and related work is reported in App.~\ref{app:related.work}.

\section{Preliminaries}

We consider the contextual linear bandit setting. Let $\X$ be the set of contexts and $\A$ be the set of arms with cardinality $|\X| < \infty$ and $|\A| < \infty$, respectively. Each context-arm pair is embedded into $\Re^d$ through a feature map $\phi: \X\times\A\rightarrow \Re^d$. For any reward model $\theta\in\Re^d$, we denote by $\mu_{\theta}(x,a) = \phi(x,a)^\transp \theta$ the expected reward for each context-arm pair.
Let $a^\star_{\theta}(x) := \argmax_{a\in\A}\mu_{\theta}(x,a)$ and $\mu^\star_{\theta}(x) := \max_{a\in\A}\mu_{\theta}(x,a)$ denote the optimal arm and its value for context $x$ and parameter $\theta$. We define the sub-optimality gap of arm $a$ for context $x$ in model $\theta$ as $\Delta_\theta(x,a) := \mu^\star_{\theta}(x) - \mu_\theta(x,a)$. We assume that every time arm $a$ is selected in context $x$, a random observation $Y = \phi(x,a)^\transp\theta + \xi$ is generated, where $\xi \sim \mathcal{N}(0,\sigma^2)$ is a Gaussian noise.\footnote{This assumption can be relaxed by considering sub-Gaussian rewards.
} Given two parameters $\theta,\theta' \in \Re^d$, we define
$d_{x,a}(\theta, \theta') := \frac{1}{2\sigma^2} (\mu_\theta(x,a) - \mu_{\theta'}(x,a))^2,$
which corresponds to the Kullback-Leibler divergence between the Gaussian reward distributions of the two models in context $x$ and arm $a$.

At each time step $t \in \mathbb{N}$, the learner observes a context $X_t\in\X$ drawn i.i.d.\ from a context distribution $\rho$, it pulls an arm $A_t\in\A$, and it receives a reward $Y_{t} = \phi(X_t,A_t)^\transp\theta^\star + \xi_t$, where $\theta^\star \in \Re^d$ is unknown to the learner.
A bandit strategy $\pi := \{\pi_t\}_{t\geq 1}$ chooses the arm $A_t$ to pull at time $t$ as a measurable function $\pi_t(H_{t-1}, X_t)$ of the current context $X_t$ and of the past history $H_{t-1} := (X_1, Y_1, \dots, X_{t-1}, Y_{t-1})$. The objective is to define a strategy that minimizes the expected cumulative regret over $n$ steps,
%
$\E_{\xi,\rho}^\pi \big[ R_n(\theta) \big]:= 
\E_{\xi,\rho}^\pi \left[ \sum_{t=1}^n \left( 
	\mu_{\theta}^\star(X_t) - \mu_{\theta}(X_t, A_t)
\right) \right],$
%
where $\E_{\xi,\rho}^\pi$ denotes the expectation w.r.t.\ the randomness of contexts, the noise of the rewards, and any randomization in the algorithm. We denote by $\theta^\star$ the reward model of the bandit problem at hand, and without loss of generality we rely on the following regularity assumptions.

\begin{assumption}\label{asm:regularity}
        The realizable parameters belong to a compact subset $\Theta$ of $\R^d$ such that $\|\theta\|_2 \leq B$ for all $\theta\in\Theta$. The features are bounded, i.e., $\| \phi(x,a) \|_2 \leq L$ for all $x\in\X,a\in\A$. The context distribution is supported over the whole context set, i.e., $\rho(x) \geq \rho_{\min} > 0$ for all $x\in\X$. Finally, w.l.o.g. we assume $\theta^\star$ has a unique optimal arm in each context~\cite[see e.g.,][]{combes2017minimal,hao2019adaptive}.
\end{assumption}


\textbf{Regularized least-squares estimator.} 
We introduce the regularized least-square estimate of $\theta^\star$ using $t$ samples as $\wh{\theta}_t := \wb{V}_t^{-1}U_t$, where $\wb V_t := \sum_{s=1}^t \phi(X_s, A_s)\phi(X_s,A_s)^\transp + \nu I$, with $\nu \geq \max\{L^2,1\}$ and $I$ the $d\times d$ identity matrix, and $U_t := \sum_{s=1}^t \phi(X_s,A_s) Y_s$. The estimator $\wh\theta_t$ satisfies the following concentration inequality (see App.~\ref{app:conf.set} for the proof and exact formulation).

\begin{theorem}\label{th:conf-theta}
	Let $\delta\in(0,1)$, $n\geq3$, and $\wh\theta_t$ be a regularized least-square estimator obtained using $t\in[n]$ samples collected using an arbitrary bandit strategy $\pi := \{\pi_t\}_{t\geq 1}$. Then,
	\begin{align*}
	\prob{\exists t\in[n] : \|\wh{\theta}_t - \theta^\star\|_{\wb{V}_t} \geq \sqrt{c_{n,\delta}}} \leq \delta,
	\end{align*}
	where $c_{n,\delta}$ is of order $\mathcal{O}(\log(1/\delta) + d\log\log n)$.
\end{theorem}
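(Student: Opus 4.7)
The target is to improve the classical self-normalized bound of~\cite{abbasi2011improved}, which scales as $\mathcal O(\log(1/\delta)+d\log n)$, to the announced $\mathcal O(\log(1/\delta)+d\log\log n)$ rate. Writing $S_t := \sum_{s=1}^{t}\phi(X_s,A_s)\xi_s$, the regularized normal equations give $\wh{\theta}_t-\theta^\star=\wb V_t^{-1}(S_t-\nu\theta^\star)$, so by the triangle inequality
\[
\|\wh{\theta}_t-\theta^\star\|_{\wb V_t}\;\le\;\|S_t\|_{\wb V_t^{-1}}+\sqrt\nu\,\|\theta^\star\|_2\;\le\;\|S_t\|_{\wb V_t^{-1}}+\sqrt{\nu}\,B.
\]
Under Assumption~\ref{asm:regularity}, the deterministic bias $\sqrt{\nu}B$ contributes only an $\mathcal O(1)$ term to $c_{n,\delta}$, and the proof reduces to a time-uniform upper bound on $\|S_t\|_{\wb V_t^{-1}}^{2}$.

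The standard starting point is the exponential process
\[
M_t(\lambda)\;=\;\exp\!\Big(\tfrac{1}{\sigma^{2}}\lambda^{\transp}S_t-\tfrac{1}{2\sigma^{2}}\|\lambda\|_{\wb V_t-\nu I}^{2}\Big),
\]
which by Gaussianity of the noise is a nonnegative supermartingale for every deterministic $\lambda\in\R^{d}$. Integrating $M_t$ against a single Gaussian prior $\mathcal N(0,\nu I)$ and applying Ville's inequality reproduces the classical bound with a $\tfrac{1}{2}\log\det(I+\wb V_t/\nu)=\Theta(d\log t)$ log-determinant term. To bring this down to $d\log\log n$, the plan is to integrate instead against a \emph{mixture of Gaussian priors at doubling scales}, $h(\lambda)=\sum_{j\ge 0}w_j\,\mathcal N(\lambda;0,2^{j}\nu I)$ with weights $w_j = 6/(\pi^{2}(j+1)^{2})$ summing to one. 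This is the matrix analog of the ``stitched'' boundaries used in LIL-type confidence sequences: for any realization of $\wb V_t$, the integrated mixture is lower-bounded by the single summand $j^\star$ whose scale $2^{j^\star}\nu$ matches $\lambda_{\max}(\wb V_t)$; that summand then contributes a log-determinant that is only $\mathcal O(d)$ additive (all eigenvalues of $\wb V_t/(2^{j^\star}\nu)$ are $\mathcal O(1)$, so $\tfrac12\log\det(I+\wb V_t/(2^{j^\star}\nu))\le d/2$), while the union over the discrete set of scales costs at most $2\log(j^\star+1)=\mathcal O(\log\log(\lambda_{\max}(\wb V_t)/\nu))$ through the prior weights.

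Combining via Ville's inequality and using the deterministic upper bound $\lambda_{\max}(\wb V_t)\le\nu+tL^{2}\le\nu+nL^{2}$, one obtains, with probability at least $1-\delta$ and simultaneously for all $t\in[n]$,
\[
\|S_t\|_{\wb V_t^{-1}}^{2}\;\le\;c\,\sigma^{2}\bigl(\log(1/\delta)+d+\log\log(e+nL^{2}/\nu)\bigr),
\]
which together with the regularization-bias term gives $c_{n,\delta}=\mathcal O(\log(1/\delta)+d\log\log n)$, as claimed. The main technical obstacle will be the mixture integration in the matrix case: one must carry out the Gaussian integral for each $j$ in closed form and then lower-bound the $j^\star$-th summand by a quantity whose only dimension dependence is the additive $d$ from the log-determinant of a well-conditioned matrix---this is exactly the step that kills the multiplicative $\log n$ factor present in~\cite{abbasi2011improved}. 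The remaining ingredients (supermartingale property of $M_t(\lambda)$, Ville's inequality, the union over the countable prior absorbed into $\log(1/\delta)$ up to the $\log\log$ correction, and the triangle-inequality treatment of the regularization bias) are essentially routine.
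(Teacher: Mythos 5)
Your reduction to bounding $\|S_t\|_{\wb V_t^{-1}}$ after peeling off the $\sqrt{\nu}B$ regularization bias is correct and matches the first step of the paper's argument. The gap is in the mixture step, and it is not the technicality you flag ("carrying out the Gaussian integral in the matrix case"): the integral is routine, but it produces a bound on the \emph{wrong} self-normalized quantity. Integrating the supermartingale against the prior $\mathcal N(0,\Sigma_j)$ with $\Sigma_j\propto (2^{j}\nu)^{-1}I$ gives in closed form
\[
\wb M_t^{(j)} \;=\; \det\!\big(I + V_t/(2^{j}\nu)\big)^{-1/2}\exp\!\Big(\tfrac{1}{2\sigma^{2}}\,S_t^{\transp}\big(V_t+2^{j}\nu I\big)^{-1}S_t\Big),
\]
so lower-bounding the mixture by its $j^\star$-th summand and applying Ville's inequality controls $\|S_t\|^{2}_{(V_t+2^{j^\star}\nu I)^{-1}}$, not $\|S_t\|^{2}_{\wb V_t^{-1}}=\|S_t\|^{2}_{(V_t+\nu I)^{-1}}$. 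Choosing $2^{j^\star}\nu\asymp\lambda_{\max}(\wb V_t)$ does make the log-determinant $O(d)$, but in any direction where $\wb V_t$ has an eigenvalue of order $\nu$ --- the generic situation in a bandit, where exploration is uneven across directions --- the matrix $(V_t+2^{j^\star}\nu I)^{-1}$ is smaller than $\wb V_t^{-1}$ by a factor of $2^{j^\star}\nu/\nu$, which can be as large as $nL^{2}/\nu$. The best generic comparison, $(V_t+cI)^{-1}\succeq(\nu/c)(V_t+\nu I)^{-1}$ for $c\ge\nu$, reinstates a multiplicative factor $c/\nu=O(n)$ and destroys the bound. A one-parameter family of isotropic priors cannot simultaneously match all $d$ eigenvalues of $\wb V_t$; that mismatch is exactly what the $\log\det$ term in \cite{abbasi2011improved} prices, and the doubling-scale mixture removes it only for the norm weighted at the chosen scale.

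The paper avoids this by never forming a vector-valued mixture. It writes $\|\wh\theta_t-\theta^\star\|_{\wb V_t}=\max_{\|z\|_2\le 1}(\wh\theta_t-\theta^\star)^{\transp}\wb V_t^{1/2}z$, covers the unit ball with an $\epsilon_1$-net of log-cardinality $O(d\log(1/\epsilon_1))$, and for each \emph{fixed} direction applies a scalar peeling (stitching) bound over the predictable variance --- your doubling-scale idea does work there, precisely because the variance process is then one-dimensional --- at a cost of $\log(1/\delta)+\log\log n$ per direction. Setting $\epsilon_1=1/\log n$ makes the net bias a vanishing multiplicative factor, and the $d\log\log n$ term arises from the cardinality of the net rather than from a determinant. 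To rescue a mixture-based proof you would need priors adapted to the random eigenstructure of $\wb V_t$ (e.g.\ a grid of per-coordinate scales combined with a cover of rotations), which essentially collapses back into the covering argument.
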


For the usual choice $\delta = 1/n$, $c_{n,1/n}$ is of order $\mathcal{O}(\log n + d\log\log n)$, which illustrates how the dependency on $d$ is on a lower-order term w.r.t.\ $n$ (as opposed to the well-known concentration bound derived in \citep{abbasi2011improved}). This result is the counterpart of~\citep[][Thm. 8]{lattimore2017end} for the concentration on the reward parameter estimation error instead of the prediction error and we believe it is of independent interest.

\section{Lower Bound}\label{sec:lower.bound}

We recall the asymptotic lower bound for multi-armed bandit problems with structure from~\cite{lai1985asymptotically,combes2017minimal,ok2018exploration}. We say that a bandit strategy $\pi$ is \textit{uniformly good} if $\E_{\xi,\rho}^\pi \big[ R_n \big] = o(n^\alpha)$ for any $\alpha>0$ and any contextual linear bandit problem satisfying Asm.~\ref{asm:regularity}.

\begin{proposition}\label{p:lower.bound}
Let $\pi := \{\pi_t\}_{t\geq 1}$ by a \textit{uniformly good} bandit strategy then,
\begin{equation}\label{eq:regret.lower.bound}
\liminf_{n \rightarrow \infty}\frac{\E_{\xi,\rho}^\pi \big[ R_n(\theta^\star) \big]}{\log(n)} \geq v^\star(\theta^\star),
\end{equation} 
where $v^\star(\theta^\star)$ is the value of the optimization problem
	\begin{equation}\label{eq:optim-lb}\tag{P}
	\begin{aligned}
    &\underset{\eta(x,a) \geq 0}{\inf}&& \sum_{x\in\X}\sum_{a\in\A}\eta(x,a)\Delta_{\theta^\star}(x,a)
    \quad \mathrm{s.t.} \quad
    \inf_{\theta' \in \Theta_{\mathrm{alt}}}\sum_{x\in\X}\sum_{a\in\A}\eta(x,a)d_{x,a}(\theta^\star,\theta') \geq 1,
	\end{aligned}
	\end{equation}
where $\Theta_{\mathrm{alt}} := \{ \theta' \in \Theta\ |\ \exists x\in\X,\ a^\star_{\theta^\star}(x) \neq a^\star_{\theta'}(x) \}$ is the set of alternative reward parameters such that the optimal arm changes for at least a context $x$.\footnote{\red{The infimum over this set can be computed in closed-form when the alternative parameters are allowed to lie in the whole $\R^d$ (see App. \ref{app:implementation-details}). When these parameters are forced to have bounded $\ell_2$-norm, the infimum has no closed-form expression, though its computation reduces to a simple convex optimization problem (see \cite{degenne2020gamification}).}}
\end{proposition}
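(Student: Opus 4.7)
The plan is to follow the classical change-of-measure approach of Lai and Robbins~\cite{lai1985asymptotically} as extended to structured bandits by Graves and Lai~\cite{graves1997asymptotically} and subsequently refined in the contextual setting by~\cite{combes2017minimal,ok2018exploration,hao2019adaptive}. The core idea is that, in order to statistically distinguish $\theta^\star$ from any alternative model $\theta' \in \Theta_{\mathrm{alt}}$, a uniformly good strategy must accumulate a minimum amount of KL-information, and this information constraint translates exactly into the feasibility set of~\eqref{eq:optim-lb}.

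Concretely, let $N_n(x,a) = \sum_{t=1}^n \mathds{1}\{X_t = x,\, A_t = a\}$. Since the contexts $X_t$ are i.i.d.\ from $\rho$ (independent of $\theta$) and the rewards are Gaussian with variance $\sigma^2$, a standard computation yields the log-likelihood decomposition
\[
\mathbb{E}_{\theta^\star}^\pi\!\left[\log \frac{d\mathbb{P}_{\theta^\star}^\pi}{d\mathbb{P}_{\theta'}^\pi}\right] = \sum_{x\in\X,\,a\in\A} \mathbb{E}_{\theta^\star}^\pi[N_n(x,a)]\, d_{x,a}(\theta^\star,\theta').
\]
Next, for each $\theta' \in \Theta_{\mathrm{alt}}$, I would select a context $x_0$ at which $a^\star_{\theta'}(x_0) \neq a^\star_{\theta^\star}(x_0)$ and introduce the event $E_n = \{N_n(x_0, a^\star_{\theta^\star}(x_0)) \geq n\rho(x_0)/2\}$. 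Under $\theta^\star$, uniform goodness forces the strategy to pull $a^\star_{\theta^\star}(x_0)$ on almost all visits to $x_0$, so Markov's inequality applied to the number of suboptimal pulls in context $x_0$ gives $\mathbb{P}_{\theta^\star}^\pi(E_n) \to 1$. Under $\theta'$, the same behavior is suboptimal and would incur regret of order $n\rho(x_0)\Delta_{\theta'}(x_0, a^\star_{\theta^\star}(x_0))$, contradicting uniform goodness unless $\mathbb{P}_{\theta'}^\pi(E_n)$ decays polynomially. Invoking the Bretagnolle--Huber inequality (or equivalently the data-processing inequality for KL divergence) on the event $E_n$ then yields, for every $\alpha>0$ and all $n$ large enough,
\[
\sum_{x,a} \mathbb{E}_{\theta^\star}^\pi[N_n(x,a)]\, d_{x,a}(\theta^\star,\theta') \;\geq\; (1-\alpha)\log n - \log 2.
\]

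Dividing by $\log n$ and letting $\eta_n(x,a) := \mathbb{E}_{\theta^\star}^\pi[N_n(x,a)]/\log n$, any limit point $\eta$ of $\{\eta_n\}$ satisfies $\sum_{x,a}\eta(x,a)\,d_{x,a}(\theta^\star,\theta')\geq 1$ for every $\theta' \in \Theta_{\mathrm{alt}}$, so $\eta$ is feasible for~\eqref{eq:optim-lb}. Since the expected regret admits the decomposition $\mathbb{E}_{\theta^\star}^\pi[R_n(\theta^\star)] = \sum_{x,a}\mathbb{E}_{\theta^\star}^\pi[N_n(x,a)]\,\Delta_{\theta^\star}(x,a)$, taking $\liminf$ on both sides and using the definition of $v^\star(\theta^\star)$ as the infimum of the primal objective over feasible allocations yields~\eqref{eq:regret.lower.bound}.

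The most delicate step is handling the restricted alternative set $\Theta_{\mathrm{alt}} \subset \Theta$: the KL-information lower bound must hold simultaneously for every $\theta' \in \Theta_{\mathrm{alt}}$, and one has to ensure that the infimum over $\Theta_{\mathrm{alt}}$ of the KL constraint function is not vacuous. This is where the unique-optimal-arm part of Asm.~\ref{asm:regularity} is crucial: it yields strictly positive gaps, so $\Theta_{\mathrm{alt}}$ has nonempty interior around every boundary at which an optimal arm changes, and one can select $\theta'$ arbitrarily close to $\theta^\star$ without leaving $\Theta$. A secondary technical point is the passage from "uniform goodness" on the regret to the polynomial tail bound on $\mathbb{P}_{\theta'}^\pi(E_n)$, which requires applying Markov's inequality to the $\theta'$-expected regret, using the lower bound on the per-visit regret at $x_0$ under $\theta'$; this step is standard but is where most proofs in the literature invest substantial care.
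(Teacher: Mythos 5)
The paper does not actually prove Prop.~\ref{p:lower.bound}: it is recalled from \cite{lai1985asymptotically,graves1997asymptotically,combes2017minimal,ok2018exploration}, and your sketch reproduces exactly the standard change-of-measure argument used in those references (likelihood-ratio decomposition, a separating event for each $\theta'\in\Theta_{\mathrm{alt}}$ whose probability is controlled under $\theta^\star$ and $\theta'$ via uniform goodness, Bretagnolle--Huber or data processing, then normalization by $\log n$), so your approach is correct and matches the intended one. The only step worth stating more carefully is the extraction of a feasible limit point: $\E[N_n(x,a^\star_{\theta^\star}(x))]/\log n$ is unbounded on optimal arms, so one should pass to a subsequence realizing the $\liminf$ of $\E[R_n(\theta^\star)]/\log n$ (along which the coordinates with positive gap are bounded) and assign arbitrarily large values to the zero-gap coordinates, which only helps the KL constraint and leaves the objective unchanged.
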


The variables $\eta(x,a)$ can be interpreted as the number of pulls allocated to each context-arm pair so that enough information is obtained to correctly identify the optimal arm in each context while minimizing the regret. 
%
Formulating the lower bound in terms of the solution of~\eqref{eq:optim-lb} is not desirable for two main reasons. First,~\eqref{eq:optim-lb} is not a well-posed optimization problem since the inferior may not be attainable, i.e., the optimal solution may allocate an infinite number of pulls to some optimal arms. Second,~\eqref{eq:optim-lb} removes any dependency on the context distribution $\rho$. In fact, the optimal solution $\eta^\star$ of~\eqref{eq:optim-lb} may prescribe to select a context-arm $(x,a)$ pair a large number of times, despite $x$ having low probability of being sampled from $\rho$. While this has no impact on the asymptotic performance of $\eta^\star$ (as soon as $\rho_{\min}>0$), building on $\eta^\star$ to design a learning algorithm may lead to poor finite-time performance. 
%
In order to mitigate these issues, we propose a variant of the previous lower bound obtained by adding a constraint on the cumulative number of pulls in each context and explicitly decoupling the context distribution $\rho$ and the \textit{exploration policy} $\omega(x,a)$ defining the probability of selecting arm $a$ in context $x$. Given $z\in\Re_{>0}$, we define the optimization problem 
\begin{equation}\label{eq:optim-lb-z}\tag{P$_z$}
\begin{aligned}
&\underset{\omega \in \Omega}{\min}&& z\E_{\rho}\bigg[\sum_{a\in\A}\omega(x,a)\Delta_{\theta^\star}(x,a)\bigg]
\quad \mathrm{s.t.} \quad
\inf_{\theta' \in \Theta_{\mathrm{alt}}}\E_{\rho}\bigg[\sum_{a\in\A}\omega(x,a)d_{x,a}(\theta^\star,\theta') \bigg] \geq 1/z
\end{aligned}
\end{equation}
where $\Omega = \{ \omega(x,a) \geq 0 \mid \forall x\in\X : \sum_{a\in\A} \omega(x,a) = 1\}$ is the probability simplex. We denote by $\omega^\star_{z,\theta^\star}$ the optimal solution of~\eqref{eq:optim-lb-z} and $u^\star(z, \theta^\star)$ its associated value (if the problem is unfeasible we set $u^\star(z, \theta^\star) = +\infty$). Inspecting~\eqref{eq:optim-lb-z}, we notice that $z$ serves as a global constraint on the number of samples. In fact, for any $\omega\in\Omega$, the associated number of samples $\eta(x,a)$ allocated to a context-arm pair $(x,a)$ is now $z \rho(x) \omega(x,a)$. Since $\rho$ is a distribution over $\X$ and $\sum_a \omega(x,a) = 1$ in each context, the total number of samples sums to $z$. As a result,~\eqref{eq:optim-lb-z} admits a minimum and it is more amenable to designing a learning algorithm based on its Lagrangian relaxation. Furthermore, we notice that $z$ can be interpreted as defining a more ``finite-time'' formulation of the lower bound. Finally, we remark that the total number of samples that can be assigned to a context $x$ is indeed constrained to $z\rho(x)$. This constraint crucially makes~\eqref{eq:optim-lb-z} more context aware and forces the solution $\omega$ to be more adaptive to the context distribution. 
In Sect.~\ref{sec:algorithm}, we leverage these features to design an incremental algorithm whose finite-time regret does not depend on $\rho_{\min}$, thus improving over previous algorithms~\cite{lattimore2017end,hao2019adaptive}, as supported by the empirical results in Sect.~\ref{sec:experiments}.
The following lemma provides a characterization of~\eqref{eq:optim-lb-z} and its relationship with~\eqref{eq:optim-lb} (see App.~\ref{app:lower.bound} for the proof and further discussion).
\begin{lemma}\label{lem:z-opt-upperbound}
        Let $\underline{z}(\theta^\star) := \min\left\{z > 0 : \text{\eqref{eq:optim-lb-z} is feasible}\right\}$, $\wb{z}(\theta^\star) = \max_{x \in \X} \sum_{a \neq a^\star_{\theta^\star}(x)} \frac{\eta^\star(x,a)}{\rho(x)}$ and $\sumpull(\theta^\star):= \sum_{x\in\X}\sum_{a \neq a^\star_{\theta^\star}(x)} \eta^\star(x,a)$.
Then
%
$\frac{1}{\underline{z}(\theta^\star)} = \max_{\omega\in\Omega}\inf_{\theta' \in \Theta_{\mathrm{alt}}}\E_{\rho}\left[\sum_{a\in\A}{\omega}(x,a)d_{x,a}(\theta^\star,\theta') \right]$
%
and there exists a constant $c_{\Theta}>0$ such that, for any $z \in (\underline{z}(\theta^\star), +\infty)$,
\begin{align*}
        u^\star(z,\theta^\star) \leq v^\star(\theta^\star) + \frac{2 zBL\underline{z}(\theta^\star)}{z - \underline{z}(\theta^\star)} \cdot \begin{cases} 1 
         & \text{if } z < \wb{z}(\theta^\star) \\
        \min\left\{
            \max\left\{
                \frac{c_\Theta \sqrt{2} \sumpull(\theta^\star)}{\sigma\sqrt{z}}, \frac{\sumpull(\theta^\star)}{ z}
            \right\}
        , 1\right\}& \text{otherwise} \end{cases}
\end{align*}
\end{lemma}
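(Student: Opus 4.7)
The plan is to split the lemma into the equality statement and the cost bound, proving the latter by exhibiting an explicit feasible policy for \eqref{eq:optim-lb-z} built from two natural atoms.

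\textbf{Equality.} The first claim is essentially the definition of $\underline{z}(\theta^\star)$ rewritten: \eqref{eq:optim-lb-z} is feasible iff some $\omega\in\Omega$ satisfies $\inf_{\theta'\in\Theta_{\mathrm{alt}}}\E_\rho[\sum_a\omega(x,a)d_{x,a}(\theta^\star,\theta')]\geq 1/z$, hence the smallest such $z$ obeys $1/\underline{z}(\theta^\star)=\sup_{\omega\in\Omega}\inf_{\theta'}\E_\rho[\sum_a\omega d]$. The supremum is attained because $\Omega$ is compact (a product of probability simplices) and the map $\omega\mapsto \inf_{\theta'}\E_\rho[\sum_a\omega d]$, being a pointwise infimum of linear-in-$\omega$ (hence continuous) functions, is upper semi-continuous.

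\textbf{Cost bound, generic branch.} For the upper bound on $u^\star(z,\theta^\star)$ I would construct a feasible $\omega\in\Omega$ and bound its cost. Let $\omega_0$ be a maximizer from Part 1, so $\inf_{\theta'}\E_\rho[\sum_a\omega_0 d]=1/\underline z(\theta^\star)$, and let $\omega_1$ be the rescaling of $\eta^\star$ defined by $\omega_1(x,a)=\eta^\star(x,a)/(z\rho(x))$ on suboptimal arms with the residual mass placed on $a^\star_{\theta^\star}(x)$; by the definition of $\wb z(\theta^\star)$, $\omega_1\in\Omega$ exactly when $z\geq\wb z(\theta^\star)$, and by construction its cost in \eqref{eq:optim-lb-z} equals $v^\star(\theta^\star)$. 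Consider the mixture $\omega_\lambda:=(1-\lambda)\omega_1+\lambda\omega_0$ (or simply $\omega_\lambda:=\omega_0$ when $z<\wb z(\theta^\star)$, which is already feasible since $z>\underline z$). By concavity of the infimum, $\inf_{\theta'}\E_\rho[\sum_a\omega_\lambda d]\geq \lambda/\underline z(\theta^\star)$, so $\lambda=\underline z(\theta^\star)/z$ suffices for feasibility. Combined with the crude gap bound $|\Delta_{\theta^\star}(x,a)|\leq 2BL$ (Cauchy--Schwarz and Asm.~\ref{asm:regularity}), the cost of $\omega_0$ in \eqref{eq:optim-lb-z} is at most $2zBL$, so the cost of $\omega_\lambda$ is at most $(1-\lambda)v^\star(\theta^\star)+2zBL\,\lambda=v^\star(\theta^\star)+\lambda(2zBL-v^\star(\theta^\star))$, which in turn is bounded by $v^\star(\theta^\star)+2zBL\underline z(\theta^\star)/(z-\underline z(\theta^\star))$ after absorbing the $(1-\lambda)$ factor, recovering the first branch.

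\textbf{Refined branch and main obstacle.} When $z\geq\wb z(\theta^\star)$, taking $\lambda=\underline z(\theta^\star)/z$ is overly conservative because $\omega_1$ already contributes substantially to the KL constraint: it places mass $1-\sumpull(\theta^\star)/(z\rho(x))$ on $a^\star_{\theta^\star}(x)$, which actively penalizes alternatives whose optimal-arm rewards differ from those of $\theta^\star$. I would split $\Theta_{\mathrm{alt}}$ around $\wh\Theta_{\mathrm{alt}}:=\{\theta'\in\Theta_{\mathrm{alt}}:\phi(x,a^\star_{\theta^\star}(x))^\top(\theta^\star-\theta')=0\text{ for all }x\}$: on $\wh\Theta_{\mathrm{alt}}$, feasibility of $\eta^\star$ in \eqref{eq:optim-lb} reduces to $\sum_{x,a\neq a^\star}\eta^\star(x,a)d_{x,a}(\theta^\star,\theta')\geq 1$, so $\omega_1$ alone already satisfies the \eqref{eq:optim-lb-z} constraint there; off $\wh\Theta_{\mathrm{alt}}$, the signal $\E_\rho[d_{x,a^\star_{\theta^\star}(x)}(\theta^\star,\theta')]>0$ compensates up to a deficit scaling with $\sumpull(\theta^\star)/z$. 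Using $d_{x,a}(\theta^\star,\theta')\leq L^2\|\theta^\star-\theta'\|^2/(2\sigma^2)$ and compactness of $\Theta$ (which furnishes $c_\Theta$) to convert a parameter gap $\|\theta^\star-\theta'\|=O(1/\sqrt z)$ into a KL contribution of order $1/z$ shows that only $\lambda=O(\min\{c_\Theta\sumpull(\theta^\star)/(\sigma\sqrt z),\,\sumpull(\theta^\star)/z\})$ of mixing with $\omega_0$ is actually required; plugging this $\lambda$ into the $\lambda\cdot 2zBL$ overhead produces the $\min$-factor in the second branch. The main obstacle is exactly this refined case analysis over $\Theta_{\mathrm{alt}}$: one must quantify how the quadratic-in-parameter KL structure allows the necessary mixing to decay as $1/\sqrt z$ rather than the naive $1/z$ from the generic branch, so that the additive overhead vanishes as $z\to\infty$ instead of saturating at the constant $2BL\underline z(\theta^\star)$.
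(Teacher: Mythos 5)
Your overall strategy---perturb $\eta^\star$ into a per-context allocation $\omega_1$, restore feasibility by mixing in the pure-exploration solution $\omega_0$, and charge $2zBL$ per unit of mixing weight---is essentially the paper's argument recast in primal form: the paper instead evaluates the Lagrangian of the per-context reformulation of \eqref{eq:optim-lb-z} at the perturbed $\eta^\star$ and bounds the optimal multiplier by $2BL\,\tfrac{z\underline{z}(\theta^\star)}{z-\underline{z}(\theta^\star)}$ via Slater's condition at $\omega_0$, and your mixing weight $\lambda=\underline{z}(\theta^\star)/z$ times $2zBL$ reproduces the same constant. The genuine gap is in the regime $\underline{z}(\theta^\star)<z<\wb{z}(\theta^\star)$, which is precisely where the first branch of the lemma is needed. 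There, as you note, $\omega_1\notin\Omega$, and your fallback of taking $\omega_0$ alone only gives $u^\star(z,\theta^\star)\leq z\,\E_{\rho}[\sum_{a}\omega_0(x,a)\Delta_{\theta^\star}(x,a)]$, which can be as large as $2zBL$; this is \emph{not} bounded by $v^\star(\theta^\star)+\tfrac{2zBL\underline{z}(\theta^\star)}{z-\underline{z}(\theta^\star)}$ once $z>2\underline{z}(\theta^\star)$ (e.g.\ $z=10\underline{z}(\theta^\star)$ gives $2zBL=20BL\underline{z}(\theta^\star)$ against a target of $v^\star(\theta^\star)+\tfrac{20}{9}BL\underline{z}(\theta^\star)$), and $\wb{z}(\theta^\star)$ can far exceed $2\underline{z}(\theta^\star)$ when $\rho_{\min}$ is small. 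The fix is the paper's Case-1 construction: shrink the suboptimal-arm allocation $\eta^\star(x,a)/\rho(x)$ uniformly by the factor $z/\wb{z}(\theta^\star)<1$ so that it fits the per-context budget, which keeps the objective contribution below $v^\star(\theta^\star)$, and then apply your mixture argument verbatim.

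For the second branch, your outline names the right ingredients (the confusing set, the quadratic/Lipschitz structure of the Gaussian KL, the compactness constant $c_\Theta$, and the $1/\sqrt{z}$ vs.\ $1/z$ trade-off), and your plan of mixing in only a fraction of $\omega_0$ proportional to the constraint deficit of $\omega_1$ does recover the stated bound once that deficit is controlled. But the step you flag as ``the main obstacle'' is the substance of the proof and is not carried out: one must partition $\Theta_{\mathrm{alt}}\setminus\wt{\Theta}_{\mathrm{alt}}$ by whether $|\mu^\star_{\theta^\star}(x)-\mu_{\theta'}(x,a^\star_{\theta^\star}(x))|$ exceeds a threshold $\epsilon_z$, invoke the comparison of near-confusing with confusing models (Prop.~\ref{prop:altset.lipschitz}) to handle the small-gap models, use the mass that $\omega_1$ places on optimal arms against the large-gap models, and optimize $\epsilon_z=\sqrt{2\sigma^2/z}$ to obtain the deficit $\min\{\max\{c_\Theta\sqrt{2}\sumpull(\theta^\star)/(\sigma\sqrt{z}),\,\sumpull(\theta^\star)/z\},1\}$. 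As written, the refined rate is asserted rather than proved.
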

The first result characterizes the range of $z$ for which~\eqref{eq:optim-lb-z} is feasible. Interestingly, $\underline{z}(\theta^\star) < +\infty$
is the inverse of the sample complexity of the best-arm identification problem \citep{degenne2020gamification} and the associated solution is the one that maximizes the amount of information gathered about the reward model $\theta^\star$. As $z$ increases, $\omega^\star_{z,\theta^\star}$ becomes less aggressive in favoring informative context-arm pairs and more sensitive to the regret minimization objective. 
The second result quantifies the bias w.r.t.\ the optimal solution of~\eqref{eq:optim-lb-z}. 
For $z \geq \wb{z}(\theta^\star)$, the error decreases approximately at a rate $1/\sqrt{z}$ showing that the solution of~\eqref{eq:optim-lb-z} can be made arbitrarily close to $v^\star(\theta^\star)$.

In designing our learning algorithm, we build on the Lagrangian relaxation of ~\eqref{eq:optim-lb-z}. For any $\omega\in\Omega$, let $f(\omega; \theta^\star)$ denote the objective function and $g(\omega, z; \theta^\star)$ denote the KL constraint
\begin{align*}
f(\omega; \theta^\star) = \E_{\rho}\Big[\sum_{a\in\A}\omega(x,a)\mu_{\theta^\star}(x,a)\Big], \enspace g(\omega; z, \theta^\star) = \!\!\inf_{\theta' \in \Theta_{\mathrm{alt}}}\!\E_{\rho}\Big[\sum_{a\in\A}\omega(x,a)d_{x,a}(\theta^\star,\theta') \Big] - \frac{1}{z}.
\end{align*}
We introduce the Lagrangian relaxation problem
\begin{align}\label{eq:lagr-rel}\tag{P$_\lambda$}
\min_{\lambda \geq 0} \max_{\omega \in \Omega} \Big\{ h(\omega, \lambda; z, \theta^\star) := f(\omega; \theta^\star) + \lambda g(\omega; z,\theta^\star) \Big\},
\end{align}
where $\lambda\in\Re_{\geq 0}$ is a multiplier. Notice that $f(\omega;\theta^\star)$ is not equal to the objective function of~\eqref{eq:optim-lb-z}, since we replaced the gap $\Delta_{\theta^\star}$ by the expected value $\mu_{\theta^\star}$ and we removed the constant multiplicative factor $z$ in the objective function. The associated problem is thus a concave maximization problem. While these changes do not affect the optimality of the solution, they do simplify the algorithmic design.
Refer to App.~\ref{app:lagrangian} for details about the Lagrangian formulation. 


\section{Asymptotically Optimal Linear Primal Dual Algorithm}\label{sec:algorithm}

%

\begin{wrapfigure}[30]{r}{0.52\textwidth}
        \vspace{-.2in}
        \begin{algorithm}[H]
        \caption{\algo} \label{alg:primal-dual-alg-phased-nostop}
        \vspace{-.1in}
            \begin{tcolorbox}[boxsep=1pt,left=2pt,right=2pt,top=0pt,bottom=0pt,
                              colframe=black, colback=white,boxrule=0.5pt,arc=0pt]
                    \DontPrintSemicolon
                    \footnotesize
                    \SetInd{0.4em}{0.9em}
                    \KwIn{Multiplier $\lambda_1$, confidence values $\{\beta_t\}_t$ and $\{\gamma_t\}_t$, maximum multiplier $\lambda_{\mathrm{max}}$, normalization factors $\{z_k\}_{k\geq 0}$, phase lengths $\{p_k\}_{k\geq 0}$, step sizes $\alpha_k^\lambda, \alpha_k^\omega$}
                    \vspace{4pt}
                    Set $\omega_1 \leftarrow \frac{\boldsymbol{1}_{\X\A}}{|\A|}$, $\wb V_0 \leftarrow \nu \boldsymbol{I}$, $U_0 \leftarrow \boldsymbol{0}$, $\wt{\theta}_0 \leftarrow \boldsymbol{0}$,
                    $S_0 \leftarrow 0$\;
                    Phase index: $K_1 \leftarrow 0$\;
                    \For{$t=1,\ldots, n$}{
                        Receive context $X_t \sim \rho$\;
                        Set $K_{t+1} \leftarrow K_t$\;
                        \eIf{$\inf_{\theta'\in \wb{\Theta}_{t-1}} \| \wt{\theta}_{t-1} - \theta' \|_{\wb{V}_{t-1}}^2 > \beta_{t-1}$}{
                                {\color{MidnightBlue}\textsc{// Exploitation Step}}\;
                                $A_t \leftarrow \argmax_{a\in\A}{\mu}_{\wt{\theta}_{t-1}}(X_t,a)$\;
                                $\lambda_{t+1} \leftarrow \lambda_t$, $\omega_{t+1} \leftarrow \omega_t$
                        }{
                                {\color{MidnightBlue}\textsc{// Exploration Step}}\;
                                Sample arm: $A_t \sim {\omega}_{t}(X_t, \cdot)$\;
                                Set $S_t \leftarrow S_{t-1} + 1$\;
                                {\color{MidnightBlue!80!white}\scriptsize\textsc{// Update Solution}}\;
                                Compute $q_t \in \partial h_t(\omega_t, \lambda_t, z_{K_t})$ (see Eq.~\ref{eq:pdlin.ht})\;
                                Update policy\\ 
                                $\omega_{t+1}(x,a) \leftarrow \frac{\omega_{t}(x,a)e^{\alpha_{K_t}^\omega q_t(x,a)}}{\sum_{a'\in\A}\omega_{t}(x,a')e^{\alpha_{K_t}^\omega q_t(x,a')}}$\;
                                Update multiplier\\
                                ${\lambda_{t+1} \leftarrow \min\{ [\lambda_t - \alpha_{K_t}^\lambda g_t(\omega_t, z_{K_t})]_+ , \lambda_{\mathrm{max}}\}}$
                                {\color{MidnightBlue!80!white}\scriptsize\textsc{// Phase Stopping Test}}\;
                                \If{$S_t - S_{T_{K_t} - 1} = p_k$}{
                                    Change phase: $K_{t+1} \leftarrow K_t + 1$\;
                                    Reset solution: $\omega_{t+1} \leftarrow \omega_1$, $\lambda_{t+1} \leftarrow \lambda_1$
                                }
                        }
                        Pull $A_t$ and observe outcome $Y_t$\;
                        Update $\wb{V}_{t}$, $U_t$, $\wh\theta_t$, $\wh\rho_t$ using $X_t, A_t, Y_t$\;
                        \red{Set $\wt{\theta}_t := \argmin_{\theta \in \Theta \cap \mathcal{C}_t}\| \theta - \wh{\theta}_t \|_{\wb{V}_t}$}
                    }
            \end{tcolorbox}
            \end{algorithm}
\end{wrapfigure}
We introduce \algo
(aSymptotic Optimal Linear prImal Dual),
which combines a primal-dual approach to incrementally compute the solution of an optimistic estimate of the Lagrangian relaxation~\eqref{eq:lagr-rel} within a scheme that, depending on the accuracy of the estimate $\wh\theta_t$, separates \emph{exploration} steps, where arms are pulled according to the exploration policy $\omega_t$, and \emph{exploitation} steps, where the greedy arm is selected. The values of the input parameters for which \algo enjoys regret guarantees are reported in Sect.~\ref{sec:regret}. In the following, we detail the main ingredients composing the algorithm (see Alg.~\ref{alg:primal-dual-alg-phased-nostop}).

\textbf{Estimation.} \algo stores and updates the regularized least-square estimate $\wh\theta_t$ using all samples observed over time. 
\red{To account for the fact that $\wh\theta_t$ may have large norm (i.e., $\|\wh\theta_t\|_2 > B$ and $\wh\theta_t \notin \Theta$), \algo explicitly projects $\wh\theta_t$ onto $\Theta$. Formally, let $\mathcal{C}_t := \{\theta \in \R^d : \| \theta - \wh{\theta}_t\|_{\wb{V}_{t}}^2 \leq \beta_{t}\}$ be the confidence ellipsoid at time $t$. Then, \algo computes
$\wt{\theta}_t := \argmin_{\theta \in \Theta \cap \mathcal{C}_t}\| \theta - \wh{\theta}_t \|_{\wb{V}_t}^2$.
This is a simple convex optimization problem, though it has no closed-form expression.\footnote{The projection is required to carry out the analysis, while we ignore it in our implementation (see App. \ref{app:implementation-details}).} Note that, on those steps where $\theta^\star \notin \mathcal{C}_t$, $\Theta \cap \mathcal{C}_t$ might be empty, in which case we can set $\wt{\theta}_t = \wt{\theta}_{t-1}$. Then, \algo uses $\wt{\theta}_{t}$ instead of $\wh{\theta}_{t}$ in all steps of the algorithm.}
\algo also computes an empirical estimate of the context distribution as $\wh\rho_{t}(x) = \frac{1}{t}\sum_{s=1}^{t} \indi{X_s = x}$.

\textbf{Accuracy test and tracking.} Similar to previous algorithms leveraging asymptotic lower bounds, we build on the generalized likelihood ratio test \citep[e.g.,][]{degenne2019non} to verify the accuracy of the estimate $\wh\theta_t$. At the beginning of each step $t$, \algo first computes $\inf_{\theta'\in\wb{\Theta}_{t-1}} \| \wt{\theta}_{t-1} - \theta' \|_{\wb{V}_{t-1}}^2$,
where $\wb{\Theta}_{t-1} = \{ \theta' \in \Theta\ |\ \exists x\in\X,\ a^\star_{\wt\theta_{t-1}}(x) \neq a^\star_{\theta'}(x) \}$ is the set of alternative models. This quantity measures the accuracy of the algorithm, where the infimum over alternative models defines the problem $\theta'$ that is closest to $\wt\theta_{t-1}$ and yet different in the optimal arm of at least one context.\footnote{In practice, it is more efficient to take the infimum only over problems with different optimal arm in the last observed context $X_t$. This is indeed what we do in our experiments and all our theoretical results follow using this alternative definition with only minor changes.} This serves as a worst-case scenario for the true $\theta^\star$, since if $\theta^* = \theta'$ then selecting arms according to $\wt\theta_{t-1}$ would lead to linear regret. If the accuracy exceeds a threshold $\beta_{t-1}$, then \algo performs an exploitation step, 
where the estimated optimal arm $a^\star_{\wt\theta_{t-1}}(X_t)$ is selected in the current context. On the other hand, if the test fails, the algorithm moves to an exploration step, where an arm $A_t$ is sampled according to the estimated exploration policy $\omega_{t}(X_t,\cdot)$. While this approach is considerably simpler than standard tracking strategies (e.g., selecting the arm with the largest gap between the policy $\omega_t$ and the number of pulls), in Sect.~\ref{sec:regret} we show that sampling from $\omega_t$ achieves the same level of tracking efficiency.

\textbf{Optimistic primal-dual subgradient descent.} At each step $t$, we define an estimated optimistic version of the Lagrangian relaxation~\eqref{eq:lagr-rel} as
%
{\fontsize{9.5pt}{11.4pt}
\begin{align}
\label{eq:pdlin.ft}
f_t(\omega) &:= \sum_{x\in\X} \wh{\rho}_{t-1}(x) \sum_{a\in\A} \omega(x,a) \left( {\mu}_{\wt{\theta}_{t-1}}(x,a) +  \sqrt{\gamma_t} \| \phi(x,a)\|_{\wb{V}_{t-1}^{-1}} \right),\\
\label{eq:pdlin.gt}
g_t(\omega, z) &:= \inf_{\theta' \in \wb{\Theta}_{t-1}}\sum_{x\in\X}\wh{\rho}_{t-1}(x)\sum_{a\in\A}\omega(x,a)\left( {d}_{x,a}(\wt{\theta}_{t-1},\theta') + \frac{2BL}{\sigma^2}\sqrt{\gamma_t}\| \phi(x,a)\|_{\wb{V}_{t-1}^{-1}} \right) - \frac{1}{z},\\
\label{eq:pdlin.ht}
h_t(\omega, \lambda, z) &:= f_t(\omega) + \lambda g_t(\omega,z),
\end{align}}
where $\gamma_t$ is a suitable parameter defining the size of the confidence interval.

Notice that we do not use optimism on the context distribution, which is simply replaced by its empirical estimate. Therefore, $h_t$ is not necessarily optimistic with respect to the original Lagrangian function $h$. Nonetheless, we prove in Sect.~\ref{sec:regret} that this level of optimism is sufficient to induce enough exploration to have accurate estimates of $\theta^\star$. This is in contrast with the popular forced exploration strategy \citep[e.g.][]{lattimore2017end,combes2017minimal,ok2018exploration,hao2019adaptive}, which prescribes a minimum fraction of pulls $\epsilon$ such that at any step $t$, any of the arms with less than $\epsilon S_t$ pulls is selected, where $S_t$ is the number of exploration rounds so far. While this strategy is sufficient to guarantee a minimum level of accuracy for $\wh\theta_t$ and to obtain asymptotic regret optimality, in practice it is highly inefficient as it requires selecting all arms in each context regardless of their value or  amount of information.

At each step $t$, \algo updates the estimates of the optimal exploration policy  $\omega_t$ and the Lagrangian multiplier $\lambda_t$. In particular, given the sub-gradient $q_t$ of $h_t(\omega_t, \lambda_t, z_{K_t})$, 
\algo updates $\omega_t$ and $\lambda_t$ by performing one step of projected sub-gradient descent with suitable learning rates $\alpha^\omega_{K_t}$ and $\alpha^\lambda_{K_t}$. In the update of $\omega_t$, we perform the projection onto the simplex $\Omega$ using an entropic metric, while the multiplier is clipped in $[0,\lambda_{\max}]$. While this is a rather standard primal-dual approach to solve the Lagrangian relaxation~\eqref{eq:lagr-rel}, the interplay between estimates $\wh\theta_t$, $\rho_t$, the optimism used in $h_t$, and the overall regret performance of the algorithm is at the core of the analysis in Sect.~\ref{sec:regret}.

This approach significantly reduces the computational complexity compared to~\citep{combes2017minimal,hao2019adaptive}, which require solving problem~\ref{eq:optim-lb} at each exploratory step. 
In Sect.~\ref{sec:experiments}, we show that the incremental nature of \algo allows it to scale to problems with much larger context-arm spaces. Furthermore, we leverage the convergence rate guarantees of the primal-dual gradient descent to show that the incremental nature of \algo does not compromise the asymptotic optimality of the algorithm (see Sect.~\ref{sec:regret}).

\textbf{The $z$ parameter.}  
While the primal-dual algorithm is guaranteed to converge to the solution of~\eqref{eq:optim-lb-z} for any fix $z$, it may be difficult to properly tune $z$ to control the error w.r.t.\ \eqref{eq:optim-lb}.
\algo leverages the fact that the error scales as $1/\sqrt{z}$ (Lem.~\ref{lem:z-opt-upperbound} for $z$ sufficiently large) and it increases $z$ over time.
Given as input two non-decreasing sequences $\{p_k\}_k$ and $\{z_k\}_k$, at each phase $k$, \algo uses $z_k$ in the computation of the subgradient of $h_t$ and in the definition of $f_t$ and $g_t$. After $p_k$ explorative steps, it resets the policy $\omega_t$ and the multiplier $\lambda_t$ and transitions to phase $k+1$. Since $p_k = S_{T_{k+1}-1} - S_{T_k-1}$ is the number of \textit{explorative} steps of phase $k$ starting at time $T_k$, the actual number of steps during $k$ may vary. Notice that at the end of each phase only the optimization variables are reset, while the learning variables (i.e., $\wh\theta_t$, $\wb{V}_t$, and $\wh\rho_t$) use all the samples collected through phases. 

\section{Regret Analysis}\label{sec:regret}
Before reporting the main theoretical result of the paper, we introduce the following  assumption.
\begin{assumption}\label{asm:lambda_max}
        The maximum multiplier used by \algo is such that $\lambda_{\max} \geq 2 BL \underline{z}(\theta^\star)$.
\end{assumption}
While an assumption on the maximum multiplier is rather standard for the analysis of primal-dual projected subgradient~\cite[e.g.,][]{nedic2009subgradient,efroni2020exploration}, we conjecture that it may be actually relaxed in our case by replacing the fixed $\lambda_{\max}$ by an increasing sequence as done for $\{z_k\}_k$.
\begin{theorem}\label{thm:regret.bound}
Consider a contextual linear bandit problem with contexts $\X$, arms $\A$, reward parameter $\theta^\star$, features bounded by $L$, zero-mean Gaussian noise with variance $\sigma^2$ and context distribution $\rho$ satisfying Asm.~\ref{asm:regularity}.
If \algo is run with confidence values $\beta_{t-1} = c_{n,1/n}$ and $\gamma_t = c_{n,1/S_t^2}$, where $c_{n,\delta}$ is defined as in Thm.~\ref{th:conf-theta}, learning rates $\alpha_k^\lambda = \alpha_k^\omega = 1/\sqrt{p_k}$ and increasing sequences $z_k = z_0 e^k$ and $p_k = z_k e^{2k}$, for some $z_0 \geq 1$, then it is \textbf{asymptotically optimal} with the same constant as in the lower bound of Prop.~\ref{p:lower.bound}. 
%
%
Furthermore, for any finite $n$ the regret of \algo is bounded as
\begin{align}
        \label{eq:regret.bound.finite}
        \E_{\xi,\rho}^\pi \big[ R_n(\theta^\star) \big]
        &\leq v^\star(\theta^\star) \frac{c_{n,1/n}}{2\sigma^2} + C_{\log} (\log\log n)^{\frac{1}{2}}(\log n)^{\frac{3}{4}} + C_{\mathrm{const}},
\end{align}
where 
$C_{\log} = lin_{\geq 0}(v^\star(\theta^\star),|\X|,L^2,B^2,\sqrt{d},1/\sigma^2)$ and $C_{\mathrm{const}} = v^\star(\theta^\star)\frac{B^2L^2}{\sigma^2}+ lin_{\geq 0 }(L,B, z_0 (\sfrac{\underline{z}(\theta^\star)}{z_0})^{3}, (\sfrac{\wb{z}(\theta^\star)}{z_0})^2)$.\footnote{\emph{lin}($\cdot$) denotes any function with linear or sublinear dependence on the inputs (ignoring logarithmic terms). For example, $\textit{lin}_{\geq 0}(x,y^2) \in \{ a_0 + a_1 x + a_2 y + a_3 y^2 + a_4 xy^2 : a_i \geq 0\}$.} 
\end{theorem}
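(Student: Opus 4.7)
The plan is to decompose the regret into an exploration part (steps where the GLR test fails) and an exploitation part, then handle each on suitable good events. Define the good event $\mathcal{E}_\theta$ on which $\|\wh\theta_t-\theta^\star\|_{\wb V_t}^2 \leq c_{n,1/n}$ for all $t\leq n$, which holds with probability $\geq 1-1/n$ by Thm.~\ref{th:conf-theta}, together with a good event $\mathcal{E}_\rho$ on which $\wh\rho_t$ concentrates around $\rho$ (Hoeffding for each of the $|\X|$ context probabilities). On $\mathcal{E}_\theta$, since $\theta^\star\in\Theta\cap\mathcal{C}_t$, the projection of $\wh\theta_t$ inherits the same confidence, $\|\wt\theta_t-\theta^\star\|_{\wb V_t}^2\leq c_{n,1/n}$. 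Outside the good events, contexts, noise, and algorithmic randomness contribute at most $2BLn\cdot (\text{failure prob})$, which is absorbed into $C_{\mathrm{const}}$.

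For exploitation steps, the test $\inf_{\theta'\in\wb\Theta_{t-1}}\|\wt\theta_{t-1}-\theta'\|_{\wb V_{t-1}}^2>\beta_{t-1}$ combined with $\|\wt\theta_{t-1}-\theta^\star\|_{\wb V_{t-1}}^2\leq\beta_{t-1}$ on $\mathcal{E}_\theta$ forces $\theta^\star\notin\wb\Theta_{t-1}$, which (by the definition of $\wb\Theta$) means that $\wt\theta_{t-1}$ and $\theta^\star$ agree on the optimal arm of every context, including $X_t$. Hence exploitation incurs zero instantaneous regret on $\mathcal{E}_\theta\cap\mathcal{E}_\rho$, and the entire exploitation contribution to $\E[R_n(\theta^\star)]$ is a $C_{\mathrm{const}}$-type term. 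The leading contribution therefore comes from exploration. Using the identity $\|\wt\theta_{t-1}-\theta'\|_{\wb V_{t-1}}^2 = 2\sigma^2\sum_{s\leq t-1}d_{X_s,A_s}(\wt\theta_{t-1},\theta')+\nu\|\wt\theta_{t-1}-\theta'\|^2$, the test fails only when the cumulative information (infimum over alternatives in $\wb\Theta_{t-1}$) lies below $\beta_{t-1}/(2\sigma^2)$.

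The core of the exploration analysis is a primal-dual convergence argument inside each phase $k$, where $z_k$ is fixed and the iterates $\omega_t,\lambda_t$ run for $p_k$ steps. Standard analysis of projected subgradient ascent-descent on the concave-convex saddle point~\eqref{eq:lagr-rel} (entropic mirror step for $\omega\in\Omega$, Euclidean clipped step for $\lambda\in[0,\lambda_{\max}]$) gives an average-iterate saddle-gap of $O(\lambda_{\max}/\sqrt{p_k})$ provided subgradients are bounded, using Asm.~\ref{asm:lambda_max}. Linking this to the original problem requires two perturbation arguments: (i) $\wt\theta_t\to\theta^\star$ by Thm.~\ref{th:conf-theta}, which by continuity of $\omega^\star_{z,\theta}$ (a Berge-type statement on \eqref{eq:optim-lb-z}) and of $u^\star(z,\cdot)$ transfers convergence from the estimated optimistic problem to the true one; (ii) $\wh\rho_{t-1}\to\rho$ by Hoeffding, with error $O(\sqrt{|\X|\log n/t})$, independent of $\rho_{\min}$. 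A martingale Freedman/Azuma inequality applied to $d_{X_s,A_s}(\wt\theta,\theta')-\E[d_{X_s,A_s}(\wt\theta,\theta')\,|\,\mathcal{F}_{s-1}]$ then shows that the realized information matches its expectation up to lower-order terms, which handles the absence of explicit tracking. Putting these together, within phase $k$ the empirical per-step information is $1/z_k\,(1-o(1))$ and the per-step regret is $u^\star(z_k,\theta^\star)/z_k\,(1+o(1))$; invoking Lem.~\ref{lem:z-opt-upperbound}, the ratio (which governs the cumulative regret) converges to $v^\star(\theta^\star)$ at rate $1/\sqrt{z_k}$. Hence $S_n\leq z_K\cdot c_{n,1/n}/(2\sigma^2)\cdot(1+o(1))$ at the final phase $K$, and total exploration regret equals (per-step regret) $\times\,S_n\approx u^\star(z_K,\theta^\star)\cdot c_{n,1/n}/(2\sigma^2)$, which gives the leading term $v^\star(\theta^\star)c_{n,1/n}/(2\sigma^2)$ and asymptotic optimality.

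The main obstacles will be: (a) proving that the \emph{optimistic} Lagrangian $h_t$ (with bonuses $\sqrt{\gamma_t}\|\phi\|_{\wb V_{t-1}^{-1}}$ inside both $f_t$ and $g_t$) still converges to a saddle point aligned with the true $u^\star$—this mimics the UCB + online-learning analysis of~\cite{degenne2020structure} but must be combined here with our tighter confidence $\gamma_t=c_{n,1/S_t^2}$; (b) managing the interaction between the phase resets (which destroy primal-dual progress) and the monotone growth of $z_k$—the choice $p_k=z_ke^{2k}$ is precisely what balances the $1/\sqrt{p_k}$ convergence error against the $1/\sqrt{z_k}$ bias of Lem.~\ref{lem:z-opt-upperbound}, and summing these $O(e^{-k})$ contributions over the $K=\Theta(\log\log n)$ phases needed to reach $z_K\asymp\log n$ yields precisely the $C_{\log}(\log\log n)^{1/2}(\log n)^{3/4}$ second-order term; (c) carrying out the exploration analysis without a $\rho_{\min}^{-1}$ term, which is exactly what the context-aware formulation~\eqref{eq:optim-lb-z} enables via the constraint $\sum_a\omega(x,a)=1$ per context, sidestepping the reweighting step that plagues~\cite{hao2019adaptive}.
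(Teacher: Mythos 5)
Your overall architecture matches the paper's: split into exploitation (zero regret on the confidence event, via the GLRT and the non-expansiveness of the projection onto $\Theta\cap\mathcal{C}_t$) and exploration; a per-phase primal--dual analysis with an $O(\sqrt{p_k})$ optimization error; Lem.~\ref{lem:z-opt-upperbound} to control the bias $u^\star(z_k,\theta^\star)-v^\star(\theta^\star)$; and the exponential schedule to balance the two. However, two steps as you describe them do not go through. First, you propose to transfer the saddle-point convergence from the estimated optimistic problem to the true one via ``continuity of $\omega^\star_{z,\theta}$ and $u^\star(z,\cdot)$ (a Berge-type statement)''. The solution correspondence of \eqref{eq:optim-lb-z} is not continuous in $\theta$ (the alternative set $\Theta_{\mathrm{alt}}$ changes discontinuously as optimal arms change), and even where it is, you would need a quantitative modulus of continuity to get finite-time rates, which is not available. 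The paper avoids this entirely: on the good event the optimistic $f_t$ and $g_t$ dominate the true $f$ and $g$ evaluated at $\wh\rho_{t-1}$, so the online-learning regret bound can be instantiated with the comparator $\omega^\star_{z_k,\theta^\star}$ --- the optimal policy of the \emph{true} problem --- yielding $\sum_{t\in\mathcal{T}_k^E}f_t(\omega_t)\geq p_k f(\omega^\star_{z_k}) - \lambda\sum_{t}g_t(\omega_t,z_k) - O(\sqrt{p_k})$ for an arbitrary $\lambda\geq 0$, with no perturbation of the solution map. The leading constant then emerges by choosing $\lambda=v^\star(\theta^\star)$ and upper-bounding $\sum_t g_t(\omega_t,z_k)$ by $\beta_{n-1}/(2\sigma^2)-\sum_k p_k/z_k$ plus lower-order terms, using the fact that the GLRT has not yet triggered; this Lagrangian bookkeeping is what makes your ``per-step regret times number of steps'' heuristic rigorous.

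Second, your claim that within phase $k$ the empirical per-step information is $1/z_k\,(1-o(1))$ --- which you need both for the leading constant and to bound the number of phases by $O(\log\log n)$ --- does not follow from a small average saddle gap alone: a near-saddle pair can still violate the constraint substantially when the dual variable is clipped. The paper obtains the required lower bound on accumulated information by (i) showing $\lambda^\star(z_k,\theta^\star)\leq 2BL\underline{z}(\theta^\star)\leq\lambda_{\max}$ (Lem.~\ref{lemma:mult-bound}, which is where Asm.~\ref{asm:lambda_max} actually enters), and (ii) invoking a constraint-violation bound for the average iterate (Thm.~42 of \cite{efroni2020exploration}) with the comparison multiplier set to $2\lambda_{\max}$. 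Relatedly, concentrating the realized KL information to its conditional expectation cannot be done by applying Azuma to $d_{X_s,A_s}(\wt{\theta},\theta')$ for a fixed $\theta'$, since the minimizing $\theta'$ is data-dependent and ranges over a continuum; the paper instead concentrates the design matrix entrywise (event $G_t^d$), which yields a bound uniform over all alternatives at the cost of a $\sqrt{d}$ factor. With these two repairs your outline would coincide with the paper's proof.
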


The first result shows that \algo run with an exponential schedule for $z$ is asymptotic optimal, while the second one provides a bound on the finite-time regret.
%
We can identify three main components in the finite-time regret.
\textbf{1)} The first term scales with the logarithmic term $c_{n,1/n}=O({\log n + d \log\log n})$ and a leading constant $v^\star(\theta^\star)$, which is optimal as shown in Prop.~\ref{p:lower.bound}. 
In most cases, this is the dominant term of the regret. 
\textbf{2)} Lower-order terms in $o(\log n)$. Notably, a regret of order $\sqrt{\log n}$ is due to the incremental nature of \algo and it is directly inherited from the convergence rate of the primal-dual algorithm we use to optimize~\eqref{eq:optim-lb-z}. The larger term $(\log n)^{3/4}$ that we obtain in the final regret is actually due to the schedule of $\{z_k\}$ and $\{p_k\}$. While it is possible to design a different phase schedule to reduce the exponent towards $1/2$, this would negatively impact the constant regret term. 
%
%
\textbf{3)} The constant regret $C_{\mathrm{const}}$ is due to the exploitation steps, burn-in phase and the initial value $z_0$.
The regret due to $z_0$ takes into account the regime when~\eqref{eq:optim-lb-z} is unfeasible ($z_k < \underline{z}(\theta^\star)$) or when $z_k$ is too small to assess the rate at which $u^\star(z_k, \theta^\star)$ approaches $v^\star(\theta^\star)$ ($z < \wb{z}(\theta^\star)$), see Lem.~\ref{lem:z-opt-upperbound}. Notably, the regret due to the initial value $z_0$ vanishes when $z_0 > \wb{z}(\theta^\star)$. A more aggressive schedule for $z_k$ reaching $\wb{z}(\theta^\star)$ in few phases would reduce the initial regret at the cost of a larger exponent in the sub-logarithmic terms.

The sub-logarithmic terms in the regret have only logarithmic dependency on the number of arms. This is better than existing algorithms based on exploration strategies built from lower bounds. OSSB~\citep{combes2017minimal} indeed depends on $|\A|$ directly in the main $\mathcal{O}(\log n)$ regret terms. While the regret analysis of OAM is asymptotic, it is possible to identify several lower-order terms depending linearly on $|\A|$. In fact, OAM as well as OSSB require forced exploration on each context-arm pair, which inevitably translates into regret. In this sense, the dependency on $|\A|$ is hard-coded into the algorithm and cannot be improved by a better analysis. SPL depends linearly on $|\A|$ in the explore/exploit threshold (the equivalent of our $\beta_t$) and in other lower-order terms due to the analysis of the tracking rule. On the other hand, \algo never requires all arms to be repeatedly pulled and we were able to remove the linear dependence on $|\A|$ through a refined analysis of the sampling procedure (see App.~\ref{app:sampling}). This is inline with the experimental results where we did not notice any explicit linear dependence on $|\A|$.


The constant regret term depends on the context distribution through $\wb{z}(\theta^\star)$ (Lem.~\ref{lem:z-opt-upperbound}). Nonetheless, this dependency disappears whenever $z_0$ is a fraction $\wb{z}(\theta^\star)$. This is in striking contrast with OAM, whose analysis includes several terms depending on the inverse of the context probability $\rho_{\min}$. This confirms that \algo is able to better adapt to the distribution generating the contexts.
%
%
While the phase schedule of Thm.~\ref{thm:regret.bound} leads to an asymptotically-optimal algorithm and sublinear-regret in finite time, it may be possible to find a different schedule having the same asymptotic performance and better finite-time guarantees, although this may depend on the horizon $n$. Refer to App.~\ref{app:regret.exploration} for a regret bound highlighting the explicit dependence on the sequences $\{z_k\}$ and $\{p_k\}$.

As shown in~\citep{hao2019adaptive}, when the features of the optimal arms span $\R^d$, the asymptotic lower bound vanishes (i.e., $v^\star(\theta^\star) =0$). In this case, selecting optimal arms is already informative enough to correctly estimate $\theta^\star$ and no explicit exploration is needed and \algo, like OAM, has sub-logarithmic regret.

\textbf{Worst-case analysis.} The constant terms in Thm.~\ref{thm:regret.bound} are due to a naive bound which assumes linear regret in those phases where $z_k$ is small (e.g., when the optimization problem is infeasible). While this simplifies the analysis for asymptotic optimality, we verify that \algo always suffers sub-linear regret, regardless of the values of $z_k$. For the following result, we do not require Asm.~\ref{asm:lambda_max} to hold.

\begin{theorem}[Worst-case regret bound]\label{th:worst-case}
Let $z_k$ be arbitrary, $p_k = e^{rk}$ for some constant $r \geq 1$, and the other parameters be the same as in Thm.~\ref{thm:regret.bound}. Then, for any $n$ the regret of \algo is bounded as
{\begin{align*}
\E_{\xi,\rho}^\pi \big[ R_n(\theta^\star) \big] \leq 12BL\pi^2C_{\lambda_{\max}} + \frac{2e^r\left(\lambda_{\max}^2 + \log|\A|\right)}{r}\sqrt{n} + C_{\mathrm{sqrt}} C_{\lambda_{\max}}\log (n) \sqrt{dn},
\end{align*}
where $C_{\mathrm{sqrt}} = lin_{\geq 0}(|\X| + \sqrt{d}, B, L)$ and $C_{\lambda_{\max}} := \left(1 + \frac{\lambda_{\max}BL}{\sigma^2}\right)$.}
\end{theorem}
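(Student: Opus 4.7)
The plan is to decompose $\E_{\xi,\rho}^\pi[R_n(\theta^\star)] = \E[R^{\mathrm{exploit}}_n] + \E[R^{\mathrm{expl}}_n]$ into the contributions of exploitation and exploration rounds and to bound each separately, relying on optimism and on a standard primal-dual subgradient analysis restarted on every phase, rather than on the lower bound machinery used for Thm.~\ref{thm:regret.bound} (we can no longer invoke Asm.~\ref{asm:lambda_max}, and $z_k$ is now arbitrary, so the estimated problem may well be infeasible). Throughout I will use the crude worst-case bound $\mu^\star_{\theta^\star}(x) - \mu_{\theta^\star}(x,a) \leq 2BL$ to pay for low-probability events.

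For the exploitation term, let $\mathcal{E} := \{\forall t\in[n]:\theta^\star\in\mathcal{C}_t\}$. Thm.~\ref{th:conf-theta} with $\delta=1/n$ gives $\prob{\mathcal{E}}\geq 1-1/n$. On $\mathcal{E}$ the projection yields $\|\wt{\theta}_{t-1}-\theta^\star\|_{\wb V_{t-1}}^2\leq \beta_{t-1}$, so if the accuracy test passes at $t$ then $\theta^\star\notin\wb\Theta_{t-1}$, i.e. $a^\star_{\wt\theta_{t-1}}(X_t)=a^\star_{\theta^\star}(X_t)$ and the round contributes zero regret. Off $\mathcal{E}$ we pay at most $2BL\cdot n\cdot(1/n)=2BL$. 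Combining this with a per-step failure event tied to the confidence $\gamma_t = c_{n,1/S_t^2}$ — whose failure probabilities form a summable series $\sum_t 1/S_t^2 \leq \pi^2/6$ — produces, after further bookkeeping involving the additive constants multiplying the bonus in $g_t$, the leading constant $3BL\pi^2(4+\lambda_{\max}BL/\sigma^2)$.

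For the exploration term, condition on $\mathcal{E}$ and reason per phase. Let $\omega^\dagger(x,\cdot) := \mathbf{1}\{\cdot = a^\star_{\theta^\star}(x)\}$. The key optimism inequalities, following from $|\mu_{\wt\theta_{t-1}}(x,a)-\mu_{\theta^\star}(x,a)|\leq\sqrt{\beta_{t-1}}\|\phi(x,a)\|_{\wb V_{t-1}^{-1}}\leq \sqrt{\gamma_t}\|\phi(x,a)\|_{\wb V_{t-1}^{-1}}$, are
\begin{align*}
f_t(\omega^\dagger) &\geq \E_\rho[\mu^\star_{\theta^\star}(x)] - \varepsilon_t^\rho, \qquad
f_t(\omega_t) \leq \E_\rho\Bigl[\textstyle\sum_a\omega_t(x,a)\mu_{\theta^\star}(x,a)\Bigr] + 2\sqrt{\gamma_t}\,\E_\rho\Bigl[\textstyle\sum_a\omega_t(x,a)\|\phi(x,a)\|_{\wb V_{t-1}^{-1}}\Bigr] + \varepsilon_t^\rho,
\end{align*}
where $\varepsilon_t^\rho$ collects $\|\wh\rho_{t-1}-\rho\|_\infty$ deviations and is summable in $t$ to an $O(\sqrt{n\log n})$ term. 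Subtracting gives
\begin{align*}
\E[r_t\mid\mathcal{F}_{t-1}] \leq \bigl(f_t(\omega^\dagger)-f_t(\omega_t)\bigr) + 2\sqrt{\gamma_t}\,\E_\rho\Bigl[\textstyle\sum_a\omega_t(x,a)\|\phi(x,a)\|_{\wb V_{t-1}^{-1}}\Bigr] + 2\varepsilon_t^\rho.
\end{align*}
For the first piece we pass through the Lagrangian: $f_t(\omega^\dagger)-f_t(\omega_t) \leq h_t(\omega^\dagger,\lambda_t,z_k)-h_t(\omega_t,\lambda_t,z_k) + \lambda_{\max}[-g_t(\omega^\dagger,z_k)]_+$, and $g_t(\omega^\dagger,z_k)\geq -1/z_k \geq -1$ uniformly so the penalty is trivially bounded. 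A standard OMD/projected subgradient analysis applied independently on each phase $k$ with step sizes $\alpha^\omega_k=\alpha^\lambda_k=1/\sqrt{p_k}$ — valid because the $\omega,\lambda$ iterates are reset at phase boundaries — bounds the cumulative saddle gap in phase $k$ by $O(\lambda_{\max}^2\sqrt{p_k})$, the subgradients of $h_t$ w.r.t.\ $\omega$ and $\lambda$ being uniformly bounded by quantities proportional to $1+\lambda_{\max}BL/\sigma^2$. Summing over phases, with $p_k=e^{rk}$ and $\sum_{k=0}^K p_k\leq n$, we have $K\leq r^{-1}\log(n(e^r-1))$ and $\sum_{k=0}^K\sqrt{p_k} = \sum_k e^{rk/2} \leq (e^{r/2}/(e^{r/2}-1))\sqrt{n}\leq (2e^r/r)\sqrt{n}$, producing exactly the $(2e^r\lambda_{\max}^2/r)\sqrt{n}$ term.

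Finally, the optimism bonus is reduced to realized samples: a Freedman-type martingale argument converts $\sum_t\E_\rho[\sum_a\omega_t(x,a)\|\phi(x,a)\|_{\wb V_{t-1}^{-1}}]$ into $\sum_t\|\phi(X_t,A_t)\|_{\wb V_{t-1}^{-1}}$ plus a deviation of order $\sqrt{n\log n}$; Cauchy--Schwarz and the elliptical potential lemma give $\sum_{t=1}^n\|\phi(X_t,A_t)\|_{\wb V_{t-1}^{-1}}\leq\sqrt{2dn\log(1+nL^2/(d\nu))}$, and multiplying by $\sqrt{\gamma_t}=O(\sqrt{\log n + d\log\log n})$ and by the same $(1+\lambda_{\max}BL/\sigma^2)$ prefactor inherited from the bonus in $g_t$ scaled by $\lambda_t\leq\lambda_{\max}$ produces the $C_{\mathrm{sqrt}}(1+\lambda_{\max}BL/\sigma^2)\log(n)\sqrt{n}$ term, with $C_{\mathrm{sqrt}}=\mathrm{lin}_{\geq 0}(|\X|,\sqrt{d},B,L)$. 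The $\varepsilon_t^\rho$ and martingale remainders are absorbed into the $\log(n)\sqrt{n}$ and constant terms.

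The main obstacle is the primal-dual analysis per phase, because the functions $h_t$ depend on time-varying estimates $\wt\theta_{t-1},\wh\rho_{t-1},\wb V_{t-1}$ and on the potentially infeasible $z_k$. I will handle the first issue by noting that $h_t$ is uniformly Lipschitz in the estimated quantities on $\mathcal{E}$ and that the cumulative drift is dominated by the elliptical potential bound, and the second by absorbing the infeasibility slack through $\lambda_{\max}[-g_t(\omega^\dagger,z_k)]_+\leq\lambda_{\max}$ per phase; the resets of $\omega_{t+1},\lambda_{t+1}$ at phase boundaries are precisely what allows the standard $O(\lambda_{\max}^2\sqrt{p_k})$ analysis to apply independently to each phase.
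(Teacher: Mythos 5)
Your overall architecture matches the paper's proof in App.~\ref{app:worst-case}: split exploit/explore, bound exploitation by a constant via Thm.~\ref{th:conf-theta}, compare the exploration iterates to the all-optimal-arms policy $\omega^\dagger=\omega^\star_{\theta^\star}$ so that the objective values cancel against $S_n\mu^\star$, run the per-phase OMD/subgradient recursion with resets to get $O(\lambda_{\max}^2\sqrt{p_k})$ per phase and $\sum_k\sqrt{p_k}\leq (2e^r/r)\sqrt n$, and control the optimism bonus by the elliptical potential plus the martingale sampling lemmas. All of that is sound.

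The genuine gap is in your treatment of the Lagrangian constraint term. Writing $f_t(\omega^\dagger)-f_t(\omega_t)=\bigl[h_t(\omega^\dagger,\lambda_t,z_k)-h_t(\omega_t,\lambda_t,z_k)\bigr]+\lambda_t\bigl[g_t(\omega_t,z_k)-g_t(\omega^\dagger,z_k)\bigr]$, the $-1/z_k$ offsets cancel and what survives is $\lambda_t$ times a difference of (nonnegative) estimated KL terms, so after summing you must control $\lambda_{\max}\sum_{t:E_t}\inf_{\theta'\in\wb\Theta_{t-1}}\sum_x\wh\rho_{t-1}(x)\sum_a\omega_t(x,a)d_{x,a}(\wt\theta_{t-1},\theta')$. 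This is \emph{not} trivially bounded: a priori it is only $O(\lambda_{\max}B^2L^2n/\sigma^2)$, i.e.\ linear in $n$, and your alternative reading ($\lambda_{\max}[-g_t(\omega^\dagger,z_k)]_+\leq\lambda_{\max}$ ``per phase'') also fails because that penalty accrues at every step of the phase, giving $\lambda_{\max}p_k/z_k$ per phase with $z_k$ arbitrary. The missing idea --- and the crux of why \algo has sublinear worst-case regret even when the estimated problem is infeasible --- is that this cumulative KL sum is controlled by the \emph{exploitation test}: since the algorithm is still exploring at step $n$, $\inf_{\theta'\in\wb\Theta_{n-1}}\|\wt\theta_{n-1}-\theta'\|^2_{\wb V_{n-1}}\leq\beta_{n-1}$, and by relating the expected KLs under $\omega_t,\wh\rho_{t-1}$ to the realized design matrix (the $G_t^d$ concentration) and using optimism, the whole sum is at most $\frac{\beta_{n-1}}{2\sigma^2}+O\bigl(B^2L^2\sqrt{dS_n\log(dS_n)}/\sigma^2\bigr)$ (App.~\ref{app:regret.exploration.constr}, reused in the worst-case proof as Eq.~\ref{eq:wc6}). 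Without this GLRT argument your proof does not rule out linear regret; with it, your pointwise Lagrangian decomposition would go through and is in fact a mild simplification of the paper's route, which instead passes through the dual subgradient recursion with $\lambda=\lambda_{\max}$ and cancels $\sum_k\lambda_{\max}S_{n,k}/z_k$ against the corresponding term in the constraint bound.
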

Notably, this bound removes the dependencies on $\underline{z}(\theta^\star)$ and $\wb{z}(\theta^\star)$, while its derivation is agnostic to the values of $z_k$. Interestingly, we could set $\lambda_{\max} = 0$ and the algorithm would completely ignore the KL constraint, thus focusing only on the objective function. This is reflected in the worst-case bound since all terms with a dependence on $\sigma^2$ or a quadratic dependence on $BL$ disappear. The key result is that the objective function alone, thanks to optimism, is sufficient for proving sub-linear regret but not for proving asymptotic optimality. { More precisely, the bound is $\wt{\mathcal{{O}}}((\sqrt{d} + |\X|)\sqrt{dn} + \log|\A|\sqrt{n})$, which matches the minimax optimal rate apart from the dependence on $|\X|$ (see \cite{lattimore2020bandit}, Sec. 24.1). We believe the latter could be reduced to $\sqrt{|\X|}$ by a refined analysis. It remains an open question how to design an asymptotically optimal algorithm for the contextual case whose regret does not scale with $|\X|$.}


\begin{figure*}[t!]
	\centering
	\includegraphics[height=4.0cm]{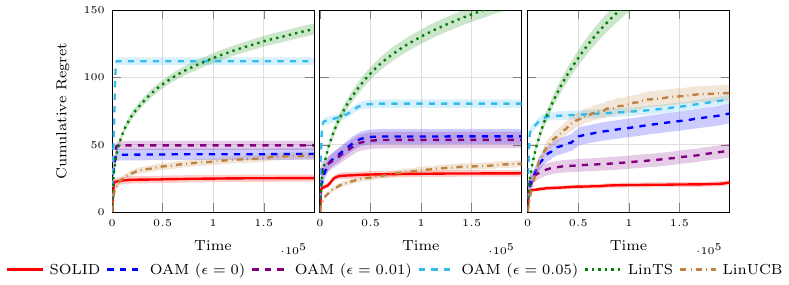}\\
	\caption{Toy problem with 2 contexts and (left) $\rho(x_1) =.5$, (center) $\rho(x_1) =.9$, (right) $\rho(x_1)=.99$.}
	\label{fig:two_cont}
	\vspace{-.1in}
\end{figure*}

\section{Numerical Simulations}\label{sec:experiments}

We compare \algo to LinUCB, LinTS, and OAM. For \algo, we set $\beta_t = \sigma^2 (\log(t) + d\log\log(n))$ and $\gamma_t = \sigma^2 (\log(S_t) + d\log\log(n))$ (i.e., we remove all numerical constants) and we use the exponential schedule for phases defined in Thm.~\ref{thm:regret.bound}. For OAM, we use the same $\beta_t$ for the explore/exploit test and we try different values for the forced-exploration parameter $\epsilon$. LinUCB uses the confidence intervals from Thm. 2 in~\cite{abbasi2011improved} with the log-determinant of the design matrix, and LinTS is as defined in~\cite{agrawal2013thompson} but without the extra-sampling factor $\sqrt{d}$ used to prove its frequentist regret. 
All plots are the results of $100$ runs with $95\%$ Student's t confidence intervals. See App.~\ref{app:experiments} for additional details and results on a real dataset.

\begin{figure*}[t]
    \centering
    \includegraphics[height=4.0cm]{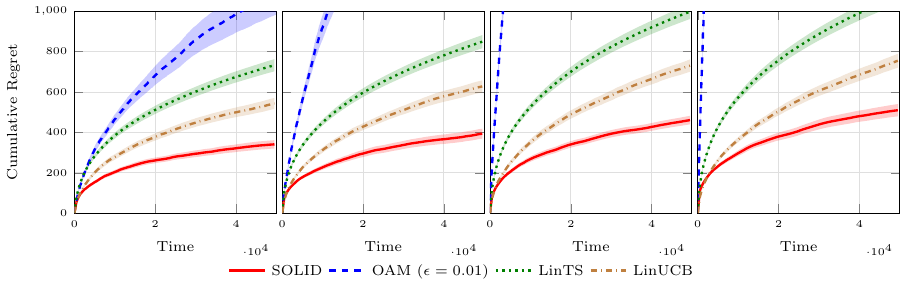}
    \caption{Randomly generated bandit problems with $d=8, |\X| =4$, and $|\A|=4,8,16,32$.}\label{fig:random}
    \vspace{-0.1in}
\end{figure*}

\textbf{Toy contextual linear bandit with structure.} We start with a CLB problem with $|\X|=2$ and $|\A|,d=3$. Let $x_i$ ($a_i$) be the $i$-th context (arm). We have $\phi(x_1,a_1) = [1, 0, 0]$, $\phi(x_1,a_2) = [0, 1, 0]$, $\phi(x_1,a_3) = [1-\xi, 2\xi, 0]$, $\phi(x_2,a_1) = [0, 0.6, 0.8]$, $\phi(x_2,a_2) = [0, 0, 1]$, $\phi(x_2,a_3) = [0, \xi/10, 1-\xi]$ and $\theta^\star = [1, 0, 1]$. We consider a balanced context distribution $\rho(x_1) = \rho(x_2) = 0.5$. This is a two-context counterpart of the example presented by \citep{lattimore2017end} to show the asymptotic sub-optimality of optimism-based strategies. The intuition is that, for $\xi$ small, an optimistic strategy pulls $a_2$ in $x_1$ and $a_1$ in $x_2$ only a few times since their gap is quite large, and suffers high regret (inversely proportional to $\xi$) to figure out which of the remaining arms is optimal. On the other hand, an asymptotically optimal strategy allocates more pulls to ``bad" arms as they bring information to identify $\theta^\star$, which in turns avoids a regret scaling with $\xi$. This indeed translates into the empirical performance reported in Fig.~\ref{fig:two_cont}-(left), where \algo effectively exploits the structure of the problem and significantly reduces the regret compared to LinTS and LinUCB. Actually, not only the regret is smaller but the ``trend'' is better. In fact, the regret curves of LinUCB and LinTS have a larger slope than \algo's, suggesting that the gap may increase further with $n$, thus confirming the theoretical finding that the asymptotic performance of \algo is better. OAM has a similar behavior, but the actual performance is worse than \algo and it seems to be very sensitive to the forced exploration parameter, where the best performance is obtained for $\epsilon = 0.0$, which is not theoretically justified.

We also study the influence of the context distribution. We first notice that solving~\eqref{eq:optim-lb} leads to an optimal exploration strategy $\eta^\star$ where the only sub-optimal arm with non-zero pulls is $a_1$ in $x_2$ since it yields lower regret and similar information than $a_2$ in $x_1$. This means that the lower bound prescribes a greedy policy in $x_1$, deferring exploration to $x_2$ alone. In practice, tracking this optimal allocation might lead to poor finite-time performance when the context distribution is unbalanced towards $x_1$,  in which case the algorithm would take time proportional to $1/\rho(x_2)$ before performing any meaningful exploration. We verify these intuitions empirically by considering the case of $\rho(x_1) = 0.9$ and $\rho(x_1) = 0.99$ (middle and right plots in Fig.~\ref{fig:two_cont} respectively). \algo is consistently better than all other algorithms, showing that its performance is not negatively affected by $\rho_{\min}$. On the other hand, OAM is more severely affected by the context distribution. In particular, its performance with $\epsilon=0$ significantly decreases when increasing $\rho(x_1)$ and the algorithm reduces to an almost greedy strategy, thus suffering linear regret in some problems. In this specific case, forcing exploration leads to slightly better finite-time performance since the algorithm pulls the informative arm $a_2$ in $x_1$, which is however not prescribed by the lower bound.

\textbf{Random problems.}
We evaluate the impact of the number of actions $|\A|$ in randomly generated structured problems with $d=8$ and $|\X| = 4$. 
%
We run each algorithm for $n=50000$ steps. For OAM, we set forced-exploration $\epsilon=0.01$ and solve~\eqref{eq:optim-lb} every $100$ rounds to speed-up execution as computation becomes prohibitive. The plots in Fig.~\ref{fig:random} show the regret over time for $|\A|=4,8,16,32$. This test confirms the advantage of \algo over the other methods. Interestingly, the regret of \algo does not seem to significantly increase as a function of $|\A|$, thus supporting its theoretical analysis. On the other hand, the regret of OAM scales poorly with $|\A|$ since forced exploration pulls all arms in a round robin fashion.



\section{Conclusion}


We introduced \algo, a novel asymptotically-optimal algorithm for contextual linear bandits with finite-time regret and computational complexity improving over similar methods and better empirical performance w.r.t.\ state-of-the-art algorithms in our experiments. The main open question is whether \algo is minimax optimal for contextual problems with $|\X| > \sqrt{d}$. In future work, our method could be extended to continuous contexts, which would probably require a reformulation of the lower bound and the adoption of parametrized policies. Furthermore, it would be interesting to study finite-time lower bounds, especially for problems in which bounded regret is achievable \cite{lattimore2014bounded, bastani2017mostly, tirinzoni2020novel}. Finally, we could use algorithmic ideas similar to \algo to go beyond the realizable linear bandit setting.

\section*{Broader Impact}

This work is mainly a theoretical contribution. We believe it does not present any foreseeable societal consequence.

\section*{Funding Transparency Statement}

Marcello Restelli was partially funded by the Italian MIUR PRIN 2017 Project ALGADIMAR ``Algorithms, Games, and Digital Market''.

\section*{Acknowledgements}

The authors would like to thank R\'emy Degenne, Han Shao, and Wouter Koolen for kindly sharing the draft of their paper before publication. 
\red{We also would like to thank Pierre M\'{e}nard for carefully reading the paper and for providing insightful feedback.}

\bibliographystyle{unsrt}
\bibliography{biblio}

\onecolumn
\appendix

\part{Appendix}


\parttoc
\newpage
\section{Notation and Definitions}\label{app:notation.definition}

We provide this table for easy reference. Notation will also be defined as it is introduced.

{
\footnotesize
\renewcommand{\arraystretch}{1.1}

\begin{longtable}{l L{6.5cm}}
\caption{Symbols}\\
\hline
$\theta^\star$ & The true reward parameter
\\
$\X$ & Finite set of contexts
\\
$\A$ & Finite set of arms
\\
$\sigma^2$ & Variance of the Gaussian reward noise
\\
$B$ & Maximum $l_2$-norm of realizable reward parameters
\\
$L$ & Maximum $l_2$-norm of the features
\\
$\rho$ & Context distribution
\\
$\wh\rho_{t}(x) := \frac{1}{t}\sum_{s=1}^{t} \indi{X_s = x}$ & Estimated context distribution
\\
$\mu_\theta(x,a)$ & Mean reward of context $x$ and arm $a$
\\
$\Delta_\theta(x,a) := \max_{a'\in\A}\mu_\theta(x,a') - \mu_\theta(x,a)$ & Gap of context $x$ and arm $a$
\\
$a^\star_{\theta}(x) := \argmax_{a\in\A}\mu_{\theta}(x,a)$ & Optimal arm of context $x$
\\
$\mu^\star_{\theta}(x) := \max_{a\in\A}\mu_{\theta}(x,a)$ & Optimal reward value of context $x$
\\
${d}_{x,a}(\theta,\theta') := \frac{1}{2\sigma^2} ({\mu}_\theta(x,a) - \mu_{\theta'}(x,a))^2$ & KL divergence between $\theta$ and $\theta'$ at $x,a$
\\
$\Theta_{\mathrm{alt}} := \{ \theta' \in \Theta\ |\ \exists x\in\X,\ a^\star_{\theta^\star}(x) \neq a^\star_{\theta'}(x) \}$ & Set of alternative reward models
\\
$\wb{\Theta}_{t-1} = \{ \theta' \in \Theta\ |\ \exists x\in\X,\ a^\star_{\wt\theta_{t-1}}(x) \neq a^\star_{\theta'}(x) \}$ & Estimated set of alternative reward models
\\
$v^\star(\theta^\star)$ & Optimal value of the optimization problem \eqref{eq:optim-lb}
\\
$\eta^\star$ & Optimal solution of the optimization problem \eqref{eq:optim-lb}
\\
$u^\star(z, \theta^\star)$ & Optimal value of the optimization problem \eqref{eq:optim-lb-z}
\\
$\omega^\star_{z,\theta^\star}$ & Optimal solution of the optimization problem \eqref{eq:optim-lb-z}
\\
$\underline{z}(\theta^\star) := \min\left\{z > 0 : \text{\eqref{eq:optim-lb-z} is feasible}\right\}$ & Feasibility threshold of \eqref{eq:optim-lb-z}
\\
$h(\omega, \lambda; z, \theta^\star) := f(\omega; \theta^\star) + \lambda g(\omega; z,\theta^\star)$ & Lagrangian relaxation of \eqref{eq:optim-lb-z}
\\
$f(\omega; \theta^\star)$ & Objective function
\\
$f_t(\omega)$ & Estimated (optimistic) objective function (see Eq.~\ref{eq:pdlin.ft})
\\
$g(\omega; z,\theta^\star)$ & Constraint function
\\
$g_t(\omega, z)$ & Estimated (optimistic) constraint (see Eq.~\ref{eq:pdlin.gt})
\\
$E_t := \indi{\inf_{\theta'\in\wb{\Theta}_{t-1}} \| \wt{\theta}_{t-1} - \theta' \|_{\wb{V}_{t-1}}^2 \leq \beta_{t-1} }$ & Exploration round
\\
$N_{t}(x,a):=\sum_{s=1}^t \indi{X_t=x,A_t=a}$ & Total number of visits to $(x,a)$
\\
$N_{t}^E(x,a):=\sum_{s=1}^t \indi{X_t=x,A_t=a,E_t}$ & Number of visits to $(x,a)$ in exploration rounds
\\
$S_t:=\sum_{s=1}^t \indi{E_t}$ & Total number of exploration rounds\\
$\beta_{t-1}:=c_{n,1/n}$ & Theoretical threshold for the exploitation test in \algo
\\
$\gamma_t:=c_{n,1/S_t^2}$ & Theoretical value for the confidence intervals in \algo
\\
$K_t \in \{0,1,\dots\}$ & Phase index at time $t$
\\
$T_k$ & Time at which phase $k$ starts
\\
$\mathcal{T}_k:=\{t\in[n] : K_t = k\}$ & Time steps in phase $k$
\\
$\mathcal{T}_k^E:=\{t\in\mathcal{T}_k : E_t\}$ & Exploration rounds in phase $k$
\\
$\{p_k\}_{k\geq 0}$ & Total number of exploration rounds in each phase
\\
$\alpha_k^\lambda, \alpha_k^\omega$ & Step sizes
\\
$V_t := \sum_{s=1}^t \phi(X_s, A_s)\phi(X_s,A_s)^\transp$  & Design matrix
\\
$\wb{V}_t := V_t + \nu I$ & Regularized design matrix ($\nu \geq 1$)
\\
$U_t := \sum_{s=1}^t \phi(X_s,A_s) Y_s$ & Sum of reward-weighted features
\\
$\wh{\theta}_t := \wb{V}_t^{-1}U_t$ & Regularized least-squares estimate
\\
$\wt{\theta}_t := \argmin_{\theta \in \Theta \cap \mathcal{C}_t}\| \theta - \wh{\theta}_t \|_{\wb{V}_t}^2$ & Projected least-squares estimates \\
$\mathcal{C}_t := \{\theta \in \R^d : \| \theta - \wh{\theta}_t\|_{\wb{V}_{t}}^2 \leq \beta_{t}\}$ & Confidence ellipsoid at time $t$
\\
$G_t$ & Good event (see App. \ref{app:good.events})
\\
$M_n = \sum_{t=1}^n \indi{E_t, \neg G_t}$ & Number of exploration rounds without good event
\\
$M_{n,k} = \sum_{t\in\mathcal{T}_k^E} \indi{\neg G_t}$ & Number of exploration rounds in phase $k$ without good event
\\
\hline
\end{longtable}
}

\section{Comparison to Related Work}\label{app:related.work}

\begin{table*}[t!]
\begin{center}
	\begin{small}
            \begin{tabular}{L{2.5cm} C{1.7cm} C{1.5cm} C{3cm} C{3cm}} 
			\toprule
			Feature/Algorithm & OSSB & OAM & SPL & \algo \\
			\cmidrule{1-5}
			\textit{Setting} & general MAB & linear contextual & general MAB & linear contextual \\
			\textit{Objective fun.} & constrained & constrained & saddle (ratio) & saddle (Lagrangian) \\
			\textit{Opt. variables} & counts & counts & rates & policies \\
			\textit{Asympt. optimality} & order-opt & opt & opt & opt \\
			\textit{Finite-time bound} & \cmark & \xmark & \cmark & \cmark \\
			\textit{Explore/exploit} & tracking test & glrt & glrt & glrt \\
			\textit{Tracking} & direct & direct & cumulative & sampling \\
			\textit{Optimization} & exact & exact & incr. and best-response & incr. \\
			\textit{Exp. level} & forcing & forcing & unstruct. optimism & optimism \\
			\textit{Parameters} & forcing, test & forcing, test & gaps clip, test, conf. values &  $\lambda_{\max}$, test, conf. values, phases \\

			\bottomrule
		\end{tabular}
	\end{small}
\end{center}
	\caption{Comparison of structured bandit algorithms. OSSB \citep{combes2017minimal}, OAM \citep{hao2019adaptive}, SPL~\citep{degenne2020structure} and \algo (this paper).}
	\label{tab:hyperparams}
\end{table*}

In Table~\ref{tab:hyperparams} we compare several bandit algorithms along several dimensions:
\begin{itemize}
	\item \textit{Setting} refers to whether the algorithm is designed for general multi-armed bandit (non-contextual) structured problems or it is for the linear contextual case.
	\item \textit{Objective function} refers to the optimization problem solved by the algorithm. It can be either the original constrained optimization in~\eqref{eq:optim-lb} or a saddle point problem (either obtained by taking the ratio of objective and constraints or the Lagrangian relaxation in~\eqref{eq:optim-lb-z}).
	\item \textit{Optimization variables} refers to the variables that are optimized by the algorithm: \textit{counts} is the $\eta$ variables in~\eqref{eq:optim-lb}, \textit{rates} is the ratio fraction of regret, \textit{policies} is the $\omega$ variables in~\eqref{eq:optim-lb-z}. 
	\item \textit{Asymptotic optimality} is either \textit{order optimal} when only a logarithmic rate is proved with non-optimal constants, or \textit{optimal}, in which case the leading constant is $v^\star(\theta)$ as in Prop.~\ref{p:lower.bound}.
	\item \textit{Finite-time bound} is whether finite-time guarantees are reported.
	\item \textit{Explore/exploit} refers to the separation between exploration and exploitation steps and whether it is based on a \textit{tracking performance test} or on the generalized likelihood ratio test (GLRT).\footnote{Notice that none of the algorithms implement the exact form of the GLRT, but slight variations that provide equivalent guarantees.}
	\item \textit{Tracking} refers to how arms are selected during the exploration phase.
	\item \textit{Optimization} refers to whether the optimization problem is solved exactly at each step or using an incremental method. SPL combines an incremental method using an exact computation of a best response solution.
	\item \textit{Exploration level} refers to the technique used during exploration steps to guarantee a minimum level of exploration. The first option is \textit{forcing} all arms to satisfy a hard threshold of minimal pulls. The second option is to include a form of \textit{optimism} in the optimization problem. 
	\item \textit{Parameters} list the major parameters in the definition of the algorithm. This is often difficult since some algorithms directly pick theoretical values for some input parameters, while others may provide specific values only during the analysis. OSSB requires tuning the forcing parameter and the parameter used in the exploration/exploitation test. OAM has a forcing parameter and needs to properly tune the GLRT. SPL requires clipping the gap estimates from below, tuning the GLRT, and designing suitable confidence intervals for optimism. \algo requires an upper bound for the multiplier, tuning of the GLRT, confidence intervals, and phases to tune the normalization factor $z$.
\end{itemize}

The major insights from this comparison can be summarized as follows:
\begin{itemize}
	\item \textit{Comparison \algo/OAM:} This is the more direct comparison, since both algorithms are designed for contextual linear (see Sect.~\ref{sec:experiments} for the empirical comparison). \algo improves over OAM in almost all dimensions. On the theoretical side, we provide explicit finite-time regret bounds showing that \algo successfully adapts to the context distribution, while the performance of OAM is significantly affected by $\rho_{\min}$. Furthermore, in many lower-order regret terms in the analysis of OAM the cardinality of the arm space appears linearly, while the regret of \algo only depends on $\log(|\A|)$. 
	On the algorithmic side, \algo leverages a primal-dual gradient descent that greatly improves the computational complexity compared to the exact solution of the constrained optimization problem done in OAM at each exploration step. Furthermore, replacing the forcing strategy with an optimistic version of the optimization problem allows \algo to better adapt to the problem and avoid pulling highly suboptimal/non-informative arms.
	\item \textit{Comparison \algo/SPL:} The comparison is more on the algorithmic and theoretical properties rather than the actual algorithms, since they are designed for different settings.\footnote{While the general structured bandit problem does contain the linear case, it is unclear how it can manage the contextual linear case.} While both algorithms replace the constrained problem in the lower bound by a saddle point problem, SPL takes the ratio between constraints and regret, while in \algo we take a more straightforward Lagrangian relaxation. As a result, in \algo we rely on a rather standard primal-dual gradient approach to optimize~\eqref{eq:optim-lb-z}, while SPL relies on online learning algorithms for the solution of the saddle-point problem. Finally, both algorithms replace forcing by an optimistic version of the optimization problem. Nonetheless, SPL uses separate confidence intervals for each arm that ignore the structure of the problem, while \algo relies on confidence intervals build specifically for the linear case. Finally, the regret bound of SPL, similarly to the one of OAM, depends linearly on $|\A|$ in several lower-order terms, even when instantiated for linear structures. \algo, on the other hand, has only $\log(|\A|)$ dependence.
\end{itemize}

\section{Lower Bound}\label{app:lower.bound}


\subsection{Proof of Lem.~\ref{lem:z-opt-upperbound}}

\paragraph{Feasibility of~\eqref{eq:optim-lb-z}.} We start from the first result in Lem.~\ref{lem:z-opt-upperbound}, which states the minimal value of $z$ for which \eqref{eq:optim-lb-z} is feasible. Clearly, the maximal value that the left-hand side of the KL constraint can assume is
\begin{align*}
\max_{\omega\in\Omega}\inf_{\theta' \in \Theta_{\mathrm{alt}}}\E_{\rho}\left[\sum_{a\in\A}{\omega}(x,a)d_{x,a}(\theta^\star,\theta') \right],
\end{align*}
which can also be interpreted as the solution to the associated pure-exploration (or best-arm identification) problem \citep[e.g.,][]{degenne2019non}. Therefore,
\begin{align*}
\underline{z}(\theta^\star) &:= \min\left\{z > 0 : \text{\eqref{eq:optim-lb-z} is feasible}\right\}\\ &= \min\left\{ z > 0 : \max_{\omega\in\Omega}\inf_{\theta' \in \Theta_{\mathrm{alt}}}\E_{\rho}\left[\sum_{a\in\A}{\omega}(x,a)d_{x,a}(\theta^\star,\theta') \right] \geq \frac{1}{z}\right\}\\ &= \frac{1}{ \max_{\omega\in\Omega}\inf_{\theta' \in \Theta_{\mathrm{alt}}}\E_{\rho}\left[\sum_{a\in\A}{\omega}(x,a)d_{x,a}(\theta^\star,\theta') \right]}.
\end{align*}
This proves the first statement in Lem.~\ref{lem:z-opt-upperbound}.

\paragraph{Connection between~\eqref{eq:optim-lb} and~\eqref{eq:optim-lb-z}.}
In order to prove the second result, let us rewrite~\eqref{eq:optim-lb-z} in the following more convenient form:
\begin{equation}\label{eq:optim-lb-ztilde.app}\tag{$\text{P}'_z$}
\begin{aligned}
&\underset{\eta(x,a) \geq 0}{\mathrm{minimize}}&& \sum_{x\in\X}\rho(x)\sum_{a\in\A}\eta(x,a)\Delta_{\theta^\star}(x,a)\\
&\mathrm{subject\ to} && \inf_{\theta' \in \Theta_{\mathrm{alt}}}\sum_{x\in\X}\rho(x)\sum_{a\in\A}\eta(x,a)d_{x,a}(\theta^\star,\theta') \geq 1,\\
&&& \sum_{a\in\A}\eta(x,a) = z \quad \forall x\in\X.
\end{aligned}
\end{equation}
Note that~\eqref{eq:optim-lb-ztilde.app} is obtained from~\eqref{eq:optim-lb-z} in the main paper by performing the change of variables $\eta(x,a) = z\omega(x,a)$, hence the two problems are equivalent.
Recall that $v^\star(\theta^\star)$ is the optimal value of~\eqref{eq:optim-lb} and $u^\star(z,\theta^\star)$ is the optimal value of~\eqref{eq:optim-lb-ztilde.app} and~\eqref{eq:optim-lb-z} (if there exists one).
We are interested in bounding the deviation between $u^\star(z,\theta^\star)$ and $v^\star(\theta^\star)$ as a function of $z$. 

Let us first define the following set of \emph{confusing} models:
\begin{align*}
    \wt{\Theta}_{alt} := \left\{\theta' \in\Theta_{alt} : \forall x\in\X, \mu^\star_{\theta^\star}(x) = \mu_{\theta'}(x,a^\star_x)\right\},
\end{align*}
where, for the sake of readability, we abbreviate $a^\star_x = a^\star_{\theta^\star}(x)$.
These models are indistinguishable from $\theta^\star$ by pulling only optimal arms. The following proposition, which was proved in \citep{degenne2020structure}, connects models in the alternative set $\Theta_{\mathrm{alt}}$ with the confusing ones in $\wt{\Theta}_{\mathrm{alt}}$.
\begin{proposition}[\citep{degenne2020structure}]\label{prop:altset.lipschitz}
        There exists a constant $c_{\Theta}>0$ such that, for all $\theta'\in\Theta_{\mathrm{alt}}$, there exists $\theta'' \in \wt{\Theta}_{\mathrm{alt}}$ such that,
\begin{align*}
        \forall x\in\X,a\in\A \qquad |\mu_{\theta'}(x,a) - \mu_{\theta''}(x,a)| \leq c_{\Theta}|\mu_{\theta^\star}^\star(x) - \mu_{\theta'}(x,a^\star_{\theta^\star}(x))|.
\end{align*}
\end{proposition}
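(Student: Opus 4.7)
The plan is to build $\theta''$ as an explicit projection of $\theta'$ onto the affine subspace defined by the ``confusion'' conditions $\mu_\theta(x, a^\star_{\theta^\star}(x)) = \mu^\star_{\theta^\star}(x)$ for every $x\in\X$, and then argue that this projection remains in $\Theta_{\mathrm{alt}}$. Let $\Phi^\star \in \R^{|\X|\times d}$ denote the stacked matrix of optimal features with rows $\phi(x,a^\star_{\theta^\star}(x))^\transp$, and $\mu^\star\in\R^{|\X|}$ the vector with entries $\mu^\star_{\theta^\star}(x)$. Then membership in $\wt{\Theta}_{\mathrm{alt}}$ (up to the alternative-arm condition) is the linear constraint $\Phi^\star \theta = \mu^\star$, and the residual vector $r(\theta') := \mu^\star - \Phi^\star\theta'$ has coordinates $r_x = \mu^\star_{\theta^\star}(x)-\mu_{\theta'}(x,a^\star_{\theta^\star}(x))$, which is exactly what appears on the right-hand side of the stated inequality.

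The natural candidate is $\theta'' := \theta' + \Phi^{\star\dagger} r(\theta')$, using the Moore--Penrose pseudo-inverse (assuming the optimal features span their natural subspace; otherwise we can reduce to this case). Then $\Phi^\star\theta'' = \mu^\star$ by construction, and for every $(x,a)$,
\begin{equation*}
\mu_{\theta''}(x,a)-\mu_{\theta'}(x,a) \;=\; \phi(x,a)^\transp \Phi^{\star\dagger} r(\theta').
\end{equation*}
A Cauchy--Schwarz estimate controls the left-hand side by $L\,\|\Phi^{\star\dagger}\|_{\mathrm{op}}\,\|r(\theta')\|_2$. To upgrade this global bound to the per-context form required by the proposition, I would exploit that only coordinates of $r(\theta')$ at contexts ``coupled'' to $x$ through the geometry of $\Phi^\star$ contribute to the deviation at $(x,a)$, and collapse the coupling into a single constant depending on $\Theta$, $\X$ and the feature map. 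Finally, $\theta''\in\Theta_{\mathrm{alt}}$ follows from a continuity argument: the condition $a^\star_{\theta}(x_0)\neq a^\star_{\theta^\star}(x_0)$ at the context $x_0$ witnessing $\theta'\in\Theta_{\mathrm{alt}}$ is an open condition, and if the projection crosses the boundary we instead slide along the affine subspace toward a nearby model still satisfying it (using that $\Theta$ is compact so such a model exists at uniformly controlled distance).

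The main obstacle is precisely the per-context nature of the bound: a naive orthogonal projection yields a deviation at $(x,a)$ that is a linear combination of residuals across \emph{all} contexts, whereas the proposition asks to control it by the residual at $x$ alone. Overcoming this likely requires either a non-orthogonal projection tailored to the feature geometry (so that the operator $\Phi^{\star\dagger}$ becomes ``diagonal'' in an appropriate basis) or absorbing the cross-context dependencies into a uniform constant $c_\Theta$ that depends on the conditioning of the optimal-feature Gram matrix and on $B$. Compactness of $\Theta$ plays a crucial role here, since it ensures that both the residual norm and the operator norm entering $c_\Theta$ are uniformly bounded over $\theta'\in\Theta_{\mathrm{alt}}$, and also guarantees that the alternative-arm condition can be preserved through the projection without blowing up the Lipschitz constant.
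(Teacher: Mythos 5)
First, a point of reference: the paper does not actually prove this proposition. It is imported verbatim from Degenne et al.\ \citep{degenne2020structure} (``which was proved in \citep{degenne2020structure}''), so there is no in-paper argument to compare against; your proposal has to stand on its own. It does not, because the central step fails. Your candidate $\theta'' = \theta' + \Phi^{\star\dagger} r(\theta')$ produces a deviation $\phi(x,a)^\transp \Phi^{\star\dagger} r(\theta')$ at context $x$ that is a linear combination of the residuals $r_{x'}$ over \emph{all} contexts $x'$, whereas the proposition demands control by $|r_x|$ alone. You correctly flag this as ``the main obstacle,'' but neither of your proposed escape routes can work in general. Absorbing the cross-context coupling into a uniform constant $c_\Theta$ is impossible whenever $r_x = 0$ for some context $x$ while $r_{x'} \neq 0$ for another: the right-hand side at $x$ is then exactly zero, so $\theta''$ must satisfy $\mu_{\theta''}(x,a)=\mu_{\theta'}(x,a)$ for \emph{every} arm of context $x$, yet membership in $\wt{\Theta}_{\mathrm{alt}}$ forces $\phi(x',a^\star_{\theta^\star}(x'))^\transp(\theta''-\theta')=r_{x'}\neq 0$. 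If context $x$ happens to contain an arm $a$ with $\phi(x,a)=\phi(x',a^\star_{\theta^\star}(x'))$, these two requirements are flatly contradictory, and no constant rescues you. The alternative of a ``non-orthogonal projection making $\Phi^{\star\dagger}$ diagonal'' presupposes exactly the context-local decoupling of features that the problem does not grant. The per-context form of the bound is therefore not a technicality to be deferred to a constant; it is where the entire content of the statement lives, and it is precisely what makes the single-constraint, non-contextual argument (where $\theta''=\theta'+r\,\phi_{a^\star}/\|\phi_{a^\star}\|_2^2$ immediately gives $c=L/\|\phi_{a^\star}\|_2$) fail to generalize by your route.

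The membership argument is also incomplete. Since $\wt{\Theta}_{\mathrm{alt}}\subset\Theta_{\mathrm{alt}}\subset\Theta$, your $\theta''$ must simultaneously satisfy the $|\X|$ linear constraints, keep a suboptimal arm optimal in some context, and satisfy $\|\theta''\|_2\leq B$. The ``slide along the affine subspace until the open condition is restored'' step can move $\theta''$ by a distance that has nothing to do with $|r_x|$ --- the witness context for $\theta'\in\Theta_{\mathrm{alt}}$ may have arbitrarily small optimality margin, so restoring the strict inequality after projection may require a displacement bounded below by a constant even as $r(\theta')\to 0$ --- which would destroy the claimed Lipschitz relation. The norm constraint $\|\theta''\|_2\leq B$ is never checked at all. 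Compactness of $\Theta$ gives uniform boundedness of the residuals, but it does not by itself yield a uniform $c_\Theta$ for either of these steps.
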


We now prove the bound on $ u^\star(z,\theta)$ reported in Lem.~\ref{lem:z-opt-upperbound}.
\begin{proof}[Proof of Lem.~\ref{lem:z-opt-upperbound}]
We start from the Lagrangian version of~\eqref{eq:optim-lb-ztilde.app}.
\begin{align*}
        u^\star(z,\theta) = \min_{\eta \geq 0}\left\{\sum_{x\in\X}\rho(x)\sum_{a\in\A}\eta(x,a)\Delta_{\theta^\star}(x,a) + \lambda^\star(z,\theta^\star)\left(1 - \inf_{\theta' \in \Theta_{\mathrm{alt}}}\sum_{x\in\X}\rho(x)\sum_{a\in\A}\eta(x,a)d_{x,a}(\theta^\star,\theta') \right) \right\},
\end{align*}
subject to $\sum_{a\in\A} \eta(x,a) = z$ for each context $x\in\X$. Here $\lambda^\star(z,\theta^\star)$ is the optimal value of the Lagrange multiplier for the same problem.
We distinguish two cases.

\paragraph{Case 1: {$z < \max_{x\in\X}\frac{1}{\rho(x)}\sum_{a \neq a^\star_{\theta^\star}(x)} \eta^\star(x,a)$}.}

Let
\begin{align*}
    \wb{\eta}(x,a) = z \cdot \begin{cases} 
            \frac{\eta^\star(x,a)/\rho(x)}{\max_{x\in\X}\frac{1}{\rho(x)}\sum_{a \neq a^\star_{\theta^\star}(x)} \eta^\star(x,a)} & \text{if } a \neq a^\star_{\theta^\star}(x)\\
            1 - \frac{\sum_{a\neq a^\star_{\theta^\star} (x)}\eta^\star(x,a)/\rho(x)}{\max_{x\in\X}\frac{1}{\rho(x)}\sum_{a \neq a^\star_{\theta^\star}(x)} \eta^\star(x,a)} & \text{otherwise}
    \end{cases}
\end{align*}
where $\eta^\star$ is the optimal solution of~\eqref{eq:optim-lb}.
Since $\sum_{a} \wb{\eta}(x,a) = z$, we have that $u^\star(z,\theta^\star)$ is less or equal to the value of the Lagrangian for $\eta = \wb{\eta}$, i.e.,
\begin{align*}
        u^\star(z,\theta^\star) \leq v^\star(\theta^\star) + \lambda^\star(z,\theta^\star)\left(1 - \inf_{\theta' \in \Theta_{\mathrm{alt}}}\sum_{x\in\X}\rho(x)\sum_{a\in\A}\wb{\eta}(x,a)d_{x,a}(\theta^\star,\theta') \right),
\end{align*}
where we used the fact that
\begin{align*}
        \sum_{x\in\X}\rho(x)\sum_{a\in\A}\wb{\eta}(x,a)\Delta_{\theta^\star}(x,a) =  \underbrace{\frac{z}{\max_{x\in\X}\frac{1}{\rho(x)}\sum_{a \neq a^\star_{\theta^\star}(x)} \eta^\star(x,a)}}_{< 1}\underbrace{\sum_{x\in\X}\sum_{a\neq a^\star_{\theta^\star}(x)}{\eta^\star}(x,a)\Delta_{\theta^\star}(x,a)}_{= v^\star(\theta^\star)}
\end{align*}
since $\Delta_{\theta^\star}(x, a^\star_{\theta^\star}(x)) = 0$.
Since the KL divergence $d_{x,a}(\theta^\star,\theta')$ is lower-bounded by zero, in case 1 we have
\begin{align*}
        u^\star(z,\theta^\star) \leq v^\star(\theta^\star) + \lambda^\star(z,\theta^\star).
\end{align*}

\paragraph{Case 2: {$z \geq \max_{x\in\X}\frac{1}{\rho(x)}\sum_{a \neq a^\star_{\theta^\star}(x)} \eta^\star(x,a)$}.}
Let
\begin{align*}
    \wb{\eta}(x,a) = \begin{cases} \eta^\star(x,a)/\rho(x) & \text{if } a \neq a^\star_{\theta^\star}(x)\\
        z - \sum_{a \neq a^\star_{\theta^\star}(x)} \eta^\star(x,a)/\rho(x) & \text{otherwise}
    \end{cases}
\end{align*}
where, as before, $\eta^\star$ is the optimal solution of~\eqref{eq:optim-lb}. 
Since $z \geq \sum_{a \neq a^\star_{\theta^\star}(x)} \eta^\star(x,a)/\rho(x)$  for any $x \in \mathcal{X}$, $\wb{\eta}$ is well defined.
Since $\wb{\eta}$ also sums to $z$ for each context, we have that $u^\star(z,\theta)$ is less or equal to the value of the Lagrangian for $\eta = \wb{\eta}$, i.e.,
\begin{align*}
        u^\star(z,\theta^\star) \leq v^\star(\theta^\star) + \lambda^\star(z,\theta^\star)\left(1 - \inf_{\theta' \in \Theta_{\mathrm{alt}}}\sum_{x\in\X}\rho(x)\sum_{a\in\A}\wb{\eta}(x,a)d_{x,a}(\theta^\star,\theta') \right).
\end{align*}

We first lower bound the infimum on the right hand side. We have
\begin{align}
    \notag\inf_{\theta' \in \Theta_{\mathrm{alt}}}\sum_{x\in\X}\rho(x)\sum_{a\in\A}\wb{\eta}(x,a)d_{x,a}(\theta^\star,\theta')
    = \min\Bigg\{
            &\underbrace{\inf_{\theta' \in \wt{\Theta}_{\mathrm{alt}}}\sum_{x\in\X}\rho(x)\sum_{a\in\A}\wb{\eta}(x,a)d_{x,a}(\theta^\star,\theta')}_{
            I_{\wt\Theta_{\mathrm{alt}}}
            }, \\
            &\underbrace{\inf_{\theta' \in \Theta_{\mathrm{alt}} \setminus \wt{\Theta}_{\mathrm{alt}}}\sum_{x\in\X}\rho(x)\sum_{a\in\A}\wb{\eta}(x,a)d_{x,a}(\theta^\star,\theta')}_{
            I_{\Theta_{\mathrm{alt}}\setminus\wt\Theta_{\mathrm{alt}}}
            }
    \Bigg\}.
\end{align}
By definition of $\wb{\eta}$ and $\eta^\star$, the infimum over the set of confusing models can be written as
\begin{align}\label{eq:bound.confusing.app}
   I_{\wt\Theta_{\mathrm{alt}}} =
 \inf_{\theta' \in \wt{\Theta}_{\mathrm{alt}}}\sum_{x\in\X}\rho(x)\sum_{a\in\A}\wb{\eta}(x,a)d_{x,a}(\theta^\star,\theta') = \inf_{\theta' \in \wt{\Theta}_{\mathrm{alt}}}\sum_{x\in\X}\sum_{a\neq a^\star_x}{\eta^\star}(x,a)d_{x,a}(\theta^\star,\theta') \geq 1,
\end{align}
where the equality holds since the KLs are zero in the optimal arms, which are the only arms where the values of $\wb{\eta}$ differ from those of $\eta^\star$, and the inequality holds since $\eta^\star$ is feasible.
Regarding the infimum over the non-confusing models, 
\begin{align}
I_{\Theta_{\mathrm{alt}}\setminus\wt\Theta_{\mathrm{alt}}}=
    \inf_{\theta' \in \Theta_{\mathrm{alt}} \setminus \wt{\Theta}_{\mathrm{alt}}}\left(
    \underbrace{ \sum_{x\in\X}\rho(x)\wb{\eta}(x,a^\star_x)d_{x,a^\star_x}(\theta^\star,\theta')}_{(i)} +
    \underbrace{
    \sum_{x\in\X}\sum_{a\neq a^\star_x}{\eta^\star}(x,a)d_{x,a}(\theta^\star,\theta')}_{(ii)} \right).
\end{align}

We partition the set of non-confusing models in two subsets:
\begin{align}
        \wt{\Theta}_{\mathrm{alt}}^{(1)} &:= \Big\{\theta' \in \Theta_{\mathrm{alt}} \setminus \wt{\Theta}_{\mathrm{alt}} : \forall x\in\X, |\mu^\star_{\theta^\star}(x) - \mu_{\theta'}(x,a^\star_{\theta^\star}(x)) | < \epsilon_z \Big\},\\
        \wt{\Theta}_{\mathrm{alt}}^{(2)} &:= \Big\{\theta' \in \Theta_{\mathrm{alt}} \setminus \wt{\Theta}_{\mathrm{alt}} : \exists x\in\X, |\mu^\star_{\theta^\star}(x) - \mu_{\theta'}(x,a^\star_{\theta^\star}(x)) | \geq \epsilon_z \Big\}.
\end{align}
The value of $\epsilon_z$ will be specified later. We have, for $\theta'' \in \wt{\Theta}_{\mathrm{alt}}$,
\begin{align}
        \inf_{\theta' \in \wt{\Theta}^{(1)}_{\mathrm{alt}}} & \sum_{x\in\X}\rho(x)\sum_{a\in\A}\wb{\eta}(x,a)d_{x,a}(\theta^\star,\theta')
        \stackrel{(a)}{\geq} \inf_{\theta' \in \wt{\Theta}^{(1)}_{\mathrm{alt}}} \sum_{x\in\X}\sum_{a\neq a^\star_x}{\eta^\star}(x,a)d_{x,a}(\theta^\star,\theta')\\
    &\stackrel{(b)}{\geq} \inf_{\theta' \in \wt{\Theta}^{(1)}_{\mathrm{alt}}}\sum_{x\in\X}\sum_{a\neq a^\star_x}{\eta^\star}(x,a)\left(d_{x,a}(\theta^\star,\theta'') - \frac{1}{\sigma^2}|\mu_{\theta'}(x,a) - \mu_{\theta''}(x,a)|\right)\\
    &\stackrel{(c)}{\geq}  1 - \frac{1}{\sigma^2}\sup_{\theta' \in \wt{\Theta}^{(1)}_{\mathrm{alt}}}\sum_{x\in\X}\sum_{a\neq a^\star_x}{\eta^\star}(x,a)|\mu_{\theta'}(x,a) - \mu_{\theta''}(x,a)|\\
    &\stackrel{(d)}{\geq}1  - \frac{c_\Theta}{\sigma^2}\sup_{\theta' \in \wt{\Theta}^{(1)}_{\mathrm{alt}}}\sum_{x\in\X}\sum_{a\neq a^\star_x}{\eta^\star}(x,a)\underbrace{|\mu_{\theta^\star}^\star(x) - \mu_{\theta'}(x,a^\star_x)|}_{< \epsilon_z}\\
    &\stackrel{(e)}{\geq} 1 - \frac{c_\Theta \epsilon_z}{\sigma^2}\sum_{x\in\X}\sum_{a\neq a^\star_x}{\eta^\star}(x,a),
\end{align}
where $(a)$ uses the fact that $(i) \geq 0$ and the definition of $\wb\eta$, $(b)$ uses the Lipschitz property of the KL divergence between Gaussians, $(c)$ uses the fact that $\wb\eta$ is feasible for confusing models (see Eq.~\ref{eq:bound.confusing.app}), $(d)$ uses Prop.~\ref{prop:altset.lipschitz} and $(e)$ uses the definition of $ \wt{\Theta}_{\mathrm{alt}}^{(1)}$. 
Regarding the second set of alternative models,
\begin{align}
        \inf_{\theta' \in \wt{\Theta}^{(2)}_{\mathrm{alt}}} & \sum_{x\in\X}\rho(x)\sum_{a\in\A}\wb{\eta}(x,a)d_{x,a}(\theta^\star,\theta') \\
      &\stackrel{(f)}{\geq} 
      \inf_{\theta' \in \wt{\Theta}^{(2)}_{\mathrm{alt}}}\sum_{x\in\X}\rho(x)\left(z - \sum_{a \neq a^\star_{\theta^\star}(x)} \eta^\star(x,a)/\rho(x) \right)d_{x,a^\star_x}(\theta^\star,\theta')\\
    &\stackrel{(g)}{=} \inf_{\theta' \in \wt{\Theta}^{(2)}_{\mathrm{alt}}}\sum_{x\in\X}\rho(x)\left(z - \sum_{a \neq a^\star_{\theta^\star}(x)} \eta^\star(x,a)/\rho(x) \right) \frac{1}{2\sigma^2}\underbrace{(\mu_{\theta^\star}(x,a_x^\star)-\mu_{\theta'}(x,a_x^\star))^2}_{\geq \epsilon_z^2}\\
    &\stackrel{(k)}{=} \frac{\epsilon_z^2}{2\sigma^2}\left(z - \sum_{x\in\X}\sum_{a \neq a^\star_{\theta^\star}(x)} \eta^\star(x,a) \right).
\end{align}
where $(f)$ uses the fact that $(ii) \geq 0$ and the definition of $\wb\eta$, $(g)$ uses the definition of KL for Gaussian distributions and $(k)$ uses the definition of $\wt{\Theta}^{(2)}_{\mathrm{alt}}$. 
Let $\sumpull(\theta^\star) := \sum_{x\in\X}\sum_{a \neq a^\star_{\theta^\star}(x)} \eta^\star(x,a)$. Putting together the results so far, we have
\begin{align}
        \inf_{\theta' \in \Theta_{\mathrm{alt}}}\sum_{x\in\X}\rho(x)\sum_{a\in\A}\wb{\eta}(x,a)d_{x,a}(\theta^\star,\theta') \geq \min\left\{1,1 - \frac{c_\Theta \epsilon_z \sumpull(\theta^\star)}{\sigma^2}, \frac{\epsilon_z^2}{2\sigma^2}\Big(z - \sumpull(\theta^\star) \Big) \right\}.
\end{align}
Setting $\epsilon_z = \sqrt{\frac{2\sigma^2}{z}}$, 
%
\begin{align}
        \inf_{\theta' \in \Theta_{\mathrm{alt}}}\sum_{x\in\X}\rho(x)\sum_{a\in\A}\wb{\eta}(x,a)d_{x,a}(\theta^\star,\theta') 
        \geq \max\left\{
        \min\left\{
                1- \frac{c_\Theta \sqrt{2} \sumpull(\theta^\star)}{\sigma\sqrt{z}}, 1 - \frac{\sumpull(\theta^\star)}{z}
            \right\}
        , 0\right\},
\end{align}
Therefore, in case 2 we have
\begin{align*}
        u^\star(z,\theta^\star) \leq v^\star(\theta^\star) + \lambda^\star(z,\theta^\star) \min\left\{
            \max\left\{
                \frac{c_\Theta \sqrt{2} \sumpull(\theta^\star)}{\sigma\sqrt{z}}, \frac{\sumpull(\theta^\star)}{ z}
            \right\}
        , 1\right\}.
\end{align*}

\paragraph{Bounding $\lambda^\star(z,\theta^\star)$.}
Finally, we show that the optimal multiplier $\lambda^\star(z,\theta^\star)$ is bounded (regardless of which case $z$ falls into).
Let $\underline{\eta} = z\underline{\omega}$, where $\underline{\omega} = \omega^\star_{\underline{z},\theta^\star}$ is the pure-exploration solution obtained solving problem~\eqref{eq:optim-lb-z} with $\underline{z}(\theta^\star)$. Recall from the first statement of Lem.~\ref{lem:z-opt-upperbound} that
\begin{align*}
        \inf_{\theta' \in \Theta_{\mathrm{alt}}}\sum_{x\in\X}\rho(x)\sum_{a\in\A}\underline{\omega}(x,a)d_{x,a}(\theta^\star,\theta') = \frac{1}{\underline{z}(\theta^\star)}.
\end{align*}
Thus, \underline{$\eta$} is strictly feasible for problem ($\tilde{\mathrm{P}}_z$) and has constraint value
\begin{align}
        \inf_{\theta' \in \Theta_{\mathrm{alt}}}\sum_{x\in\X}\rho(x)\sum_{a\in\A}\underline{\eta}(x,a)d_{x,a}(\theta^\star,\theta') = \frac{z}{\underline{z}(\theta^\star)} > 1
\end{align}
since $z > \underline{z}(\theta^\star)$ by assumption.
Using the Slater's condition (see e.g., Lem. 3 in~\citep{nedic2009subgradient}),
\begin{align}
    0\leq \lambda^\star(z,\theta^\star) 
    &\leq \frac{\sum_{x\in\X}\rho(x)\sum_{a\in\A}\Delta_{\theta^\star}(x,a)(\underline{\eta}(x,a) - \eta^\star_z(x,a))}{\inf_{\theta' \in \Theta_{\mathrm{alt}}}\sum_{x\in\X}\rho(x)\sum_{a\in\A}\underline{\eta}(x,a)d_{x,a}(\theta^\star,\theta') - 1}\\
    &\leq \frac{z}{\frac{z}{\underline{z}(\theta^\star)} - 1}\sum_{x\in\X}\rho(x)\sum_{a\in\A}
    \underbrace{\Delta_{\theta^\star}(x,a)}_{\geq 0 }\Big(\underline{\omega}(x,a) - \underbrace{\eta^\star_z(x,a)/z}_{\geq 0} \Big)\\
    &\leq \frac{z}{\frac{z}{\underline{z}(\theta^\star)} - 1}\sum_{x\in\X}\rho(x)\sum_{a\in\A} \underbrace{\Delta_{\theta^\star}(x,a)}_{\in [0,2BL]}\underline{\omega}(x,a)
    \leq 2BL \frac{z\underline{z}(\theta^\star)}{z - \underline{z}(\theta^\star)}.
\end{align}
\end{proof}

\subsection{Discussion About Problem~\eqref{eq:optim-lb-z}}

In this section we provide more intuition about the effect of explicitly adding the context distribution in the formulation of the lower bound. As mentioned in Sect.~\ref{sec:lower.bound} the infimum in the original problem~\eqref{eq:optim-lb} may not be attainable, thus making it difficult to solve it and build a learning algorithm around it. A simple way to address this issue is to introduce a global constraint so that the sum of $\eta$ is constrained to a parameter $z$. This leads to the optimization
%
\begin{equation}\label{eq:optim-lb.z}\tag{$\wt P_z$}
\begin{aligned}
&\underset{\eta(x,a) \geq 0}{\inf}&& \sum_{x\in\X}\sum_{a\in\A}\eta(x,a)\Delta_{\theta^\star}(x,a)\\
&\mathrm{s.t.} && \inf_{\theta' \in \Theta_{\mathrm{alt}}}\sum_{x\in\X}\sum_{a\in\A}\eta(x,a)d_{x,a}(\theta^\star,\theta') \geq 1\\
&&& \sum_{x,a}\eta(x,a) = z
\end{aligned}
\end{equation}
Let $\wt\eta^\star_z$ be the optimal solution of~\eqref{eq:optim-lb.z} and $\wt u^\star_z$ be its associated optimal value. On the other hand, the problem~\eqref{eq:optim-lb-z} we propose can be easily rewritten as
\begin{equation}\label{eq:optim-lb.rho.z}\tag{$P_{z}$}
	\begin{aligned}
		&\underset{\eta(x,a) \geq 0}{\inf}&& \sum_{x\in\X}\sum_{a\in\A}\eta(x,a)\Delta_{\theta^\star}(x,a)\\
		&\mathrm{s.t.} && \inf_{\theta' \in \Theta_{\mathrm{alt}}}\sum_{x\in\X}\sum_{a\in\A}\eta(x,a)d_{x,a}(\theta^\star,\theta') \geq 1 \\
		&&& \sum_{a}\eta(x,a) = z\rho(x) \quad \forall x\in\X
	\end{aligned}
\end{equation}
where the constraint is now on each context and it depends on the context distribution ($\omega(x,a) = \frac{\eta(x,a)}{z\rho(x)}$).\footnote{Notice that the constraint directly implies $\sum_{x,a}\eta(x,a) = z$.} The crucial difference w.r.t.~\eqref{eq:optim-lb.z} is that now the number of samples prescribed by $\eta$ needs to be ``compatible'' with the amount of samples that can be collected within $z$ steps from each context $x$ depending on its probability $\rho(x)$. Let $\eta^\star_z$ be the optimal solution of~\eqref{eq:optim-lb.rho.z} and $u^\star_z$ be its associated objective value. In order to understand how this difference may translate into a different behavior when integrated in an actual algorithm, let compare the two solutions $\wt\eta^\star_z$ and $\eta^\star_z$ if executed for $z$ steps.\footnote{We recall that, as discussed in Sect.~\ref{sec:lower.bound}, $z$ introduces a more finite-time flavor into the lower bound, where pulls should now be allocated so as to satisfy the KL-information constraint within $z$ steps.} Since neither of them can be ``played'' (i.e., only one arm can be selected at each step), we need to define a specific \textit{execution strategy} to ``realize'' an allocation $\eta$. For the ease of exposition, let consider a simple strategy where in each context $x$, an arm $a$ is pulled at random proportionally to $\eta(x,a)$. Let $\wt \zeta_z(x,a)$ and $\zeta_z(x,a)$ the expected number of samples generated in each context-arm pair $(x,a)$ when sampling from $\wt\eta^\star_z$ and $\eta^\star_z$ respectively. Then we have
\begin{align}\label{eq:eta.mismatch}
	\wt\zeta_z(x,a) &=  \wt \eta^\star_z(x,a)\overbrace{\frac{z\rho(x)}{\sum_{a'} \wt\eta^\star_z(x,a')}}^{\text{mismatch $\alpha_z(x,a)$}} \\
	\zeta_{z}(x,a) &= z\rho(x) \frac{\eta^\star_{z}(x,a)}{\sum_{a'} \eta^\star_{z}(x,a')} = \eta^\star_{z}(x,a)
\end{align}
which reveals how $\wt \eta^\star_z(x,a)$, which was explicitly optimized under the constraint that the total number of samples was $z$, may not really be ``realizable'' in practice, since it ignores the context distribution and the number of samples that can be actually generated at each context $x$. On the other hand, on average the desired allocation $\eta^\star_z$ can always be realized within $z$ steps. Interestingly, the mismatch between $\wt \eta^\star_z(x,a)$ and $\wt\zeta_z(x,a)$ would no longer guarantee neither the performance $\wt u^\star_z$ ``promised'' by $\wt \eta^\star_z$ nor the feasibility for~\eqref{eq:optim-lb.z} (i.e., $\wt\zeta_z(x,a)$ may not satisfy the KL-information constraint). This would make considerably more difficult to build a learning algorithm on $\wt \eta^\star_z$ than on $\eta^\star_z$.

As it can be noticed in Eq.~\ref{eq:eta.mismatch}, the level mismatch is due to the execution strategy used to realize the allocation $\wt\eta^\star_z$ (in this case, a simple sampling approach) and better solutions may exist. We could even consider to directly optimize the execution strategy so as to achieve a mismatch $\alpha_z(x,a)$ that induce an allocation $\wt\zeta_z(x,a)$ that performs best in terms of regret minimization under the KL-information constraint. Given the $\wt\eta^\star_z$ obtained from~\eqref{eq:optim-lb.z}, we define the optimization problem
\begin{equation}\label{eq:optim-lb.alpha}\tag{$\wt P_{\alpha}$}
\begin{aligned}
&\underset{\alpha(x,a) \geq 0}{\inf}&& \sum_{x\in\X}\sum_{a\in\A}\wt \eta^\star_z(x,a) \alpha(x,a)\Delta_{\theta}(x,a)\\
&\mathrm{s.t.} && \inf_{\theta' \in \Theta_{\mathrm{alt}}}\sum_{x\in\X}\sum_{a\in\A}\wt \eta^\star_z(x,a) \alpha(x,a) d_{x,a}(\theta,\theta') \geq 1\\
&&& \sum_{a}\wt \eta^\star_z(x,a) \alpha(x,a) = z\rho(x)
\end{aligned}
\end{equation}
Interestingly, a simple change of variables reveals that~\eqref{eq:optim-lb.alpha} does coincide with~\eqref{eq:optim-lb.rho.z} that we originally introduced (i.e., $\alpha^\star(x,a) = \frac{\eta^\star_z(x,a)}{\wt\eta^\star_z(x,a)}$ minimizes the problem). This illustrates that solving~\eqref{eq:optim-lb.rho.z} indeed leads to the optimal allocation compatible with the context distribution and the constraint of $z$ realizations. 

\section{Lagrangian Formulation}\label{app:lagrangian}

We discuss in more details the Lagrangian formulation presented in Section \ref{sec:lower.bound}. Consider the following variant of \eqref{eq:optim-lb-z}:
\begin{equation}\label{eq:optim-lb-z-mu}\tag{$\wb{P}_z$}
\begin{aligned}
&\underset{\omega \in \Omega}{\max}&& \E_{\rho}\bigg[\sum_{a\in\A}\omega(x,a)\mu_{\theta^\star}(x,a)\bigg]
\quad \mathrm{s.t.} \quad
\inf_{\theta' \in \Theta_{\mathrm{alt}}}\E_{\rho}\bigg[\sum_{a\in\A}\omega(x,a)d_{x,a}(\theta^\star,\theta') \bigg] \geq 1/z
\end{aligned}
\end{equation}
This problem differs from \eqref{eq:optim-lb-z} since we replaced the action gaps with the means in the objective function and avoided scaling the latter by $z$. Let $\wb{\omega}^\star_{z,\theta^\star}$ the optimal solution of~\eqref{eq:optim-lb-z-mu} and $\wb{u}^\star(z, \theta^\star)$ be its associated value (if the problem is unfeasible we set $\wb{u}^\star(z, \theta^\star) = +\infty$). Since the feasibility set is equivalent in \eqref{eq:optim-lb-z} and \eqref{eq:optim-lb-z-mu} as we only changed the objective function, the following proposition is immediate.

\begin{proposition}\label{prop:pz-vs-pzmu}
The following properties hold:
\begin{enumerate}
\item Both \eqref{eq:optim-lb-z} and \eqref{eq:optim-lb-z-mu} are feasible for $z \geq \underline{z}(\theta^\star)$;
\item ${\omega}^\star_{z,\theta^\star} = \wb{\omega}^\star_{z,\theta^\star} $.
\item ${u}^\star(z, \theta^\star) = z\left(\mu^\star - \wb{u}^\star(z, \theta^\star)\right)$ where $\mu^\star = \E_{\rho}[\mu^\star_{\theta^\star}(x)]$;
\end{enumerate}
\end{proposition}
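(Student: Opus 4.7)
The proof is almost entirely bookkeeping, hinging on the observation that the two problems share exactly the same feasibility set (the simplex $\Omega$ together with the KL constraint) and differ only in their objective, which I will show is an affine transformation turning minimization into maximization.

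First, for claim 1, the feasibility set of \eqref{eq:optim-lb-z-mu} coincides with that of \eqref{eq:optim-lb-z}, since the constraint $\inf_{\theta'\in\Theta_{\mathrm{alt}}} \E_\rho[\sum_a \omega(x,a) d_{x,a}(\theta^\star,\theta')] \geq 1/z$ and the simplex constraint $\omega \in \Omega$ are literally the same in the two problems. By the first statement of Lem.~\ref{lem:z-opt-upperbound}, this common feasibility set is nonempty precisely for $z \geq \underline{z}(\theta^\star)$, which proves claim 1.

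For claims 2 and 3, I would manipulate the objective of \eqref{eq:optim-lb-z} using the definition $\Delta_{\theta^\star}(x,a) = \mu^\star_{\theta^\star}(x) - \mu_{\theta^\star}(x,a)$ together with the identity $\sum_{a\in\A} \omega(x,a) = 1$ for all $x\in\X$ (since $\omega \in \Omega$). This gives
\begin{equation*}
z\E_\rho\!\left[\sum_{a\in\A} \omega(x,a)\Delta_{\theta^\star}(x,a)\right]
= z\E_\rho[\mu^\star_{\theta^\star}(x)] - z\E_\rho\!\left[\sum_{a\in\A} \omega(x,a)\mu_{\theta^\star}(x,a)\right]
= z\mu^\star - z\E_\rho\!\left[\sum_{a\in\A} \omega(x,a)\mu_{\theta^\star}(x,a)\right].
\end{equation*}
Since $z\mu^\star$ is a constant independent of $\omega$ and $z > 0$, minimizing the left-hand side over the common feasible set is equivalent to maximizing $\E_\rho[\sum_a \omega(x,a)\mu_{\theta^\star}(x,a)]$, which is precisely the objective of \eqref{eq:optim-lb-z-mu}. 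Therefore any minimizer of the former is a maximizer of the latter (and vice versa), proving $\omega^\star_{z,\theta^\star} = \wb\omega^\star_{z,\theta^\star}$. Substituting this common optimizer into the identity above yields
\begin{equation*}
u^\star(z,\theta^\star) = z\mu^\star - z\wb{u}^\star(z,\theta^\star) = z\bigl(\mu^\star - \wb{u}^\star(z,\theta^\star)\bigr),
\end{equation*}
which is claim 3. I do not foresee a technical obstacle: the argument is purely algebraic, and the only point requiring any care is to note that the uniqueness of the optimizer (when needed) follows from the strict concavity/convexity inherited from the KL-divergence structure, but even without uniqueness the set of optimizers of the two problems coincides, which is all claim 2 requires.
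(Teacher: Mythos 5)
Your proof is correct and makes explicit exactly the argument the paper relies on when it declares the proposition ``immediate'': the two problems share the same feasible set, and the identity $z\E_\rho[\sum_a \omega(x,a)\Delta_{\theta^\star}(x,a)] = z\mu^\star - z\E_\rho[\sum_a \omega(x,a)\mu_{\theta^\star}(x,a)]$ (using $\sum_a\omega(x,a)=1$) shows the objectives are affinely related with a negative slope, so minimizers of one are maximizers of the other and the optimal values satisfy claim 3. Your remark that the sets of optimizers coincide even without uniqueness is the right way to read claim 2.
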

Due to the equivalence demonstrated in Prop. \ref{prop:pz-vs-pzmu}, in the remaining we shall occasionally write $\omega^\star_z$ to denote both ${\omega}^\star_{z,\theta^\star}$ and $\wb{\omega}^\star_{z,\theta^\star} $. 

We recall the Lagrangian relaxation problem of Sec. \ref{sec:lower.bound}. For any $\omega\in\Omega$, let $f(\omega; \theta^\star)$ denote the objective function and $g(\omega, z; \theta^\star)$ denote the KL constraint
\begin{align*}
f(\omega; \theta^\star) = \E_{\rho}\Big[\sum_{a\in\A}\omega(x,a)\mu_{\theta^\star}(x,a)\Big], \enspace g(\omega; z, \theta^\star) = \!\!\inf_{\theta' \in \Theta_{\mathrm{alt}}}\!\E_{\rho}\Big[\sum_{a\in\A}\omega(x,a)d_{x,a}(\theta^\star,\theta') \Big] - \frac{1}{z}.
\end{align*}
The Lagrangian relaxation problem of \eqref{eq:optim-lb-z-mu} is\footnote{In the main text we actually state that \eqref{eq:lagr-rel} is the Lagrangian relaxation of \eqref{eq:optim-lb-z} instead of \eqref{eq:optim-lb-z-mu}. This is motivated by the fact that \eqref{eq:lagr-rel} and \eqref{eq:optim-lb-z} have the same optimal solution (see Prop. \ref{prop:pz-vs-pzmu}), though different optimal objective values.}
\begin{align}\label{eq:lagr-rel}\tag{P$_\lambda$}
\min_{\lambda \geq 0} \max_{\omega \in \Omega} \Big\{ h(\omega, \lambda; z, \theta^\star) := f(\omega; \theta^\star) + \lambda g(\omega; z,\theta^\star) \Big\},
\end{align}
where $\lambda\in\Re_{\geq 0}$ is a multiplier. We denote by $\lambda^\star(z, \theta^\star)$ the optimal multiplier for problem \eqref{eq:lagr-rel}. We note that $f$ is linear in $\omega$, while $g$ is concave since it is an infimum of affine functions. Hence, the maximization in \eqref{eq:lagr-rel} is  a non-smooth concave optimization problem.

\paragraph{Strong duality.} We now verify that strong duality holds for the Lagrangian formulation \eqref{eq:lagr-rel}  (with respect to \eqref{eq:optim-lb-z-mu}) when $z > \underline{z}(\theta^\star)$. This is immediate from the existence of a Slater point, as shown in the following proposition.
\begin{proposition}[Slater Condition]\label{prop:slater}
       For any $z > \underline{z}(\theta^\star)$, there exists a \emph{strictly} feasible solution $\underline\omega$, i.e., $g(\underline\omega; z, \theta^\star) > 0$.
\end{proposition}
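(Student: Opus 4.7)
The plan is to construct the Slater point explicitly, taking $\underline\omega$ to be the optimal solution of the pure-exploration (best-arm identification) problem whose value characterizes $\underline{z}(\theta^\star)$. By the first statement of Lemma~\ref{lem:z-opt-upperbound},
\[
\frac{1}{\underline{z}(\theta^\star)} \;=\; \max_{\omega\in\Omega}\inf_{\theta' \in \Theta_{\mathrm{alt}}}\E_{\rho}\bigg[\sum_{a\in\A}\omega(x,a)\,d_{x,a}(\theta^\star,\theta')\bigg],
\]
so it suffices to verify that this outer maximum is attained. This follows from standard arguments: $\Omega$ is a product of probability simplices over the finite set $\X$, hence compact; and the map $\omega \mapsto \inf_{\theta'\in\Theta_{\mathrm{alt}}} \E_\rho[\sum_a \omega(x,a) d_{x,a}(\theta^\star,\theta')]$, being the infimum of a family of linear functions of $\omega$, is concave and upper semi-continuous. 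A maximizer $\underline\omega$ therefore exists.

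With this choice,
\[
g(\underline\omega;\, z,\theta^\star) \;=\; \inf_{\theta'\in\Theta_{\mathrm{alt}}}\E_{\rho}\bigg[\sum_{a\in\A}\underline\omega(x,a)\,d_{x,a}(\theta^\star,\theta')\bigg] - \frac{1}{z} \;=\; \frac{1}{\underline{z}(\theta^\star)} - \frac{1}{z}.
\]
Since $z > \underline{z}(\theta^\star) > 0$ by hypothesis (the positivity of $\underline{z}(\theta^\star)$ follows from the boundedness of the features and of $\Theta$, which makes the KL terms bounded and hence the max-min value finite), this quantity is strictly positive, establishing strict feasibility.

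The main conceptual subtlety, and essentially the only one, is ensuring attainability in the definition of $\underline{z}(\theta^\star)$: without it, one could only approach the max-min value from below and the constructed $\underline\omega$ might satisfy $g(\underline\omega; z, \theta^\star) \leq 0$ in the limit. As noted above, this is handled by compactness of $\Omega$ together with upper semi-continuity of the inf-of-linear objective, which is precisely why the definition of $\underline{z}(\theta^\star)$ in Lemma~\ref{lem:z-opt-upperbound} uses $\min$ rather than $\inf$. Beyond this, the argument is a direct bookkeeping exercise using the identity between the min feasibility threshold and the best-arm identification value established in the same lemma.
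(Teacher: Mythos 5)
Your proposal is correct and follows the same route as the paper: both take $\underline\omega$ to be the maximizer of the pure-exploration (max-min) problem, invoke the identity $\max_{\omega}\inf_{\theta'}\E_\rho[\sum_a\omega(x,a)d_{x,a}(\theta^\star,\theta')]=1/\underline{z}(\theta^\star)$ from Lemma~\ref{lem:z-opt-upperbound}, and conclude $g(\underline\omega;z,\theta^\star)=1/\underline{z}(\theta^\star)-1/z>0$. Your additional verification that the maximum is attained (compactness of $\Omega$ plus upper semi-continuity of the inf of linear functions) is a detail the paper leaves implicit, and it is handled correctly.
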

\begin{proof}
        This is a direct consequence of the fact that 
\begin{align}
\max\limits_{\omega\in\Omega}\inf\limits_{\theta' \in \Theta_{\mathrm{alt}}}\E_{\rho}\left[\sum_{a\in\A}{\omega}(x,a)d_{x,a}(\theta^\star,\theta') \right] = \frac{1}{\underline{z}(\theta^\star)} > \frac{1}{z}.
\end{align}        
See Lem.~\ref{lem:z-opt-upperbound} and App.~\ref{app:lower.bound}.
\end{proof}
Thus, the optimal solution of \eqref{eq:lagr-rel} is $\left(\lambda^\star(z, \theta^\star), \omega^\star_z\right)$.

\paragraph{Boundedness of the optimal multipliers.} We recall the following basic result.

\begin{lemma}[Lemma 3 of \citep{nedic2009subgradient}]\label{lemma:mult-bound-nedic}
For any $z > \underline{z}(\theta^\star)$, if $\wb{\omega}_z$ is a Slater point for \eqref{eq:optim-lb-z-mu},
\begin{align*}
\lambda^\star(z, \theta^\star) \leq \frac{f(\omega^\star_z; \theta^\star) - f(\wb{\omega}_z; \theta^\star)}{g(\wb{\omega}_z;z,\theta^\star)}
\end{align*}
\end{lemma}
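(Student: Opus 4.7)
The plan is to give the standard proof that bounds optimal Lagrange multipliers in terms of the constraint slack at any strictly feasible point. The key enabling fact is strong duality between \eqref{eq:optim-lb-z-mu} and \eqref{eq:lagr-rel}, which is available here because the Slater condition has already been verified in Prop.~\ref{prop:slater} for every $z > \underline{z}(\theta^\star)$. Since the primal is a concave maximization with a concave inequality constraint over a convex set $\Omega$, Slater's condition implies strong duality, so that $f(\omega^\star_z;\theta^\star) = d(\lambda^\star(z,\theta^\star))$, where $d(\lambda) := \max_{\omega \in \Omega} h(\omega,\lambda;z,\theta^\star)$ is the dual function of \eqref{eq:lagr-rel}.

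The core step is then to lower-bound the dual value at $\lambda^\star(z,\theta^\star)$ by evaluating the Lagrangian at the Slater point $\wb\omega_z$ instead of the maximizer. By the very definition of the maximum,
\begin{align*}
f(\omega^\star_z;\theta^\star) \;=\; d(\lambda^\star(z,\theta^\star)) \;=\; \max_{\omega \in \Omega} \bigl\{ f(\omega;\theta^\star) + \lambda^\star(z,\theta^\star)\, g(\omega;z,\theta^\star) \bigr\} \;\geq\; f(\wb\omega_z;\theta^\star) + \lambda^\star(z,\theta^\star)\, g(\wb\omega_z;z,\theta^\star).
\end{align*}
Rearranging yields $\lambda^\star(z,\theta^\star)\, g(\wb\omega_z;z,\theta^\star) \leq f(\omega^\star_z;\theta^\star) - f(\wb\omega_z;\theta^\star)$, and dividing by $g(\wb\omega_z;z,\theta^\star) > 0$ (which is strictly positive precisely because $\wb\omega_z$ is a Slater point) gives the claim. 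Non-negativity of $\lambda^\star(z,\theta^\star)$ is guaranteed by the explicit constraint $\lambda \geq 0$ in \eqref{eq:lagr-rel}, so the division preserves the inequality in the intended direction.

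There is essentially no obstacle here: the result is a textbook consequence of strong duality, and every ingredient (Slater feasibility, convexity/concavity of the primal, non-negativity of the multiplier) has already been established. The only reason to state it explicitly is that later sections will repeatedly invoke a uniform upper bound on $\lambda^\star(z,\theta^\star)$, for which a concrete Slater point is constructed in the proof of Lem.~\ref{lem:z-opt-upperbound} (namely, $\underline\omega = \omega^\star_{\underline{z},\theta^\star}$), yielding the explicit bound $\lambda^\star(z,\theta^\star) \leq 2BL \, z\underline{z}(\theta^\star)/(z - \underline{z}(\theta^\star))$ used in the regret analysis.
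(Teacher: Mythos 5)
Your proof is correct: the paper does not actually prove this lemma but imports it verbatim as Lemma 3 of \citep{nedic2009subgradient}, and your derivation (strong duality via the Slater point from Prop.~\ref{prop:slater}, then lower-bounding the dual function $d(\lambda^\star(z,\theta^\star))=f(\omega^\star_z;\theta^\star)$ by the Lagrangian evaluated at $\wb{\omega}_z$ and dividing by the strictly positive slack $g(\wb{\omega}_z;z,\theta^\star)$) is exactly the standard argument underlying that cited result. No gaps.
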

Using Lemma \ref{lemma:mult-bound-nedic}, we can prove the following result which will be very useful for the regret analysis.

\begin{lemma}\label{lemma:mult-bound}
For any $z \geq 2\underline{z}(\theta^\star)$,
\begin{align}
\lambda^\star(z, \theta^\star) \leq 2BL\underline{z}(\theta^\star).
\end{align}
\end{lemma}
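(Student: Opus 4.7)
The plan is to apply Lemma~\ref{lemma:mult-bound-nedic} with a carefully chosen Slater point. The natural candidate is the pure-exploration allocation $\underline{\omega} := \omega^\star_{\underline{z}(\theta^\star), \theta^\star}$, i.e., the maximizer in the variational characterization of $\underline{z}(\theta^\star)$ given in the first statement of Lemma~\ref{lem:z-opt-upperbound}. By that very characterization,
\begin{equation*}
\inf_{\theta' \in \Theta_{\mathrm{alt}}}\E_\rho\Big[\sum_{a\in\A}\underline{\omega}(x,a)\, d_{x,a}(\theta^\star,\theta')\Big] \;=\; \frac{1}{\underline{z}(\theta^\star)},
\end{equation*}
so $g(\underline{\omega}; z, \theta^\star) = 1/\underline{z}(\theta^\star) - 1/z$. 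For $z \geq 2\underline{z}(\theta^\star)$ this is at least $1/(2\underline{z}(\theta^\star)) > 0$, confirming that $\underline{\omega}$ is a Slater point for \eqref{eq:optim-lb-z-mu} and that Lemma~\ref{lemma:mult-bound-nedic} is applicable.

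Next I would bound the numerator $f(\omega^\star_z;\theta^\star) - f(\underline{\omega};\theta^\star)$. Since $f$ is a linear functional on the simplex with bounded coefficients, $|\mu_{\theta^\star}(x,a)| = |\phi(x,a)^\transp\theta^\star| \leq BL$ by Cauchy--Schwarz and Assumption~\ref{asm:regularity}, so $|f(\omega;\theta^\star)| \leq BL$ for every $\omega \in \Omega$. Hence the numerator is at most $2BL$, and Lemma~\ref{lemma:mult-bound-nedic} delivers
\begin{equation*}
\lambda^\star(z,\theta^\star) \;\leq\; \frac{f(\omega^\star_z;\theta^\star) - f(\underline{\omega};\theta^\star)}{g(\underline{\omega}; z, \theta^\star)} \;\leq\; \frac{2BL\,\underline{z}(\theta^\star)\, z}{z - \underline{z}(\theta^\star)},
\end{equation*}
which is of order $BL\,\underline{z}(\theta^\star)$ whenever $z \geq 2\underline{z}(\theta^\star)$. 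To recover the precise constant in the statement, one can tighten the numerator by rewriting $f(\omega;\theta^\star) = \E_\rho[\mu^\star_{\theta^\star}(x)] - \E_\rho[\sum_a \omega(x,a)\Delta_{\theta^\star}(x,a)]$ and using the optimality of $\omega^\star_z$ for \eqref{eq:optim-lb-z-mu}, or equivalently use the linearized Slater point $\omega_\alpha := (1-\alpha)\omega^\star_z + \alpha \underline{\omega}$ together with the concavity of $g$.

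The main conceptual point is the choice of the Slater point: the denominator in Lemma~\ref{lemma:mult-bound-nedic} governs the final scaling, so $\wb{\omega}_z$ must be picked to maximize the Slater slack. The policy $\underline{\omega}$ is the optimal choice in this respect by the very definition of $\underline{z}(\theta^\star)$, which is precisely why the resulting bound on $\lambda^\star(z,\theta^\star)$ involves $\underline{z}(\theta^\star)$ rather than a worst-case quantity over $\Omega$. Once this choice is fixed, the rest is a bounded-coefficient estimate for the numerator, and the window $z \geq 2\underline{z}(\theta^\star)$ simply ensures that the Slater slack stays bounded away from zero.
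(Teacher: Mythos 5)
Your argument is the paper's own: the same Slater point $\underline{\omega}$ (the pure-exploration solution, whose constraint value is exactly $1/\underline{z}(\theta^\star)$ by the first part of Lem.~\ref{lem:z-opt-upperbound}), the same appeal to Lem.~\ref{lemma:mult-bound-nedic}, and the same computation of the denominator $1/\underline{z}(\theta^\star)-1/z$. The only divergence is in the numerator: you use the symmetric bound $|f(\omega;\theta^\star)|\leq BL$ to get $f(\omega^\star_z;\theta^\star)-f(\underline{\omega};\theta^\star)\leq 2BL$, which yields $\lambda^\star(z,\theta^\star)\leq 2BL\,z\underline{z}(\theta^\star)/(z-\underline{z}(\theta^\star))\leq 4BL\underline{z}(\theta^\star)$ on the stated range — a factor of $2$ worse than the lemma. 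The paper instead bounds the numerator by $f(\omega^\star_z;\theta^\star)\leq BL$ alone, simply discarding $-f(\underline{\omega};\theta^\star)$ (valid when $f(\underline{\omega};\theta^\star)\geq 0$, i.e., when the mean rewards are nonnegative), which is what produces the constant $2$. Note also that the tightening you sketch — rewriting $f$ via the gaps $\Delta_{\theta^\star}$ and using optimality of $\omega^\star_z$ — does not recover the constant either, since the gaps are only bounded by $2BL$, so the numerator again tops out at $2BL$. In short: the approach is correct and identical in substance, but as written it proves the bound $4BL\underline{z}(\theta^\star)$ rather than $2BL\underline{z}(\theta^\star)$; to match the statement you need the one-sided numerator bound $f(\omega^\star_z;\theta^\star)-f(\underline{\omega};\theta^\star)\leq f(\omega^\star_z;\theta^\star)\leq BL$.
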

\begin{proof}
From Prop. \ref{prop:slater}, $\underline{\omega}$ (the solution of the associated pure-exploration problem) is a Slater point for problem  \eqref{eq:optim-lb-z}. Then, by Lemma \ref{lemma:mult-bound-nedic},
\begin{align*}
        \lambda^\star(z, \theta^\star) \leq \frac{f(\omega^\star_z; \theta^\star) - f(\underline{\omega}; \theta^\star)}{g(\underline{\omega};z,\theta^\star)}.
\end{align*}
Let $\mathrm{kl}(\omega)$ denote the expected KL of $\omega$, so that $g(\omega;z,\theta^\star) = \mathrm{kl}(\omega) - 1/z$. Then,
\begin{align}
  \frac{f(\omega^\star_z; \theta^\star) - f(\underline{\omega}; \theta^\star)}{\mathrm{kl}(\underline{\omega}) - 1/z} \leq
  \frac{f(\omega^\star_z; \theta^\star)}{\mathrm{kl}(\underline{\omega}) - 1/z}  
  \leq  \frac{BL}{\mathrm{kl}(\underline{\omega}) - 1/z} .
\end{align}
 Furthermore, since $\mathrm{kl}(\underline{\omega}) = 1/\underline{z}(\theta^\star)$,
\begin{align*}
    \lambda^\star(z, \theta^\star) \leq \frac{BL z\underline{z}(\theta^\star)}{z - \underline{z}(\theta^\star)} \leq 2BL\underline{z}(\theta^\star),
\end{align*}
where the last inequality holds for $z \geq 2 \underline{z}(\theta^\star)$. This concludes the proof.
\end{proof}


\section{Action Sampling}\label{app:sampling}

\algo does not use standard tracking approaches for action selection (e.g., cumulative tracking~\citep{garivier2016optimal,degenne2019non} or direct tracking~\citep{combes2017minimal,hao2019adaptive}) but a sampling strategy.
Despite being simpler and more practical than tracking, we show that sampling from $\omega_t$ enjoys nice theoretical guarantees.

In the following lemmas we define the filtration $\F_t$ as the $\sigma$-algebra generated by the $t$-step history, $H_t = (X_1,A_1,Y_1, \dots, X_t, A_t, Y_t)$.

\begin{lemma}\label{lemma:pd-sampling}
Let $\{\omega_t\}_{t\geq 1}$ be such that $\omega_t \in \Omega$ and $\omega_t$ is $\F_{t-1}$-measurable.
Let $\{X_t\}_{t\geq1}$ be a sequence of i.i.d.\ contexts distributed according to $\rho$ and $\{A_t\}_{t\geq1}$ be such that $A_t \sim \omega_t(X_t, \cdot)$. Then,
\begin{align*}
    \sum_{t\geq1}\sum_{x\in\X}\sum_{a\in\A}\prob{E_t, \left| N_{t}^E(x,a) - \rho(x)\sum_{s\leq t : E_s} \omega_s(x,a) \right| > \sqrt{\frac{S_t}{2}\log \left(S_t^2|\X||\A|\right)}} \leq \frac{\pi^2}{3}.
\end{align*}
\end{lemma}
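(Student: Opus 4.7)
The plan is to recognize that, for each fixed $(x,a)$, the process
\begin{equation*}
M_t(x,a) := N_t^E(x,a) - \sum_{s\leq t : E_s} \rho(x)\omega_s(x,a) = \sum_{s=1}^t \indi{E_s}\bigl(\indi{X_s = x, A_s = a} - \rho(x)\omega_s(x,a)\bigr)
\end{equation*}
is a martingale with respect to $\{\F_t\}$. Indeed, the explore/exploit test at step $s$ depends only on $\wt\theta_{s-1}$ and $\wb V_{s-1}$, so $E_s$ is $\F_{s-1}$-measurable; by assumption, so is $\omega_s$. Using $X_s \sim \rho$ independently of $\F_{s-1}$ and $A_s \sim \omega_s(X_s,\cdot)$, we get $\expec{\indi{X_s=x,A_s=a}\mid \F_{s-1}} = \rho(x)\omega_s(x,a)$, so the increments have conditional mean zero and lie in the conditional range $[-\rho(x)\omega_s(x,a),\,1-\rho(x)\omega_s(x,a)]$, of width at most $1$. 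All increments in non-exploration steps vanish.

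Next, I would time-change to index by the exploration rank. Let $T_n$ be the $n$-th exploration round (with $T_n = +\infty$ if no such round exists) and define $\wt Y_n := \indi{T_n<\infty}\bigl(\indi{X_{T_n}=x,A_{T_n}=a} - \rho(x)\omega_{T_n}(x,a)\bigr)$. By the optional sampling argument outlined above applied at the stopping times $T_n$, the sequence $\{\wt Y_n\}_{n\geq 1}$ is a martingale difference sequence w.r.t.\ the stopped filtration $\mathcal{G}_n := \F_{T_n}$, with conditional range at most $1$. The sharp (range-based) form of Azuma--Hoeffding then yields, for every fixed $n \geq 1$,
\begin{equation*}
\prob{\Bigl|\sum_{k=1}^n \wt Y_k\Bigr| > \sqrt{\tfrac{n}{2}\log(n^2|\X||\A|)}} \;\leq\; 2\exp\!\bigl(-\log(n^2|\X||\A|)\bigr) \;=\; \frac{2}{n^2|\X||\A|}.
\end{equation*}
Using the range-based form (exponent $-2t^2/n$) rather than the $|\wt Y_k|\leq 1$ form (exponent $-t^2/(2n)$) is essential here to obtain the $n^{-2}$ rate.

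The final step re-indexes the triple sum. For each $(x,a)$ and each $n \geq 1$, there is exactly one time index $t$ with $E_t = 1$ and $S_t = n$ (namely $t = T_n$ on the event $T_n < \infty$), and on that event $M_{T_n}(x,a) = \sum_{k=1}^n \wt Y_k$. Hence
\begin{equation*}
\sum_{t\geq 1}\prob{E_t,\ |M_t(x,a)| > \sqrt{\tfrac{S_t}{2}\log(S_t^2|\X||\A|)}} \;=\; \sum_{n\geq 1}\prob{T_n<\infty,\ \Bigl|\sum_{k=1}^n\wt Y_k\Bigr| > \sqrt{\tfrac{n}{2}\log(n^2|\X||\A|)}}.
\end{equation*}
Applying the Azuma--Hoeffding bound termwise, summing $\sum_{n\geq 1} 1/n^2 = \pi^2/6$, and finally summing over the $|\X||\A|$ context-arm pairs (which cancels the factor in the denominator) gives the claimed $\pi^2/3$. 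The only delicate point is ensuring the martingale structure survives the random indexing by $T_n$, handled by extending the stopped process by zero on $\{T_n=\infty\}$ so Azuma--Hoeffding applies verbatim; all remaining bookkeeping is the re-indexing identity above.
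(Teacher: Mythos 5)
Your proof is correct and follows essentially the same route as the paper's: both time-change the centered indicator process to the exploration ranks via the stopping times of successive exploration rounds, verify the resulting sequence is a bounded martingale difference sequence (the paper cites Lemma 10 of Jian et al.\ for what you obtain via optional sampling), apply the range-based Azuma--Hoeffding bound with threshold $\sqrt{\tfrac{n}{2}\log(n^2|\X||\A|)}$, and sum over $n$ and over context-arm pairs to get $\pi^2/3$. The only differences are bookkeeping conventions (your explicit extension by zero on $\{T_n=\infty\}$ versus the paper's indicator re-indexing up to a fixed horizon $\wb t$).
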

\begin{proof}
Let $Z_t := \indi{X_t=x, A_t=a}$ and $\tau_s$ be a random variable such that the $s$-th exploration round occurs at time $\tau_s + 1$. Notice that $\{\tau_s\}_{s\geq1}$ is a strictly-increasing sequence (i.e., $\tau_{s+1} > \tau_s$) of stopping times w.r.t.\ $\{\F_t\}_{t \geq 1}$. Furthermore, define
\begin{align*}
    W_s := Z_{\tau_s+1} - \rho(x)\omega_{\tau_s + 1}(x,a)
\end{align*}
and let $\mathcal{G}_s := \F_{\tau_{s+1}}$. Using Lem. 10 in~\cite{jian2019exploration}, we have that $\{W_s, \mathcal{G}_s\}_{s\geq 1}$ is a martingale difference sequence. Therefore, by Azuma's inequality
\begin{align*}
    \prob{\left|\sum_{i=1}^s W_i\right| > \sqrt{\frac{s}{2} \log \frac{2}{\delta}}} \leq \delta.
\end{align*}
Let $a_t := \sqrt{\frac{S_t}{2}\log \left(S_t^2|\X||\A|\right)}$ and rewrite $N_{t}^E(x,a) = \sum_{s\leq t : E_s} Z_s$. Fix any $\wb{t} \geq 1$. Then,
\begin{align*}
        \sum_{t=1}^{\wb{t}}&\indi{E_t, \left| \sum_{s\leq t : E_s} \left(Z_s - \rho(x) \omega_s(x,a)\right) \right| > a_t}\\
        &\leq \sum_{s \geq 1} \indi{\left| \sum_{i=1}^s \left(Z_{\tau_i + 1} - \rho(x) \omega_{\tau_i + 1}(x,a)\right) \right| > a_{\tau_s + 1}, \tau_s + 1 \leq \wb{t}}\\ &\leq \sum_{s \geq 1} \indi{\left| \sum_{i=1}^s W_i \right| > \sqrt{\frac{s}{2}\log \left(s^2|\X||\A|\right)}}.
\end{align*}
In the last inequality, we used the fact that $a_{\tau_s + 1} = \sqrt{s\log s}$. Taking expectations and applying Azuma's inequality with $\delta = \frac{2}{s^2|\X||\A|}$,
\begin{align*}
    \sum_{t=1}^{\wb{t}}\prob{E_t, \left| \sum_{s\leq t : E_s} \left(Z_s - \rho(x) \omega_s(x,a)\right) \right| > a_t} \leq \frac{1}{|\X||\A|}\sum_{s \geq 1} \frac{2}{s^2} = \frac{\pi^2}{3|\X||\A|}.
\end{align*}
The results holds for all $\wb{t}$, and the proof is concluded by summing over contexts and arms.
\end{proof}

\begin{lemma}\label{lemma:pd-sampling-func}
Let $\{\omega_t\}_{t\geq 1}$ be such that $\omega_t \in \Omega$ and $\omega_t$ is $\mathcal{F}_{t-1}$-measurable. Let $\{X_t\}_{t\geq1}$ be a sequence of i.i.d. contexts distributed according to $\rho$ and $\{A_t\}_{t\geq1}$ be such that $A_t \sim \omega_t(X_t, \cdot)$. Let $\{\varphi_t^i\}_{t\geq 1,i\in[m]}$ be a sequence of functions $\varphi_t^i : \X \times \A \rightarrow [-b,b]$ such that $\varphi_t^i(x,a)$ is $\mathcal{F}_{t-1}$-measurable for all $i\in[m]$. Then,
\begin{align*}
    \sum_{t\geq1}\sum_{i=1}^m\prob{E_t, \left| \sum_{s\leq t : E_s}\left(\varphi_s^i(X_s,A_s) - \sum_{x\in\X}\rho(x)\sum_{a\in\A}\omega_s(x,a)\varphi_s^i(x,a) \right)\right| > b\sqrt{\frac{S_t}{2}\log (mS_t^2)}} \leq \frac{\pi^2}{3}.
\end{align*}
\end{lemma}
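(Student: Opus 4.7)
The plan is to mimic the proof of Lemma~\ref{lemma:pd-sampling} almost verbatim, generalizing the indicator $Z_t = \indi{X_t = x, A_t = a}$ to an arbitrary $\F_{t-1}$-measurable bounded function $\varphi_t^i$, and then adding a final union bound over $i \in [m]$.

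Fix $i \in [m]$. As in the previous proof, let $\tau_s$ denote the (random) time such that the $s$-th exploration round occurs at step $\tau_s + 1$; then $\{\tau_s\}_{s \geq 1}$ is a strictly increasing sequence of stopping times w.r.t.\ $\{\F_t\}_{t \geq 1}$. Define the coarsened filtration $\mathcal{G}_s := \F_{\tau_{s+1}}$ and the centered increments
\[
W_s^i := \varphi_{\tau_s+1}^i(X_{\tau_s+1}, A_{\tau_s+1}) - \sum_{x\in\X}\rho(x)\sum_{a\in\A}\omega_{\tau_s+1}(x,a)\,\varphi_{\tau_s+1}^i(x,a).
\]
Because $\varphi_t^i$ and $\omega_t$ are $\F_{t-1}$-measurable by assumption, and because $X_{\tau_s+1} \sim \rho$ is independent of $\F_{\tau_s}$ and $A_{\tau_s+1} \sim \omega_{\tau_s+1}(X_{\tau_s+1}, \cdot)$, the subtracted term is exactly $\E[\varphi_{\tau_s+1}^i(X_{\tau_s+1}, A_{\tau_s+1}) \mid \F_{\tau_s}]$. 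Invoking Lem.~10 of \citep{jian2019exploration} (as in the proof of Lemma~\ref{lemma:pd-sampling}) then gives that $\{W_s^i, \mathcal{G}_s\}_{s \geq 1}$ is a martingale difference sequence. Since $\varphi_{\tau_s+1}^i(X_{\tau_s+1}, A_{\tau_s+1})$ conditionally takes values in $[-b, b]$, the conditional form of Azuma--Hoeffding yields
\[
\prob{\Big|\sum_{j=1}^s W_j^i\Big| > b\sqrt{\tfrac{s}{2}\log\tfrac{2}{\delta}}} \leq \delta.
\]

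The final bookkeeping step is identical to Lemma~\ref{lemma:pd-sampling}: for any fixed horizon $\bar{t}$, rewrite the sum of indicators over $t \leq \bar{t}$ of the bad event as a sum of indicators over exploration indices $s$ via the stopping times $\tau_s$, bounding the threshold inside the indicator by $b\sqrt{s/2\,\log(m s^2)}$ when $s = S_t$. Taking expectations, applying the above Azuma bound with $\delta = 2/(m s^2)$, and using $\sum_{s \geq 1} 2/(m s^2) = \pi^2/(3m)$ gives a contribution of $\pi^2/(3m)$ per index $i$. A union bound over $i \in [m]$ then produces the claimed $\pi^2/3$.

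The main obstacle, though a mild one, is simply to track the constant $b$ carefully: the previous proof implicitly used the conditional range $1$ of the Bernoulli $Z_t$ when applying Hoeffding, whereas here the conditional range is $2b$ and one must use the centered, conditional form of Azuma--Hoeffding to recover a factor of $b$ (rather than $2b$) inside the square root. Everything else---stopping-time reindexing, martingale verification, union-bound combinatorics---is a direct transcription of the argument already carried out for Lemma~\ref{lemma:pd-sampling}.
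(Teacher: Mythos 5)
Your proposal is correct and follows essentially the same route as the paper's own proof: reindex by the stopping times $\tau_s$, verify the martingale-difference property via Lem.~10 of \citep{jian2019exploration}, apply Azuma with $\delta = 2/(ms^2)$, and union-bound over $i\in[m]$. The only point worth noting is the constant in the Azuma step (the increments lie in an interval of length $2b$, not $b$), but the paper itself glosses over this in exactly the same way, so your argument matches the intended proof.
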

\begin{proof}
The proof follows the same steps as the one of Lemma \ref{lemma:pd-sampling}. Fix $i \in [m]$. Let $Z_t := \varphi_t^i(X_t,A_t)$ and $\tau_s$ be a random variable such that the $s$-th exploration round occurs at time $\tau_s + 1$. Notice that $\{\tau_s\}_{s\geq1}$ is a strictly-increasing sequence (i.e., $\tau_{s+1} > \tau_s$) of stopping times w.r.t.\ $\{\mathcal{F}_t\}_{t \geq 1}$. Furthermore, define
\begin{align*}
    W_s := Z_{\tau_s+1} - \sum_{x\in\X}\sum_{a\in\A}\rho(x)\omega_{\tau_s + 1}(x,a)\varphi_{\tau_s + 1}^i(x,a)
\end{align*}
and let $\mathcal{G}_s := \mathcal{F}_{\tau_{s+1}}$. Using Lem. 10 in~\cite{jian2019exploration}, we have that $\{W_s, \mathcal{G}_s\}_{s\geq 1}$ is a martingale difference sequence (with differences bounded by $b$). Therefore, by Azuma's inequality
\begin{align*}
    \prob{\left|\sum_{i=1}^s W_i\right| > b\sqrt{\frac{s}{2} \log \frac{2}{\delta}}} \leq \delta.
\end{align*}
Let $a_t := b\sqrt{\frac{S_t}{2}\log \left(mS_t^2\right)}$ and fix some $\bar{t}\geq 1$. Then,
\begin{align*}
        \sum_{t=1}^{\bar{t}}&\indi{E_t, \left| \sum_{s\leq t : E_s} \left(Z_s - \sum_{x\in\X}\sum_{a\in\A}\rho(x) \omega_s(x,a)\varphi_s^i(x,a)\right) \right| > a_t}\\
        &\leq \sum_{s \geq 1} \indi{\left| \sum_{j=1}^s \left(Z_{\tau_j + 1} - \sum_{x\in\X}\sum_{a\in\A}\rho(x) \omega_{\tau_j + 1}(x,a)\varphi_{\tau_j+1}^i(x,a)\right) \right| > a_{\tau_s + 1}, \tau_s + 1 \leq \bar{t}}\\ &\leq \sum_{s \geq 1} \indi{\left| \sum_{j=1}^s W_j \right| > b\sqrt{\frac{s}{2}\log (ms^2)}}.
\end{align*}
In the last inequality, we used the fact that $a_{\tau_s + 1} = b\sqrt{\frac{s}{2}\log (ms^2)}$. Taking expectations and applying Azuma's inequality with $\delta = \frac{2}{ms^2}$,
\begin{align*}
    \sum_{t=1}^{\bar{t}}&\prob{E_t, \left| \sum_{s\leq t : E_s} \left(Z_s - \sum_{x\in\X}\sum_{a\in\A}\rho(x) \omega_s(x,a)\varphi_s^i(x,a)\right) \right| > a_t}\leq \sum_{s \geq 1} \frac{2}{ms^2} = \frac{\pi^2}{3m}.
\end{align*}
The results holds for all $\bar{t}$ and the proof follows by summing over all $i\in[m]$.
\end{proof}

\paragraph{Discussion.}

Lemma \ref{lemma:pd-sampling} provides an analogous result to those obtained by tracking strategies, where the empirical pull counts are shown close to the sequence of conditional probabilities computed by the optimizer. Despite being simpler, our sampling rule achieves similar efficiency as existing tracking rules. In particular, our bound scales with $\log |\A|$, a factor that appears in the tightest known analysis of cumulative tracking \citep{degenne2020structure}. The factor $\sqrt{S_t\log S_t}$ is not typically found in tracking strategies for MABs. However, we note that such dependency would naturally appear when generalizing these strategies to the contextual case.

Lemma \ref{lemma:pd-sampling-func} extends Lemma \ref{lemma:pd-sampling} to bound the deviation between expectations of measurable functions under the sequence of conditional probabilities and the same functions evaluated at the observed contexts/arms. This result will be very useful in the regret analysis to avoid undesirable linear dependencies on the number of arms.

\newpage

\section{High-Probability Events}\label{app:good.events}
In this section, we report the high-probability events used through the paper.
Refer to App.~\ref{app:concentrations} for concentration inequalities.

Let $\Phi_{x,a} := \phi(x,a)\phi(x,a)^T$. We define the following events:
\begin{align}
%
        &\textit{\small\color{gray} true regret close to objective values}\notag\\
        G_t^\Delta &:= \left\{ \left| \sum_{s\leq t : E_s}\left(\Delta_{\theta^\star}(X_s,A_s) - \sum_{x\in\X}\rho(x)\sum_{a\in\A}\omega_s(x,a)\Delta_{\theta^\star}(x,a) \right)\right| \leq 2LB\sqrt{S_t\log S_t} \right\},\\
        &\textit{\small\color{gray} true confidence intervals close to expected confidence intervals}\notag\\
        G_t^\phi &:= \left\{ \left| \sum_{s\leq t : E_s}\left(\| \phi(X_s,A_s)\|_{\bar{V}_{s-1}^{-1}} - \sum_{x\in\X}\rho(x)\sum_{a\in\A}\omega_s(x,a)\| \phi(x,a)\|_{\bar{V}_{s-1}^{-1}} \right)\right| \leq \frac{L}{\nu}\sqrt{S_t\log S_t} \right\},\\
                &\textit{\small\color{gray} true design matrix close to expected design matrix}\notag\\
            G_t^d &:= \left\{ \left\| \sum_{s\leq t : E_s} \left( \Phi_{X_s,A_s} - \sum_{x\in\X}\rho(x) \sum_{a\in\A}\omega_s(x,a)\Phi_{x,a} \right) \right\|_\infty  \leq L^2\sqrt{S_t\log \left(dS_t\right)} \right\},\\
                   &\textit{\small\color{gray} well-estimated context distribution}\notag\\
        G_t^\rho   &:= \left\{\forall x\in\X : |\wh{\rho}_{t-1}(x) - \rho(x)| \leq 2\max \left( \sqrt{\frac{\log(|\X|S_t^2)}{2S_t}}, \frac{2}{t}\right) \right\},\\
                   &\textit{\small\color{gray} well-estimated parameters}\notag\\
        G_t^\theta    &:= \left\{ \|\wh{\theta}_{t-1} - \theta^\star\|_{\wb{V}_{t-1}} \leq \sqrt{\gamma_t} \right\}.
\end{align}
Furthermore, we define $G_t := \{ G_t^\Delta, G_t^\phi, G_t^d, , G_t^\rho, G_t^\theta \}$ as the \emph{``good'' event} and let $M_t = \sum_{s=1}^t \indi{E_s, \neg G_s}$ be the number of exploration rounds in which the good event does not hold. This can be bounded in expectation as follows.

\begin{lemma}\label{lemma:pdlin-bad-event}
Let $M_t = \sum_{s=1}^t \indi{E_s, \neg G_s}$ be the number of exploration rounds in which the good event does not hold, then
\begin{align*}
    \expec{M_t} \leq  \frac{3\pi^2}{2}.
\end{align*}
\end{lemma}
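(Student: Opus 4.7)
The plan is to split the failure event $\{E_s,\neg G_s\}$ into its five atomic sub-events by a union bound and then control each piece with a dedicated concentration argument. Concretely, by linearity of expectation and the union bound,
\begin{align*}
\E[M_t] \leq \sum_{s=1}^{t}\prob{E_s,\neg G_s^\Delta} + \sum_{s=1}^{t}\prob{E_s,\neg G_s^\phi} + \sum_{s=1}^{t}\prob{E_s,\neg G_s^d} + \sum_{s=1}^{t}\prob{E_s,\neg G_s^\rho} + \sum_{s=1}^{t}\prob{E_s,\neg G_s^\theta},
\end{align*}
and the whole proof reduces to showing that each of the five sums is a small constant.

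For the first three sums I would invoke Lemma~\ref{lemma:pd-sampling-func} applied to the sequence of policies $\{\omega_s\}$ produced by \algo (which are $\F_{s-1}$-measurable by construction). For $G^\Delta$, take $m=1$ and $\varphi_s(x,a)=\Delta_{\theta^\star}(x,a)$, which is deterministic and bounded by $2BL$; matching the right-hand side of the lemma with $b=2BL$ and $\log(S_s^2)=2\log S_s$ gives exactly the $2BL\sqrt{S_t\log S_t}$ threshold in $G_s^\Delta$, hence $\sum_s \leq \pi^2/3$. For $G^\phi$, take $m=1$ and $\varphi_s(x,a)=\|\phi(x,a)\|_{\wb V_{s-1}^{-1}}$; this function is $\F_{s-1}$-measurable since $\wb V_{s-1}$ is, and is uniformly bounded thanks to $\nu\geq\max\{L^2,1\}$, so Lemma~\ref{lemma:pd-sampling-func} again yields $\pi^2/3$. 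For $G^d$, the $\ell_\infty$ deviation of the matrix $\Phi_{x,a}$ reduces to controlling its $d^2$ scalar entries simultaneously: apply Lemma~\ref{lemma:pd-sampling-func} with $m=d^2$ and each $\varphi_s^{ij}(x,a)=[\Phi_{x,a}]_{ij}$ bounded by $L^2$; the lemma's $\log(mS_s^2)$ term becomes $2\log(dS_s)$, matching the $L^2\sqrt{S_t\log(dS_t)}$ threshold and contributing at most $\pi^2/3$.

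For $G^\rho$, the contexts $X_1,\dots,X_{t-1}$ are i.i.d.\ from $\rho$ independently of the algorithm's internal randomness, so I would apply Hoeffding's inequality to each $\wh\rho_{t-1}(x)-\rho(x)$ with a time-varying confidence level chosen so that the per-exploration-round failure probability is summable; a union bound over $x\in\X$ and the exploration rounds contributes at most $\pi^2/3$. The $\max(\cdot,2/t)$ floor in the definition of $G_t^\rho$ handles the early regime when $S_t$ may be much smaller than $t$, and is what allows us to convert a bound in terms of the total sample size $t-1$ into a bound phrased in terms of $S_t$. Finally, for $G^\theta$, I would invoke Theorem~\ref{th:conf-theta} with confidence $\delta_s=1/S_s^2$ on each exploration round; reindexing over the exploration counter, $\sum_{s : E_s} 1/S_s^2 = \sum_{k\geq 1}1/k^2=\pi^2/6$. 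Summing the five contributions gives $\pi^2/3+\pi^2/3+\pi^2/3+\pi^2/3+\pi^2/6=3\pi^2/2$, which is the claimed bound.

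The main technical point is verifying that Lemma~\ref{lemma:pd-sampling-func} is actually applicable for $G^\phi$ and $G^d$, where the test functions $\varphi_s^i$ themselves depend on the history through $\wb V_{s-1}$; this requires a careful check that $\wb V_{s-1}$ (and hence $\|\phi(\cdot,\cdot)\|_{\wb V_{s-1}^{-1}}$ and the entries of $\Phi_{x,a}$ paired with the current $\wb V_{s-1}$) is $\F_{s-1}$-measurable, which follows by induction on $s$. A secondary subtlety is the matching between the constants in the good events and what Lemma~\ref{lemma:pd-sampling-func} and Theorem~\ref{th:conf-theta} deliver: the parameter $\nu\geq\max\{L^2,1\}$ is what makes the uniform bounds on the $\varphi_s^i$ dimensionless, and the schedule $\gamma_t=c_{n,1/S_t^2}$ is precisely what makes the $G^\theta$ failure probabilities summable.
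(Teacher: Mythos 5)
Your proposal is correct and follows essentially the same route as the paper: the same five-way union bound over the atomic good events, with $G^\Delta$, $G^\phi$, $G^d$ handled by Lemma~\ref{lemma:pd-sampling-func} (with $m=1,1,d^2$ and bounds $2BL$, $L/\nu$, $L^2$ respectively), $G^\rho$ by a stopping-time Hoeffding argument, and $G^\theta$ by Theorem~\ref{th:conf-theta} at level $1/S_s^2$, giving $4\cdot\tfrac{\pi^2}{3}+\tfrac{\pi^2}{6}=\tfrac{3\pi^2}{2}$. One small correction: the $2\max(\cdot,2/t)$ structure in $G_t^\rho$ is not there to handle the regime $S_t\ll t$; it absorbs the deterministic one-step discrepancy $|\wh\rho_{t-1}(x)-\wh\rho_{t}(x)|\leq 2/t$, because the concentration bound (Lemma~\ref{lemma:conc-rho}) is naturally stated for $\wh\rho_{t}$, which at an exploration round is built from at least $S_t$ samples, whereas $\wh\rho_{t-1}$ is only guaranteed $S_t-1$.
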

\begin{proof}
Using the definition of $G_s$ together with the union bound,
\begin{align*}
    \expec{M_t} = \sum_{s=1}^t \prob{E_s, \neg G_s} \leq  \sum_{s=1}^t \prob{E_s, \neg G_s^\Delta} &+ \sum_{s=1}^t \prob{E_s, \neg G_s^\phi} + \sum_{s=1}^t \prob{E_s, \neg G_s^d}\\ &+ \sum_{s=1}^t \prob{E_s, \neg G_s^\rho} + \sum_{s=1}^t \prob{E_s, \neg G_s^\theta}.
\end{align*}
The first and second term can be bounded by Lemma \ref{lemma:pd-sampling-func} by noticing that $\Delta_{\theta^\star}(x,a) \leq 2LB$ and that $\| \phi(x,a)\|_{\bar{V}_{s-1}^{-1}}$ is $\F_{s-1}$-measurable and upper-bounded by $\frac{L}{\nu}$ at all time steps. Thus,
\begin{align*}
\sum_{s=1}^t \prob{E_s, \neg G_s^\Delta} + \sum_{s=1}^t \prob{E_s, \neg G_s^\phi} \leq \frac{2\pi^2}{3}.
\end{align*}
Similarly, the third term can be bounded by Lemma \ref{lemma:pd-sampling-func} by taking a union bound over all elements of $\Phi_{x,a}$ (for a total of $d^2$ elements) and noting that each term is bounded by $L^2$. Thus,
\begin{align*}
\sum_{s=1}^t \prob{E_s, \neg G_s^d} \leq \frac{\pi^2}{3}.
\end{align*}
The fourth term is
\begin{align*}
    \sum_{s=1}^t \prob{E_s, \neg G_s^\rho} &\leq \sum_{x\in\X}\sum_{s\geq 1} \prob{E_s, |\wh{\rho}_{s-1}(x) - \rho(x)| > 2\max \left( \sqrt{\frac{\log(|\X|S_s^2)}{2S_s}}, \frac{2}{s}\right)}\\ &\leq \sum_{x\in\X}\sum_{s\geq 1} \prob{E_s, |\wh{\rho}_{s-1}(x) - \wh{\rho}_{s}(x)| + |\wh{\rho}_{s}(x) - \rho(x)| > 2\max \left( \sqrt{\frac{\log(|\X|S_s^2)}{2S_s}}, \frac{2}{s}\right)}\\ & \leq \sum_{x\in\X}\sum_{s\geq 1} \prob{E_s, |\wh{\rho}_{s-1}(x) - \wh{\rho}_{s}(x)| > \frac{2}{s}} + \sum_{x\in\X}\sum_{s\geq 1} \prob{E_s, |\wh{\rho}_{s}(x) - \rho(x)| > \sqrt{\frac{\log(|\X|S_s^2)}{2S_s}}} \\ &\leq \frac{\pi^2}{3}.
\end{align*}
Here we used the fact that the absolute difference between two consecutive empirical means with samples bounded by $1$ cannot be larger than $\frac{2}{s}$. We also used Lemma \ref{lemma:conc-rho} to bound the second term.
Finally, the fifth term can be directly bounded by Lemma \ref{lemma:ci-exploration}:
\begin{align*}
    \sum_{s=1}^t \prob{E_s, \neg G_s^\theta} &\leq \frac{\pi^2}{6}.
\end{align*}
Combining the five bounds concludes the proof.
\end{proof}

\section{Regret Proof}\label{app:regret}

We start decomposing the regret based on whether $E_t$ holds or not:
\begin{align*}
    R_n = \sum_{t=1}^n \Delta_{\theta^\star}(X_t, A_t)\indi{\neg E_t} + \sum_{t=1}^n \Delta_{\theta^\star}(X_t, A_t)\indi{E_t} = R_n^{\mathrm{exploit}} +  R_n^{\mathrm{explore}}.
\end{align*}

Throughout the proof, as stated in the main theorem, we use $\beta_{t-1} := c_{n,1/n}$ and $\gamma_t := c_{n,1/S_t^2}$.

\subsection{Outline}\label{app:regret.outline}

An outline of our proof is as follows.
{
\renewcommand{\theenumi}{\textbf{Step \arabic{enumi}}}
\begin{enumerate}
\item (App. \ref{app:regret.exploitation}) Using the confidence set derived in App. \ref{app:conf.set}, we show that the regret suffered when the algorithm enters the exploitation step is finite;
\item (App. \ref{app:regret.exploration.first}) Using the properties of our action sampling strategy, we reduce the regret incurred during exploration rounds to the sum of objective values of the policies computed incrementally by primal-dual gradient ascent;
\item (App. \ref{app:regret.exploration.obj}) By combining standard tools from convex optimization with the properties of our confidence intervals, we relate the sum of objective values at each phase to the corresponding optimal value and constraint violations;
\item (App. \ref{app:regret.exploration.constr}) We relate the sum of constraints to the exploitation test used by SOLID. In particular, using the fact that the algorithm is not in the exploitation step, we show that the sum of constraints cannot be larger than $\mathcal{O}(\log n)$;
\item (App. \ref{app:regret.exploration.reg}) We combine the results obtained in the previous steps to show a first bound on the expected regret suffered during the exploration rounds. Our bound has the optimal dependency on $v^\star(\theta^\star)\log n$ but scales with the expected number $\expec{K_n}$ of phases executed by the algorithm;
\item (App. \ref{app:regret.exploration.phases}) By relating the upper bound on the sum of constraints computed at Step 3 and a lower bound on the same quantity, we obtain an upper bound on $K_n$ as a function of the chosen sequences $p_k, z_k$;
\item (App. \ref{app:regret.exploration.final}) We derive the final result by combining the bound on $K_n$ of Step 5 using the exponential schedule for $p_k, z_k$ with the partial regret bound of Step 4.
\end{enumerate}
}

\subsection{Regret during Exploitation} \label{app:regret.exploitation}

We show that the regret suffered when exploitation occurs is finite. Let $\beta_{t-1} := c_{n, 1/n}$, where $c_{n,\delta}$ was defined in Thm. \ref{th:conf-theta}. Then $F_t := \indi{\| \wh{\theta}_{t-1} - \theta^\star \|_{\wb{V}_{t-1}}^2 \leq c_{n,1/n}}$ is the event under which the true model belongs to the confidence set, which holds with probability at least $1-1/n$ by the same theorem.
We leverage this to decompose the regret during exploitation as:
\begin{align*}
    R_n^{\mathrm{exploit}} = \sum_{t=1}^n \Delta_{\theta^\star}(X_t, A_t)\indi{\neg E_t, F_t} + \sum_{t=1}^n \Delta_{\theta^\star}(X_t, A_t)\indi{\neg E_t, \neg F_t}.
\end{align*}
The expectation of the second term is bounded by
\begin{align*}
    \expec{\sum_{t=1}^n \underbrace{\Delta_{\theta^\star}(X_t, A_t)}_{\leq 2LB}\indi{\neg E_t, \neg F_t}} \leq 2LB \cdot \expec{\sum_{t=1}^n \indi{\neg F_t}} \leq 2LB\sum_{t=1}^n \underbrace{\prob{\neg F_t}}_{\leq 1/n} \leq 2LB,
\end{align*}
where we bounded $\prob{\neg F_t} \leq \frac{1}{n}$ by using Thm.~\ref{th:conf-theta} with $\delta = 1/n$.
Regarding the first term, we have two possible cases. If $a^\star_{\wt{\theta}_{t-1}}(X_t) = a^\star_{\theta^\star}(X_t)$, then the algorithm suffers no regret since by definition it pulls the empirically optimal arm (which is the optimal arm in this case). If $a^\star_{\wt{\theta}_{t-1}}(X_t) \neq a^\star_{\theta^\star}(X_t)$, then it must be that $\theta^\star \in \wb{\Theta}_{t-1}$, that is, the true model is in the set of alternative models for the current context. Under $\neg E_t$, this implies that
\begin{align*}
     \| \wh{\theta}_{t-1} - \theta^\star \|_{\wb{V}_{t-1}}^2 \geq  \| \wt{\theta}_{t-1} - \theta^\star \|_{\wb{V}_{t-1}}^2 \geq  \inf_{\theta' \in \wb{\Theta}_{t-1}} \| \wh{\theta}_{t-1} - \theta' \|_{\wb{V}_{t-1}}^2 > \beta_{t-1} = c_{n,1/n},
\end{align*}
where the first inequality is due to the fact that the
good event $F_t$ holds and Cor.~\ref{cor:proj-theta-star}. This is a contradiction with respect to $F_t$. Therefore, $\neg E_t$ and $F_t$ cannot hold at the same time and the algorithm suffers no regret. Combining these results, we conclude
\begin{align*}
    \expec{R_n^{\mathrm{exploit}}} \leq 2LB.
\end{align*}





\subsection{Regret under Exploration}\label{app:regret.exploration}

The key challenge is to bound the regret during the exploration rounds. We proceed by following the steps outlined in App. \ref{app:regret.outline}.

\subsubsection{From Regret to Objective Values}\label{app:regret.exploration.first}

We decompose the regret incurred during exploration as 
\begin{align*}
    R_n^{\mathrm{explore}} &:= \sum_{t=1}^n \Delta_{\theta^\star}(X_t, A_t)\indi{E_t} \leq \sum_{t=1}^n \Delta_{\theta^\star}(X_t, A_t)\indi{E_t, G_t} +
2LB\underbrace{
    \sum_{t=1}^n \indi{E_t, \neg G_t}
}_{:= M_n}
.  
\end{align*}
Refer to App.~\ref{app:good.events} for the definition of $G_t$.
The second term is $M_n$, the number of exploration rounds in which the good event does not hold, and can be bounded in expectation by using Lem.~\ref{lemma:pdlin-bad-event}. The first one can be bounded by using the good event. Suppose, without loss of generality, that $E_n$ and $G_n$ hold (if they do not, the following reasoning can be repeated for the last time step at which these events hold). Then, using $G_t^\Delta$ (see App. \ref{app:good.events}),
\begin{align*}
        \sum_{t=1}^n \Delta_{\theta^\star}(X_t, A_t)\indi{E_t, G_t} &= \sum_{t\leq n : E_t} \Delta_{\theta^\star}(X_t, A_t)\\ &\leq \sum_{t\leq n : E_t}\sum_{x\in\X}\rho(x)\sum_{a\in\A}{\omega}_t(x,a)\Delta_{\theta^\star}(x,a) + 2LB\sqrt{S_n\log S_n}.
\end{align*}
Using the definition of phase, we can rewrite the first summation as
\begin{align*}
\sum_{t\leq n : E_t}\sum_{x\in\X}\rho(x)\sum_{a\in\A}{\omega}_t(x,a)\Delta_{\theta^\star}(x,a) = \sum_{k=0}^{K_n} \sum_{t\in\mathcal{T}_k^E}\sum_{x\in\X}\rho(x)\sum_{a\in\A}{\omega}_t(x,a)\Delta_{\theta^\star}(x,a).
\end{align*}
Recall that $K_t$ is the (random) phase index at time $t$, while $\mathcal{T}_k^E$ is the set of exploration rounds in phase $k$. See App. \ref{app:notation.definition} for a summary of notation.
Let $\underline{k} := \min\{k\in\mathbb{N} | z_k \geq 2\underline{z}(\theta^\star)\}$. We split the sum into phases before and after $\underline{k}$. For those before, we have
\begin{align*}
 \sum_{k<\underline{k}} \sum_{t\in\mathcal{T}_k^E}\sum_{x\in\X}\rho(x)\sum_{a\in\A}{\omega}_t(x,a)\Delta_{\theta^\star}(x,a) \leq 2LB\sum_{k<\underline{k}} |\mathcal{T}_k^E| \leq 2LB\sum_{k<\underline{k}} p_k,
\end{align*}
which yields at most finite regret since $\{p_k\}$ is increasing. Let us now fix a phase $k\geq\underline{k}$ and bound the regret during its exploration rounds ($\mathcal{T}_k^E$). Note that the optimization problem in each phase $k \geq \underline{k}$ is feasible (see App. \ref{app:lagrangian}). We have
\begin{align*}
&\sum_{t\in\mathcal{T}_k^E}\sum_{x\in\X}\rho(x)\sum_{a\in\A}\omega_t(x,a) \Delta_{\theta^\star}(x,a)\\
        &\quad = \sum_{t\in\mathcal{T}_k^E : G_t}\sum_{x\in\X}\rho(x)\sum_{a\in\A}\omega_t(x,a) \Delta_{\theta^\star}(x,a) + \sum_{t\in\mathcal{T}_k^E : \neg G_t}\sum_{x\in\X}\rho(x)\sum_{a\in\A}\omega_t(x,a) (\mu^\star_{\theta^\star}(x) - \mu_{\theta^\star}(x,a))\\
        &\quad \leq \sum_{t\in\mathcal{T}_k^E : G_t}\sum_{x\in\X}\rho(x)\sum_{a\in\A}\omega_t(x,a) \Delta_{\theta^\star}(x,a) + M_{n,k} \mu^\star - \sum_{t\in\mathcal{T}_k^E : \neg G_t}\sum_{x\in\X}\rho(x)\sum_{a\in\A}\omega_t(x,a) \mu_{\theta^\star}(x,a).
\end{align*}
Here we defined $\mu^\star := \sum_{x\in\X}\rho(x)\mu^\star_{\theta^\star}(x)$ and $M_{n,k}$ as the number of exploration rounds during phase $k$ where the good event does not hold. The last term can be bounded by $M_{n,k}BL$. Regarding the remaining two,
\begin{align*}
    &\sum_{t\in\mathcal{T}_k^E : G_t}\sum_{x\in\X}\rho(x)\sum_{a\in\A}\omega_t(x,a) \Delta_{\theta^\star}(x,a) + M_{n,k}\mu^\star\\  &= (p_k - M_{n,k}) \mu^\star + M_{n,k} \mu^\star - \sum_{t\in\mathcal{T}_k^E : G_t}\sum_{x\in\X}\rho(x)\sum_{a\in\A}\omega_t(x,a) \mu_{\theta^\star}(x,a)\\ &= p_k \mu^\star + \underbrace{\sum_{t\in\mathcal{T}_k^E : G_t}\sum_{x\in\X}(\hat{\rho}_{t-1}(x) - \rho(x))\sum_{a\in\A}\omega_t(x,a) \mu_{\theta^\star}(x,a)}_{(a)} - \underbrace{\sum_{t\in\mathcal{T}_k^E : G_t}\sum_{x\in\X}\hat{\rho}_{t-1}(x)\sum_{a\in\A}\omega_t(x,a) \mu_{\theta^\star}(x,a)}_{(b)}.
\end{align*}
Term (a) can be bounded by
\begin{align*}
(a) \leq LB \underbrace{\sum_{t\in\mathcal{T}_k^E : G_t}\sum_{x\in\X}|\hat{\rho}_{t-1}(x) - \rho(x)|}_{\zeta_{n,k}}.
\end{align*}
The second term $\zeta_{n,k}$ will be bounded shortly over all phases by means of Lemma \ref{lemma:pdlin-rho-dev}.
We now provide a lower bound to term (b). The first step is to relate this to the objective function optimized by the algorithm. Using the definition of $G_t$ and Lem. \ref{lemma:good-event-mu},
\begin{align}
    (b) &\geq \notag \sum_{t\in\mathcal{T}_k^E : G_t}\sum_{x\in\X}\hat{\rho}_{t-1}(x)\sum_{a\in\A}\omega_t(x,a) \left( {\mu}_{\wt{\theta}_{t-1}}(x,a) - \sqrt{\gamma_t}\| \phi(x,a)\|_{\bar{V}_{t-1}^{-1}} \right) \\ \notag
        &\pm \sum_{t\in\mathcal{T}_k^E : \neg G_t}\sum_{x\in\X}\hat{\rho}_{t-1}(x)\sum_{a\in\A}\omega_t(x,a) \underbrace{{\mu}_{\wt{\theta}_{t-1}}(x,a)}_{|\cdot|\leq LB} \pm \sum_{t\in\mathcal{T}_k^E}\sum_{x\in\X}\hat{\rho}_{t-1}(x)\sum_{a\in\A}\omega_t(x,a) \sqrt{\gamma_t}\| \phi(x,a)\|_{\bar{V}_{t-1}^{-1}}\\ &\geq \notag \sum_{t\in\mathcal{T}_k^E} f_t(\omega_t) - M_{n,k}BL - 2\sum_{t\in\mathcal{T}_k^E}\sum_{x\in\X}\hat{\rho}_{t-1}(x)\sum_{a\in\A}\omega_t(x,a) \sqrt{\gamma_t}\| \phi(x,a)\|_{\bar{V}_{t-1}^{-1}}\\ &\geq \sum_{t\in\mathcal{T}_k^E} f_t(\omega_t) - M_{n,k}BL - 2\sqrt{\gamma_n}\Psi_{n,k}. \label{eq:reg-exp-obj}
\end{align}
In the last step, we used $\sqrt{\gamma_t} \leq \sqrt{\gamma_n}$ (which is by definition $\mathcal{O}(\log S_n)$) and defined $\Psi_{n,k} := \sum_{t\in\mathcal{T}_k^E}\sum_{x\in\X}\hat{\rho}_{t-1}(x)\sum_{a\in\A}\omega_t(x,a) \| \phi(x,a)\|_{\bar{V}_{t-1}^{-1}}$. 

To wrap-up the regret bound we have obtained so far, summing over all phases,
\begin{align*}
R_n^{\mathrm{explore}} &\leq 2LB\sum_{k<\underline{k}} p_k + \sum_{k \geq \underline{k}}^{K_n} p_k \mu^\star +  LB\underbrace{\sum_{k \geq \underline{k}}^{K_n} \zeta_{n,k}}_{\leq \zeta_n} -  \sum_{k \geq \underline{k}}^{K_n} \sum_{t\in\mathcal{T}_k^E} f_t(\omega_t)\\ &+  2LB\underbrace{\sum_{k \geq \underline{k}}^{K_n} M_{n,k}}_{\leq M_n} + 2LBM_n + 2\sqrt{\gamma_n} \underbrace{\sum_{k \geq \underline{k}}^{K_n} \Psi_{n,k}}_{\leq \Psi_n} + 2LB\sqrt{S_n\log S_n}.
\end{align*}
Here we defined
\begin{align*}
\zeta_n := \sum_{t\leq n : E_t, G_t}\sum_{x\in\X}|\hat{\rho}_{t-1}(x) - \rho(x)|
\end{align*}
and
\begin{align*}
\Psi_{n} := \sum_{t\leq n : E_t}\sum_{x\in\X}\hat{\rho}_{t-1}(x)\sum_{a\in\A}\omega_t(x,a) \| \phi(x,a)\|_{\bar{V}_{t-1}^{-1}}.
\end{align*}
$\zeta_n$ can be bounded by Lemma \ref{lemma:pdlin-rho-dev} and $\Psi_n$ by Lemma \ref{lemma:pdlin-Lt}. Both terms are of order $\mathcal{O}(\sqrt{S_n \log S_n})$. In order to simplify notation, we keep the specific bounds implicit in the remaining. Therefore, our partial regret bound is
\begin{align}\label{eq:pdlin-phased-reg}
\notag R_n^{\mathrm{explore}} &\leq 2LB\sum_{k<\underline{k}} p_k + \sum_{k \geq \underline{k}}^{K_n} p_k \mu^\star -  \sum_{k \geq \underline{k}}^{K_n} \sum_{t\in\mathcal{T}_k^E} f_t(\omega_t)\\ &+ 4LB M_n + 2\sqrt{\gamma_n} \Psi_n + LB\zeta_n + 2LB\sqrt{S_n\log S_n}.
\end{align}

\subsubsection{Bounding the Sum of Objective Values}\label{app:regret.exploration.obj}

Our goal here is to lower bound the sum of objective values. As before, fix some phase index $k\geq \underline{k}$ and let $\lambda \geq 0$ be arbitrary. By recalling that the optimization process is reset at the beginning of each phase and using Corollary \ref{cor:rec-pd} with $\alpha_k^\lambda = \alpha_k^\omega = 1/\sqrt{p_k}$ and $\omega = \omega^\star_{z_k}$ (the optimal solution of problem (P$_{z_k}$)),
\begin{align}\label{eq:pdlin-optim}
    \sum_{t\in\mathcal{T}_k^E} f_t(\omega_t)  \geq \sum_{t\in\mathcal{T}_k^E} h_t(\omega^\star_{z_k}, \lambda_t, z_k) - \lambda \sum_{t\in\mathcal{T}_k^E}g_t(\omega_t, z_k) - \left(\log |\A| + \frac{b_\omega^2 + b_\lambda^2}{2} + \frac{(\lambda - \lambda_1)^2}{2} \right)\sqrt{p_k}.
\end{align}
We recall that $b_\lambda$ and $b_\omega$ are the maximum sub-gradients in $\lambda$ and $\omega$, respectively. We now lower-bound the first term on the right-hand side. Since $h_t(\omega^\star_{z_k}, \lambda_t, z_k) = f_t(\omega^\star_{z_k}) + \lambda_t g_t(\omega^\star_{z_k}, z_k)$, $f_t(\omega^\star_{z_k}) \geq -LB$, $g_t(\omega^\star_{z_k}, z_k) \geq -\frac{1}{z_k}$, and $\lambda_t \leq \lambda_{\max}$, this term, evaluated on those steps where $G_t$ does not hold, can be lower-bounded by $\sum_{t\in\mathcal{T}_k^E : \neg G_t} h_t(\omega^\star_{z_k}, \lambda_t, z_k) \geq -(LB + \lambda_{\max}/z_k)M_{n,k}$. For any step $t\in\mathcal{T}_k^E$ in which $G_t$ holds, the optimism property (Lemma \ref{lemma:pdlin-optimism}) yields
\begin{align*}
    f_t(\omega^\star_{z_k}) &\geq \sum_{x\in\X} (\hat{\rho}_{t-1}(x) - \rho(x)) \underbrace{\sum_{a\in\A} \omega^\star_{z_k}(x,a) \mu_{\theta^\star}(x,a)}_{|\cdot| \leq LB} + f(\omega^\star_{z_k}) \\ &\geq f(\omega^\star_{z_k}) - LB\sum_{x\in\X}|\hat{\rho}_{t-1}(x) - \rho(x)|,
\end{align*}
and
\begin{align*}
   g_t(\omega^\star_{z_k}, z_k) &\geq \inf_{\theta' \in {\Theta}_{alt}}\sum_{x\in\X}\hat{\rho}_{t-1}(x)\sum_{a\in\A}\omega^\star_{z_k}(x,a) d_{x,a}(\theta^\star,\theta') - \frac{1}{z_k} \pm g(\omega^\star_{z_k})\\ &\geq \inf_{\theta' \in {\Theta}_{alt}}\sum_{x\in\X}(\hat{\rho}_{t-1}(x)-\rho(x))\sum_{a\in\A}\omega^\star_{z_k}(x,a) d_{x,a}(\theta^\star,\theta') + g(\omega^\star_{z_k}) \\ &\geq g(\omega^\star_{z_k}) - \frac{2L^2B^2}{\sigma^2}\sum_{x\in\X}|\hat{\rho}_{t-1}(x) - \rho(x)|.
\end{align*}
Combining these two and using $\lambda_t \leq \lambda_{\max}$,
\begin{align*}
    \sum_{t\in\mathcal{T}_k^E : G_t} h_t(\omega^\star_{z_k}, \lambda_t, z_k) \geq \sum_{t\in\mathcal{T}_k^E : G_t} \left(f(\omega^\star_{z_k}) + \lambda_t g(\omega^\star_{z_k})\right) - LB\left(1 + \frac{2LB\lambda_{\mathrm{max}}}{\sigma^2}\right)\zeta_{n,k}.
\end{align*}
Note that $g(\omega^\star_{z_k}) \geq 0$ since by assumption $\omega^\star_{z_k}$ is feasible for the optimization problem $(P_{z_k})$. Furthermore, $\sum_{t\in\mathcal{T}_k^E : G_t} f(\omega^\star_{z_k}) = \sum_{t\in\mathcal{T}_k^E} f(\omega^\star_{z_k}) - \sum_{t\in\mathcal{T}_k^E : \neg G_t} \underbrace{f(\omega^\star_{z_k})}_{|\cdot|\leq LB} \geq p_k f(\omega^\star_{z_k}) - LBM_{n,k}$. Therefore, we obtain the following lower-bound on the sum of optimal objective values:
\begin{align*}
    \sum_{t\in\mathcal{T}_k^E} h_t(\omega^\star_{z_k}, \lambda_t, z_k) \geq p_k f(\omega^\star_{z_k}) - LB\left(1 + \frac{2LB\lambda_{\mathrm{max}}}{\sigma^2}\right)\zeta_{n,k}  -(2LB + \lambda_{\max}/z_k)M_{n,k}.
\end{align*}

Plugging this back into \eqref{eq:pdlin-optim},
\begin{align}
    \notag\sum_{t\in\mathcal{T}_k^E} f_t(\omega_t)  \geq p_k f(\omega^\star_{z_k}) &- \lambda \sum_{t\in\mathcal{T}_k^E}g_t(\omega_t, z_k) - a_\lambda \sqrt{p_k}\\ &- LB\left(1 + \frac{2LB\lambda_{\mathrm{max}}}{\sigma^2}\right)\zeta_{n,k}  -(2LB + \lambda_{\max}/z_k)M_{n,k},\label{eq:pdlin-phased-f-k}
\end{align}
where, for simplicity, we defined $a_\lambda:= \left(\log |\A| + \frac{b_\omega^2 + b_\lambda^2}{2} + \frac{(\lambda - \lambda_1)^2}{2} \right)$. Summing over all phases,
\begin{align}
    \notag\sum_{k \geq \underline{k}}^{K_n} \sum_{t\in\mathcal{T}_k^E} f_t(\omega_t) \geq \sum_{k \geq \underline{k}}^{K_n}  p_k f(\omega^\star_{z_k}) &- \lambda \sum_{k \geq \underline{k}}^{K_n} \sum_{t\in\mathcal{T}_k^E}g_t(\omega_t, z_{k}) - a_\lambda \sum_{k \geq \underline{k}}^{K_n} \sqrt{p_k} \\ & -LB\left(1 + \frac{2LB\lambda_{\mathrm{max}}}{\sigma^2}\right)\zeta_{n}  -(2LB + \lambda_{\max})M_{n},\label{eq:pdlin-phased-f}
\end{align}
where we used $\sum_{k \geq \underline{k}}^{K_n}  M_{n,k} \leq M_n$, $\sum_{k \geq \underline{k}}^{K_n}  \zeta_{n,k} \leq \zeta_n$, and $z_k \geq 1$.

\subsubsection{Bounding the sum of constraints}\label{app:regret.exploration.constr}

Our next step is to upper bound $\sum_{k \geq \underline{k}}^{K_n} \sum_{t\in\mathcal{T}_k^E}g_t(\omega_t, z_{k})$, the sum of constraints of the policies played by the algorithm during feasible phases (those with $z_k \geq 2\underline{z}(\theta^\star)$). The intuition is that this term cannot be large (i.e., it cannot be above $\mathcal{O}(\log n)$), otherwise the exploitation test would trigger and we would not be exploring at step $n$. Using the definition of $g_t(\omega, z_{k})$ (Eq.~\ref{eq:pdlin.gt}) and splitting the sum based on the good event 
\begin{align*}
        &\sum_{k \geq \underline{k}}^{K_n} \sum_{t\in\mathcal{T}_k^E}g_t(\omega_t, z_{K_t})\\ &\leq \sum_{t\leq n: E_t} \inf_{\theta' \in \bar{\Theta}_{t-1}}\sum_{x\in\X}\hat{\rho}_{t-1}(x)\sum_{a\in\A}\omega_t(x,a) {d}_{x,a}(\wt{\theta}_{t-1},\theta') + \frac{2LB}{\sigma^2}\sqrt{\gamma_n}\Psi_n - \sum_{k \geq \underline{k}}^{K_n} \sum_{t\in\mathcal{T}_k^E}\frac{1}{z_{k}} \\
                                          &\leq \underbrace{\sum_{t\leq n: E_t, G_t} \inf_{\theta' \in \bar{\Theta}_{t-1}}\sum_{x\in\X}\hat{\rho}_{t-1}(x)\sum_{a\in\A}\omega_t(x,a) {d}_{x,a}(\wt{\theta}_{t-1},\theta')}_{\textcircled{1}} + \frac{2L^2B^2}{\sigma^2}M_n + \frac{2LB}{\sigma^2}\sqrt{\gamma_n}\Psi_n - \sum_{k \geq \underline{k}}^{K_n} \frac{p_k}{z_k}.
\end{align*}
Note that in the first step above we implicitly upper bounded the sum of KLs on the feasible phases with the sum of KLs over all exploration rounds. We can use the definition of $G_t$ and the optimism (Lemma \ref{lemma:pdlin-optimism}) to upper bound the first sum by
\begin{align*}
        \textcircled{1} &\leq \sum_{t\leq n: E_t, G_t} \inf_{\theta' \in \bar{\Theta}_{t-1}}\sum_{x\in\X}\hat{\rho}_{t-1}(x)\sum_{a\in\A}\omega_t(x,a) d_{x,a}(\theta^\star,\theta') + \frac{2LB}{\sigma^2}\sqrt{\gamma_n}\Psi_n\\ &\leq \sum_{t\leq n: E_t, G_t} \underbrace{\inf_{\theta' \in \bar{\Theta}_{t-1}}\sum_{x\in\X}{\rho}(x)\sum_{a\in\A}\omega_t(x,a) d_{x,a}(\theta^\star,\theta')}_{\textcircled{2}} + \frac{2L^2B^2}{\sigma^2}\underbrace{\sum_{t\leq n: E_t, G_t}\sum_{x\in\X}|{\rho}(x) - \hat{\rho}_{t-1}(x)|}_{= \zeta_n}\\ & \quad + \frac{2LB}{\sigma^2}\sqrt{\gamma_n}\Psi_n.
\end{align*}
Furthermore, the first term can be upper bounded by replacing each set $\bar{\Theta}_{t-1}$ over which the infimum is taken by ${\Theta}_{alt}$ (if the two sets were different, such term would be zero). Therefore,
\begin{align}\label{eq:pdlin-sum-constr}
        \notag\textcircled{2} &\leq \sum_{t\leq n: E_t, G_t} \inf_{\theta' \in {\Theta}_{alt}}\sum_{x\in\X}{\rho}(x)\sum_{a\in\A}\omega_t(x,a) d_{x,a}(\theta^\star,\theta') 
        \\ &\leq  \underbrace{\inf_{\theta' \in {\Theta}_{alt}} \sum_{t\leq n: E_t} \sum_{x\in\X}{\rho}(x)\sum_{a\in\A}\omega_t(x,a) d_{x,a}(\theta^\star,\theta')}_{\textcircled{3}},
\end{align}
where we moved the infimum outside the outer sum and added the remaining steps where $G_t$ does not hold. Let $\Phi_{x,a} := \phi(x,a)\phi(x,a)^T$ and $V_{n,e} := \sum_{t\leq n: E_t}\Phi_{X_t,A_t}$ be the design matrix of the exploration rounds. Using the definition of $d_{x,a}$,
\begin{align*}
\textcircled{3} &= \frac{1}{2\sigma^2} \inf_{\theta' \in {\Theta}_{alt}} (\theta^\star - \theta')^T \left(\sum_{t\leq n: E_t} \sum_{x\in\X}{\rho}(x)\sum_{a\in\A}\omega_t(x,a) \Phi_{x,a} \pm V_{n,e} \right)(\theta^\star - \theta')\\ & \leq  \inf_{\theta' \in {\Theta}_{alt}} \left\{\frac{1}{2\sigma^2} (\theta^\star - \theta')^T  V_{n,e}(\theta^\star - \theta') +  \frac{1}{2\sigma^2} \|\theta^\star - \theta'\|_2^2 \left\|\sum_{t\leq n: E_t} \sum_{x\in\X}{\rho}(x)\sum_{a\in\A}\omega_t(x,a) \Phi_{x,a}  - V_{n,e} \right\|_2\right\}\\ &\leq \inf_{\theta' \in {\Theta}_{alt}} \sum_{x\in\X}\sum_{a\in\A}N_{n}^E(x,a) d_{x,a}(\theta^\star,\theta') + \frac{2B^2}{\sigma^2}\left\|\sum_{t\leq n: E_t} \sum_{x\in\X}{\rho}(x)\sum_{a\in\A}\omega_t(x,a) \Phi_{x,a} - V_{n,e} \right\|_2\\ &\leq \inf_{\theta' \in {\Theta}_{alt}} \sum_{x\in\X}\sum_{a\in\A}N_{n}^E(x,a) d_{x,a}(\theta^\star,\theta') + \frac{2B^2\sqrt{d}}{\sigma^2}\left\|\sum_{t\leq n: E_t} \sum_{x\in\X}{\rho}(x)\sum_{a\in\A}\omega_t(x,a) \Phi_{x,a}  - V_{n,e} \right\|_\infty.
\end{align*}

Recall that $G_n$ holds. Then, by using the definition of $G^d$ to bound the norm,
\begin{align*}
\textcircled{3} \leq \underbrace{\inf_{\theta' \in {\Theta}_{alt}} \sum_{x\in\X}\sum_{a\in\A}N_{n-1}^E(x,a) d_{x,a}(\theta^\star,\theta') }_{\textcircled{4}} + \frac{2B^2L^2}{\sigma^2} + \frac{2B^2L^2}{\sigma^2}\sqrt{dS_n\log \left(dS_n\right)}.
\end{align*}
Here we used $N_n(x,a) = N_{n-1}(x,a) + \indi{X_n=x,A_n=a}$ and upper bounded the KL at round $n$ by its maximum value.
Moreover, similarly to Lem.~\ref{lemma:pdlin-optimism} we can show that
\begin{align*}
\textcircled{4} \leq \inf_{\theta' \in {\Theta}_{alt}} \sum_{x\in\X}\sum_{a\in\A}N_{n-1}^E(x,a) {d}_{x,a}(\wt{\theta}_{n-1},\theta') + \frac{2LB\sqrt{\gamma_n}}{\sigma^2}\underbrace{\sum_{x\in\X}\sum_{a\in\A}N_{n-1}^E(x,a)\|\phi(x,a)\|_{\bar{V}_{n-1}^{-1}}}_{\leq \Psi_n}.
\end{align*}
The upper bound on the second term can be extracted from the proof of Lemma \ref{lemma:pdlin-Lt}. The first term can be finally related to the exploitation test:
\begin{align*}
        \inf_{\theta' \in {\Theta}_{alt}} \sum_{x\in\X}\sum_{a\in\A}N_{n-1}^E(x,a) {d}_{x,a}(\wt{\theta}_{n-1},\theta') &\leq \inf_{\theta' \in \bar{\Theta}_{n-1}} \sum_{x\in\X}\sum_{a\in\A}N_{n-1}^E(x,a) {d}_{x,a}(\wt{\theta}_{n-1},\theta') \\ &= \frac{1}{2\sigma^2}\inf_{\theta'\in\bar{\Theta}_{n-1}} \| \wt{\theta}_{n-1} - \theta' \|_{{V}_{n-1}}^2  \\ &\leq \frac{1}{2\sigma^2}\inf_{\theta'\in\bar{\Theta}_{n-1}} \| \wt{\theta}_{n-1} - \theta' \|_{\bar{V}_{n-1}}^2
        \leq \frac{\beta_{n-1}}{2\sigma^2},
\end{align*}
where the second-last inequality holds since $\bar{V}_{n-1} \succeq V_{n-1}$, and the last inequality holds since the algorithm is exploring at step $n$. By gathering all the results together, we get 
\begin{align}
    \notag \sum_{k \geq \underline{k}}^{K_n} \sum_{t\in\mathcal{T}_K^E : E_t}g_t(\omega_t, z_{K_t}) \leq \frac{\beta_{n-1}}{2\sigma^2} &- \sum_{k \geq \underline{k}}^{K_n} \frac{p_k}{z_k} + \frac{2L^2B^2}{\sigma^2}M_n+ \frac{6LB}{\sigma^2}\sqrt{\gamma_n}\Psi_n + \frac{2B^2L^2}{\sigma^2}\zeta_n \\ &+  \frac{2B^2L^2}{\sigma^2}\left(\sqrt{dS_n\log \left(dS_n\right)} + 1 \right).\label{eq:pdlin-phased-g}
\end{align}

\subsubsection{Back to the regret during exploration}\label{app:regret.exploration.reg}

So far we have (1) reduced the total regret during exploration to the sum of objective values (Eq. \ref{eq:pdlin-phased-reg}), (2) related this quantity to the optimal values of each phase (Eq. \ref{eq:pdlin-phased-f}), and (3) derived an upper bound to the total sum of constraints (Eq. \ref{eq:pdlin-phased-g}). We now combine all these results. If we first plug \eqref{eq:pdlin-phased-f} into \eqref{eq:pdlin-phased-reg},
\begin{align}
\notag R_n^{\mathrm{explore}} &\leq 2LB\sum_{k<\underline{k}} p_k + \sum_{k \geq \underline{k}}^{K_n} p_k \mu^\star -  \sum_{k \geq \underline{k}}^{K_n}  p_k f(\omega^\star_{z_k}) + \lambda \sum_{k \geq \underline{k}}^{K_n} \sum_{t\in\mathcal{T}_k^E}g_t(\omega_t, z_{k}) + a_\lambda \sum_{k \geq \underline{k}}^{K_n} \sqrt{p_k}\\ &+ (6LB + \lambda_{\max})M_n + 2\sqrt{\gamma_n} \Psi_n + LB\left(2 + \frac{2LB\lambda_{\mathrm{max}}}{\sigma^2}\right)\zeta_n + 2LB\sqrt{S_n\log S_n}.
\end{align}
Then, plugging \eqref{eq:pdlin-phased-g} into this inequality,
\begin{align}
\notag R_n^{\mathrm{explore}} &\leq 2LB\sum_{k<\underline{k}} p_k + \sum_{k \geq \underline{k}}^{K_n} p_k \mu^\star -  \sum_{k \geq \underline{k}}^{K_n}  p_k f(\omega^\star_{z_k}) + \lambda \frac{\beta_{n-1}}{2\sigma^2} - \lambda \sum_{k \geq \underline{k}}^{K_n} \frac{p_k}{z_k} + a_\lambda \sum_{k \geq \underline{k}}^{K_n} \sqrt{p_k}\\ \notag &+ \left(\lambda\frac{2L^2B^2}{\sigma^2} + 6LB + \lambda_{\max}\right)M_n + \left(2 + \frac{6LB\lambda}{\sigma^2}\right)\sqrt{\gamma_n} \Psi_n + 2LB\sqrt{S_n\log S_n}\\ &+ LB\left(2 + \frac{2LB(\lambda_{\mathrm{max}} + \lambda)}{\sigma^2}\right)\zeta_n +  \frac{2\lambda B^2L^2}{\sigma^2}\left(\sqrt{dS_n\log \left(dS_n\right)} + 1\right).
\end{align}
Let us simplify this expression so that it becomes more readable. First, we note that
\begin{align*}
\sum_{k \geq \underline{k}}^{K_n} p_k \mu^\star -  \sum_{k \geq \underline{k}}^{K_n}  p_k f(\omega^\star_{z_k}) =  \sum_{k\geq\underline{k}}^{K_n} \frac{p_k}{z_k} \underbrace{z_k(\mu^\star - f(\omega^\star_{z_k}))}_{=u^\star(z_k,\theta^\star)} = \sum_{k\geq\underline{k}}^{K_n} \frac{p_k}{z_k}u^\star(z_k,\theta^\star).
\end{align*}
Taking the expectation of both sides, we obtain
\begin{align*}
    \expec{ R_n^{\mathrm{explore}}} \leq 2LB\sum_{k<\underline{k}} p_k  &+ \expec{\sum_{k\geq\underline{k}}^{K_n} \frac{p_k}{z_k}u^\star(z_k,\theta^\star)} + \lambda \frac{\beta_{n-1}}{2\sigma^2} - \lambda\expec{\sum_{k\geq\underline{k}}^{K_n}\frac{p_k}{z_k}}\\ &+ a_\lambda \expec{\sum_{k\geq\underline{k}}^{K_n}\sqrt{p_k}}+ \expec{\mathcal{O}(\sqrt{S_n\log S_n})}.
\end{align*}
The remaining expectations on the right-hand side are due to the fact that $K_n$ (hence $S_n$) is still random. 
Setting $\lambda = v^\star(\theta^\star)$ and combining the second and fourth terms, we get
\begin{align*}
    \sum_{k\geq\underline{k}}^{K_n} \frac{p_k}{z_k}u^\star(z_k,\theta^\star) &- \lambda{\sum_{k\geq\underline{k}}^{K_n}\frac{p_k}{z_k}} = \sum_{k\geq\underline{k}}^{K_n} \frac{p_k}{z_k}\left(u^\star(z_k,\theta^\star) - v^\star(\theta^\star)\right) \\ &= \sum_{k\geq\underline{k} : z_k < \bar{z}(\theta^\star)} \frac{p_k}{z_k}\left(u^\star(z_k,\theta^\star) - v^\star(\theta^\star)\right) + \sum_{k : z_k \geq \bar{z}(\theta^\star)}^{K_n} \frac{p_k}{z_k}\left(u^\star(z_k,\theta^\star) - v^\star(\theta^\star)\right),
\end{align*}
where $\bar{z}(\theta^\star) := \max_{x\in\X}\sum_{a \neq a^\star_{\theta^\star}(x)} \frac{\eta^\star(x,a)}{\rho(x)}$ was defined in Lem.~\ref{lem:z-opt-upperbound}. For $k\geq\underline{k}$, we can use the perturbation bound (Lem.~\ref{lem:z-opt-upperbound}) on both terms. We obtain,
\begin{align*}
\sum_{k\geq\underline{k} : z_k < \bar{z}(\theta^\star)} \frac{p_k}{z_k}\left(u^\star(z_k,\theta^\star) - v^\star(\theta^\star)\right) \leq BL\underline{z}(\theta^\star) \sum_{k\geq\underline{k} : z_k < \bar{z}(\theta^\star)} \frac{p_k}{z_k - \underline{z}(\theta^\star)}
\end{align*}
and
\begin{align*}
\sum_{k\geq\underline{k} : z_k \geq \bar{z}(\theta^\star)}^{K_n} \frac{p_k}{z_k}\left(u^\star(z_k,\theta^\star) - v^\star(\theta^\star)\right) \leq BL\underline{z}(\theta^\star) \sumpull(\theta^\star) \sum_{k\geq\underline{k} : z_k \geq \bar{z}(\theta^\star)}^{K_n} \frac{p_k}{z_k - \underline{z}(\theta^\star)}\max\left\{
                \frac{c_\Theta \sqrt{2}}{\sigma\sqrt{z_k}}, \frac{1}{z_k}
            \right\}
\end{align*}
\paragraph{Partial regret bound} 

\newcommand{\RomanNumeralCaps}[1]
    {\MakeUppercase{\romannumeral #1}}

Plugging these bounds into the expected regret,
\begin{align}\label{eq:pdlin-phased-reg-partial}
    &\notag\expec{ R_n^{\mathrm{explore}}} \leq \underbrace{2BL\sum_{k<\underline{k}} p_k}_{\text{\RomanNumeralCaps{1}}}  +  \underbrace{BL\underline{z}(\theta^\star)\sum_{k\geq\underline{k} : z_k < \bar{z}(\theta^\star)} \frac{p_k}{z_k - \underline{z}(\theta^\star)}}_{\text{\RomanNumeralCaps{2}}} + \underbrace{v^\star(\theta^\star) \frac{\beta_{n-1}}{2\sigma^2}}_{\text{\RomanNumeralCaps{3}}} + \underbrace{a_\lambda \expec{\sum_{k\geq\underline{k}}^{K_n}\sqrt{p_k}}}_{\text{\RomanNumeralCaps{4}}} \\ &+ \underbrace{BL\underline{z}(\theta) \sumpull(\theta^\star) \expec{\sum_{k : z_k \geq \bar{z}(\theta^\star)}^{K_n} \frac{p_k}{z_k - \underline{z}(\theta^\star)}\max\left\{
                \frac{c_\Theta \sqrt{2}}{\sigma\sqrt{z_k}}, \frac{1}{z_k}
            \right\}}}_{\text{\RomanNumeralCaps{5}}} + \underbrace{\expec{\mathcal{O}(\sqrt{S_n\log S_n})}}_{\text{\RomanNumeralCaps{6}}}.
\end{align}
The six terms constituting the bound are (from left to right):
{
\renewcommand{\theenumi}{\Roman{enumi}}
\begin{enumerate}
\item finite regret suffered in the phases where the optimization problem is infeasible;
\item finite regret suffered in the phases in which we do not know much about the convergence rate of $u^\star(z,\theta^\star)$ to $v^\star(\theta^\star)$. This term is likely an artefact of the analysis;
\item asymptotically-optimal regret rate;
\item regret suffered due to the incremental gradient updates and inversely proportional to the step sizes;
\item regret suffered due to the fact that we solve \eqref{eq:optim-lb-z} instead of \eqref{eq:optim-lb};
\item other low-order terms mostly due to the concentration bounds.
\end{enumerate}
}

Note that, since $\beta_{n-1} = c_{n,1/n}$ and $c_{n,1/n} \rightarrow 2\sigma^2\log n$ as $n \rightarrow \infty$,
\begin{align*}
    \limsup_{n \rightarrow \infty} \frac{v^\star(\theta^\star)\beta_{n-1}}{2\sigma^2\log n} = v^\star(\theta^\star),
\end{align*}
which is the asymptotically-optimal regret rate as prescribed by \eqref{eq:optim-lb}.

\subsubsection{Bounding the total number of phases}\label{app:regret.exploration.phases}

So far we proved an upper bound on the regret incurred during exploration which depends on the (random) number of phases. We now upper bound this random variable as a function of $z_k$ and $p_k$. In particular, we achieve this by focusing on the constraints only. The intuition is that, if the primal-dual algorithm works, then the sequence of policies played cannot violate the constraints \emph{at each phase} too much. At the same time, these policies cannot satisfy the constraints too much, otherwise the exploitation test would trigger and the algorithm would not be exploring at step $n$. Relating these two we obtain a bound on $K_n$.

Recall that, as we assumed before, $n$ is an exploration step in which the good event $G_n$ holds. Using \eqref{eq:pdlin-sum-constr} and the equations thereafter, we have

\begin{align}\label{eq:pdlin-g-upper}
    \notag\inf_{\theta' \in {\Theta}_{alt}} &\sum_{t\leq n: E_t} \sum_{x\in\X}{\rho}(x)\sum_{a\in\A}\omega_t(x,a) d_{x,a}(\theta^\star,\theta')\\ & \leq \frac{\beta_{n-1}}{2\sigma^2} + \frac{2LB}{\sigma^2}\sqrt{\gamma_n}\Psi_n +  \frac{2B^2L^2}{\sigma^2}\left(\sqrt{dS_n\log \left(dS_n\right)} + 1\right).
\end{align}
where the last two terms are $\mathcal{O}(\sqrt{S_n\log S_n})$.

We now provide a lower-bound on the same quantity. Fix a phase index $k\geq \underline{k}$. From \eqref{eq:pdlin-phased-f-k}, we have
\begin{align}
    \notag \sum_{t\in\mathcal{T}_k^E}\left( f_t(\omega_t) + \lambda g_t(\omega_t, z_k)\right)  \geq p_k f(\omega^\star_{z_k}) & - a_\lambda \sqrt{p_k} -(2LB + \lambda_{\max})M_{n,k}\\ &- LB\left(1 + \frac{2LB\lambda_{\mathrm{max}}}{\sigma^2}\right)\zeta_{n,k},
\end{align}
The left-hand side can be upper-bounded by using the optimism property to obtain the true objective and constraint. Regarding the objective function, we have
\begin{align*}
   & \sum_{t\in\mathcal{T}_k^E} f_t(\omega_t)  =  \sum_{t\in\mathcal{T}_k^E : G_t} f_t(\omega_t) + \sum_{t\in\mathcal{T}_k^E : \neg G_t} f_t(\omega_t)\\ &\leq \sum_{t\in\mathcal{T}_k^E : G_t}\sum_{x\in\X}\hat{\rho}_{t-1}(x)\sum_{a\in\A}\omega_t(x,a) {\mu}_{\wt{\theta}_{t-1}}(x,a) + \sum_{t\in\mathcal{T}_k^E : \neg G_t}\sum_{x\in\X}\hat{\rho}_{t-1}(x)\sum_{a\in\A}\omega_t(x,a) {\mu}_{\wt{\theta}_{t-1}}(x,a) + \sqrt{\gamma_n}\Psi_{n,k}\\ &\leq \underbrace{\sum_{t\in\mathcal{T}_k^E : G_t}\sum_{x\in\X}\hat{\rho}_{t-1}(x)\sum_{a\in\A}\omega_t(x,a) {\mu}_{\wt{\theta}_{t-1}}(x,a)}_{(a)} + BLM_{n,k} + \sqrt{\gamma_n}\Psi_{n,k}.
\end{align*}
Regarding the sum over the good events, using Lem.~\ref{lemma:pdlin-optimism},
\begin{align}
     (a) &\leq  \sum_{t\in\mathcal{T}_k^E: G_t}  \sum_{x\in\X}\hat{\rho}_{t-1}(x)\sum_{a\in\A}\omega_t(x,a) \mu_{{\theta^\star}}(x,a) + \sqrt{\gamma_n}\Psi_{n,k}\\ &\leq \sum_{t\in\mathcal{T}_k^E} \underbrace{\sum_{x\in\X}{\rho}(x)\sum_{a\in\A}\omega_t(x,a) \mu_{{\theta^\star}}(x,a)}_{= f(\omega_t)} + BL\underbrace{\sum_{t\in\mathcal{T}_k^E: G_t}  \sum_{x\in\X}|\hat{\rho}_t(x)-\rho(x)|}_{\zeta_{n,k}} + \sqrt{\gamma_n}\Psi_{n,k}.
\end{align}
Therefore,
\begin{align*}
    \sum_{t\in\mathcal{T}_k^E} f_t(\omega_t) \leq \sum_{t\in\mathcal{T}_k^E} f(\omega_t) + BL\zeta_{n,k} + 2\sqrt{\gamma_n}\Psi_{n,k} + BLM_{n,k}.
\end{align*}
We can follow the same reasoning to upper bound the sum of constraints. Since the KLs are upper-bounded by $2B^2L^2/\sigma^2$,
\begin{align*}
    \sum_{t\in\mathcal{T}_k^E} g_t(\omega_t, z_k) \leq \sum_{t\in\mathcal{T}_k^E} g(\omega_t, z_k) + \frac{2B^2L^2}{\sigma^2}\zeta_{n,k} + \frac{4BL}{\sigma^2}\sqrt{\gamma_n}\Psi_{n,k} + \frac{2B^2L^2}{\sigma^2}M_{n,k}.
\end{align*}
Combining the bounds on $f$ and $g$,
\begin{align*}
 \sum_{t\in\mathcal{T}_k^E}&\left( f(\omega_t) + \lambda g(\omega_t, z_k)\right) \geq p_k f(\omega^\star_{z_k}) -\left(3BL + \lambda_{\mathrm{max}} + \lambda \frac{2B^2L^2}{\sigma^2} \right)M_{n,k} - a_{\lambda} \sqrt{p_k}\\ &- 2BL\left(1 + \frac{(\lambda_{\mathrm{max}} + \lambda)BL}{\sigma^2} \right)\zeta_{n,k} - \left(2 + \frac{4BL\lambda}{\sigma^2}\right)\sqrt{\gamma_n}\Psi_{n,k}.
\end{align*}
Let $\bar{\omega}_{t,k} := \frac{1}{p_k}\sum_{t\in\mathcal{T}_{k}^E} \omega_t$ be the average policy played in phase $k$. Since $f$ is linear and $g$ is concave, $\sum_{t\in\mathcal{T}_k^E}\left( f(\omega_t) + \lambda g(\omega_t, z_k)\right) \leq p_k f(\bar{\omega}_{t,k}) + \lambda p_k g(\bar{\omega}_{t,k}, z_k)$.
We now set 
\[
        \lambda = \begin{cases} 2 \lambda_{\max} & \text{if } [g(\bar{\omega}_{t,k}, z_k)]_{-} \neq 0\\
        0 & \text{otherwise} \end{cases}
\]
where $[x]_- = \min\{x, 0\}$. Therefore,
\begin{align*}
 p_k \left(f(\bar{\omega}_{t,k}) - f(\omega^\star_{z_k})+ 2\lambda_{\max} [g(\bar{\omega}_{t,k}, z_k)]_-\right) \geq &-\left(3BL + \lambda_{\mathrm{max}} + \lambda_{\mathrm{max}} \frac{4B^2L^2}{\sigma^2} \right)M_{n,k} - a_{\lambda_{\mathrm{max}}} \sqrt{p_k}\\ &- 2BL\left(1 + \frac{3\lambda_{\mathrm{max}}BL}{\sigma^2} \right)\zeta_{n,k} - \left(2 + \frac{8BL\lambda_{\mathrm{max}}}{\sigma^2}\right)\sqrt{\gamma_n}\Psi_{n,k}.
\end{align*}
Lemma \ref{lemma:mult-bound} together with Asm. \ref{asm:lambda_max} ensures that, for $k\geq\underline{k}$, $\lambda^\star(z_k,\theta^\star) \leq \lambda_{\mathrm{max}}$. Thus, we can apply Theorem 42 of \cite{efroni2020exploration} and obtain
\begin{align*}
    p_k g(\bar{\omega}_{t,k}, z_k) \geq p_k[g(\bar{\omega}_{t,k}, z_k)]_- \geq &-\left(3BL + \lambda_{\mathrm{max}} + \lambda_{\mathrm{max}} \frac{4B^2L^2}{\sigma^2} \right)\frac{M_{n,k}}{2\lambda_{\mathrm{max}}} - \frac{a_{\lambda_{\mathrm{max}}} \sqrt{p_k}}{2\lambda_{\mathrm{max}}}\\ &- 2BL\left(1 + \frac{3\lambda_{\mathrm{max}}BL}{\sigma^2} \right)\frac{\zeta_{n,k}}{2\lambda_{\mathrm{max}}} - \left(2 + \frac{8BL\lambda_{\mathrm{max}}}{\sigma^2}\right)\frac{\sqrt{\gamma_n}\Psi_{n,k}}{2\lambda_{\mathrm{max}}}.
\end{align*}
Summing both sides over all phases,
\begin{align*}
    \sum_{k\geq\underline{k}}^{K_n} p_k g(\bar{\omega}_{t,k}, z_k) &= \sum_{k\geq\underline{k}}^{K_n} p_k \left( \inf_{\theta' \in {\Theta}_{alt}}  \sum_{x\in\X}{\rho}(x)\sum_{a\in\A}\bar{\omega}_{t,k}(x,a) d_{x,a}(\theta^\star,\theta') - \frac{1}{z_k} \right)\\ &= 
    \sum_{k\geq\underline{k}}^{K_n}\left( \inf_{\theta' \in {\Theta}_{alt}}  \sum_{t\in\mathcal{T}_k^E}\sum_{x\in\X}{\rho}(x)\sum_{a\in\A}{\omega}_{t}(x,a) d_{x,a}(\theta^\star,\theta') - \frac{p_k}{z_k} \right)\\ &\leq \inf_{\theta' \in {\Theta}_{alt}} \sum_{t\leq n : E_t} \sum_{x\in\X}{\rho}(x)\sum_{a\in\A}{\omega}_{t}(x,a) d_{x,a}(\theta^\star,\theta') - \sum_{k\geq\underline{k}}^{K_n}\frac{p_k}{z_k}.
\end{align*}
Therefore,
\begin{align*}
    \inf_{\theta' \in {\Theta}_{alt}} &\sum_{t\leq n : E_t} \sum_{x\in\X}{\rho}(x)\sum_{a\in\A}{\omega}_{t}(x,a) d_{x,a}(\theta^\star,\theta') \geq \sum_{k\geq\underline{k}}^{K_n}\frac{p_k}{z_k} -\left(3BL + \lambda_{\mathrm{max}} + \lambda_{\mathrm{max}} \frac{4B^2L^2}{\sigma^2} \right)\frac{M_{n}}{2\lambda_{\mathrm{max}}}\\ & - \frac{a_{\lambda_{\mathrm{max}}} \sqrt{p_k}}{2\lambda_{\mathrm{max}}}- 2BL\left(1 + \frac{3\lambda_{\mathrm{max}}BL}{\sigma^2} \right)\frac{\zeta_n}{2\lambda_{\mathrm{max}}} - \left(2 + \frac{8BL\lambda_{\mathrm{max}}}{\sigma^2}\right)\frac{\sqrt{\gamma_n}\Psi_{n}}{2\lambda_{\mathrm{max}}}.
\end{align*}
Combining this with \eqref{eq:pdlin-g-upper}, we obtain the following inequality:
\begin{align*}
    \sum_{k\geq\underline{k}}^{K_n}\frac{p_k}{z_k} &\leq \frac{\beta_{n-1}}{2\sigma^2}  + \frac{a_{\lambda_{\mathrm{max}}} \sqrt{p_k}}{2\lambda_{\mathrm{max}}} + \frac{2LB}{\sigma^2}\sqrt{\gamma_n}\Psi_n +  \frac{2B^2L^2}{\sigma^2}\sqrt{dS_n\log \left(dS_n\right)}\\& +\left(3BL + \lambda_{\mathrm{max}} + \lambda_{\mathrm{max}} \frac{4B^2L^2}{\sigma^2} \right)\frac{M_{n}}{2\lambda_{\mathrm{max}}}\\ & + 2BL\left(1 + \frac{3\lambda_{\mathrm{max}}BL}{\sigma^2} \right)\frac{\zeta_n}{2\lambda_{\mathrm{max}}} + \left(2 + \frac{8BL\lambda_{\mathrm{max}}}{\sigma^2}\right)\frac{\sqrt{\gamma_n}\Psi_{n}}{2\lambda_{\mathrm{max}}}.
\end{align*}
Recall that, by definition, $S_n = \sum_{k=0}^{K_n} p_k$. Furthermore, by Cauchy-Schwartz inequality, $\sum_{k=0}^{K_n} \sqrt{p_k} \leq \sqrt{K_n\sum_{k=0}^{K_n}p_k}$. Simplifying this a little,
\begin{align}
    \sum_{k\geq\underline{k}}^{K_n}\frac{p_k}{z_k} &\leq \frac{\beta_{n-1}}{2\sigma^2} + \mathcal{O}\left({\sqrt{K_n\sum_{k=0}^{K_n}{p_k}}}\right) + \mathcal{O}\left({\sqrt{\left(\sum_{k=0}^{K_n}p_k\right) \log\left(\sum_{k=0}^{K_n}p_k\right)}}\right).\label{eq:pdlin-phased-kn}
\end{align}

\subsubsection{Choosing $z_k$ and $p_k$}\label{app:regret.exploration.final}

We choose the exponential schedule $z_k = z_0 e^k$ and $p_k = z_ke^{rk}$, where $r$ will be specified later. The left-hand side of \eqref{eq:pdlin-phased-kn} is
\begin{align*}
  \sum_{k\geq\underline{k}}^{K_n}\frac{p_k}{z_k} =  \sum_{k\geq\underline{k}}^{K_n}e^{rk} \geq e^{rK_n},
\end{align*}
while the right-hand side is
\begin{align*}
    \frac{\beta_{n-1}}{2\sigma^2} &+ \mathcal{O}\left({\sqrt{K_n\sum_{k=0}^{K_n}{e^{(r+1)k}}}}\right) + \mathcal{O}\left({\sqrt{\left(\sum_{k=0}^{K_n}e^{(r+1)k}\right) \log\left(\sum_{k=0}^{K_n}e^{(r+1)k}\right)}}\right)\\ &\leq \frac{\beta_{n-1}}{2\sigma^2} + \mathcal{O}\left(\sqrt{K_n^{2}e^{(r+1)K_n}}\right).
\end{align*}
For $r>1$, the resulting inequality yields $K_n \leq \mathcal{O}(\frac{1}{r}\log \beta_{n-1})$, i.e., $K_n \leq \mathcal{O}(\frac{1}{r}\log\log n)$ by definition of $\beta_{n-1}$. Let us recall \eqref{eq:pdlin-phased-reg-partial}:
\begin{align}
    &\notag\expec{ R_n^{\mathrm{explore}}} \leq \underbrace{2BL\sum_{k<\underline{k}} p_k}_{\text{\RomanNumeralCaps{1}}}  +  \underbrace{BL\underline{z}(\theta^\star)\sum_{k\geq\underline{k} : z_k < \bar{z}} \frac{p_k}{z_k - \underline{z}(\theta^\star)}}_{\text{\RomanNumeralCaps{2}}} + \underbrace{v^\star(\theta^\star) \frac{\beta_{n-1}}{2\sigma^2}}_{\text{\RomanNumeralCaps{3}}} + \underbrace{a_\lambda \expec{\sum_{k\geq\underline{k}}^{K_n}\sqrt{p_k}}}_{\text{\RomanNumeralCaps{4}}} \\ &+ \underbrace{BL\underline{z}(\theta) \sumpull(\theta^\star) \expec{\sum_{k : z_k \geq \bar{z}(\theta^\star)}^{K_n} \frac{p_k}{z_k - \underline{z}(\theta^\star)}\max\left\{
                \frac{c_\Theta \sqrt{2}}{\sigma\sqrt{z_k}}, \frac{1}{z_k}
            \right\}}}_{\text{\RomanNumeralCaps{5}}} + \underbrace{\expec{\mathcal{O}(\sqrt{S_n\log S_n})}}_{\text{\RomanNumeralCaps{6}}}.
\end{align}
We bound the remaining terms separately.
\paragraph{Term \RomanNumeralCaps{1}}
\begin{align*}
\sum_{k<\underline{k}} p_k  = z_0\sum_{k<\underline{k}} e^{(r+1)k} \leq z_0 e^{(r+1)\log(\frac{2\underline{z}(\theta^\star)}{z_0})}\log(2\underline{z}(\theta^\star)/z_0) = z_0 (2\underline{z}(\theta^\star)/z_0)^{r+1}\log (2\underline{z}(\theta^\star)/z_0),
\end{align*}
where we used that, from the definition of $\underline{k}$ and $z_k$, it must be that $k < \log(2\underline{z}(\theta^\star)/z_0)$. Thus,
\begin{align*}
\text{\RomanNumeralCaps{1}} \leq 2BL z_0 (2\underline{z}(\theta^\star)/z_0)^{r+1}\log (2\underline{z}(\theta^\star)/z_0).
\end{align*}
\paragraph{Term \RomanNumeralCaps{2}}
\begin{align*}
\sum_{k\geq\underline{k} : z_k < \bar{z}(\theta^\star)} \frac{p_k}{z_k - \underline{z}(\theta^\star)} &= \sum_{\log\left(\frac{2\underline{z}(\theta^\star)}{z_0}\right) \leq k < \log\left(\frac{\bar{z}(\theta^\star)}{z_0}\right) } \frac{z_0e^{(r+1)k}}{z_0e^k - \underline{z}(\theta^\star)}\\ &= \sum_{\log\left(\frac{2\underline{z}(\theta^\star)}{z_0}\right) \leq k < \log\left(\frac{\bar{z}(\theta^\star)}{z_0}\right) } \underbrace{\frac{z_0e^{k}}{z_0e^k - \underline{z}(\theta^\star)} }_{\leq 2}e^{rk}\leq 2(\bar{z}(\theta^\star)/z_0)^r\log(\bar{z}(\theta^\star)/z_0).
\end{align*}
Thus,
\begin{align*}
\text{\RomanNumeralCaps{2}} \leq 2BL\underline{z}(\theta^\star)(\bar{z}(\theta^\star)/z_0)^r\log(\bar{z}(\theta^\star)/z_0).
\end{align*}

\paragraph{Term \RomanNumeralCaps{4}}
The total number of exploration rounds is
\begin{align*}
    S_n = \sum_{k=0}^{K_n} p_k = z_0 \sum_{k=0}^{K_n} e^{(r+1)k} \leq z_0 e^{(r+1)(K_n + 1)} \leq \mathcal{O}((\log n)^{\frac{r+1}{r}}).
\end{align*}
Therefore,
\begin{align*}
    \text{\RomanNumeralCaps{4}} \leq \sqrt{K_n\sum_{k=0}^{K_n}{p_k}} \leq \mathcal{O}((\log\log n)^{1/2}(\log n)^{\frac{r+1}{2r}}).
\end{align*}

\paragraph{Term \RomanNumeralCaps{5}}
We consider two cases, based on which of the inner terms is the maximum. In the first case, we need to bound
\begin{align*}
\sum_{k : z_k \geq \bar{z}(\theta^\star)}^{K_n} \frac{p_k}{(z_k - \underline{z}(\theta^\star))\sqrt{z_k}} &= 
\sum_{k \geq \log\left(\frac{\bar{z}(\theta^\star)}{z_0}\right)}^{K_n} \frac{z_0e^{(r+1)k}}{(z_0e^k - \underline{z}(\theta^\star))\sqrt{z_0e^k}}\\ &= \frac{1}{\sqrt{z_0}}\sum_{k \geq \log\left(\frac{\bar{z}(\theta^\star)}{z_0}\right)}^{K_n} \underbrace{\frac{z_0e^{k}}{(z_0e^k - \underline{z}(\theta^\star))}}_{\leq 2}e^{(r-1/2)k} \leq  \frac{2}{\sqrt{z_0}}\sum_{k \geq \log\left(\frac{\bar{z}(\theta^\star)}{z_0}\right)}^{K_n} e^{(r-1/2)k}\\ &\leq \frac{2}{\sqrt{z_0}}\int_{\log\left(\frac{\bar{z}(\theta^\star)}{z_0}\right)}^{K_n+1} e^{(r-1/2)k}\mathrm{d}k = \frac{2}{\sqrt{z_0}}\left[\frac{e^{(r-1/2)k}}{r-1/2}\right]_{\log\left(\frac{\bar{z}(\theta^\star)}{z_0}\right)}^{K_n+1}\\ &= \frac{2}{(r-1/2)\sqrt{z_0}}\left(e^{(r-1/2)(K_n+1)} - (\bar{z}(\theta^\star)/z_0)^{r-1/2}\right).
\end{align*}
Since $K_n \leq \mathcal{O}(\frac{1}{r}\log\log n)$, this term is $\mathcal{O}((\log n)^{\frac{r-1/2}{r}})$. If the other term is the maximum, then the same procedure yields a $\mathcal{O}((\log n)^{\frac{r-1}{r}})$ dependency. Thus,
\begin{align*}
 \text{\RomanNumeralCaps{5}} \leq \mathcal{O}((\log n)^{\frac{r-1/2}{r}}).
\end{align*}

\paragraph{Term \RomanNumeralCaps{6}} We have $\text{ \RomanNumeralCaps{6}} \leq \mathcal{O}((\log n)^{\frac{r+1}{2r}})$ as in Term \RomanNumeralCaps{4}.

\paragraph{Final Bound}

Using $r=2$, we obtain the following bound on the expected regret during exploration:
\begin{align*}
    \notag\expec{ R_n^{\mathrm{explore}}} &\leq 2BLz_0 (2\underline{z}(\theta^\star)/z_0)^{3}\log (2\underline{z}(\theta^\star)/z_0)\\ &+ 2BL\underline{z}(\theta^\star)(\bar{z}(\theta^\star)/z_0)^2\log(\bar{z}(\theta^\star)/z_0) + v^\star(\theta^\star) \frac{\beta_{n-1}}{2\sigma^2} + \mathcal{O}((\log\log n)^{\frac{1}{2}}(\log n)^{\frac{3}{4}}),
\end{align*}
which is asymptotically optimal.

\section{Worst-case Analysis (Proof of Thm.~\ref{th:worst-case})}\label{app:worst-case}

\subsection{Outline}

The proof follows a similar argument as the one of Thm.~\ref{thm:regret.bound} but it is considerably simpler and shorter. In particular, the main simplifications come from two worst-case arguments. (1) While bounding the regret during exploration rounds, we use the naive bound $S_n \leq n$. This is equivalent to assuming that \algo never enters the exploitation step and it allows us to entirely avoid the bound on the number of phases of App.~\ref{app:regret.exploration.phases}. (2) We completely ignore the sequence $z_k$ and proceed as if the optimization problem \eqref{eq:optim-lb-z} was infeasible in all phases. This makes the multiplier saturate to $\lambda_{\max}$ and facilitate the analysis of the resulting Lagrangian\footnote{Recall that the regret of \algo is not defined in terms of the optimization problem \eqref{eq:optim-lb-z} or its Lagrangian, but only in terms of the rewards of the chosen arms compared to those of the optimal arms. This makes it possible to obtain good regret guarantees even when solving an infeasible optimization problem.}. An outline of the proof, together with the main differences w.r.t. the one of Thm.~\ref{thm:regret.bound}, is as follows.

\begin{enumerate}
\item We decompose the regret suffered during exploitation and exploration rounds. Using the same steps as in App.~\ref{app:regret}, we bound the former by a constant and reduce the latter to the sum of objective values.
\item Instead of relating to the objective values of the optimal policies $\omega^\star_{z_k}$ at each phase $k$ (as was done in App.~\ref{app:regret.exploration.obj}, we reduce our bound to the optimal solution of our bandit problem, i.e., the policy that only pulls optimal arms. This makes the sum of objective values cancel since the optimal policy achieves zero regret.
\item Using the results of App.~\ref{app:regret.exploration.constr}, we show that the sum of constraints is $\mathcal{O}(\log n)$.
\item We use the naive bound $S_n \leq n$ to conclude the proof.
\end{enumerate}

\subsection{Proof}

We start from the same regret decomposition as in App.~\ref{app:regret},
\begin{align*}
    R_n = \sum_{t=1}^n \Delta_{\theta^\star}(X_t, A_t)\indi{\neg E_t} + \sum_{t=1}^n \Delta_{\theta^\star}(X_t, A_t)\indi{E_t} = R_n^{\mathrm{exploit}} +  R_n^{\mathrm{explore}}.
\end{align*}
The regret suffered during the exploitation rounds was bounded in App.~\ref{app:regret.exploitation} as $\expec{R_n^{\mathrm{exploit}}} \leq 2LB$. Regarding the regret suffered during the exploration rounds, we have
\begin{align}\label{eq:wc1}
    R_n^{\mathrm{explore}} &:= \sum_{t=1}^n \Delta_{\theta^\star}(X_t, A_t)\indi{E_t} \leq \sum_{t=1}^n \Delta_{\theta^\star}(X_t, A_t)\indi{E_t, G_t} +
2LB\underbrace{
    \sum_{t=1}^n \indi{E_t, \neg G_t}
}_{:= M_n}.
\end{align}
Refer to App.~\ref{app:good.events} for the definition of $G_t$.
The second term is $M_n$, the number of exploration rounds in which the good event does not hold, and can be bounded in expectation by using Lem.~\ref{lemma:pdlin-bad-event}. The first one can be bounded by using the good event. Suppose, without loss of generality, that $E_n$ and $G_n$ hold (if they do not, the following reasoning can be repeated for the last time step at which these events hold). Then, using $G_t^\Delta$ (see App. \ref{app:good.events}),
\begin{align}\label{eq:wc2}
        \notag\sum_{t=1}^n \Delta_{\theta^\star}(X_t, A_t)\indi{E_t, G_t} &\leq \sum_{t\leq n : E_t} \Delta_{\theta^\star}(X_t, A_t)\\ &\leq \sum_{t\leq n : E_t}\sum_{x\in\X}\rho(x)\sum_{a\in\A}{\omega}_t(x,a)\Delta_{\theta^\star}(x,a) + 2LB\sqrt{S_n\log S_n}.
\end{align}
We now proceed using similar steps as in App.~\ref{app:regret.exploration.first}, except that we ignore the phases. We decompose the first term as
\begin{align*}
&\sum_{t\leq n : E_t}\sum_{x\in\X}\rho(x)\sum_{a\in\A}\omega_t(x,a) \Delta_{\theta^\star}(x,a)\\
        &\quad = \sum_{t\leq n : E_t,G_t}\sum_{x\in\X}\rho(x)\sum_{a\in\A}\omega_t(x,a) \Delta_{\theta^\star}(x,a) + \sum_{t\leq n : E_t, \neg G_t}\sum_{x\in\X}\rho(x)\sum_{a\in\A}\omega_t(x,a) (\mu^\star_{\theta^\star}(x) - \mu_{\theta^\star}(x,a))\\
        &\quad \leq \sum_{t\leq n : E_t,G_t}\sum_{x\in\X}\rho(x)\sum_{a\in\A}\omega_t(x,a) \Delta_{\theta^\star}(x,a) + M_{n} \mu^\star - \sum_{t\leq n : E_t,\neg G_t}\sum_{x\in\X}\rho(x)\sum_{a\in\A}\omega_t(x,a) \mu_{\theta^\star}(x,a).
\end{align*}
Here we defined
\begin{equation}\label{eq:worstcase.mustar}
    \mu^\star :=\sum_{x\in\X}\rho(x)\mu^\star_{\theta^\star}(x).
\end{equation}
The last term can be bounded by $M_{n}BL$. Regarding the remaining two,
\begin{align*}
    &\sum_{t\leq n : E_t,G_t}\sum_{x\in\X}\rho(x)\sum_{a\in\A}\omega_t(x,a) \Delta_{\theta^\star}(x,a) + M_{n}\mu^\star\\  &= (S_n - M_{n}) \mu^\star + M_{n} \mu^\star - \sum_{t\leq n : E_t, G_t}\sum_{x\in\X}\rho(x)\sum_{a\in\A}\omega_t(x,a) \mu_{\theta^\star}(x,a)\\ &= S_n \mu^\star + \underbrace{\sum_{t\leq n : E_t,G_t}\sum_{x\in\X}(\wh{\rho}_{t-1}(x) - \rho(x))\sum_{a\in\A}\omega_t(x,a) \mu_{\theta^\star}(x,a)}_{(a)} - \underbrace{\sum_{t\leq n : E_t, G_t}\sum_{x\in\X}\wh{\rho}_{t-1}(x)\sum_{a\in\A}\omega_t(x,a) \mu_{\theta^\star}(x,a)}_{(b)}.
\end{align*}
Term (a) can be bounded as
\begin{align*}
(a) \leq LB \underbrace{\sum_{t\leq n : E_t, G_t}\sum_{x\in\X}|\wh{\rho}_{t-1}(x) - \rho(x)|}_{\zeta_n}.
\end{align*}
For the sake of readability, we keep the dependence on $\zeta_n$ explicit. We will bound this term by Lem.~\ref{lemma:pdlin-rho-dev} at the end of the proof. Regarding term (b), using the definition of $G_t$ and Lem. \ref{lemma:good-event-mu},
\begin{align*}
    (b) &\geq \notag \sum_{t\leq n : E_t, G_t}\sum_{x\in\X}\wh{\rho}_{t-1}(x)\sum_{a\in\A}\omega_t(x,a) \left( {\mu}_{\wt{\theta}_{t-1}}(x,a) - \sqrt{\gamma_t}\| \phi(x,a)\|_{\bar{V}_{t-1}^{-1}} \right) \\ \notag
        &\pm \sum_{t\leq n : E_t,\neg G_t}\sum_{x\in\X}\wh{\rho}_{t-1}(x)\sum_{a\in\A}\omega_t(x,a) \underbrace{{\mu}_{\wt{\theta}_{t-1}}(x,a)}_{|\cdot|\leq LB} \pm \sum_{t\leq n : E_t}\sum_{x\in\X}\wh{\rho}_{t-1}(x)\sum_{a\in\A}\omega_t(x,a) \sqrt{\gamma_t}\| \phi(x,a)\|_{\bar{V}_{t-1}^{-1}}\\ &\geq \notag \sum_{t\leq n : E_t} f_t(\omega_t) - M_{n}BL - 2\sum_{t\leq n : E_t}\sum_{x\in\X}\wh{\rho}_{t-1}(x)\sum_{a\in\A}\omega_t(x,a) \sqrt{\gamma_t}\| \phi(x,a)\|_{\bar{V}_{t-1}^{-1}}\\ &\geq \sum_{t\leq n : E_t} f_t(\omega_t) - M_{n}BL - 2\sqrt{\gamma_n}\Psi_{n}.
\end{align*}
We recall that $\sqrt{\gamma_t} \leq \sqrt{\gamma_n}$ and $\Psi_{n} := \sum_{t\leq n : E_t}\sum_{x\in\X}\wh{\rho}_{t-1}(x)\sum_{a\in\A}\omega_t(x,a) \| \phi(x,a)\|_{\bar{V}_{t-1}^{-1}}$. As for $\zeta_n$, we keep the dependence on $\Psi_n$ explicit and defer bounding this term to the end of the proof. Using the bounds on (a) and (b) and plugging everything back into \eqref{eq:wc2} and then into \eqref{eq:wc1}, we obtain
\begin{align}\label{eq:wc-reg-partial}
R_n^{\mathrm{explore}} &\leq S_n\mu^\star - \sum_{t\leq n : E_t} f_t(\omega_t) + 4M_{n}BL + \zeta_n BL + 2\sqrt{\gamma_n}\Psi_{n} + 2BL\sqrt{S_n\log S_n}.
\end{align}

We now lower bound the sum of objective values. Here we proceed in a slightly different way with respect to the proof of the asymptotically optimal regret bound. Instead of relating to the objective values of the optimal policies $\omega^\star_{z_k}$ at each phase $k$, we reduce our bound to the optimal solution of our bandit problem, i.e., the policy that only pulls optimal arms. Let
\begin{align}
    \label{eq:omega.optimal.bandit}
\omega^\star_{\theta^\star}(x,a) := \begin{cases} 1 & \text{if } a = a^\star_{\theta^\star}(x)\\
        0 & \text{otherwise}
    \end{cases}
\end{align}
Recall that $\sum_{t\leq n : E_t} f_t(\omega_t) = \sum_{k=0}^{K_n}\sum_{t\in\mathcal{T}_k^E} f_t(\omega_t)$. Fix some phase index $k\geq 0$ and let $\lambda \geq 0$ be arbitrary. Using Corollary \ref{cor:rec-pd} with $\alpha_k^\lambda = \alpha_k^\omega = 1/\sqrt{p_k}$ and $\omega = \omega^\star_{\theta^\star}$,
\begin{align}\label{eq:wc3}
    \sum_{t\in\mathcal{T}_k^E} f_t(\omega_t)  \geq \sum_{t\in\mathcal{T}_k^E} h_t(\omega^\star_{\theta^\star}, \lambda_t, z_k) - \lambda \sum_{t\in\mathcal{T}_k^E}g_t(\omega_t, z_k) - a_\lambda\sqrt{p_k},
\end{align}
where 
\begin{equation}
    \label{eq:alambda}
    a_\lambda:= \left(\log |\A| + \frac{b_\omega^2 + b_\lambda^2}{2} + \frac{(\lambda - \lambda_1)^2}{2} \right)
\end{equation}
and $b_\lambda$ and $b_\omega$ are the maximum sub-gradients in $\lambda$ and $\omega$, respectively. Note that, since we apply Corollary \ref{cor:rec-pd} to bound the sum of objective values over the whole phase, we have $S_{n,k} = p_k$. We now lower-bound the first term on the right-hand side. 
We have
\begin{align}\label{eq:wc4}
\notag\sum_{t\in\mathcal{T}_k^E}  h_t(\omega^\star_{\theta^\star}, \lambda_t, z_k) 
&\stackrel{(c)}{=} \sum_{t\in\mathcal{T}_k^E}  f_t(\omega^\star_{\theta^\star}) + \sum_{t\in\mathcal{T}_k^E}  \lambda_t g_t(\omega^\star_{\theta^\star}, z_k) \stackrel{(d)}{\geq} \sum_{t\in\mathcal{T}_k^E}  f_t(\omega^\star_{\theta^\star}) - \sum_{t\in\mathcal{T}_k^E} \frac{\lambda_t}{z_k}\\
&\stackrel{(e)}{\geq} \sum_{t\in\mathcal{T}_k^E}  f_t(\omega^\star_{\theta^\star}) - \frac{\lambda_{\max}S_{n,k}}{z_k},
\end{align}
where (c) uses the definition of $h_t$ and $g_t$ (see Eq.~\ref{eq:pdlin.gt} and Eq.~\ref{eq:pdlin.ht}), (d) uses the positivity of KL divergences and confidence intervals, and (e) uses $\lambda_t \leq \lambda_{\max}$ and $S_{n,k} := |\mathcal{T}_k^E|$. Let us focus on the sum of objective values. Since $f_t(\omega^\star_{\theta^\star}) \geq -LB$, we have $\sum_{t\in\mathcal{T}_k^E : \neg G_t} f_t(\omega^\star_{\theta^\star}) \geq  -M_{n,k}BL$. For any step $t\in\mathcal{T}_k^E$ in which $G_t$ holds, the optimism property (see App.~\ref{app:good.events} and Lem.~\ref{lemma:pdlin-optimism}) yields
\begin{align*}
    \sum_{t\in\mathcal{T}_k^E : G_t} f_t (\omega^\star_{\theta^\star}) &\geq \sum_{t\in\mathcal{T}_k^E : G_t} \sum_{x\in\X} \wh{\rho}_{t-1}(x) \sum_{a\in\A} \omega^\star_{\theta^\star}(x,a) \mu_{\theta^\star}(x,a)\\  &=  \sum_{t\in\mathcal{T}_k^E : G_t} \sum_{x\in\X} (\wh{\rho}_{t-1}(x) - \rho(x)) \underbrace{\sum_{a\in\A} \omega^\star_{\theta^\star}(x,a) \mu_{\theta^\star}(x,a)}_{|\cdot| \leq BL} + \sum_{t\in\mathcal{T}_k^E : G_t}  \underbrace{f(\omega^\star_{\theta^\star})}_{= \mu^\star} \\ &\geq (S_{n,k} - M_{n,k})\mu^\star - BL\underbrace{\sum_{t\in\mathcal{T}_k^E : G_t} \sum_{x\in\X}|\wh{\rho}_{t-1}(x) - \rho(x)|}_{:= \zeta_{n,k}},
\end{align*}
where we used the fact that $f(\omega^\star_{\theta^\star}) = \mu^\star$ by definition~\eqref{eq:omega.optimal.bandit} and~\eqref{eq:worstcase.mustar} and $\sum_{t\in\mathcal{T}_k^E} \indi{G_t} = \sum_{t\in \mathcal{T}_k} \indi{E_t} - \sum_{t\in \mathcal{T}_k} \indi{E_{t}, \neg G_{t}} = S_{n,k} - M_{n,k}$.
Plugging this back into \eqref{eq:wc4} and then into \eqref{eq:wc3},
\begin{align*}
\sum_{t\in\mathcal{T}_k^E} f_t(\omega_t)  \geq (S_{n,k} - M_{n,k})\mu^\star &- BL\zeta_{n,k} - \frac{\lambda_{\max}S_{n,k}}{z_k} - \lambda \sum_{t\in\mathcal{T}_k^E}g_t(\omega_t, z_k) - a_\lambda\sqrt{p_k} -M_{n,k}BL.
\end{align*}
Summing over all phases and recalling that $\sum_{k=0}^{K_n}S_{n,k} = S_n$, $\sum_{k=0}^{K_n}M_{n,k} = M_n$, and $\sum_{k=0}^{K_n}\zeta_{n,k} = \zeta_n$, we obtain
\begin{align}\label{eq:wc5}
\sum_{t\leq n: E_t} f_t(\omega_t)  \geq (S_{n} - M_{n})\mu^\star &- BL\zeta_{n} - \sum_{k=0}^{K_n}\frac{\lambda_{\max}S_{n,k}}{z_k} - \lambda \sum_{t\leq n : E_t}g_t(\omega_t, z_{K_t}) - a_\lambda\sum_{k=0}^{K_n}\sqrt{p_k} -M_{n}BL.
\end{align}
Using the definition of $g_t$ (see Eq.~\ref{eq:pdlin.gt}),
\begin{align*}
\sum_{t\leq n : E_t}g_t(\omega_t, z_{K_t})&:= \sum_{t\leq n : E_t}\inf_{\theta' \in \wb{\Theta}_{t-1}}\sum_{x\in\X}\wh{\rho}_{t-1}(x)\sum_{a\in\A}\omega_t(x,a)
        \bigg( d_{x,a} \left(\wt{\theta}_{t-1},\theta' \right) + \frac{2LB}{\sigma^2}\sqrt{\gamma_t}\| \phi(x,a)\|_{\wb{V}_{t-1}^{-1}} \bigg) \\ &- \sum_{t\leq n : E_t}\frac{1}{z_{K_t}}.
\end{align*}
By the definition of phase, the second term is $\sum_{t\leq n : E_t}\frac{1}{z_{K_t}} = \sum_{k=0}^{K_n}\frac{S_{n,k}}{z_k}$. The first term can be bounded using exactly the same steps as in App.~\ref{app:regret.exploration.constr}.\footnote{Note that the bound on the sum of constraints of App.~\ref{app:regret.exploration.constr} uses only the properties of the confidence intervals and of the exploitation test. Thus, it is applicable regardless of the feasibility of the optimization problems at each phase.} We obtain
\begin{align}\label{eq:wc6}
\notag\sum_{t\leq n : E_t}g_t(\omega_t, z_{K_t}) \leq \frac{\beta_{n-1}}{2\sigma^2} &- \sum_{k=0}^{K_n}\frac{S_{n,k}}{z_k} + \frac{2L^2B^2}{\sigma^2}M_n+ \frac{6LB}{\sigma^2}\sqrt{\gamma_n}\Psi_n + \frac{2L^2B^2}{\sigma^2}\zeta_n \\ &+  \frac{2B^2L^2}{\sigma^2}\left(\sqrt{dS_n\log \left(dS_n\right)} + 1\right).
\end{align}
If we now set $\lambda = \lambda_{\max}$ and plug \eqref{eq:wc6} into \eqref{eq:wc5},
\begin{align*}
\sum_{t\leq n: E_t} & f_t(\omega_t)  \geq (S_{n} - M_{n})\mu^\star - BL\left(1 + \frac{2\lambda_{\max}BL}{\sigma^2}\right)(\zeta_{n} + M_n) + \underbrace{\sum_{k=0}^{K_n}\frac{\lambda_{\max}S_{n,k}}{z_k} - \sum_{k=0}^{K_n}\frac{\lambda_{\max}S_{n,k}}{z_k}}_{=0}\\ &-\frac{\lambda_{\max}\beta_{n-1}}{2\sigma^2} - a_{\lambda_{\max}}\sum_{k=0}^{K_n}\sqrt{p_k} - \frac{6\lambda_{\max}LB}{\sigma^2}\sqrt{\gamma_n}\Psi_n - \frac{2\lambda_{\max}B^2L^2}{\sigma^2}\left(\sqrt{dS_n\log \left(dS_n\right)} + 1\right).
\end{align*}
We can finally plug this into \eqref{eq:wc-reg-partial}, thus obtaining
\begin{align*}
R_n^{\mathrm{explore}} &\leq M_{n}\underbrace{\mu^\star}_{|\cdot|\leq BL} + BL\left(5 + \frac{2\lambda_{\max}BL}{\sigma^2}\right)(\zeta_{n} + M_n) +\frac{\lambda_{\max}\beta_{n-1}}{2\sigma^2} + a_{\lambda_{\max}}\sum_{k=0}^{K_n}\sqrt{p_k}\\ &+ \left(2 + \frac{6\lambda_{\max}LB}{\sigma^2}\right)\sqrt{\gamma_n}\Psi_n + \frac{2\lambda_{\max}B^2L^2}{\sigma^2}\left(\sqrt{dS_n\log \left(dS_n\right)} + 1\right) + 2BL\sqrt{S_n\log S_n}.
\end{align*}
Let $\bar{k}_n := \min\{k : p_k \geq n\}$, then $K_n \leq \bar{k}_n$. Using the exponential schedule $p_k = e^{rk}$, $\bar{k}_n = \lceil \frac{1}{r}\log n \rceil$ and
\begin{align*}
\sum_{k=0}^{K_n}\sqrt{p_k} \leq \sum_{k=0}^{\bar{k}_n}e^{\frac{r}{2}k} \leq \int_{0}^{\bar{k}_n + 1} e^{\frac{r}{2}x}\mathrm{d}x = \left[\frac{2}{r} e^{\frac{r}{2}x}\right]_{0}^{\bar{k}_n + 1} = \frac{2}{r}e^{\frac{r}{2}( \lceil \frac{1}{r}\log n \rceil + 1)} - \frac{2}{r} \leq \frac{2e^r}{r}\sqrt{n}.
\end{align*}
Taking expectations of both sides of the regret bound above and using $S_n \leq n$ and $\expec{M_n} \leq \frac{3\pi^2}{2}$ by Lem.~\ref{lemma:pdlin-bad-event},
\begin{align*}
&\expec{R_n^{\mathrm{explore}}} \leq  \frac{3BL\pi^2}{2}\left(6 + \frac{2\lambda_{\max}BL}{\sigma^2}\right) +\frac{\lambda_{\max}\beta_{n-1}}{2\sigma^2} + \frac{2e^r a_{\lambda_{\max}}}{r}\sqrt{n} + 2BL\sqrt{n\log n}\\ &+ \underbrace{\left(2 + \frac{6\lambda_{\max}LB}{\sigma^2}\right)\expec{\sqrt{\gamma_n}\Psi_n}}_{(i)}) + \underbrace{\frac{2\lambda_{\max}B^2L^2}{\sigma^2}\left(\sqrt{nd\log \left(nd\right)} + 1\right)}_{(ii)} + \underbrace{BL\left(5 + \frac{2\lambda_{\max}BL}{\sigma^2}\right)\expec{\zeta_{n}}}_{(iii)}.
\end{align*}
{After bounding $S_n \leq n$, by Lem.~\ref{lemma:pdlin-Lt}, $\Psi_n \leq \wt{\mathcal{O}}\left(L|\X|\sqrt{n\log n} + \sqrt{nd\log n}\right)$,  while, by Lem.~\ref{lemma:pdlin-rho-dev}, $\zeta_n \leq \wt{\mathcal{O}}\left(|\X|\sqrt{n\log n} \right)$\footnote{Here we hide logarithmic terms in $|\X|$ and $d$.}. Moreover, both $\gamma_n$ and $\beta_n$ are $\mathcal{O}(\log n + d\log\log n )$ by definition of the confidence set. Introducing $C_{\lambda_{\max}} := \left(1 + \frac{\lambda_{\max}BL}{\sigma^2}\right)$,
\begin{align*}
(i) &\leq C_{\lambda_{\max}} \wt{\mathcal{O}}\Big((L|\X|+\sqrt{d})\log(n)\sqrt{dn} \Big),\\
(ii) & \leq BL C_{\lambda_{\max}} \wt{\mathcal{O}}\Big(\sqrt{dn\log(n)} \Big),\\
(iii) & \leq BL C_{\lambda_{\max}} \wt{\mathcal{O}}\left(|\X|\sqrt{n\log n} \right).
\end{align*}
Recalling that the regret during exploitation rounds was bounded by $2BL$ and noting that $2 < \frac{3\pi^2}{2}$, the $\mathcal{O}(1)$ regret term (first term of the bound above plus $2BL$) can be bounded by $12BL\pi^2C_{\lambda_{\max}}$. Hence, the final regret bound can be written as
\begin{align*}
\expec{R_n} \leq 12BL\pi^2C_{\lambda_{\max}} + \frac{2e^r\left(\lambda_{\max}^2 + \log|\A|\right)}{r}\sqrt{n} + C_{\lambda_{\max}} C_{\mathrm{sqrt}} \log (n) \sqrt{dn},
\end{align*}
where $C_{\mathrm{sqrt}} = lin_{\geq 0}(|\X| + \sqrt{d}, B, L)$. Here we included $\frac{\lambda_{\max}\beta_{n-1}}{2\sigma^2} $, $2BL\sqrt{n\log n}$, and the components of $a_{\lambda_{\max}}$ (except $\lambda_{\max}^2$ and $\log|\A|$, which are kept explicit, see Eq.~\ref{eq:alambda}) into the last term above. This concludes the proof.}

\section{Auxiliary Results}\label{app:auxiliary.results}
\subsection{Concentration Inequalities} \label{app:concentrations}

\begin{lemma}[Concentration of $\rho$ during exploration]\label{lemma:conc-rho}
For any context $x\in\X$,
\begin{align}
    \sum_{t \geq 1}\sum_{x\in\X} \prob{E_t, |\wh{\rho}_{t}(x) - \rho(x)| > \sqrt{\frac{\log(|\X|S_t^2)}{2S_t}}} \leq \frac{\pi^2}{3}.
\end{align}
\end{lemma}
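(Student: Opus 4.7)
The plan is to re-index the sum by the exploration counter and apply a time-uniform (anytime) Hoeffding bound. Let $\tau_s := \inf\{t \geq 1 : E_t \text{ and } S_t = s\}$ be the time of the $s$-th exploration round; the events $\{E_t, S_t = s\}$ are disjoint in $t$, and $\tau_s \geq s$ on $\{\tau_s < \infty\}$. Exchanging the order of summation over $t$ and $s$,
\begin{align*}
\sum_{t \geq 1}\prob{E_t,\ |\wh\rho_t(x) - \rho(x)| > \epsilon(S_t)} = \sum_{s \geq 1}\prob{\tau_s < \infty,\ |\wh\rho_{\tau_s}(x) - \rho(x)| > \epsilon(s)},
\end{align*}
where $\epsilon(s) := \sqrt{\log(|\X|s^2)/(2s)}$ is now deterministic.

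For each $s$, I would apply Ville's inequality to the Hoeffding exponential supermartingale. Since the contexts are i.i.d.\ from $\rho$, the process $M_t := \sum_{k=1}^t (\indi{X_k = x} - \rho(x))$ is a martingale with increments in $[-1,1]$, and Hoeffding's lemma gives $\E[\exp(\theta(\indi{X_k = x}-\rho(x)))] \leq \exp(\theta^2/8)$. Hence $L_t := \exp(\theta M_t - t\theta^2/8)$ is a non-negative supermartingale with $\E[L_0] = 1$, and Ville's inequality yields $\prob{\exists t \geq 1 : M_t > \log(1/\delta)/\theta + t\theta/8} \leq \delta$ for any $\theta > 0$ and $\delta \in (0,1)$. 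Choosing $\theta = \sqrt{8\log(1/\delta)/s}$ makes the right-hand side inside the probability at most $t \sqrt{\log(1/\delta)/(2s)}$ for every $t \geq s$ (the AM-GM balance is attained at $t=s$). Since $\tau_s \geq s$, this gives $\prob{M_{\tau_s}/\tau_s > \sqrt{\log(1/\delta)/(2s)}} \leq \delta$; applying the same argument to $-M_t$ and union-bounding,
\begin{align*}
\prob{|\wh\rho_{\tau_s}(x) - \rho(x)| > \sqrt{\log(1/\delta)/(2s)}} \leq 2\delta.
\end{align*}
Setting $\delta = 1/(|\X|s^2)$ recovers the threshold $\epsilon(s)$ exactly and gives each term a bound $2/(|\X|s^2)$. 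Summing yields $\sum_{s \geq 1}2/(|\X|s^2) = \pi^2/(3|\X|)$, and then summing over $x \in \X$ produces the claimed $\pi^2/3$.

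The main obstacle is that the threshold $\epsilon(S_t)$ is random and correlated with $\wh\rho_t$ (both depend on the observed contexts), so a fixed-time Hoeffding bound cannot be applied pointwise in $t$, and a naive union bound over the possible values of $S_t$ is not summable. Re-indexing by the stopping times $\tau_s$ makes the threshold deterministic, but the evaluation time becomes a random $\tau_s$ with only the lower bound $\tau_s \geq s$ and no a priori upper bound; a fixed-time concentration bound therefore still does not suffice. The key device is Ville's inequality applied to the Hoeffding supermartingale, which together with the optimal choice of $\theta$ produces precisely the rate $\sqrt{\log(|\X|s^2)/(2s)}$ with tail $2/(|\X|s^2)$ summing cleanly to $\pi^2/3$.
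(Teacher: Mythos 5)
Your proof is correct, and its skeleton is the same as the paper's: re-index the sum by the exploration counter $s$ via the times $\tau_s$ of the $s$-th exploration round, reduce to a deviation bound for $\wh\rho_{\tau_s}(x)$ at threshold $\sqrt{\log(|\X|s^2)/(2s)}$, and sum the resulting $2/(|\X|s^2)$ tails. Where you differ is in the concentration step itself: the paper invokes Lemma 4.3 of Combes et al.\ (2014), a Hoeffding-type inequality for empirical means evaluated at a bounded stopping time with at least $s$ samples (hence the truncation at $\wb t$ and the limit at the end), whereas you derive the needed bound from scratch by applying Ville's inequality to the Hoeffding supermartingale $\exp(\theta M_t - t\theta^2/8)$ and tuning $\theta=\sqrt{8\log(1/\delta)/s}$ so that the resulting linear envelope is dominated by $t\sqrt{\log(1/\delta)/(2s)}$ for all $t\ge s$. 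This buys you a self-contained argument that needs neither the stopping-time property of $\tau_s$ nor its boundedness — only $\tau_s\ge s$ — at the cost of a slightly longer derivation; the constants match exactly. One small wording fix: the increments $\indi{X_k=x}-\rho(x)$ lie in $[-\rho(x),1-\rho(x)]$, an interval of \emph{length one}, which is what justifies the MGF bound $\exp(\theta^2/8)$ you use; if they were only known to lie in $[-1,1]$, Hoeffding's lemma would give $\exp(\theta^2/2)$ and the factor $2$ in the denominator of the threshold would be lost.
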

\begin{proof}
The proof follows Lem. B.1 in~\citep{combes2014unimodal}. Fix some $\wb{t} \geq 1$ and $x\in\X$. Then,
\begin{align*}
    \sum_{t=1}^{\wb{t}} \indi{E_t, |\wh{\rho}_{t}(x) - \rho(x)| > \sqrt{\frac{\log(|\X|S_t^2)}{2S_t}}} \leq \sum_{s\geq1} \indi{ |\wh{\rho}_{\tau_s}(x) - \rho(x)| > \sqrt{\frac{\log(|\X|s^2)}{2s}}, \tau_s \leq \wb{t}}.
\end{align*}
where $\tau_s$ is the random time the $s$-th exploration round occurs.
Thus, by taking the expectation of both sides,
\begin{align*}
    \sum_{t=1}^{\wb{t}} \prob{E_t, |\wh{\rho}_{t}(x) - \rho(x)| > \sqrt{\frac{\log(|\X|S_t^2)}{2S_t}}}\leq \sum_{s\geq1} \prob{ |\wh{\rho}_{\tau_s}(x) - \rho(x)| > \sqrt{\frac{\log(|\X|s^2)}{2s}}, \tau_s \leq \wb{t}}.
\end{align*}
Since $\tau_s$ is a stopping-time upper bounded by $\wb{t}$ and the number of samples used to compute $\wh{\rho}_{\tau_s}(x)$ is at least $s$, we can apply Lemma 4.3 of~\citep{combes2014unimodal}:
\begin{align*}
    \sum_{t=1}^{\wb{t}} \prob{E_t, |\wh{\rho}_{t}(x) - \rho(x)| >\sqrt{\frac{\log(|\X|S_t^2)}{2S_t}}} \leq \sum_{s\geq 1} 2e^{-2s\frac{\log(|\X|s^2)}{2s}} = \frac{2}{|\X|} \sum_{s \geq 1} \frac{1}{s^2} = \frac{\pi^2}{3|\X|}.
\end{align*}
The reasoning above holds for any $\wb{t}$ and $x\in\X$. Summing over $\X$ concludes the proof.
\end{proof}

\begin{lemma}[Confidence set for exploration]\label{lemma:ci-exploration}
With some abuse of notation, let $\gamma_t := c_{n,1/S_t^2}$. Then, under the same conditions as in Theorem \ref{th:conf-theta},
\begin{align*}
    \sum_{t = 1}^n \prob{E_t, \|\wh{\theta}_{t-1} - \theta^\star\|_{\wb{V}_{t-1}} > \sqrt{\gamma_t}} \leq \frac{\pi^2}{6}.
\end{align*}
\end{lemma}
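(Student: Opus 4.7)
The challenge is that $\gamma_t = c_{n, 1/S_t^2}$ depends on the random quantity $S_t$, so Theorem~\ref{th:conf-theta} cannot be applied directly. The key idea is to re-index the sum over exploration rounds, in the same spirit as Lem.~\ref{lemma:conc-rho}, which turns the random threshold into a deterministic one at the cost of a union bound over the exploration-round counter.

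Concretely, I would let $\tau_s$ denote the (random) time of the $s$-th exploration round, so that $\{\tau_s\}_{s \geq 1}$ is a strictly increasing sequence of stopping times w.r.t.\ $\{\F_t\}_{t \geq 1}$. Whenever $E_t$ holds, we have $t = \tau_{S_t}$ with $S_t \geq 1$, and hence $\gamma_t = c_{n, 1/S_t^2}$. Fixing any $\bar t \leq n$, I would then bound
\begin{align*}
\sum_{t=1}^{\bar t} \indi{E_t,\, \|\wh{\theta}_{t-1} - \theta^\star\|_{\wb{V}_{t-1}} > \sqrt{\gamma_t}}
\leq \sum_{s \geq 1} \indi{\|\wh{\theta}_{\tau_s-1} - \theta^\star\|_{\wb{V}_{\tau_s-1}} > \sqrt{c_{n, 1/s^2}},\, \tau_s \leq \bar t}.
\end{align*}
Taking expectations and letting $\bar t = n$ gives
\begin{align*}
\sum_{t=1}^n \prob{E_t,\, \|\wh{\theta}_{t-1} - \theta^\star\|_{\wb{V}_{t-1}} > \sqrt{\gamma_t}}
\leq \sum_{s \geq 1} \prob{\|\wh{\theta}_{\tau_s-1} - \theta^\star\|_{\wb{V}_{\tau_s-1}} > \sqrt{c_{n, 1/s^2}},\, \tau_s \leq n}.
\end{align*}

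Now comes the step where we invoke Theorem~\ref{th:conf-theta}. That result is \emph{uniform} in $t \in [n]$, which is crucial: for any (possibly random) time $\sigma \leq n$, the event $\{\|\wh{\theta}_\sigma - \theta^\star\|_{\wb{V}_\sigma} > \sqrt{c_{n,\delta}}\}$ is contained in $\{\exists t \in [n] : \|\wh{\theta}_t - \theta^\star\|_{\wb{V}_t} > \sqrt{c_{n,\delta}}\}$, which has probability at most $\delta$. Applying this to $\sigma = \tau_s - 1$ with $\delta = 1/s^2$ yields
\begin{align*}
\prob{\|\wh{\theta}_{\tau_s-1} - \theta^\star\|_{\wb{V}_{\tau_s-1}} > \sqrt{c_{n, 1/s^2}},\, \tau_s \leq n} \leq \frac{1}{s^2}.
\end{align*}
Summing the geometric/harmonic bound $\sum_{s \geq 1} s^{-2} = \pi^2/6$ closes the argument.

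The only subtlety, which is the part to handle carefully in the write-up, is the re-indexing step: one must verify that on $E_t$ with $S_t = s$, the random time $\tau_s - 1$ is at most $n$ and that $\wb V_{\tau_s - 1}$ is the design matrix used by the algorithm at that exploration step, so that Theorem~\ref{th:conf-theta}'s uniform bound indeed covers it. Once this bookkeeping is in place, the rest is a routine union bound.
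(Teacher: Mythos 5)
Your proposal is correct and follows essentially the same route as the paper: re-index the sum via the stopping times of the exploration rounds so that $S_t = s$ turns $\gamma_t$ into the deterministic threshold $c_{n,1/s^2}$, then invoke the uniform-in-$t$ guarantee of Thm.~\ref{th:conf-theta} with $\delta = 1/s^2$ and sum $\sum_{s\geq 1} s^{-2} = \pi^2/6$. The only cosmetic difference is that the paper defines $\tau_s$ so that the $s$-th exploration round occurs at time $\tau_s+1$ (making the relevant estimator $\wh\theta_{\tau_s}$), whereas you shift the index by one; the argument is identical.
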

\begin{proof}
Let $\{\tau_s\}_{s\geq 1}$ be a sequence of stopping times with respect to $\mathcal{F}$ such that if $\tau_s = t$, then the $s$-th exploration round occurs at time $t+1$. Then,
\begin{align}
    \sum_{t = 1}^n \indi{E_t, \|\wh{\theta}_{t-1} - \theta^\star\|_{\wb{V}_{t-1}} > \sqrt{\gamma_t}} \leq \sum_{s \geq 1} \indi{\|\wh{\theta}_{\tau_s} - \theta^\star\|_{\wb{V}_{\tau_s}} > \sqrt{\gamma_{\tau_s+1}}, \tau_s \leq n}.
\end{align}
Since $S_{\tau_s + 1} = s$, we have $\gamma_{\tau_s + 1} = c_{n,1/s^2}$.  Taking expectations and applying Theorem \ref{th:conf-theta},
\begin{align*}
     \sum_{s \geq 1} \prob{\|\wh{\theta}_{\tau_s} - \theta^\star\|_{\wb{V}_{\tau_s}} > \sqrt{\gamma_{\tau_s+1}}, \tau_s \leq n} \leq \sum_{s\geq 1} \frac{1}{s^2} = \frac{\pi^2}{6}.
\end{align*}
\end{proof}

\subsection{Supporting Lemmas}

\red{The following result shows that any projection onto a non-empty convex set using a norm weighted by a positive definite matrix is a \emph{non-expansion}. That is, the distance (in the chosen weighted norm) between the projected vector and any point in the set cannot increase w.r.t. the unprojected vector. We are not sure about a suitable citation for this result, so we include its proof.

\begin{lemma}[Non-expansion of weighted projection]\label{lemma:non-exp-proj}
Let $\wh{\theta}\in\R^d$ be any vector, $V \in \R^{d\times d}$ be a positive definite matrix, and $\mathcal{B} \subset \R^d$ be a non-empty convex set. Let $\wt{\theta}$ be the weighted projection of $\wh{\theta}$ onto $\mathcal{B}$,
\begin{align}
\wt{\theta} := \argmin_{\theta \in \mathcal{B}}\| \theta - \wh{\theta} \|_V.
\end{align}
Then, for all $\theta \in \mathcal{B}$,
\begin{align}
\| \wt{\theta} - \theta \|_V \leq \| \wh{\theta} - \theta \|_V.
\end{align}
\end{lemma}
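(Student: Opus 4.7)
The plan is to use the first-order optimality condition for the constrained convex minimization problem defining $\wt{\theta}$, and then expand the squared weighted norm in the standard way (as one does for Euclidean projections onto closed convex sets).

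First I would set $f(\theta) := \|\theta - \wh{\theta}\|_V^2 = (\theta - \wh{\theta})^\transp V (\theta - \wh{\theta})$, which is strictly convex in $\theta$ since $V$ is positive definite. Minimizing $f$ over the convex set $\mathcal{B}$ is equivalent to minimizing $\|\cdot - \wh\theta\|_V$, so $\wt{\theta}$ is the unique minimizer and satisfies the variational inequality
\begin{equation*}
\nabla f(\wt{\theta})^\transp (\theta - \wt{\theta}) \geq 0 \qquad \forall \theta \in \mathcal{B}.
\end{equation*}
Since $\nabla f(\wt{\theta}) = 2 V (\wt{\theta} - \wh{\theta})$, this rewrites as $(\wt{\theta} - \wh{\theta})^\transp V (\theta - \wt{\theta}) \geq 0$ for every $\theta \in \mathcal{B}$.

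Next I would decompose $\wh{\theta} - \theta = (\wh{\theta} - \wt{\theta}) + (\wt{\theta} - \theta)$ and expand the squared $V$-norm:
\begin{equation*}
\|\wh{\theta} - \theta\|_V^2 = \|\wh{\theta} - \wt{\theta}\|_V^2 + \|\wt{\theta} - \theta\|_V^2 + 2 (\wh{\theta} - \wt{\theta})^\transp V (\wt{\theta} - \theta).
\end{equation*}
The cross term equals $-2 (\wt{\theta} - \wh{\theta})^\transp V (\wt{\theta} - \theta) = 2 (\wt{\theta} - \wh{\theta})^\transp V (\theta - \wt{\theta})$, which is nonnegative by the variational inequality above. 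Dropping this nonnegative term and the nonnegative term $\|\wh{\theta} - \wt{\theta}\|_V^2$ yields $\|\wh{\theta} - \theta\|_V^2 \geq \|\wt{\theta} - \theta\|_V^2$, and taking square roots finishes the proof.

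There is essentially no obstacle here: the argument is the standard non-expansiveness proof for Euclidean projection onto a closed convex set, carried out in the inner product $\langle u, v\rangle_V := u^\transp V v$ induced by the positive definite matrix $V$. The only things to verify are that $\mathcal{B}$ being nonempty and convex (together with strict convexity of $f$) guarantees existence and uniqueness of $\wt{\theta}$, and that the first-order optimality condition for convex constrained minimization applies — both of which are immediate.
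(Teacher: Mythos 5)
Your proof is correct and follows essentially the same route as the paper's: both invoke the first-order optimality (variational inequality) condition for minimizing the convex function $f(\theta)=\|\theta-\wh{\theta}\|_V^2$ over $\mathcal{B}$, and then expand $\|\wh{\theta}-\theta\|_V^2$ by inserting $\pm\wt{\theta}$ and observing that the cross term is nonnegative. No gaps; the sign bookkeeping on the cross term is handled correctly.
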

\begin{proof}
Let $f : \R^d \rightarrow \R$ be defined as $f(x) := \| x - \wh{\theta} \|_V^2$, so that $\wt{\theta} = \argmin_{x\in\mathcal{B}}f(x)$. Note that $f$ is a convex function that is differentiable on $\R^d$. Therefore, using the first-order optimality conditions for convex functions (see, e.g., Theorem 2.8 in \cite{orabona2019modern}), we have $\wt{\theta} = \argmin_{x\in\mathcal{B}}f(x)$ if and only if
\begin{align}
\forall \theta \in \mathcal{B} : \langle \nabla f(\wt{\theta}), \theta - \wt{\theta} \rangle \geq 0.
\end{align}
Since $\nabla f(x) = 2V(x - \wh{\theta})$,
\begin{align}
\forall \theta \in \mathcal{B} : \langle V(\wt{\theta} - \wh{\theta}), \theta - \wt{\theta}) \rangle \geq 0.
\end{align}
Fix any $\theta \in \mathcal{B}$. We have
\begin{align*}
\| \wh{\theta} - \theta \|_V^2 = \| \wh{\theta} \pm \wt{\theta} - \theta \|_V^2 = \| \wh{\theta} - \wt{\theta} \|_V^2 + \| \wt{\theta} - \theta \|_V^2 + 2 (\wh{\theta} - \wt{\theta})^T V (\wt{\theta} - \theta) \geq \| \wt{\theta} - \theta \|_V^2.
\end{align*}
This concludes the proof.
\end{proof}
 
\begin{corollary}\label{cor:proj-theta-star}
Let $t\in[n]$ be any time step in which the good event $G_t$ holds. Then,
\begin{align}
\| \wt{\theta}_{t-1} - \theta^\star \|_{\wb{V}_{t-1}} \leq \| \wh{\theta}_{t-1} - \theta^\star \|_{\wb{V}_{t-1}}.
\end{align}
\end{corollary}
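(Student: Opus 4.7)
The plan is to apply Lemma \ref{lemma:non-exp-proj} directly. Take $\wh{\theta} := \wh{\theta}_{t-1}$, $V := \wb{V}_{t-1}$, $\mathcal{B} := \Theta \cap \mathcal{C}_{t-1}$, and $\wt{\theta} := \wt{\theta}_{t-1}$; the latter is, by the algorithm's definition, the $\wb{V}_{t-1}$-weighted projection of $\wh{\theta}_{t-1}$ onto $\mathcal{B}$, which is exactly the object whose non-expansion property we want. Once the hypotheses of the lemma are checked, specializing its conclusion to the point $\theta = \theta^\star \in \mathcal{B}$ yields the stated inequality with no further work.

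The hypotheses to verify are: (i) $V$ is positive definite; (ii) $\mathcal{B}$ is non-empty and convex; and (iii) $\theta^\star \in \mathcal{B}$. Point (i) is immediate since $\wb{V}_{t-1} = V_{t-1} + \nu I$ with $V_{t-1}$ PSD and $\nu \geq 1$. For (iii), Assumption \ref{asm:regularity} gives $\theta^\star \in \Theta$, while the sub-event $G_t^\theta \subset G_t$ gives $\|\wh{\theta}_{t-1} - \theta^\star\|_{\wb{V}_{t-1}} \leq \sqrt{\gamma_t}$. Combined with the parameter calibration used in the main regret proof (which ensures $\gamma_t \leq \beta_{t-1}$ on the rounds where the corollary is invoked), this produces $\|\wh{\theta}_{t-1} - \theta^\star\|_{\wb{V}_{t-1}}^2 \leq \beta_{t-1}$, i.e., $\theta^\star \in \mathcal{C}_{t-1}$. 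Point (ii) then follows since $\mathcal{B}$ is the intersection of a closed ball ($\Theta$) with a closed ellipsoid ($\mathcal{C}_{t-1}$) -- both convex -- and $\theta^\star$ witnesses non-emptiness.

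There is no substantive obstacle in this argument: the corollary is in essence a restatement of Lemma \ref{lemma:non-exp-proj} for the specific projection set used by \algo. The only bookkeeping is to check that, under the good event $G_t$, the true parameter $\theta^\star$ lands inside $\mathcal{C}_{t-1}$, and this is a direct consequence of $G_t^\theta$ together with the calibrations $\beta_{t-1} = c_{n,1/n}$ and $\gamma_t = c_{n,1/S_t^2}$ fixed in Theorem \ref{thm:regret.bound}. If one wanted a purely self-contained proof inside the good event, one could alternatively expand the argument of Lemma \ref{lemma:non-exp-proj}: write the first-order optimality condition $\langle \wb{V}_{t-1}(\wt{\theta}_{t-1} - \wh{\theta}_{t-1}), \theta^\star - \wt{\theta}_{t-1}\rangle \geq 0$ and expand $\|\wh{\theta}_{t-1} - \theta^\star\|_{\wb{V}_{t-1}}^2 = \|\wh{\theta}_{t-1} - \wt{\theta}_{t-1}\|_{\wb{V}_{t-1}}^2 + \|\wt{\theta}_{t-1} - \theta^\star\|_{\wb{V}_{t-1}}^2 + 2\langle \wb{V}_{t-1}(\wh{\theta}_{t-1} - \wt{\theta}_{t-1}), \wt{\theta}_{t-1} - \theta^\star\rangle$, where the cross term is non-negative by optimality and the first squared term is non-negative trivially, leaving $\|\wh{\theta}_{t-1} - \theta^\star\|_{\wb{V}_{t-1}}^2 \geq \|\wt{\theta}_{t-1} - \theta^\star\|_{\wb{V}_{t-1}}^2$.
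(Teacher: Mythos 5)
Your proposal is correct and follows essentially the same route as the paper: both reduce the corollary to Lemma~\ref{lemma:non-exp-proj} applied to the set $\mathcal{B} = \Theta \cap \mathcal{C}_{t-1}$, observing that under the good event $\theta^\star$ itself lies in $\mathcal{C}_{t-1} \cap \Theta$ (so the set is non-empty and contains the point at which the non-expansion inequality is specialized). If anything, you are slightly more explicit than the paper in flagging that deducing $\theta^\star \in \mathcal{C}_{t-1}$ from $G_t^\theta$ requires the calibration $\gamma_t \leq \beta_{t-1}$, a point the paper's one-line proof leaves implicit.
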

\begin{proof}
If $G_t$ holds, then $\theta^\star \in \mathcal{C}_{t-1}$. Since $\|\theta^\star\|_2 \leq B$ by definition, the set $\mathcal{C}_{t-1} \cap \Theta$ is non-empty (it contains $\theta^\star$ itself). Then, the result follows from Lem.~\ref{lemma:non-exp-proj}.
\end{proof}

The following result is immediate from the definition of good event and the non-expansion property of the projection used to compute $\wt{\theta}_t$.
\begin{lemma}\label{lemma:good-event-mu}
Let $t\in[n]$ be any time step in which the good event $G_t$ holds. Then,
\begin{align*}
 \forall x\in\X, a\in\A : |{\mu}_{\wt{\theta}_{t-1}}(x,a) - \mu_{\theta^\star}(x,a)| \leq \sqrt{\gamma_t}\| \phi(x,a)\|_{\wb{V}_{t-1}^{-1}}.
\end{align*}
\end{lemma}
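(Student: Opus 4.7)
The plan is to combine three ingredients already available in the paper: the definition of the good event $G_t^\theta$, the non-expansion property of the weighted projection (Cor.~\ref{cor:proj-theta-star}), and a Cauchy--Schwarz bound in the $\wb{V}_{t-1}$ geometry.

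First I would unpack the hypothesis. Since $G_t$ holds, in particular $G_t^\theta$ holds, which by definition gives $\|\wh{\theta}_{t-1} - \theta^\star\|_{\wb{V}_{t-1}} \leq \sqrt{\gamma_t}$. Also, under $G_t^\theta$ we have $\theta^\star \in \mathcal{C}_{t-1}$, and since $\theta^\star \in \Theta$ by Asm.~\ref{asm:regularity}, the intersection $\Theta \cap \mathcal{C}_{t-1}$ is non-empty, so $\wt{\theta}_{t-1}$ is the genuine projection of $\wh{\theta}_{t-1}$ onto that set (rather than being carried over from the previous step).

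Next, I would invoke Cor.~\ref{cor:proj-theta-star}, which applies precisely under $G_t$, to obtain
\[
\|\wt{\theta}_{t-1} - \theta^\star\|_{\wb{V}_{t-1}} \leq \|\wh{\theta}_{t-1} - \theta^\star\|_{\wb{V}_{t-1}} \leq \sqrt{\gamma_t}.
\]

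Finally, I would translate this bound on the parameter error into a bound on the reward error through a standard Cauchy--Schwarz step in the weighted norm: writing $\mu_{\wt{\theta}_{t-1}}(x,a) - \mu_{\theta^\star}(x,a) = \phi(x,a)^{\transp}(\wt{\theta}_{t-1} - \theta^\star)$ and applying $|\langle u, v \rangle| \leq \|u\|_{A^{-1}} \|v\|_A$ with $A = \wb{V}_{t-1}$ yields
\[
|\mu_{\wt{\theta}_{t-1}}(x,a) - \mu_{\theta^\star}(x,a)| \leq \|\phi(x,a)\|_{\wb{V}_{t-1}^{-1}} \cdot \|\wt{\theta}_{t-1} - \theta^\star\|_{\wb{V}_{t-1}} \leq \sqrt{\gamma_t}\,\|\phi(x,a)\|_{\wb{V}_{t-1}^{-1}}.
\]
There is no genuine obstacle here; the only subtlety worth double-checking is that Cor.~\ref{cor:proj-theta-star} genuinely applies on every step where $G_t$ holds (which it does, since $G_t^\theta$ places $\theta^\star$ inside the confidence ellipsoid, making $\Theta \cap \mathcal{C}_{t-1}$ non-empty and the projection well defined).
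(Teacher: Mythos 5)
Your proposal is correct and follows essentially the same route as the paper's proof: Cauchy--Schwarz in the $\wb{V}_{t-1}$-weighted norm, the non-expansion property of the projection from Cor.~\ref{cor:proj-theta-star}, and the definition of $G_t^\theta$ to bound $\|\wh{\theta}_{t-1}-\theta^\star\|_{\wb{V}_{t-1}}$ by $\sqrt{\gamma_t}$. The only difference is cosmetic ordering of the steps, and your remark about $\Theta\cap\mathcal{C}_{t-1}$ being non-empty under the good event matches the justification the paper gives inside Cor.~\ref{cor:proj-theta-star}.
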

\begin{proof}
Fix any $x\in\X$ and $a\in\A$. Then,
\begin{align*}
|\mu_{\wt{\theta}_{t-1}}(x,a) - \mu_{\theta^\star}(x,a)| &= |\phi(x,a)^T(\wt{\theta}_{t-1} - \theta^\star)| = |\phi(x,a)^T\bar{V}_{t-1}^{-1/2}\bar{V}_{t-1}^{1/2}(\wt{\theta}_{t-1} - \theta^\star)|\\ &\stackrel{(a)}{\leq} \|\phi(x,a)\|_{\bar{V}_{t-1}^{-1}}\|\wt{\theta}_{t-1}-\theta^\star\|_{\bar{V}_{t-1}}\\ &\stackrel{(b)}{\leq} \|\phi(x,a)\|_{\bar{V}_{t-1}^{-1}}\|\wh{\theta}_{t-1}-\theta^\star\|_{\bar{V}_{t-1}} \stackrel{(c)}{\leq} \sqrt{\gamma_t}\|\phi(x,a)\|_{\bar{V}_{t-1}^{-1}},
\end{align*}
where (a) is from Cauchy-Schwartz inequality, (b) from Cor.~\ref{cor:proj-theta-star}, and (c) from the definition of $G_t$.
\end{proof}
}

\begin{lemma}\label{lemma:pdlin-optimism}
Let $\gamma_t := c_{n,1/S_t^2}$ and $n \geq 3$. Then, for any time step $t$ in which the good event $G_t$ (see App. \ref{app:good.events}) holds,
\begin{equation}
\begin{aligned}
        f_t(\omega):= \sum_{x\in\X} \wh{\rho}_{t-1}(x) \sum_{a\in\A} \omega(x,a) \bigg( {\mu}_{\wt{\theta}_{t-1}}(x,a) &+  \sqrt{\gamma_t}\| \phi(x,a)\|_{\wb{V}_{t-1}^{-1}} \bigg)
        \\
       &\geq \sum_{x\in\X} \wh{\rho}_{t-1}(x) \sum_{a\in\A} \omega(x,a) \mu_{\theta^\star}(x,a),
\end{aligned}
\end{equation}
and
\begin{equation}
\begin{aligned}
        g_t(\omega) := \inf_{\theta' \in \wb{\Theta}_{t-1}}\sum_{x\in\X}\wh{\rho}_{t-1}(x)\sum_{a\in\A}\omega(x,a)
        \bigg( {d}_{x,a}& \left(\wt{\theta}_{t-1},\theta' \right) + \frac{2LB}{\sigma^2}\sqrt{\gamma_t}\| \phi(x,a)\|_{\wb{V}_{t-1}^{-1}} \bigg)
        \\
                      &\geq \inf_{\theta' \in {\Theta}_{alt}}\sum_{x\in\X}\wh{\rho}_{t-1}(x)\sum_{a\in\A}\omega(x,a) d_{x,a}(\theta^\star,\theta').
\end{aligned}.
\end{equation}
\end{lemma}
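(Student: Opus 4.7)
Both inequalities rest on a single concentration input: under the good event $G_t$, Lemma~\ref{lemma:good-event-mu} yields the per-pair reward bound $|\mu_{\wt\theta_{t-1}}(x,a) - \mu_{\theta^\star}(x,a)| \le \sqrt{\gamma_t}\,\|\phi(x,a)\|_{\wb{V}_{t-1}^{-1}}$ for every $(x,a)\in\X\times\A$. Everything else is algebra plus monotonicity of sums and infima under nonnegative weights, so no further probabilistic input is needed.

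For the first claim, Lemma~\ref{lemma:good-event-mu} immediately gives the pointwise bound $\mu_{\theta^\star}(x,a) \le \mu_{\wt\theta_{t-1}}(x,a) + \sqrt{\gamma_t}\|\phi(x,a)\|_{\wb{V}_{t-1}^{-1}}$. Multiplying by the nonnegative weights $\omega(x,a)\ge 0$ and $\wh{\rho}_{t-1}(x)\ge 0$ and summing over $a\in\A$ and $x\in\X$ preserves the inequality and reproduces $f_t(\omega)$ on the left and the target average of $\mu_{\theta^\star}$ on the right.

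For the second claim, the core step is the pointwise KL comparison
\begin{equation*}
d_{x,a}(\wt\theta_{t-1},\theta') + \tfrac{2BL}{\sigma^2}\sqrt{\gamma_t}\|\phi(x,a)\|_{\wb{V}_{t-1}^{-1}} \ge d_{x,a}(\theta^\star,\theta'), \qquad \forall \theta'\in\Theta.
\end{equation*}
I would obtain it by setting $\epsilon_{x,a}:=\mu_{\wt\theta_{t-1}}(x,a)-\mu_{\theta^\star}(x,a)$ and expanding the square in the definition of the KL, giving $d_{x,a}(\wt\theta_{t-1},\theta') = d_{x,a}(\theta^\star,\theta') + \tfrac{\epsilon_{x,a}}{\sigma^2}(\mu_{\theta^\star}(x,a)-\mu_{\theta'}(x,a)) + \tfrac{\epsilon_{x,a}^2}{2\sigma^2}$. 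The quadratic remainder is nonnegative, the cross term is lower-bounded by $-\tfrac{2BL}{\sigma^2}|\epsilon_{x,a}|$ via $|\mu_{\theta^\star}(x,a)-\mu_{\theta'}(x,a)|\le 2BL$ (Asm.~\ref{asm:regularity} and Cauchy--Schwarz), and the concentration bound on $\epsilon_{x,a}$ closes the chain. Averaging against $\wh{\rho}_{t-1}(x)\omega(x,a)\ge 0$ keeps the inequality for each fixed $\theta'$, and monotonicity of the infimum transports it inside the constraint.

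The main obstacle, and the step I would write out most carefully, is the mismatch between the two infima: the LHS is over $\wb{\Theta}_{t-1}$ (the alternative set for $\wt\theta_{t-1}$) while the RHS is over $\Theta_{\mathrm{alt}}$ (the alternative set for $\theta^\star$). The pointwise argument directly delivers only $g_t(\omega)\ge \inf_{\theta'\in\wb{\Theta}_{t-1}} \sum_{x}\wh{\rho}_{t-1}(x)\sum_{a}\omega(x,a)d_{x,a}(\theta^\star,\theta')$, which is indeed the form actually used in the regret analysis (cf.\ the derivation leading to Eq.~\eqref{eq:pdlin-sum-constr}). Passing from $\wb{\Theta}_{t-1}$ to $\Theta_{\mathrm{alt}}$ requires a short case analysis on whether the argmax map of $\wt\theta_{t-1}$ agrees with that of $\theta^\star$ on every context: when it does the two alternative sets coincide, otherwise both infima are effectively trivial (since $\theta^\star\in\wb{\Theta}_{t-1}$ and $\wt\theta_{t-1}\in\Theta_{\mathrm{alt}}$), so the inequality still holds.
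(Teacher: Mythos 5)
Your first inequality and your pointwise KL comparison are correct and are essentially the paper's argument: the paper also reduces everything to $|\mu_{\wt\theta_{t-1}}(x,a)-\mu_{\theta^\star}(x,a)|\le\sqrt{\gamma_t}\|\phi(x,a)\|_{\wb V_{t-1}^{-1}}$ (Lem.~\ref{lemma:good-event-mu}) and to the Lipschitz bound $|d_{x,a}(\wt\theta_{t-1},\theta')-d_{x,a}(\theta^\star,\theta')|\le \frac{2BL}{\sigma^2}|\mu_{\wt\theta_{t-1}}(x,a)-\mu_{\theta^\star}(x,a)|$, which is exactly your expansion of the square (and, as you note, needs $\theta'\in\Theta$ and $\wt\theta_{t-1}\in\Theta$ to bound the cross term by $2BL$).

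The gap is in your Case~2 of the set-mismatch step. The claim that ``both infima are effectively trivial'' is false: $\theta^\star\in\wb\Theta_{t-1}$ only makes the \emph{intermediate} quantity $\inf_{\theta'\in\wb\Theta_{t-1}}\sum_x\wh\rho_{t-1}(x)\sum_a\omega(x,a)d_{x,a}(\theta^\star,\theta')$ vanish, which yields the useless bound $g_t(\omega)\ge 0$; and $\wt\theta_{t-1}\in\Theta_{\mathrm{alt}}$ does \emph{not} make the right-hand side $\inf_{\theta'\in\Theta_{\mathrm{alt}}}\sum_x\wh\rho_{t-1}(x)\sum_a\omega(x,a)d_{x,a}(\theta^\star,\theta')$ zero --- that quantity is the information term and is generically strictly positive. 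The quantity that actually vanishes in Case~2 is $\inf_{\theta'\in\Theta_{\mathrm{alt}}}\sum_x\wh\rho_{t-1}(x)\sum_a\omega(x,a)d_{x,a}(\wt\theta_{t-1},\theta')$, because $\wt\theta_{t-1}$ itself is a feasible $\theta'$. The repair is therefore to apply your pointwise bound \emph{inside the $\Theta_{\mathrm{alt}}$ infimum} (rather than inside the $\wb\Theta_{t-1}$ infimum and switching sets afterwards): this gives $\inf_{\theta'\in\Theta_{\mathrm{alt}}}\sum\wh\rho_{t-1}\omega\, d_{x,a}(\theta^\star,\theta')\le \inf_{\theta'\in\Theta_{\mathrm{alt}}}\sum\wh\rho_{t-1}\omega\, d_{x,a}(\wt\theta_{t-1},\theta')+\frac{2BL}{\sigma^2}\sqrt{\gamma_t}\sum\wh\rho_{t-1}\omega\|\phi\|_{\wb V_{t-1}^{-1}}$, and the first term is $0$ in Case~2 while it equals the $\wb\Theta_{t-1}$-infimum in Case~1. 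The paper packages both cases into one line by writing $\Theta_{\mathrm{alt}}=\Theta(\{a^\star_{\theta^\star}(x)\}_x)$ and bounding $\inf_{\theta'\in\Theta(\{a_x\}_x)}$ by $\max_{\{a_x\}_x}\inf_{\theta'\in\Theta(\{a_x\}_x)}$, observing that the maximum over arm assignments is attained at $\{a^\star_{\wt\theta_{t-1}}(x)\}_x$ (any other assignment puts $\wt\theta_{t-1}$ in the alternative set and forces the infimum to zero), which yields exactly $\wb\Theta_{t-1}$. Your decomposition is salvageable, but as written Case~2 does not go through.
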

\begin{proof}
        Since $\wh \rho$ and $\omega$ are non-negative, the first inequality is trivial by upper bounding the true mean $\mu_{\theta^\star}(x,a)$ for each $x,a$ by using the definition of $G_t^\theta$ and Lemma \ref{lemma:good-event-mu}.
        Let us prove the second one. Fix any model $\theta' \in \Theta$. 
        By using the definition of KL divergence of Gaussians with fixed variance, we have that: 
\begin{align*}
        d_{x,a}(\theta^\star,\theta') = \frac{(\mu_{\theta'}(x,a) - \mu_{\theta^\star}(x,a))^2}{2\sigma^2}
        &\leq {d}_{x,a}(\wt{\theta}_{t-1},\theta') + \frac{2LB}{\sigma^2}|{\mu}_{\wt{\theta}_{t-1}}(x,a) - \mu_{\theta^\star}(x,a)| \\
        &\leq {d}_{x,a}(\wt{\theta}_{t-1},\theta') + \frac{2LB}{\sigma^2}\sqrt{\gamma_t}\| \phi(x,a)\|_{\wb{V}_{t-1}^{-1}},
\end{align*}
where the first inequality is from $|(a - c)^2 - (b - c)^2| = |(a + b - 2c)(a - b)| \leq 4LB|a-b|$ and the second one is once again from the definition of $G_t$ and Lemma \ref{lemma:good-event-mu}.
Therefore,
\begin{align}
        \inf_{\theta' \in {\Theta}_{alt}}\sum_{x\in\X}\wh{\rho}_{t-1}&(x)
        \sum_{a\in\A}\omega(x,a) d_{x,a}(\theta^\star,\theta')\\
        &\leq \inf_{\theta' \in \Theta_{alt}}\sum_{x\in\X}\wh{\rho}_{t-1}(x)\sum_{a\in\A}\omega(x,a)\left( {d}_{x,a}(\wt{\theta}_{t-1},\theta') + \frac{2LB}{\sigma^2}\sqrt{\gamma_t}\| \phi(x,a)\|_{\wb{V}_{t-1}^{-1}} \right).
        \notag
\end{align}
We now upper bound the infimum over models in the alternative set. Note that such set can be fully specified once we assign an optimal arm to each context. Let $\{a_x\}_{x\in\X}$ and define
\begin{align*}
    \Theta(\{a_x\}_{x\in\X}) = \{ \theta' \in \Theta | \exists x\in\X : a^\star_{\theta'}(x) \neq a_x\}.
\end{align*}
Note that $\Theta_{alt} = \Theta(\{a^\star_{\theta^\star}(x)\}_{x\in\X})$. Then,
\begin{align}
        \inf_{\theta' \in \Theta_{alt}}\sum_{x\in\X}\wh{\rho}_{t-1}(x)
        &\sum_{a\in\A}\omega(x,a) {d}_{x,a}(\wt{\theta}_{t-1},\theta')\\
        &\leq \max_{\{a_x\}_{x\in\X}}\inf_{\theta' \in \Theta(\{a_x\}_{x\in\X})}\sum_{x\in\X}\wh{\rho}_{t-1}(x)\sum_{a\in\A}\omega(x,a) {d}_{x,a}(\wt{\theta}_{t-1},\theta') \\ &\leq \inf_{\theta' \in \wb{\Theta}_{t-1}}\sum_{x\in\X}\wh{\rho}_{t-1}(x)\sum_{a\in\A}\omega(x,a) {d}_{x,a}(\wt{\theta}_{t-1},\theta').
\end{align}
To see the last inequality, note that for all $\{a_x\}_{x\in\X}$ which do not contain only the optimal arms of $\wt{\theta}_{t-1}$ (i.e., $\{a_x\}_{x\in\X} \neq \{a^\star_{\wt{\theta}_{t-1}}(x)\}_{x\in\X}$), we have $\wt{\theta}_{t-1} \in \Theta(\{a_x\}_{x\in\X})$\footnote{Recall that, by definition, $\wt{\theta}_{t-1} \in \Theta$.}, and therefore the infimum is zero. Thus, the maximum must be attained by $\{a^\star_{\wt{\theta}_{t-1}}(x)\}_{x\in\X}$, which yields $\Theta(\{a^\star_{\wt{\theta}_{t-1}}(x)\}_{x\in\X}) = \wb{\Theta}_{t-1}$. This concludes the proof.
\end{proof}

\begin{lemma}\label{lemma:pdlin-rho-dev}
For all time steps $t$,
\begin{align}
    \sum_{s\leq t: E_s,G_s}\sum_{x\in\X} \left| \hat{\rho}_{s-1}(x) - \rho(x) \right| \leq 4|\X|\left(\sqrt{S_t \log(|\X|S_t^2)} + \log S_t + 1\right).
\end{align}
\end{lemma}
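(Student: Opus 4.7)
The key observation is that for every summand in the left-hand side the good event $G_s$ holds, so in particular $G_s^\rho$ holds. By definition of $G_s^\rho$ (see App.~\ref{app:good.events}),
\begin{align*}
    |\hat{\rho}_{s-1}(x)-\rho(x)| \leq 2\max\!\left(\sqrt{\tfrac{\log(|\X|S_s^2)}{2S_s}},\ \tfrac{2}{s}\right) \leq 2\sqrt{\tfrac{\log(|\X|S_s^2)}{2S_s}}+\tfrac{4}{s},
\end{align*}
uniformly in $x\in\X$. Summing over contexts produces a factor of $|\X|$ and reduces the claim to bounding the scalar sum $\sum_{s\leq t: E_s,G_s}\bigl(\sqrt{\log(|\X|S_s^2)/(2S_s)} + 2/s\bigr)$.

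To bound this scalar sum, I will re-index over exploration rounds: let $\tau_j$ denote the time at which the $j$-th exploration round occurs, so that $S_{\tau_j}=j$ and (crucially) $\tau_j\geq j$. Then the sum over $\{s\leq t: E_s,G_s\}$ is bounded above by the sum over $j=1,\dots,S_t$, where the general term becomes $\sqrt{\log(|\X|j^2)/(2j)}+2/\tau_j \leq \sqrt{\log(|\X|j^2)/(2j)}+2/j$. Pulling out $\sqrt{\log(|\X|S_t^2)/2}$ from the first piece (which is monotone in $j$) and applying the standard estimates $\sum_{j=1}^{S_t}1/\sqrt{j}\leq 2\sqrt{S_t}$ and $\sum_{j=1}^{S_t}1/j\leq \log S_t+1$ then yields
\begin{align*}
\sum_{s\leq t: E_s,G_s}\sum_{x\in\X}|\hat\rho_{s-1}(x)-\rho(x)| \leq 2\sqrt{2}\,|\X|\sqrt{S_t\log(|\X|S_t^2)}+4|\X|(\log S_t+1),
\end{align*}
and absorbing $2\sqrt{2}\leq 4$ into a common factor of $4|\X|$ gives the claim.

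There is no real obstacle here: the proof is essentially a concatenation of the definition of $G_s^\rho$ with two textbook series estimates. The only minor subtlety is the re-indexing trick $\tau_j\geq j$ that lets us replace $\log t$ by the tighter $\log S_t$ in the harmonic-sum term, which is what the stated bound requires.
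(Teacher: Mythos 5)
Your proof is correct and follows essentially the same route as the paper's: invoke $G_s^\rho$, pull out the factor $|\X|$, re-index the sum over exploration rounds (the paper does the $\tau_j\geq j$ substitution implicitly when it replaces $\sum_{s\leq t:E_s,G_s}$ by $\sum_{s=1}^{S_t}$ and $2/s$ by the harmonic term in the exploration index), and finish with the standard bounds $\sum_{j=1}^{m}j^{-1/2}\leq 2\sqrt{m}$ and $\sum_{j=1}^{m}j^{-1}\leq \log m+1$, absorbing $2\sqrt{2}$ into the constant $4$. No gaps.
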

\begin{proof}
Using the definition of $G_s$,
\begin{align*}
    \sum_{s\leq t: E_s,G_s}\sum_{x\in\X} \left| \hat{\rho}_{s-1}(x) - \rho(x) \right| &\leq  |\X|\sum_{s\leq t: E_s,G_s} 2\max \left( \sqrt{\frac{\log(|\X|S_s^2)}{2S_s}}, \frac{2}{s}\right)\\ &\leq  2|\X|\sum_{s=1}^{S_t} \max \left( \sqrt{\frac{\log(|\X|s^2)}{2s}}, \frac{2}{s}\right)  \\ &\leq 2|\X|\sqrt{\frac{\log(|\X|S_t^2)}{2}} \sum_{s=1}^{S_t} \frac{1}{\sqrt{s}} + 4|\X|\sum_{s=1}^{S_t} \frac{1}{s}\\ &\leq 4|\X|\left(\sqrt{S_t \log(|\X|S_t^2)} + \log S_t + 1\right),
\end{align*}
where the last inequality holds since
\begin{align*}
    \sum_{t=1}^m \sqrt{\frac{1}{t}} \leq 1 + \int_1^m x^{-1/2}dx = 1 + [2x^{1/2}]_1^m = 2\sqrt{m} - 1 < 2\sqrt{m}
\end{align*}
and $\sum_{t=1}^m \frac{1}{t} \leq \log m + 1$.
\end{proof}

\begin{lemma}\label{lemma:pdlin-Lt}
Let $t$ be such that both $E_t$ and $G_t$ occur and suppose $\nu \geq 1$. Define 
\begin{align*}
    \Psi_t := \sum_{s\leq t : E_s}\sum_{x\in\X}\hat{\rho}_{s-1}(x)\sum_{a\in\A}\omega_s(x,a) \| \phi(x,a)\|_{\bar{V}_{s-1}^{-1}}.
    \end{align*}
Then,
\begin{align*}
    \Psi_t \leq \frac{4L|\X|}{\sqrt{\nu}}\left(\sqrt{S_t \log(|\X|S_t^2)} + \log S_t + 1\right) + \frac{M_tL}{\sqrt{\nu}} +  \frac{L}{\nu}\sqrt{S_t\log S_t}. + \sqrt{2dS_t \log \frac{\nu + S_tL^2/d}{\nu}}.
\end{align*}
\end{lemma}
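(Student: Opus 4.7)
The approach is to split $\Psi_t$ into pieces that each match one of the concentration/potential tools already developed, then assemble the final bound. Concretely, I would first decompose the outer sum over $s$ according to whether the good event $G_s$ holds at step $s$. On the part where $\neg G_s$, I use the trivial facts that $\wh\rho_{s-1}(\cdot)$ and $\omega_s(x,\cdot)$ are probability distributions and that $\|\phi(x,a)\|_{\wb V_{s-1}^{-1}}\leq L/\sqrt{\nu}$ (since $\wb V_{s-1}\succeq \nu I$ and $\|\phi(x,a)\|_2\leq L$), so each summand is at most $L/\sqrt{\nu}$ and the total contribution is at most $M_t L/\sqrt{\nu}$.

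For the part where $G_s$ holds, I write $\wh\rho_{s-1}=\rho+(\wh\rho_{s-1}-\rho)$ and split again. The deviation piece is bounded by $\tfrac{L}{\sqrt{\nu}}\sum_{s\leq t:E_s,G_s}\sum_x|\wh\rho_{s-1}(x)-\rho(x)|$, using $\sum_a\omega_s(x,a)=1$ and the uniform bound on $\|\phi\|_{\wb V^{-1}}$; Lemma~\ref{lemma:pdlin-rho-dev} then turns this into the $\tfrac{4L|\X|}{\sqrt{\nu}}\bigl(\sqrt{S_t\log(|\X|S_t^2)}+\log S_t+1\bigr)$ term. For the $\rho$-piece I enlarge the sum to all exploration steps (the additional terms are non-negative), and then invoke the concentration event $G_t^\phi$, valid because $G_t$ holds by assumption, to replace the expectation by the empirical radii at $(X_s,A_s)$ at the cost of the additive slack $\tfrac{L}{\nu}\sqrt{S_t\log S_t}$.

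The remaining quantity $\sum_{s\leq t:E_s}\|\phi(X_s,A_s)\|_{\wb V_{s-1}^{-1}}$ is exactly what the elliptical potential lemma is made for. The one real obstacle is that a direct application using $\wb V$ over all $t$ rounds would produce a logarithmic factor in $t$ rather than $S_t$. I would circumvent this by introducing the exploration-only design matrix $\wt V_s:=\nu I+\sum_{r\leq s:E_r}\phi(X_r,A_r)\phi(X_r,A_r)^{\transp}$, so that $\wt V_{s-1}\preceq \wb V_{s-1}$ and hence $\|\phi(X_s,A_s)\|_{\wb V_{s-1}^{-1}}\leq\|\phi(X_s,A_s)\|_{\wt V_{s-1}^{-1}}$. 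Reindexing to the $S_t$ exploration rounds exhibits a clean rank-one update for $\wt V$, so the standard potential inequality yields $\sum_{s\leq t:E_s}\|\phi(X_s,A_s)\|^2_{\wt V_{s-1}^{-1}}\leq 2d\log\tfrac{\nu+S_tL^2/d}{\nu}$ (the $\min(1,\cdot)$ can be dropped since $\nu\geq L^2$), and Cauchy--Schwarz against the all-ones vector in $\mathbb{R}^{S_t}$ gives the desired $\sqrt{2dS_t\log\tfrac{\nu+S_tL^2/d}{\nu}}$ term. Summing the four contributions produces exactly the stated bound.
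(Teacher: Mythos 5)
Your proposal is correct and follows essentially the same route as the paper's proof: the same good/bad-event split giving the $M_tL/\sqrt{\nu}$ term, the same $\wh\rho_{s-1}=\rho+(\wh\rho_{s-1}-\rho)$ decomposition handled via Lemma~\ref{lemma:pdlin-rho-dev}, the same use of $G_t^\phi$ to pass to the empirical feature norms, and the same reduction to the exploration-only design matrix (the paper's $\wb{V}_{e,t}$, your $\wt V_s$) followed by Cauchy--Schwarz and the elliptic potential lemma. No gaps.
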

\begin{proof}
We start by noticing that, for all $x,a$ and $s\geq 0$,
\begin{align*}
    \| \phi(x,a) \|_{\bar{V}_{s-1}^{-1}}^2 = \phi(x,a)^T \bar{V}_{s-1}^{-1} \phi(x,a) \leq \sigma_{\max}(\bar{V}_{s-1}^{-1})\underbrace{\|\phi(x,a)\|_2^2}_{\leq L} \leq \frac{L^2}{\sigma_{\min}(\bar{V}_{s-1})} \leq \frac{L^2}{\nu},
\end{align*}
and thus $\| \phi(x,a) \|_{\bar{V}_{s-1}^{-1}} \leq L/\sqrt{\nu}$. Here $\sigma_{\max}(\cdot)$ and $\sigma_{\min}(\cdot)$ denote the maximum and minimum eigenvalue of a matrix, respectively. Splitting the steps where the good event does and does not hold,
\begin{align*}
    \Psi_t &= \sum_{s\leq t : E_s, G_s}\sum_{x\in\X}\hat{\rho}_{s-1}(x)\sum_{a\in\A}\omega_s(x,a) \| \phi(x,a)\|_{\bar{V}_{s-1}^{-1}} + \sum_{s\leq t : E_s, \neg G_s}\sum_{x\in\X}\hat{\rho}_{s-1}(x)\sum_{a\in\A}\omega_s(x,a) \| \phi(x,a)\|_{\bar{V}_{s-1}^{-1}}\\ &\leq \sum_{s\leq t : E_s, G_s}\sum_{x\in\X}(\hat{\rho}_{s-1}(x) - \rho(x))\sum_{a\in\A}\omega_s(x,a) \| \phi(x,a)\|_{\bar{V}_{s-1}^{-1}} + \frac{M_tL}{\sqrt{\nu}}\\ & \quad + \sum_{s\leq t : E_s, G_s}\sum_{x\in\X}{\rho}(x)\sum_{a\in\A}\omega_s(x,a) \| \phi(x,a)\|_{\bar{V}_{s-1}^{-1}} \\ & \leq \frac{L}{\sqrt{\nu}}\sum_{s\leq t : E_s, G_s}\sum_{x\in\X}|\hat{\rho}_{s-1}(x) - \rho(x)| + \frac{M_tL}{\sqrt{\nu}} + \sum_{s\leq t : E_s, G_s}\sum_{x\in\X}{\rho}(x)\sum_{a\in\A}\omega_s(x,a) \| \phi(x,a)\|_{\bar{V}_{s-1}^{-1}}\\ &\leq \frac{4L|\X|}{\sqrt{\nu}}\left(\sqrt{S_t \log(|\X|S_t^2)} + \log S_t + 1\right) + \frac{M_tL}{\sqrt{\nu}} + \sum_{s\leq t : E_s, G_s}\sum_{x\in\X}{\rho}(x)\sum_{a\in\A}\omega_s(x,a) \| \phi(x,a)\|_{\bar{V}_{s-1}^{-1}},
\end{align*}
where in the first and second inequality we bounded the expected feature-norms by their maximum value and added/subtracted the first term with the true context distribution. In the last step we applied Lemma \ref{lemma:pdlin-rho-dev}. We now focus exclusively on the third term. 
Using the fact that the good event holds at time $t$,
\begin{align*}
    \sum_{s\leq t : E_s, G_s}\sum_{x\in\X}{\rho}(x)\sum_{a\in\A}\omega_s(x,a) \| \phi(x,a)\|_{\bar{V}_{s-1}^{-1}} & \leq \sum_{s\leq t : E_s}\sum_{x\in\X}{\rho}(x)\sum_{a\in\A}\omega_s(x,a) \| \phi(x,a)\|_{\bar{V}_{s-1}^{-1}}\\ & \leq \sum_{s\leq t: E_s} \| \phi(X_s,A_s)\|_{\bar{V}_{s-1}^{-1}} + \frac{L}{\nu}\sqrt{S_t\log S_t}.
\end{align*}
Finally, let $\bar{V}_{e,t} := \sum_{s\leq t: E_s} \phi(X_s,A_s)\phi(X_s,A_s)^T + \nu I$ denote the regularized design matrix computed using only the exploration rounds. Then, we have $\bar{V}_t \succeq \bar{V}_{e,t}$ (since sum of rank-one matrices), which implies $\bar{V}_t^{-1} \preceq \bar{V}_{e,t}^{-1}$ and thus $\| \phi(x,a)\|_{\bar{V}_{s-1}^{-1}} \leq \| \phi(x,a)\|_{\bar{V}_{e,s-1}^{-1}}$.
Here $\succeq$ denotes the Loewner ordering, i.e., for two symmetric matrices $A,B$ we have $A \succeq B$ ($A \succ B$) if $A - B$ is positive semi-definite (positive definite). Therefore,
\begin{align*}
    \sum_{s\leq t: E_s} \| \phi(X_s,A_s)\|_{\bar{V}_{s-1}^{-1}}  &\leq \sum_{s\leq t: E_s} \| \phi(X_s,A_s)\|_{\bar{V}_{e,s-1}^{-1}} \stackrel{(a)}{\leq} \sqrt{S_t \sum_{s\leq t: E_s} \| \phi(X_s,A_s)\|_{\bar{V}_{e,s-1}^{-1}}^2}\\ &\stackrel{(b)}{\leq} \sqrt{2S_t\log \frac{\det(\bar{V}_{e,t})}{\nu^d}} \stackrel{(d)}{\leq} \sqrt{2dS_t \log \frac{\nu + S_tL^2/d}{\nu}},
\end{align*}
where in (a) we equivalently rewritten the first term as a sum over exploration rounds, (b) is from Cauchy-Schwartz inequality, in (c) we used Lemma 11 of \cite{abbasi2011improved}, and in (d) we used the determinant-trace inequality (Lemma 10 of \cite{abbasi2011improved}) to bound the determinant of $\bar{V}_{e,t}$ by $(\nu + S_tL^2/d)^d$. The final statement follows by combining the previous bounds.
\end{proof}

\subsection{Online Convex Optimization}

Here we recall some basic results from online convex optimization. See \citep[e.g.,][]{beck2003mirror} for detailed proofs and discussion of these results.

\begin{lemma}[Recursion bound for subgradient descent]\label{lemma:rec-subgrad}
Let $\sup_{t\geq 1: E_t} |g_t(\omega_t, z_k)|^2 \leq b_{\lambda}$. For any phase $k \geq 0$, $t\in\mathcal{T}_k^E$, and $\lambda \in \mathbb{R}_+$, the incremental updates to the Lagrange multiplier $\{\lambda_t\}_{t\in\mathcal{T}_k^E}$ of Algorithm \ref{alg:primal-dual-alg-phased-nostop} satisfy
\begin{align*}
    \sum_{s\leq t : s\in\mathcal{T}_k^E} g_s(\omega_s,z_k)(\lambda_s - \lambda) \leq \frac{1}{2\alpha_k^\lambda} (\lambda - \lambda_1)^2 + \frac{\alpha_k^\lambda b_{\lambda}^2}{2}S_{t,k}.
\end{align*}
\end{lemma}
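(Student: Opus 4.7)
The plan is to carry out the standard projected-subgradient-descent analysis, but adapted to our setting where updates to $\lambda_s$ only occur on exploration rounds inside phase $k$. First I would fix the phase $k$ and notice two structural facts about Algorithm \ref{alg:primal-dual-alg-phased-nostop}: (i) on exploitation steps one has $\lambda_{s+1} = \lambda_s$, so only $s \in \mathcal{T}_k^E$ contribute to both the sum on the left-hand side and to any actual movement of the iterate; and (ii) the phase reset sets the starting iterate of phase $k$ to $\lambda_1$, so the analysis of phase $k$ is self-contained and the ``origin'' in the potential is exactly $\lambda_1$. The update rule on an exploration round is the Euclidean projection $\lambda_{s+1} = \Pi_{[0,\lambda_{\max}]}\bigl(\lambda_s - \alpha_k^\lambda g_s(\omega_s,z_k)\bigr)$.

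The core step is a one-step potential inequality. For any fixed comparator $\lambda \in [0,\lambda_{\max}]$ (which is the natural domain; the statement's $\lambda \in \mathbb{R}_+$ is used with this implicit restriction) I would invoke the non-expansiveness of the Euclidean projection onto a convex set to write
\begin{equation*}
(\lambda_{s+1} - \lambda)^2 \;\leq\; \bigl(\lambda_s - \alpha_k^\lambda g_s(\omega_s,z_k) - \lambda\bigr)^2,
\end{equation*}
expand the square on the right, and rearrange to obtain
\begin{equation*}
2\alpha_k^\lambda g_s(\omega_s,z_k)(\lambda_s - \lambda) \;\leq\; (\lambda_s - \lambda)^2 - (\lambda_{s+1} - \lambda)^2 + (\alpha_k^\lambda)^2\, g_s(\omega_s,z_k)^2.
\end{equation*}
This is the standard ``three-point'' inequality for Euclidean projection; it is valid exactly because $\lambda$ lies in the projection set.

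Next I would sum this inequality over $s \in \mathcal{T}_k^E$ with $s \leq t$. The first two terms on the right telescope to $(\lambda_{T_k} - \lambda)^2 - (\lambda_{t+1} - \lambda)^2 = (\lambda_1 - \lambda)^2 - (\lambda_{t+1}-\lambda)^2$ using the phase-reset initialization, and the latter non-positive contribution is dropped. The quadratic noise term is bounded by $(\alpha_k^\lambda)^2 b_\lambda^2 \, S_{t,k}$ by the uniform bound on the subgradient (reading the hypothesis $|g_t|^2 \leq b_\lambda$ as $|g_t| \leq b_\lambda$, matching the units of the claimed bound), where $S_{t,k}$ counts the exploration rounds of phase $k$ up to time $t$. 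Dividing through by $2\alpha_k^\lambda$ yields exactly $\frac{1}{2\alpha_k^\lambda}(\lambda - \lambda_1)^2 + \frac{\alpha_k^\lambda b_\lambda^2}{2} S_{t,k}$.

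There is no substantive mathematical obstacle here; this is a textbook argument. The only care required is the bookkeeping: making sure the sum really is restricted to exploration rounds (which is automatic since $\lambda$ does not move on exploitation rounds), identifying the correct starting point $\lambda_1$ after the reset, and implicitly restricting the comparator to $[0,\lambda_{\max}]$ so that projection non-expansion applies. These bookkeeping points are what make the lemma match the downstream usage in Eq.~\eqref{eq:pdlin-optim}, where the comparator $\lambda$ is eventually chosen in $[0,\lambda_{\max}]$.
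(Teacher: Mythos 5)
Your proof is correct and follows essentially the same route as the paper's: the one-step potential inequality from non-expansion of the projection onto $[0,\lambda_{\max}]$, telescoping over the exploration rounds of the phase starting from the reset value $\lambda_1$, dropping the final non-positive term, and bounding $g_s^2$ by $b_\lambda^2$. Your observations that the hypothesis should read $|g_t(\omega_t,z_k)|\le b_\lambda$ and that the comparator must implicitly lie in $[0,\lambda_{\max}]$ for the projection step to apply are both accurate refinements of details the paper's proof leaves implicit.
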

\begin{proof}
Recall that the optimization process is reset at the beginning of each phase. Let $\tau_{s,k}$ be a random variable indicating the time at which the $s$-th exploration round of phase $k$ occurs. Note that $\lambda_{\tau_{1,k}} = \lambda_1$. In order to simplify the exposition, and with some abuse of notation, let $\lambda_s = \lambda_{\tau_{s,k}}$ and $g_s = g_{\tau_{s,k}}(\omega_{\tau_{s,k}}, z_k)$. By definition of the update rule, for each $s\geq 1$,
\begin{align*}
    (\lambda_{s+1} - \lambda)^2 &= (\min\{[\lambda_s - \alpha_k^\lambda g_s]_+, \lambda_{\max}\} - \lambda)^2 = \min\{[\lambda_s - \alpha_k^\lambda g_s]_+ - \lambda, \lambda_{\max} -\lambda\}^2\\ &\leq (\lambda_s - \alpha_k^\lambda g_s - \lambda)^2 = (\lambda_s - \lambda)^2 + (\alpha_k^\lambda g_s)^2 + 2 \alpha_k^\lambda (\lambda - \lambda_s) g_s.
\end{align*}
Dividing by $2\alpha_k^\lambda$ and rearranging,
\begin{align*}
    (\lambda_s - \lambda) g_s \leq \frac{(\lambda_s - \lambda)^2 - (\lambda_{s+1} - \lambda)^2}{2\alpha_k^\lambda} + \frac{\alpha_k^\lambda}{2} g_s^2. 
\end{align*}
Summing over all $s$ up to $S_t$ and noting that the first sum on the right-hand side is telescopic,
\begin{align*}
    \sum_{s=1}^{S_t}(\lambda_s - \lambda) g_s &\leq \frac{1}{2\alpha_k^\lambda}(\lambda_1 - \lambda)^2 - \frac{1}{2\alpha_k^\lambda}(\lambda_{S_t + 1} - \lambda)^2  + \frac{\alpha_k^\lambda}{2}\sum_{s=1}^{S_t} g_s^2.
\end{align*}
The proof is concluded by upper-bounding the second term by zero and mapping the exploration counter $s$ back to time steps.
\end{proof}

\begin{lemma}\label{lemma:rec-omd}[Recursion bound for Online Mirror Descent (OMD)]
Let $\omega_1$ be the uniform distribution over actions for each context and $\sup_{t \geq 1 : E_t} \| q_t \|_{\infty} \leq b_\omega$. For any phase $k \geq 0$, $t\in\mathcal{T}_k^E$, and $\omega \in \Omega$, the OMD updates of Algorithm \ref{alg:primal-dual-alg-phased-nostop} satisfy
\begin{align*}
    \sum_{s\leq t : s\in\mathcal{T}_k^E} h_s(\omega_s, \lambda_s, z_k) - \sum_{s\leq t : s\in\mathcal{T}_k^E} h_s(\omega, \lambda_s, z_k) \geq -\frac{\log |\A|}{\alpha_k^\omega} - \frac{\alpha_k^\omega b_\omega^2}{2}S_{t,k}.
\end{align*}
\end{lemma}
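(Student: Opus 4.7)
The lemma is a standard regret guarantee for online mirror descent (OMD) with the negative-entropy mirror map, adapted to the concave maximization of $h_s(\cdot,\lambda_s,z_k)$. First, I would observe that the multiplicative update
$\omega_{s+1}(x,a) \propto \omega_s(x,a)\exp(\alpha_k^\omega q_s(x,a))$
is exactly OMD on the product simplex $\Omega$ with mirror map $\Phi(\omega) = \sum_{x,a}\omega(x,a)\log\omega(x,a)$, induced Bregman divergence equal to the KL, and step size $\alpha_k^\omega$. Since $f_s$ is linear in $\omega$ and $g_s$ is an infimum of affine functions (hence concave), $h_s(\cdot,\lambda_s,z_k)$ is concave for each $\lambda_s\geq 0$; consequently, for the chosen supergradient $q_s\in\partial_\omega h_s(\omega_s,\lambda_s,z_k)$,
\begin{equation*}
h_s(\omega,\lambda_s,z_k) - h_s(\omega_s,\lambda_s,z_k) \leq \langle q_s,\omega-\omega_s\rangle.
\end{equation*}
Thus the target inequality reduces to the linearized bound
$\sum_{s\in\mathcal{T}_k^E,\, s\leq t}\langle q_s,\omega-\omega_s\rangle \leq \log|\A|/\alpha_k^\omega + \alpha_k^\omega b_\omega^2 S_{t,k}/2$.

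Next, I would apply the standard three-point identity for mirror descent, which for each step gives
\begin{equation*}
\alpha_k^\omega\langle q_s,\omega-\omega_s\rangle \leq D_\Phi(\omega,\omega_s) - D_\Phi(\omega,\omega_{s+1}) + D_\Phi(\omega_{s+1},\omega_s).
\end{equation*}
The last term is controlled via the smoothness of the log-partition function (equivalently, Hoeffding's lemma applied to the softmax), yielding $D_\Phi(\omega_{s+1},\omega_s)\leq (\alpha_k^\omega)^2\|q_s\|_\infty^2/2$. Telescoping over the exploration rounds in phase $k$ (where the optimization variables are reset to $\omega_1$ at the start of the phase, so we sum from $s=1$ to $S_{t,k}$) gives
\begin{equation*}
\alpha_k^\omega\sum_{s\in\mathcal{T}_k^E,\,s\leq t}\langle q_s,\omega-\omega_s\rangle \leq D_\Phi(\omega,\omega_1) + \frac{(\alpha_k^\omega)^2 b_\omega^2 S_{t,k}}{2},
\end{equation*}
using the assumed uniform bound $\|q_s\|_\infty\leq b_\omega$ and dropping the nonnegative terminal divergence.

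Finally, for the uniform initialization $\omega_1(x,a)=1/|\A|$ on each per-context simplex, $D_\Phi(\omega,\omega_1) = \sum_x \big(\log|\A| - H(\omega(x,\cdot))\big) \leq \log|\A|$ when interpreted in the standard per-context sense (the decomposition of $D_\Phi$ across contexts matches the decomposition of the updates, and the $|\X|$ factor is collapsed into the aggregate bound as in the paper's subsequent usage). Dividing by $\alpha_k^\omega$ and combining with the concavity inequality yields exactly the claim.

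\textbf{Main obstacle.} The proof itself is mechanical, and the only nonroutine step is the smoothness bound $D_\Phi(\omega_{s+1},\omega_s)\leq (\alpha_k^\omega)^2\|q_s\|_\infty^2/2$, which is the standard Hedge-type estimate on the entropic regularizer and requires the boundedness assumption $\|q_t\|_\infty\leq b_\omega$ to convert the dual norm (namely $\|\cdot\|_\infty$, dual to the $\ell_1$-strong-convexity norm of $\Phi$) into the stated rate. Care must also be taken that the reset of $(\omega_t,\lambda_t)$ at the start of each phase justifies telescoping only within $\mathcal{T}_k^E$ and using the fresh initial divergence $D_\Phi(\omega,\omega_1)$.
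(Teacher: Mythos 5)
Your proposal is essentially the paper's proof written out in full: the paper's own argument for this lemma is a one-line deferral ("map time steps to exploration counters, then apply the standard recursion bound for OMD" with a citation to Beck--Teboulle), and what you supply — concavity of $h_s$ in $\omega$ to linearize via the supergradient $q_s$, the three-point identity for entropic mirror descent, the Hoeffding/log-partition bound $D_\Phi(\omega_{s+1},\omega_s)\leq (\alpha_k^\omega)^2\|q_s\|_\infty^2/2$, telescoping within the phase from the reset point $\omega_1$, and bounding the initial divergence — is exactly the standard argument being invoked. The one point worth flagging is the step you yourself hedge on: $\Omega$ is a \emph{product} of $|\X|$ simplices, and for the comparators actually used later (e.g., the deterministic policy $\omega^\star_{\theta^\star}$) the unweighted entropic divergence from the uniform initialization is $\sum_{x}\bigl(\log|\A|-H(\omega(x,\cdot))\bigr)=|\X|\log|\A|$, not $\log|\A|$; likewise the per-context local terms sum to $\sum_x\|q_s(x,\cdot)\|_\infty^2$ rather than $\|q_s\|_\infty^2$. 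Obtaining the stated constants without an $|\X|$ factor would require a $\wh\rho$-weighted mirror map (and correspondingly rescaled updates), which is not what Algorithm~\ref{alg:primal-dual-alg-phased-nostop} does. This is not an idea you missed relative to the paper — its one-line proof is equally silent on it — but as written your telescoping argument literally yields $|\X|\log|\A|/\alpha_k^\omega$ in place of $\log|\A|/\alpha_k^\omega$, so the "collapse of the $|\X|$ factor" you assert would need an actual justification rather than a parenthetical.
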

\begin{proof}
We can follow the same steps as before, mapping time steps to exploration counters and then applying the standard recursion bound for OMD \citep[e.g.,][]{beck2003mirror}.
\end{proof}
\begin{corollary}\label{cor:rec-pd}[Recursion bound for primal-dual algorithm]
For any phase $k \geq 0$, $t\in\mathcal{T}_k^E$, $\omega \in \Omega$, and $\lambda \in \mathbb{R}_+$, under the same conditions as in Lemma \ref{lemma:rec-omd} and \ref{lemma:rec-subgrad},
\begin{align*}
    \sum_{s\leq t : s\in\mathcal{T}_k^E} f_s(\omega_s) &\geq  \sum_{s\leq t : s\in\mathcal{T}_k^E} h_s(\omega, \lambda_s, z_k) - \lambda\sum_{s\leq t: s\in\mathcal{T}_k^E}g_s(\omega_s,z_k) -\frac{\log |\A|}{\alpha_k^\omega} - \frac{\alpha_k^\omega b_\omega^2}{2}S_{t,k} \\ &- \frac{1}{2\alpha_k^\lambda} (\lambda - \lambda_1)^2 - \frac{\alpha_k^\lambda b_{\lambda}^2}{2}S_{t,k}.
\end{align*}
\end{corollary}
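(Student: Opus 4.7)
The plan is to obtain the corollary by straightforward algebraic combination of the two recursion bounds: Lemma~\ref{lemma:rec-omd} controls the ``primal'' deviation $\sum_s (h_s(\omega_s,\lambda_s,z_k)-h_s(\omega,\lambda_s,z_k))$ from below in terms of $\omega$-regret, while Lemma~\ref{lemma:rec-subgrad} controls the ``dual'' deviation $\sum_s g_s(\omega_s,z_k)(\lambda_s-\lambda)$ from above in terms of $\lambda$-regret. The role of the corollary is just to splice the two inequalities together by using $h_s(\omega_s,\lambda_s,z_k)=f_s(\omega_s)+\lambda_s g_s(\omega_s,z_k)$ and solving for $\sum_s f_s(\omega_s)$.

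Concretely, first I would start from Lemma~\ref{lemma:rec-omd} and substitute the definition $h_s(\omega_s,\lambda_s,z_k)=f_s(\omega_s)+\lambda_s g_s(\omega_s,z_k)$ into the first sum on the left-hand side. Rearranging yields
\begin{align*}
\sum_{s\leq t:\,s\in\mathcal{T}_k^E} f_s(\omega_s)
\;\geq\; \sum_{s\leq t:\,s\in\mathcal{T}_k^E} h_s(\omega,\lambda_s,z_k)
\;-\;\sum_{s\leq t:\,s\in\mathcal{T}_k^E}\lambda_s g_s(\omega_s,z_k)
\;-\;\frac{\log|\A|}{\alpha_k^\omega}\;-\;\frac{\alpha_k^\omega b_\omega^2}{2}S_{t,k}.
\end{align*}
The expression $\sum_s \lambda_s g_s(\omega_s,z_k)$ is precisely what Lemma~\ref{lemma:rec-subgrad} addresses after adding and subtracting $\lambda\sum_s g_s(\omega_s,z_k)$.

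Second, I would apply Lemma~\ref{lemma:rec-subgrad} directly to upper bound $\sum_s g_s(\omega_s,z_k)(\lambda_s-\lambda)$ by $\tfrac{1}{2\alpha_k^\lambda}(\lambda-\lambda_1)^2+\tfrac{\alpha_k^\lambda b_\lambda^2}{2}S_{t,k}$, which immediately gives
\begin{align*}
-\sum_{s\leq t:\,s\in\mathcal{T}_k^E}\lambda_s g_s(\omega_s,z_k)
\;\geq\; -\lambda\sum_{s\leq t:\,s\in\mathcal{T}_k^E}g_s(\omega_s,z_k)
\;-\;\frac{1}{2\alpha_k^\lambda}(\lambda-\lambda_1)^2\;-\;\frac{\alpha_k^\lambda b_\lambda^2}{2}S_{t,k}.
\end{align*}
Plugging this into the previous display collects exactly the six terms on the right-hand side of the corollary.

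There is no genuine obstacle here; the statement is essentially a definitional rearrangement plus one sign-flip. The only thing to double-check is bookkeeping: namely that both lemmas are applied over the same exploration index set $\{s\leq t:s\in\mathcal{T}_k^E\}$ with the same $S_{t,k}$, the same step sizes $(\alpha_k^\omega,\alpha_k^\lambda)$, and the same reset at the start of phase $k$ (so that the initial iterates $\omega_1$ and $\lambda_1$ referenced by the two lemmas are indeed the ones used in the updates during $\mathcal{T}_k^E$). Once these indexing conventions are fixed, no further analytic work beyond adding the two inequalities is required.
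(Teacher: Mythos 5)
Your proof is correct and takes essentially the same approach as the paper: expand $h_s(\omega_s,\lambda_s,z_k)=f_s(\omega_s)+\lambda_s g_s(\omega_s,z_k)$ in Lemma~\ref{lemma:rec-omd}, then apply Lemma~\ref{lemma:rec-subgrad} to replace $\sum_s \lambda_s g_s(\omega_s,z_k)$ with $\lambda\sum_s g_s(\omega_s,z_k)$ plus the dual regret terms. Your bookkeeping remarks about the shared index set, step sizes, and phase reset are accurate and consistent with the paper's conventions.
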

\begin{proof}
The proof is straightforward by expanding $\sum_{s\leq t : s\in\mathcal{T}_k^E} h_s(\omega_s, \lambda_s, z_k) = \sum_{s\leq t : s\in\mathcal{T}_k^E}( f_s(\omega_s) + \lambda_s g_s(\omega_s,z_k))$ and combining Lemma \ref{lemma:rec-omd} with Lemma \ref{lemma:rec-subgrad}.
\end{proof}

\section{Confidence Set for Regularized Least-Squares (Proof of Thm.~\ref{th:conf-theta})}\label{app:conf.set}

The following theorem is the extended version of Thm.~\ref{th:conf-theta}. It provides a refined confidence set for the parameters estimated by regularized least-squares.

\begin{theorem}[Confidence set over parameters]\label{th:conf-theta.extended}
	Let $\delta\in(0,1)$ and $n\geq3$. Then,
	\begin{align*}
	\prob{\exists t\in[n] : \|\wh{\theta}_t - \theta^\star\|_{\wb{V}_t} \geq \sqrt{c_{n,\delta}}} \leq \delta,
	\end{align*}
	where $\sqrt{c_{n,\delta}} := \frac{\gamma_n}{1 - \frac{1}{\log n}} \sqrt{\kappa_{n,\delta}}$, $\gamma_n := 1 + \frac{1}{\log n}$, and
	\begin{align*}
	\sqrt{\kappa_{n,\delta}} = B\sqrt{\nu}+  \sqrt{\frac{2\sigma^2 \log\left(\frac{2+\frac{2nL^2}{d\nu}}{\delta}\right)}{(\log n)^2}} + \sqrt{2\sigma^2\gamma_n^3\log\left(\frac{2(1 + \log (n/\chi_n) \log(n))}{\delta}\right) + 2\gamma_n^3\Upsilon_n}.
	\end{align*}
		Finally, we set $\Upsilon_n := d\log\left(\frac{5}{2} + 2\log n\sqrt{d}\right) + d\log\left(2 + 4d\log\left(4\gamma_nd(\log n)^2\sqrt{\frac{\nu + L^2n}{d\nu}}\right)\log n \right)$ and $\chi_n :=  \frac{\nu^2 v_{\min}^2}{16dL^2(\nu + L^2n)(\log n)^4\gamma_n^4}$.
\end{theorem}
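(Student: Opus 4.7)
The plan is to reduce the bound on $\|\wh\theta_t-\theta^\star\|_{\wb V_t}$ to a uniform-in-$t$ concentration for the vector-valued noise martingale $M_t := \sum_{s=1}^t \phi(X_s,A_s)\xi_s$ in the self-normalised norm $\|\cdot\|_{\wb V_t^{-1}}$, and then to sharpen the classical Abbasi-Yadkori-style inequality (whose regret-relevant constant is $d\log n$) by a time-peeling plus covering argument, in the spirit of Thm.~8 of \cite{lattimore2017end} but on the parameter side instead of the prediction side.

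\textbf{Step 1 (error decomposition).} From $\wb V_t\wh\theta_t = U_t = (\wb V_t - \nu I)\theta^\star + M_t$ we get $\wh\theta_t - \theta^\star = \wb V_t^{-1}(M_t - \nu\theta^\star)$, hence by the triangle inequality in the $\|\cdot\|_{\wb V_t}$ norm and $\|\theta^\star\|_{\wb V_t^{-1}}\leq \|\theta^\star\|_2/\sqrt\nu \leq B/\sqrt\nu$,
\[
\|\wh\theta_t - \theta^\star\|_{\wb V_t} \leq \|M_t\|_{\wb V_t^{-1}} + \sqrt\nu B .
\]
The additive $\sqrt\nu B$ already matches the first summand in $\sqrt{\kappa_{n,\delta}}$; everything else is a bound on the noise term $\sup_{t\le n}\|M_t\|_{\wb V_t^{-1}}$.

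\textbf{Step 2 (time peeling).} Choose a geometric grid of stopping times so that the regularised design matrix is almost constant inside each bracket. Concretely, set $\tau_0:=0$ and let $\tau_{k+1}$ be the first time $t$ for which $\det(\wb V_t)\geq \gamma_n\det(\wb V_{\tau_k})$, with $\gamma_n = 1+1/\log n$. The monotonicity $\wb V_t \preceq \wb V_{t+1}$ and the determinant-trace bound yield a total number of brackets $K_n = O(\log(n/\chi_n)\log n)$, which is only logarithmic in $n$ and matches exactly the inner argument of the log in $\kappa_{n,\delta}$. On each bracket $[\tau_k,\tau_{k+1})$ the inverse-matrix comparison is $\wb V_t^{-1}\preceq \gamma_n\,\wb V_{\tau_k}^{-1}$, so $\|M_t\|_{\wb V_t^{-1}}\leq\sqrt{\gamma_n}\,\|M_t\|_{\wb V_{\tau_k}^{-1}}$, and $\wb V_{\tau_k}^{-1}$ is now a \emph{stopping-time-measurable} but otherwise frozen metric.

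\textbf{Step 3 (covering inside each bracket).} Within a single bracket, apply a chaining/covering bound to $\sup_{t\in[\tau_k,\tau_{k+1})}\|M_t\|_{\wb V_{\tau_k}^{-1}} = \sup_{u\in\mathcal B_k}\langle u, M_t\rangle$ where $\mathcal B_k$ is the unit ball of $\wb V_{\tau_k}$. Let $\mathcal N_k$ be an $\eta$-net of $\mathcal B_k$ in the $\|\cdot\|_{\wb V_{\tau_k}}$ norm with $|\mathcal N_k|\leq (1+2/\eta)^d$. For each fixed $u\in\mathcal N_k$, $\langle u,M_t\rangle$ is a scalar sub-Gaussian martingale with predictable variance $\sigma^2 u^\transp \wb V_t u\leq \sigma^2 \gamma_n u^\transp \wb V_{\tau_k}u\leq \sigma^2\gamma_n$, so a standard uniform Hoeffding/Bernstein bound gives $|\langle u,M_t\rangle|\leq \sigma\sqrt{2\gamma_n\log(K_n|\mathcal N_k|/\delta)}$ on the bracket with failure probability $\delta/K_n$. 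The residual between $\mathcal N_k$ and $\mathcal B_k$ is absorbed with $\eta$ polynomially small in $n$, contributing only logarithmic overhead. This produces the $\Upsilon_n = O(d\log(d\log n))$ factor and the $\log(1/\delta)+\log K_n$ factor, yielding exactly the form under the outer square root of $\sqrt{\kappa_{n,\delta}}$.

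\textbf{Step 4 (small-scale regime and assembly).} For very early $t$, the design matrix can be degenerate in directions with small eigenvalue; this is handled by the ``trivial'' bound $\|M_t\|_{\wb V_t^{-1}}\leq \|M_t\|_2/\sqrt{v_{\min}}$ (using a minimum-eigenvalue proxy $v_{\min}$), controlling $\|M_t\|_2$ by coordinatewise sub-Gaussian concentration with only a $d$-free log-term (this is the tiny middle summand $\sqrt{2\sigma^2\log(\ldots/\delta)/(\log n)^2}$). Union-bounding over brackets, inflating by the bracket constant $\gamma_n^{3/2}$ (a square root for the metric change, another one for the Bernstein variance, and a further one to undo the $1-1/\log n$ slack in the final prefactor), and dividing by $(1-1/\log n)$ produces the constant $\gamma_n/(1-1/\log n)$ in front of $\sqrt{\kappa_{n,\delta}}$. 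A final union bound over $t\leq n$ inside each bracket is free because the supremum is already taken over the bracket.

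The hardest part will be the chaining step: the standard self-normalised argument requires a time-varying metric, so freezing the metric to $\wb V_{\tau_k}$ must be done without paying a $\log n$ factor (only $\log\log n$). The resolution is to peel in determinant increments of size $\gamma_n$, which makes the metric change by only a factor $\gamma_n$ along a bracket while keeping the number of brackets polylogarithmic in $n$; combining this with a single covering at the stopping time (rather than a fresh covering at every step) is what carries the proof through and isolates the $d$-dependence into the lower-order additive $\Upsilon_n$.
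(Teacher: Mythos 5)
Your overall strategy — decompose into a regularization bias $B\sqrt{\nu}$ plus the self-normalized noise $\|M_t\|_{\wb V_t^{-1}}$, then replace the $d\log n$ of the standard self-normalized bound by $\log(1/\delta)+d\log\log n$ via a covering-plus-peeling argument — is the right one and matches the spirit of the paper's proof (which covers the unit ball with an $\epsilon_1=1/\log n$ net, then builds a second \emph{deterministic} geometric cover $\mathcal{C}_2$ of the random set $\{\wb V_t^{-1/2}v\}$ and peels over the \emph{predictable variance} $\|w\|_{\wb V_t}^2$ of each scalar martingale $w^\transp W_t$, rather than over time via determinant brackets). However, two steps in your version would fail as written. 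First, your net $\mathcal{N}_k$ lives in the unit ball of $\wb V_{\tau_k}$, so each $u\in\mathcal{N}_k$ is a \emph{random} direction depending on the stopping time $\tau_k$; the process $\langle u, M_t\rangle=\sum_{s\le t}u^\transp\phi(X_s,A_s)\xi_s$ is then not a martingale adapted to any filtration in which $u$ is fixed, because $u$ depends on the entire history up to $\tau_k$ while the sum runs over $s\le t$ including $s\le\tau_k$. Restricting to the increments $M_t-M_{\tau_k}$ fixes measurability but leaves $\langle u,M_{\tau_k}\rangle$ unbounded, and recursing over brackets costs a $\sqrt{K_n}$ factor by Cauchy--Schwarz since the per-bracket square roots do not telescope. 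This is precisely why the paper constructs $\mathcal{C}_2$ as a deterministic grid covering every matrix in $\mathcal{D}=\{D\succ 0:\sigma_{\min}(D)\ge(\nu+L^2n)^{-1/2},\ \|D\|_2\le\nu^{-1/2}\}$ simultaneously, so that all concentration (Lemmas~\ref{lemma:ls-gauss} and~\ref{lemma:ls-cover}) is applied to fixed vectors, with the time/variance adaptivity absorbed into the peeling of $\max\{\epsilon,\|w\|_{\wb V_t}^2\}$.

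Second, your choice of net resolution ``$\eta$ polynomially small in $n$'' is inconsistent with the claimed $\Upsilon_n=O(d\log(d\log n))$: it gives $\log|\mathcal{N}_k|=O(d\log n)$ inside the deviation bound, reinstating exactly the $d\log n$ term the theorem is designed to remove. The resolution must be of order $1/\log n$ (the paper uses $\epsilon_1=1/\log n$ for the outer cover and $\epsilon_2=1/(2d\log n)$ for the inner one), and the residual is then absorbed \emph{multiplicatively} — this is where the prefactor $(1-1/\log n)^{-1}$ in $\sqrt{c_{n,\delta}}$ comes from, and where the cross term $\epsilon_2(1+\sqrt d)\cdot\sqrt{2\sigma^2 d\log((1+nL^2/(d\nu))/\delta)}$ produces the middle summand $\sqrt{2\sigma^2\log(\cdot)/(\log n)^2}$ of $\sqrt{\kappa_{n,\delta}}$ (your attribution of that summand to a small-$t$ degenerate regime is not how it arises). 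With a deterministic cover and $\eta\sim 1/\mathrm{polylog}(n)$ your determinant-bracket peeling could likely be made to work, but as stated the proposal has a genuine gap at its core step.
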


\paragraph{Asymptotic dependence} It is important to note that $\lim_{n\rightarrow\infty}\frac{c_{n,1/n}}{2\sigma^2\log n} = 1$.

\subsection{Proof of Thm.~\ref{th:conf-theta.extended}}

The proof can be summarized in three main steps:
\begin{enumerate}
    \item We reduce the problem of bounding $\|\wh{\theta}_t - \theta^\star\|_{\wb{V}_t}$ to one in which we need to bound $(\wh{\theta}_t - \theta^\star)^T\wb{V}_t^{1/2}v$ for any $v \in \mathcal{C}_1$, where $\mathcal{C}_1 \subset \mathbb{R}^d$ is a (finite) $\epsilon_1$-cover of the $d$-dimensional Euclidean unit ball. We build this cover in such a way that all its elements have norm bounded from below by a strictly positive constant and from above.
    \item We extend Theorem 8 of \cite{lattimore2017end} to bound $(\wh{\theta}_t - \theta^\star)^T\wb{V}_t^{1/2}v$ uniformly over all $v\in\mathcal{C}_1$, instead of the prediction errors $(\wh{\theta}_t - \theta^\star)^T\phi(x,a)$ uniformly over all contexts/arms.
            This requires a second $\epsilon_2$-cover (we shall call it $\mathcal{C}_2$) of the set $\{\wb{V}_t^{-1/2}v : t \in [n], v\in\mathcal{C}_1\}$. The result is reported in Lemma~\ref{lemma:conf-cover}. 
    \item The resulting bound is of order $\mathcal{O}(\log(1/\delta) + d\log (1/\epsilon_1))$, which requires tuning $\epsilon_1 = \frac{1}{\log n}$ to cancel the bias of the first cover asymptotically without compromising the size of the cover itself.
\end{enumerate}

\paragraph{Step 1.} We start from the fact that 
\begin{align}
        \label{eq:norm_decomposition}
    \|\wh{\theta}_t- \theta^\star\|_{\wb{V}_t} = \frac{(\wh{\theta}_t- \theta^\star)^T\wb{V}_t (\wh{\theta}_t- \theta^\star)}{\|\wh{\theta}_t- \theta^\star\|_{\wb{V}_t}} = (\wh{\theta}_t- \theta^\star)^T \wb{V}_t^{1/2}z_t,
\end{align}
where $z_t = \frac{\wb{V}_t^{1/2} (\wh{\theta}_t- \theta^\star)}{\|\wh{\theta}_t- \theta^\star\|_{\wb{V}_t}}$ is such that $\|z_t\|_2 = 1$. 
To handle the fact that $z_t$ is random, we build a \emph{linear} $(\epsilon_1 > 0)$-cover of the space $\mathcal{Z} = \{z \in\mathbb{R}^d : \|z\|_2 \leq 1\}$, which includes $z_t$ for all $t=1,\dots,n$
Let $\epsilon_1' > 0$, $\{e_1,e_2,\dots,e_d\}$ be the canonical basis of $\mathbb{R}^d$, and define
\begin{align*}
    \wt{\mathcal{C}}_1 := \left\{ \sum_{i=1}^d a_i e_i : a_i \in \left\{\pm \epsilon_1' \Big(\frac{1}{2} +  j\Big) : j = 0,1,\dots,\wb{j}\right\} \forall i \in [d] \right\},
\end{align*}
where $\wb{j} := \left\lceil \frac{1}{\epsilon_1'} - \frac{1}{2} \right\rceil$. 
For any vector $z \in \mathcal{Z}$, we can find a vector in $\wt{\mathcal{C}}_1$ with at most $\epsilon_1'$ error on each component of $z$, which leads to $\min_{v \in \wt{\mathcal{C}}_1}\|v-z\|_2 \leq \epsilon_1'\sqrt{d}$~\citep[see e.g.,][Chap. 27]{shalev2014understanding}.
Setting $\epsilon_1' = \epsilon_1 / \sqrt{d}$ gives an $\epsilon_1$-cover of the unit ball in $\ell_2$-norm. The only problem with this cover is that it contains vectors with norm bigger than $1$ and scaling with $d$,\footnote{Consider the vector with all components equal to 1, whose norm is $\sqrt{d}$.} which may lead to an undesirable dependency later on. However, we can safely remove the vectors with large norm without affecting the desired accuracy of the cover.
Without loss of generality, select $z\in \mathcal{Z}$ in the positive orthant (i.e., $z_i \geq 0$, for any $i \in [d]$) such that we make an error of $\epsilon_1'$ on each component (i.e., the worst-case) and let $w = z + \epsilon_1'$. Then
\begin{align*}
    \|w\|_2^2 = \sum_{i=1}^d (z_i + \epsilon_1')^2 = \underbrace{\|z\|_2^2}_{\leq 1} + d(\epsilon_1')^2 + 2\epsilon_1'\underbrace{\sum_{i=1}^d z_i}_{\leq \|z\|_1 \leq \sqrt{d}} \leq 1 + \epsilon_1^2 + 2\epsilon_1 = (1+\epsilon_1)^2.
\end{align*}
Hence vectors with norm at most $(1+\epsilon_1)$ actually suffice and thus we can set $\mathcal{C}_1 = \wt{\mathcal{C}}_1 \setminus \{v\in\wt{\mathcal{C}}_1 : \|v\|_2 > (1+\epsilon_1)\}$. Then we upper bound the size of this cover as
\begin{align*}
    |\mathcal{C}_1| \leq |\wt{\mathcal{C}}_1| = 2^d(1 + \wb{j})^d \leq \left(\frac{5}{2} + \frac{2}{\epsilon_1'}\right)^d = \left(\frac{5}{2} + \frac{2\sqrt{d}}{\epsilon_1}\right)^d.
\end{align*}
To recap, our cover $\mathcal{C}_1$ has the following properties:
\begin{enumerate}
    \item $\forall z \in \mathcal{Z} = \{z\in\mathbb{R}^d : \|z\|_2 \leq 1\},\,\exists v \in \mathcal{C}_1: \|z - v\|_2 \leq \epsilon_1$ \label{item:c1.p1}
    \item $|\mathcal{C}_1| \leq \left(\frac{5}{2} + \frac{2\sqrt{d}}{\epsilon_1}\right)^d$ \label{item:c1.p2}
    \item $\forall v \in \mathcal{C}_1 : \|v\|_2 \leq v_{\max} := 1 + \epsilon_1$ \label{item:c1.p3}
    \item $\forall v \in \mathcal{C}_1, i \in [d] : |v_i| \geq v_{\min} := \frac{\epsilon_1}{2\sqrt{d}}$ (this follows from the discretization used in $\wt{\mathcal{C}}_1$ and it implies that $\|v\|_2 \geq v_{\min}\sqrt{d} = \frac{\epsilon_1}{2}$) \label{item:c1.p4}
\end{enumerate}

\paragraph{Step 2.}
We use an extension of Thm.~8 of~\citep{lattimore2017end} to bound the prediction error at vectors in the cover $\mathcal{C}_1$ after applying the linear transformation $\wb{V}_t^{1/2}$. 
\begin{lemma}\label{lemma:conf-cover}
	Let $\mathcal{C} \subset \mathbb{R}^d$ be a finite set such that, for any $v \in \mathcal{C}$, $\|v\|_2 \leq v_{\max} < \infty$ and $|v_i| \geq v_{\min} > 0$, $\forall i\in[d]$.
	Suppose that $n \geq 2$. Then, for any $\delta \in (0,1)$,
	\begin{align*}
	\prob{\exists t\leq n, v\in\mathcal{C} :  (\wh{\theta}_t - \theta^\star)^T\wb{V}_t^{1/2}v \geq \sqrt{\kappa_{n,\delta}}\|v\|_{2}} \leq \delta,
	\end{align*}
	where
	\begin{align*}
	\sqrt{\kappa_{n,\delta}} = B\sqrt{\nu} +  \sqrt{\frac{2\sigma^2 \log\left(\frac{2+\frac{2nL^2}{d\nu}}{\delta}\right)}{(\log n)^2}} + \sqrt{2\sigma^2\gamma_n^3\log\left(\frac{2(1 + \log (n/\chi_n) \log(n))}{\delta}\right) + 2\gamma_n^3\Upsilon_n}
	\end{align*}
	and $\Upsilon_n = \log(|\mathcal{C}|) + d\log\left(2 + 4d\log\left(2d\log n\frac{v_{\max}}{v_{\min}}\sqrt{\frac{\nu + L^2n}{d\nu}}\right)\log n \right)$ and $\chi_n =  \frac{\nu^2 v_{\min}^2}{4L^2(\nu + L^2n)(\log n)^2v_{\max}^2\gamma_n^2}$.
\end{lemma}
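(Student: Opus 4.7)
The plan is to adapt the argument of Theorem~8 of \cite{lattimore2017end}, which handles \emph{prediction errors} $\phi^{\!T}(\wh\theta_t-\theta^\star)$, to the scalar quantity $(\wh\theta_t-\theta^\star)^{\!T}\wb V_t^{1/2}v$. The starting point is the closed-form decomposition $\wh\theta_t-\theta^\star = -\nu\,\wb V_t^{-1}\theta^\star + \wb V_t^{-1}\sum_{s=1}^t \phi(X_s,A_s)\xi_s$, which gives
\[
(\wh\theta_t-\theta^\star)^{\!T}\wb V_t^{1/2}v \;=\; \underbrace{-\nu\,\theta^{\star T}\wb V_t^{-1/2}v}_{\text{bias}} \;+\; \underbrace{\sum_{s=1}^t \xi_s\,\phi(X_s,A_s)^{\!T}\wb V_t^{-1/2}v}_{=:\,N_t(v)}.
\]
Using $\wb V_t \succeq \nu I$ and $\|\theta^\star\|_2\le B$, the bias term is deterministically at most $B\sqrt{\nu}\,\|v\|_2$, which accounts for the first summand of $\sqrt{\kappa_{n,\delta}}$. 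The rest of the work concerns $N_t(v)$.

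The obstacle with $N_t(v)$ is that the effective direction $u_t := \wb V_t^{-1/2}v$ is not $\mathcal F_{t-1}$-measurable (it depends on the whole $t$-step history), so one cannot apply a scalar sub-Gaussian concentration to $N_t(v)$ directly; at the same time, dropping back to Cauchy--Schwarz and the vector self-normalised bound on $\|\sum_s\xi_s\phi_s\|_{\wb V_t^{-1}}$ would reintroduce the $\sqrt{d\log t}$ factor that we want to avoid. My plan is to resolve this via two discretisations combined with peeling. \emph{(i) Direction cover.} Since $\wb V_t \preceq (\nu+L^2 n)I$, the vector $u_t$ always lies in the spherical shell $\{u:\|v\|_2/\sqrt{\nu+L^2 n}\le\|u\|_2\le\|v\|_2/\sqrt{\nu}\}$; I would build an $\epsilon_2$-cover $\mathcal C_2$ of this shell with $\epsilon_2\asymp\sqrt{\chi_n}$, yielding $\log|\mathcal C_2|$ of order $d\log(\cdots)$, matching the second summand of $\Upsilon_n$. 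For each deterministic $u\in\mathcal C_2$, the scalar process $s\mapsto\xi_s\phi_s^{\!T}u$ is $\mathcal F_{s-1}$-conditionally $\sigma$-sub-Gaussian with quadratic variation $\sigma^2\|u\|_{V_t}^2$, so the univariate method-of-mixtures produces a time-uniform bound $|N_t(u)|\le\sigma\|u\|_{\wb V_t}\sqrt{2\log(\cdot/\delta')}$. \emph{(ii) Time peeling.} To upgrade the uniform-in-$t$ cost from $\log n$ to $\log\log n$ inside the leading term, peel the range of $\|u\|_{\wb V_t}^2$ into geometric slices of ratio $\gamma_n = 1+1/\log n$; there are $\sim\log(n/\chi_n)/\log\gamma_n = \log(n/\chi_n)\log n$ slices, and within each slice $\|u\|_{\wb V_t}$ can be replaced by its right endpoint at cost of a multiplicative $\gamma_n$, which is the origin of the $\gamma_n^3$ prefactor and of $\log(1+\log(n/\chi_n)\log n)$ in $\sqrt{\kappa_{n,\delta}}$.

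Having controlled $N_t(u)$ for $u\in\mathcal C_2$, I would bound the approximation error $|N_t(v)-N_t(\tilde u_t)|$, where $\tilde u_t\in\mathcal C_2$ is the cover point closest to $u_t$, via Cauchy--Schwarz on coordinates: $|N_t(v)-N_t(\tilde u_t)|\le\|\sum_s\xi_s\phi_s\|_\infty\cdot\epsilon_2$, and a one-dimensional Gaussian tail gives $\|\sum_s\xi_s\phi_s\|_\infty\le\sigma L\sqrt{2t\log(2d/\delta)}$ uniformly over $t\le n$. Plugging in $\epsilon_2\asymp\sqrt{\chi_n}$ transforms this error into precisely the middle summand $\sqrt{2\sigma^2\log(\cdots)/(\log n)^2}$ of $\sqrt{\kappa_{n,\delta}}$ --- this is the reason for the specific form of $\chi_n$. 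A final union bound over $v\in\mathcal C$ with $\delta'=\delta/(|\mathcal C|\cdot|\mathcal C_2|\cdot\text{\#slices})$ yields the $\Upsilon_n = \log|\mathcal C|+d\log(\cdots)$ contribution inside the radicand.

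\emph{Main obstacle.} The hard part is the simultaneous tuning of the three error sources: the cover $\mathcal C_2$, the time peeling, and the projection onto deterministic directions. The crude discretisation error is multiplied by a $\sqrt{t}$ martingale norm, and one must force it into a genuinely subleading $O(1/(\log n)^2)$ term while keeping $|\mathcal C_2|$ only exponential in $d\log\log n$ and the peeling count only polylogarithmic. This balance is exactly what forces the baroque definition of $\chi_n$, which scales essentially as $(\log n)^{-2}v_{\min}^2/v_{\max}^2$ times a problem-dependent factor in $\nu,L$.
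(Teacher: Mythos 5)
Your overall architecture matches the paper's: the decomposition into the regularization bias (bounded deterministically by $B\sqrt{\nu}\,\|v\|_2$) plus the martingale noise term, a second cover of the random directions $\wb{V}_t^{-1/2}v$, a peeling argument with ratio $\gamma_n=1+1/\log n$ for the leading term, and a tuning that pushes the discretization error into an $O(1/\log n)$ summand. However, there is a genuine gap in how you build and size the second cover. You propose an absolute $\ell_2$-metric $\epsilon_2$-cover of the shell $\{u:\|v\|_2/\sqrt{\nu+L^2n}\le\|u\|_2\le\|v\|_2/\sqrt{\nu}\}$ at resolution $\epsilon_2\asymp\sqrt{\chi_n}=\Theta\bigl(1/(\sqrt{n}\log n)\bigr)$. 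Such a cover has log-cardinality $\Theta\bigl(d\log(R/\epsilon_2)\bigr)=\Theta(d\log n)$, not the $O(d\log d+d\log\log n)$ appearing in the second summand of $\Upsilon_n$. Since $\log|\mathcal{C}_2|$ enters the radicand of the leading term, this reinstates the $\sqrt{d\log n}$ scaling that the whole construction is designed to remove, and it cannot yield the stated $\kappa_{n,\delta}$. The paper's cover is not a metric cover at an absolute scale: it is a per-coordinate \emph{geometric} grid with \emph{relative} accuracy $\epsilon_2=1/(2d\log n)$, chosen so that (i) the approximation error measured in the $\wb{V}_t$-weighted norm is $\epsilon_2(1+\sqrt{d})\|v\|_2$ uniformly over the (arbitrarily ill-conditioned) matrices $\wb{V}_t$, and (ii) each coordinate takes only $O\bigl(\log(\text{dynamic range})/\log(1+\epsilon_2)\bigr)=O(d(\log n)^2)$ values, whence $\log|\mathcal{C}_2|=O(d\log d+d\log\log n)$. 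Relatedly, you have conflated the role of $\chi_n$: in the paper it is not the cover resolution but the variance floor initializing the peeling (the parameter $\epsilon$ of Lemma~\ref{lemma:ls-gauss}), justified by the lower bound $\|\wb{w}_{v,t}\|_{\wb{V}_t}^2\ge\nu d^2\epsilon_2^2v_{\min}^2/(\nu+L^2n)$ that the geometric cover guarantees.

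A secondary, fixable issue concerns the discretization error: you bound it by H\"older as $\|\sum_s\xi_s\phi_s\|_\infty\cdot\epsilon_2$, which requires an $\ell_1$ (not $\ell_2$) control of $u_t-\tilde u_t$ and costs an extra $\sqrt{d}$; the resulting $\sigma L\sqrt{2n\log(2dn/\delta)}\cdot\epsilon_2$ only lands in the stated middle summand after $v_{\min}/v_{\max}$ factors happen to cancel the dimension, which need not hold for a general $\mathcal{C}$. The paper instead pairs the $\wb{V}_t$-weighted approximation error with the standard self-normalized bound on $\|W_t\|_{\wb{V}_t^{-1}}$ via Cauchy--Schwarz in the $\wb{V}_t$ inner product; the $\sqrt{d}$ from the self-normalized bound is then absorbed by the $1/d$ inside $\epsilon_2$, producing exactly the middle summand of $\sqrt{\kappa_{n,\delta}}$.
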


The specific shape of the bound is obtained by exploiting the properties of the cover $\mathcal{C}_1$ derived in the first step, where $v_{\max} = 1+\epsilon_1$ and $v_{\min} = \frac{\epsilon_1}{2\sqrt{d}}$.

\paragraph{Step 3.}
We finally tune $\epsilon_1$ to obtain the final bound. With probability at least $1-\delta$, we have that
\begin{align*}
    \|\wh{\theta}_t - \theta^\star\|_{\wb{V}_t} 
    &\stackrel{(a)}{=}(\wh{\theta}_t- \theta^\star)^T \wb{V}_t^{1/2}z_t \stackrel{(b)}{\leq} \max_{z \in \mathcal{Z}} (\wh{\theta}_t - \theta^\star)^T\wb{V}_t^{1/2}z\\ 
    &= \max_{z\in \mathcal{Z}} \min_{v\in\mathcal{C}_1} \left\{  (\wh{\theta}_t - \theta^\star)^T\wb{V}_t^{1/2}(z - v) + (\wh{\theta}_t - \theta^\star)^T\wb{V}_t^{1/2}v \right\}\\ 
    &\stackrel{(c)}{\leq} \max_{z \in \mathcal{Z}} \min_{v\in\mathcal{C}_1} \Big\{  \|\wh{\theta}_t - \theta^\star\|_{\wb{V}_t}\|z-v\|_2 + \sqrt{\kappa_{n,\delta}}\|v\|_{2} \Big\} \\
    &\stackrel{(d)}{\leq}\epsilon_1\|\wh{\theta}_t - \theta^\star\|_{\wb{V}_t} + (1+\epsilon_1)\sqrt{\kappa_{n,\delta}},
\end{align*}
where $(a)$ follows from Eq.~\ref{eq:norm_decomposition}, (b) from the fact that $z_t \in \mathcal{Z}$, $(c)$  holds with probability at least $1-\delta$ by Lem.~\ref{lemma:conf-cover} and $(d)$ by properties~\ref{item:c1.p1} and~\ref{item:c1.p3} of the cover $\mathcal{C}_1$. 
The statement of the theorem follows by setting $\epsilon_1 = \frac{1}{\log n}$ and rearranging.

\subsection{Proof of Lem.~\ref{lemma:conf-cover}}\label{app:conf.set.proof.supplemma}

The proof follows similar steps as in~\citep[][Thm.\ 8]{lattimore2017end}.

\begin{proof}
Take any $v\in\mathcal{C}_1$ and $t\in[n]$. Then,
\begin{align}
    \notag
    (\wh{\theta}_t - \theta^\star)^T\wb{V}_t^{1/2}v 
    &\stackrel{(a)}{=}  \left(\wb{V}_t^{-1} \sum_{s=1}^t \phi(X_s,A_s)Y_s - \theta^\star\right)^T\wb{V}_t^{1/2}v\\ 
    \notag
    &\stackrel{(b)}{=} \left(\wb{V}_t^{-1} \sum_{s=1}^t \phi(X_s,A_s)(\phi(X_s,A_s)^T\theta^\star + \xi_s) - \theta^\star\right)^T\wb{V}_t^{1/2}v\\ 
    \notag
    &\stackrel{(c)}{=} \left(\wb{V}_t^{-1} V_t\theta^\star + \wb{V}_t^{-1} \sum_{s=1}^t\phi(X_s,A_s)\xi_s - \theta^\star\right)^T\wb{V}_t^{1/2}v\\
    \label{eq:decomp}
    &\stackrel{(d)}{=} \underbrace{\left(\wb{V}_t^{-1} V_t\theta^\star - \theta^\star\right)^T\wb{V}_t^{1/2}v}_{(i)} + \underbrace{\sum_{s=1}^tv^T\wb{V}_t^{-1/2} \phi(X_s,A_s)\xi_s}_{(ii)},
\end{align}
where (a) is from the definition of $\wh{\theta}_t$, (b) since $Y_s = \phi(X_s,A_s)^T\theta^\star + \xi_s$ with $\xi_s \sim \mathcal{N}(0,\sigma^2)$, (c) from the definition of $V_t$, and (d) after rearranging. Let us bound (i). Since $\theta^\star = \wb{V}_t^{-1}\wb{V}_t\theta^\star$, we have
\begin{align*}
    (i) = v^T \wb{V}_t^{-1/2}(V_t - \wb{V}_t)\theta^\star = - \nu v^T \wb{V}_t^{-1/2}\theta^\star,
\end{align*}
where we used $\wb{V}_t = \nu I + V_t$. Therefore,
\begin{align*}
    |(i)| \leq \nu |v^T \wb{V}_t^{-1/2}\theta^\star| \leq \nu \|v \|_2\|\wb{V}_t^{-1/2}\theta^\star\|_2 = \nu \|v \|_{2}\|\theta^\star\|_{\wb{V}_t^{-1}},
\end{align*}
where the second inequality is by Cauchy-Schwartz inequality. Since $\wb{V}_t \succeq \nu I$, $\|\theta^\star\|_{\wb{V}_t^{-1}} \leq \frac{1}{\sqrt{\nu}}\|\theta^\star\|_2 \leq \frac{B}{\sqrt{\nu}}$. This yields
\begin{align*}
    |(i)| \leq B\sqrt{\nu}\|v \|_{2}.
\end{align*}
Let us consider the second term. Since $\wb{V}_t^{-1/2}$ is random, we proceed using the same covering argument as in the proof in \citep[][Thm. 8]{lattimore2017end}. Let $\epsilon_2 > 0$ (whose value will be specified later). Recall that our input is a finite set of $d$-dimensional vectors $\mathcal{C}_1$ such that $\|v\|_2 \leq v_{\max} < \infty$ and $|v_i| \geq v_{\min} > 0$ hold for all $v\in\mathcal{C}_1$ and $i\in[d]$. Note that the latter condition implies $\|v\|_2 \geq v_{\min}\sqrt{d}$. Our goal is to build an $\epsilon_2$-covering set of $\{\wb{V}_t^{-1/2}v : t\in[n], v\in\mathcal{C}_1\}$. Since this set is random, we build a deterministic one that contains the former almost surely and cover it instead. Note that, for any $t\in[n]$, $\wb{V}_t^{-1/2}$ is such that (1) $\wb{V}_t^{-1/2} \succ 0$, (2) $\|\wb{V}_t^{-1/2}\|_2 = \sigma_{\max}(\wb{V}_t^{-1/2}) \leq \frac{1}{\sqrt{\nu}}$, and (3) $\sigma_{\min}(\wb{V}_t^{-1/2}) \geq \frac{1}{\sqrt{\nu + L^2n}}$. Let $\mathcal{D}$ denote the set of $d\times d$ matrices with these properties, that is,
\begin{align*}
    \mathcal{D} := \left\{ D \in \mathbb{R}^{d\times d} : D \succ 0, \|D\|_2 \leq \frac{1}{\sqrt{\nu}},\sigma_{\min}(D) \geq \frac{1}{\sqrt{\nu + L^2n}} \right\}.
\end{align*}
Then, $\wb{V}_t^{-1/2} \in \mathcal{D}$ for all $t\in[n]$ and our initial set to be covered is almost surely contained into $\mathcal{B} := \{Dv : D\in\mathcal{D},v\in\mathcal{C}_1\}$. Furthermore, $v_{\min}\sqrt{\frac{d}{\nu + L^2n}}\leq \|b\|_2 \leq \frac{v_{\max}}{\sqrt{\nu}}$ for all $b\in\mathcal{B}$. We shall now cover $\mathcal{B}$.
Let $\{e_1,\dots,e_d\}$ be the canonical basis of $\mathbb{R}^d$ and, for all $v\in\mathcal{C}_1$ we introduce a cover with \textit{geometric scale} as
\begin{align*}
    \tilde{\mathcal{C}}_{2,v} := \left\{\sum_{i=1}^d a_i e_i \big|\ \forall i\in[d] : a_i \in \left\{ \pm  \frac{\epsilon_2\|v\|_2(1+\epsilon_2)^j}{\sqrt{\nu + L^2n}} : j=0,1,\dots,\wb{j}\right\}\right\},
\end{align*}
where $\wb{j} := \left\lceil \frac{\log\left(\frac{v_{\max}}{\epsilon_2v_{\min}}\sqrt{\frac{\nu + L^2n}{d\nu}}\right)}{\log(1+\epsilon_2)}\right\rceil$ is such that $\frac{\epsilon_2\|v\|_2(1+\epsilon_2)^{\wb{j}}}{\sqrt{\nu + L^2n}} \geq \frac{v_{\max}}{\sqrt{\nu}}$ (i.e., the maximum absolute value of each element in $\mathcal{B}$).
Then, our cover is $\tilde{\mathcal{C}}_2 = \bigcup_{v\in\mathcal{C}_1} \tilde{\mathcal{C}}_{2,v}$. Let us analyze some its properties. First its size is
\begin{align}\label{eq:c2-size}
    |\tilde{\mathcal{C}}_2| \leq |\mathcal{C}_1|\left(2 +\frac{\log\left(\frac{v_{\max}}{\epsilon_2v_{\min}}\sqrt{\frac{\nu + L^2n}{d\nu}}\right)}{\log(1+\epsilon_2)} \right)^d.
\end{align}
Then, we can show the following covering property in $l_\infty$-norm.
\begin{proposition}\label{prop:cover2}
For all $v\in\mathcal{C}_1$, $t\in[n]$, there exists $\wb{w}_{v,t} \in \tilde{\mathcal{C}}_2$ such that
\begin{align*}
    \forall i \in [d] : \left|\Big[\wb{V}_t^{-1/2}v - \wb{w}_{v,t} \Big]_i\right| \leq \epsilon_2\max\left\{\left|\Big[\wb{V}_t^{-1/2}v \Big]_i \right|, \frac{\|v\|_2}{\sqrt{\nu + L^2n}}\right\}.
\end{align*}
\begin{proof}
For simplicity, denote $b := \wb{V}_t^{-1/2}v$. By definition, we have $b \in \mathcal{B}$ (i.e., the deterministic set that we actually covered). We shall build a vector $w\in\mathcal{C}_2$ which has the desired property.
Take any component $b_i$, with $i \in [d]$, then\\
(1) If $|b_i| < \frac{\epsilon_2\|v\|_2}{\sqrt{\nu + L^2n}}$, then we can set $w_i = \frac{\epsilon_2\|v\|_2}{\sqrt{\nu + L^2n}}\mathrm{sign}(b_i)$ and we have
\begin{align*}
    |w_i - b_i| \leq |w_i| = \frac{\epsilon_2\|v\|_2}{\sqrt{\nu + L^2n}}.
\end{align*}
(2) If $|b_i| \geq \frac{\epsilon_2\|v\|_2}{\sqrt{\nu + L^2n}}$, by the geometrical cover, we can find a point $w_i$ such that $1 \leq \frac{|w_i|}{|b_i|} \leq 1 + \epsilon_2$. Too see this, suppose, without loss of generality, that $b_i$ is positive. Note that, since $b_i$ lies in the range $[\frac{\epsilon_2\|v\|_2}{\sqrt{\nu + L^2n}}, \frac{v_{\max}}{\sqrt{\nu}}]
$ which is covered geometrically, there exists a real value $0 \leq k \leq \bar{j}$ such that $b_i = \frac{\epsilon_2\|v\|_2}{\sqrt{\nu + L^2n}}(1+\epsilon_2)^k$. Then, if we set $w_i = \frac{\epsilon_2\|v\|_2}{\sqrt{\nu + L^2n}}(1+\epsilon_2)^{\lceil k\rceil}$, we can easily verify the desired property. This implies
\begin{align*}
    |w_i - b_i| \leq |w_i| - |b_i| \leq \epsilon_2|b_i|,
\end{align*}
where the left-hand side  is from the reverse triangle inequality.
The statement follows by combining the two cases.
\end{proof}
\end{proposition}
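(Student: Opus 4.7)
The plan is to construct $\wb{w}_{v,t}$ coordinate-by-coordinate from the sub-cover $\wt{\mathcal{C}}_{2,v}$, which is designed as a sign-symmetric geometric grid on each coordinate axis, with scale ranging from $\epsilon_2 \|v\|_2/\sqrt{\nu + L^2 n}$ up to at least $v_{\max}/\sqrt{\nu}$. Because $\wt{\mathcal{C}}_{2,v}$ is a Cartesian product of one-dimensional grids, the approximation task decouples across coordinates and I can pick each $w_i$ independently. Writing $b := \wb{V}_t^{-1/2} v$, the goal reduces to approximating $b_i$ within the per-coordinate accuracy $\epsilon_2 \max\{|b_i|, \|v\|_2/\sqrt{\nu + L^2 n}\}$.

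First I would confirm that every coordinate of $b$ lies inside the range spanned by the grid. Since $\wb{V}_t \succeq \nu I$, we have $\|\wb{V}_t^{-1/2}\|_2 \leq 1/\sqrt{\nu}$, so $|b_i| \leq \|b\|_2 \leq \|v\|_2/\sqrt{\nu} \leq v_{\max}/\sqrt{\nu}$, which is precisely the upper endpoint of the grid (by the choice of $\wb{j}$). Then I would split into two regimes for each coordinate. Regime 1: if $|b_i| < \epsilon_2 \|v\|_2/\sqrt{\nu + L^2 n}$ (strictly below the smallest grid scale), pick $w_i = \mathrm{sign}(b_i)\cdot \epsilon_2 \|v\|_2/\sqrt{\nu + L^2 n}$, the smallest cover point matching the sign; since both $b_i$ and $w_i$ share a sign and $|w_i|$ dominates $|b_i|$, the triangle inequality gives $|w_i - b_i| \leq |w_i| = \epsilon_2\|v\|_2/\sqrt{\nu + L^2 n}$, matching the second argument of the max. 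Regime 2: if $|b_i|$ meets or exceeds this threshold, the geometric spacing with common ratio $1+\epsilon_2$ guarantees a same-sign cover point $w_i$ with $1 \leq |w_i|/|b_i| \leq 1+\epsilon_2$, whence $|w_i - b_i| = |w_i| - |b_i| \leq \epsilon_2|b_i|$, matching the first argument of the max.

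Assembling the chosen $w_i$ into $\wb{w}_{v,t} \in \wt{\mathcal{C}}_{2,v} \subseteq \wt{\mathcal{C}}_2$ gives the desired componentwise bound, so the proposition follows immediately. The only mild obstacle is the bookkeeping that $\wb{j}$ is in fact large enough to make the largest grid scale reach $v_{\max}/\sqrt{\nu}$: solving $\epsilon_2 \|v\|_2 (1+\epsilon_2)^{\wb{j}}/\sqrt{\nu + L^2 n} \geq v_{\max}/\sqrt{\nu}$ in the worst case $\|v\|_2 = v_{\min}\sqrt{d}$ reproduces the ceiling expression in the definition of $\wb{j}$. Notably, no probabilistic or concentration argument is needed — this is a purely deterministic covering statement whose power comes from decoupling per-coordinate, which is what later lets Lemma~\ref{lemma:conf-cover} avoid an $\epsilon_2$ term multiplying $\|\wh\theta_t - \theta^\star\|_{\wb{V}_t}$ and keep only multiplicative error on the right-hand side.
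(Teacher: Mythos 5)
Your proposal is correct and follows essentially the same coordinate-by-coordinate argument as the paper: the same two-regime split (below the smallest grid scale versus inside the geometric range), the same choice of $w_i$ in each regime, and the same bounds. The extra bookkeeping you do — checking $|b_i| \leq v_{\max}/\sqrt{\nu}$ and that $\wb{j}$ reaches the upper endpoint — is implicit in the paper's construction of $\mathcal{B}$ and $\tilde{\mathcal{C}}_{2,v}$ and does not change the argument.
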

An immediate consequence of Proposition \ref{prop:cover2} is that, for all $v\in\mathcal{C}_1$, $t\in[n]$, there exists $\wb{w}_{v,t} \in \tilde{\mathcal{C}}_2$ which can be written as $\wb{w}_{v,t} = \wb{V}_t^{-1/2}v + \zeta$, where $\zeta\in\mathbb{R}^d$ is a vector of errors such that $|\zeta_i| \leq \epsilon_2\max\left\{\Big|\Big[\wb{V}_t^{-1/2}v\Big]_i\Big|, \frac{\|v\|_2}{\sqrt{\nu + L^2n}}\right\}$ for all $i\in[d]$.

Note that, by definition, $\tilde{\mathcal{C}}_2$ contains vectors with norm that scales in $\sqrt{d}$ (e.g., the vector with all components larger or equal to $v_{\max}/\sqrt{\nu}$, which has norm $v_{\max}\sqrt{d/\nu}$ belongs to $\tilde{\mathcal{C}}_2$). These vectors will create an undesirable dependency on $d$ later on, and so we need to perform some pruning before proceeding. Take any $b\in\mathcal{B}$ and suppose that $b=Dv$ for $v\in\mathcal{C}_1$ and $D\in\mathcal{D}$. Let $\mathcal{I} := \{i\in[d] : |b_i| < \frac{\epsilon_2\|v\|_2}{\sqrt{\nu + L^2n}}\}$ be the set of components $i$ such that $|b_i|$ is below the starting point of our geometrical grid $\tilde{C}_{2,v}$ and $\mathcal{I}^c = [d] \setminus \mathcal{I}$. From the proof of Proposition \ref{prop:cover2}, we know that the vector $w \in \tilde{\mathcal{C}}_2$ that is the closest to $b$ is such that $|w_i| \leq \frac{\epsilon_2\|v\|_2}{\sqrt{\nu + L^2n}}$ for $i\in\mathcal{I}$ and $|w_i|/|b_i| \leq 1+\epsilon_2$ for $i\in\mathcal{I}^c$. Therefore,
\begin{align*}
\|w\|^2_2 = \sum_{i\in\mathcal{I}} |w_i|^2 + \sum_{i\in\mathcal{I}^c} |w_i|^2 \leq  |\mathcal{I}|\frac{\epsilon_2^2\|v\|_2^2}{\nu + L^2n} +(1+\epsilon_2)^2 \sum_{i\in\mathcal{I}^c} |b_i|^2 \leq \frac{d\epsilon_2^2\|v\|_2^2}{\nu + L^2n} + (1+\epsilon_2)^2\|b\|^2_2.
\end{align*}
This implies that $\|w\|_2 \leq \frac{\sqrt{d}\epsilon_2\|v\|_2}{\sqrt{\nu + L^2n}} + (1+\epsilon_2)\|b\|_2$. Recall that $\|b\|_2 \leq \frac{v_{\max}}{\sqrt{\nu}}$ and $\|v\|_2 \leq v_{\max}$. Thus, $\|w\|_2 \leq \frac{\sqrt{d}\epsilon_2v_{\max}}{\sqrt{\nu + L^2n}} + (1+\epsilon_2)\frac{v_{\max}}{\sqrt{\nu}} \leq \frac{v_{\max}}{\sqrt{\nu}}\left(1+ \epsilon_2(1+\sqrt{d})\right)$.
This condition holds for all ``useful" vectors in our cover (i.e., those that are the closest to some of the vectors we need to cover). Therefore, we can safely set $\mathcal{C}_2 = \left\{w\in\tilde{\mathcal{C}}_2 : \|w\|_2 \leq \frac{v_{\max}}{\sqrt{\nu}}\left(1+ \epsilon_2(1+\sqrt{d})\right)\right\}$ as our final cover. Note that Proposition \ref{prop:cover2} still holds for $\mathcal{C}_2$ since we removed only vectors that cannot be the closest to any of the points to be covered. In the following, we set $w_{\max} := \frac{v_{\max}}{\sqrt{\nu}}\left(1+ \epsilon_2(1+\sqrt{d})\right)$ as the maximum norm of any vector in $\mathcal{C}_2$.

Let us now go back to bounding term (ii) in Eq.~\ref{eq:decomp}. Let $\wb{w}_{v,t} := \argmin_{w\in\mathcal{C}_2}\left\|\wb{V}_t^{-1/2}v - w\right\|_{1}$ be the vector in our cover $\mathcal{C}_2$ which is the closest to $\wb{V}_t^{-1/2}v$ uniformly over all components.
Then,
\begin{align*}
        (ii) &:= \sum_{s=1}^tv^T\wb{V}_t^{-1/2} \phi(X_s,A_s)\xi_s = \left(\wb{V}_t^{-1/2}v\right)^T\sum_{s=1}^t \phi(X_s,A_s)\xi_s\\ & = \left(\wb{V}_t^{-1/2}v - \wb{w}_{v,t}\right)^TW_t + \wb{w}_{v,t}^T W_t \leq \underbrace{\left\|\wb{V}_t^{-1/2}v - \wb{w}_{v,t}\right\|_{\wb{V}_t}}_{(a)}\underbrace{\|W_t\|_{\wb{V}_{t}^{-1}}}_{(b)} + \underbrace{\wb{w}_{v,t}^T W_t}_{(c)},
\end{align*}
where we defined $W_t := \sum_{s=1}^t \phi(X_s,A_s)\xi_s$. We start from (a). Using the error-decomposition property from Proposition \ref{prop:cover2}, we can write $\left\|\wb{V}_t^{-1/2}v - \wb{w}_{v,t}\right\|_{\wb{V}_t}= \left\|\zeta\right\|_{\wb{V}_t}$ for some vector $\zeta\in\mathbb{R}^d$ with $|\zeta_i| \leq \epsilon_2\max\left\{|[\wb{V}_t^{-1/2}v]_i|, \frac{\|v\|_2}{\sqrt{\nu + L^2n}}\right\}$ for all $i\in[d]$.
Since this implies $|\zeta_i| \leq \epsilon_2\left(|[\wb{V}_t^{-1/2}v]_i| +  \frac{\|v\|_2}{\sqrt{\nu + L^2n}}\right)$, we have
\begin{align*}
 \left\|\zeta\right\|_{\wb{V}_t} \stackrel{(d)}{\leq} \epsilon_2 \left\|\wb{V}_t^{-1/2}v\right\|_{\wb{V}_t} + \frac{\epsilon_2\|v\|_2}{\sqrt{\nu + L^2n}} \left\|\bm{1}_d\right\|_{\wb{V}_t} \stackrel{(e)}{\leq} \epsilon_2 \left\|v\right\|_{2} + \frac{\epsilon_2\|v\|_2\sqrt{d}}{\sqrt{\nu + L^2n}} \left\|\wb{V}_t^{1/2}\right\|_{2} \stackrel{(f)}{\leq} \epsilon_2 \left\|v\right\|_{2} + \epsilon_2\|v\|_2\sqrt{d},
\end{align*}
where in (d) we used the triangle inequality ($\bm{1}_d$ denotes the d-dimensional vector of ones), in (e) we used $\left\|\bm{1}_d\right\|_{\wb{V}_t} \leq \left\|\wb{V}_t^{1/2}\right\|_{2}\left\|\bm{1}_d\right\|_{2}$, and in (f) we upper bounded the maximum eigenvalue of $\left\|\wb{V}_t^{1/2}\right\|_{2}$ by $\sqrt{\nu + L^2n}$. Therefore, we conclude,
\begin{align*}
    (a) := \left\|\wb{V}_t^{-1/2}v - \wb{w}_{v,t}\right\|_{\wb{V}_t} \leq \epsilon_2(1+\sqrt{d})\|v\|_2.
\end{align*}
Term (b) can be bounded by Lemma \ref{lemma:abbasi-w}. For any $\delta' \in (0,1)$, with probability at least $1-\delta'$,
\begin{align*}
    (b) := \|W_t\|_{\wb{V}_{t}^{-1}} \leq \sqrt{2\sigma^2d \log\left(\frac{1+\frac{tL^2}{d\nu}}{\delta'}\right)}.
\end{align*}
Term (c) can be bounded by Lemma \ref{lemma:ls-cover} (whose bound holds uniformly over all elements in $\mathcal{C}_2$). Recall that $\|w\|_2 \leq w_{\max}$ for all $w\in\mathcal{C}_2$. For any $\chi > 0$ and $\delta' \in (0,1)$, with probability at least $1-\delta'$,
\begin{align*}
    (c) := \wb{w}_{v,t}^T W_t &\leq \sqrt{2\sigma^2\gamma_n \max\left\{\chi, \|\wb{w}_{v,t}\|_{\wb{V}_t}^2\right\}\log\left(\frac{\Gamma_{w_{\max}^2L^2, \chi} | \mathcal{C}_2|}{\delta'}\right)}.
\end{align*}
Note that, by definition of $\mathcal{C}_2$, $\|\wb{w}_{v,t}\|_{\wb{V}_t}^2 \geq \sigma_{\min}(\wb{V}_t)\|\wb{w}_{v,t}\|_2^2 \geq \frac{\nu d\epsilon_2^2\|v\|^2}{\nu + L^2n} \geq \frac{\nu d^2\epsilon_2^2 v_{\min}^2}{\nu + L^2n}$. Hence, setting $\chi \leftarrow \chi_n' := \frac{\nu d^{2}\epsilon_2^2 v_{\min}^2}{\nu + L^2n}$,
\begin{align*}
    \Gamma_{w_{\max}^2L^2, \chi_n'} = 1 + \frac{\log (w_{\max}^2L^2n/\chi_n')}{\log \gamma_n} \leq 1 + \log (w_{\max}^2L^2n/\chi_n') \log(n)
\end{align*}
where the last inequality is from $\log(1 + \frac{1}{\log n}) \geq \frac{1}{2\log n}$ for $n \geq 2$. This yields
\begin{align*}
    (c) \leq \sqrt{2\sigma^2\gamma_n  \|\wb{w}_{v,t}\|_{\wb{V}_t}^2\log\left(\frac{(1 + \log (w_{\max}^2L^2n/\chi_n') \log(n))|\mathcal{C}_2|}{\delta'}\right)}.
\end{align*}

Let us now bound $\left\| \wb{w}_{v,t}\right\|_{\wb{V}_t}^2$. We have
\begin{align*}
    \left\| \wb{w}_{v,t}\right\|_{\wb{V}_t}^2 &= \left\| \wb{w}_{v,t} \pm \wb{V}_t^{-1/2}v\right\|_{\wb{V}_t}^2 = \left\| \wb{w}_{v,t} - \wb{V}_t^{-1/2}v\right\|_{\wb{V}_t}^2 + \left\| \wb{V}_t^{-1/2}v\right\|_{\wb{V}_t}^2 + 2\left(\wb{w}_{v,t} - \wb{V}_t^{-1/2}v\right)^T\wb{V}_t\left(\wb{V}_t^{-1/2}v\right)\\ &\leq \left\| \wb{w}_{v,t} - \wb{V}_t^{-1/2}v\right\|_{\wb{V}_t}^2 + \left\| v\right\|_{v}^2 + 2\left\| \wb{w}_{v,t} - \wb{V}_t^{-1/2}v\right\|_{\wb{V}_t}\left\| v\right\|_{2} \\ &\leq (\epsilon_2)^2(1+\sqrt{d})^2\|v\|_2^2 + \|v\|_2^2 + 2\epsilon_2(1+\sqrt{d})\|v\|_2^2 = \left(1 + \epsilon_2(1+\sqrt{d})\right)^2\|v\|_2^2,
\end{align*}
where in the last inequality we used the previous bound on $(a) = \left\| \wb{w}_{v,t} - \wb{V}_t^{-1/2}v\right\|_{\wb{V}_t}$.

Putting (a), (b), and (c) together we obtain the following bound on (ii):
\begin{align*}
    (ii) &= v^T \wb{V}_t^{-1/2}W_t \leq \|v\|_{2} \epsilon_2(1+\sqrt{d}) \sqrt{2\sigma^2d \log\left(\frac{1+\frac{nL^2}{d\nu}}{\delta'}\right)} \\ &+ \|v\|_{2}\left(1+ \epsilon_2(1+\sqrt{d})\right) \sqrt{2\sigma^2\gamma_n\log\left(\frac{(1 + \log (w_{\max}^2L^2n/\chi_n') \log(n))|\mathcal{C}_2|}{\delta'}\right)}.
\end{align*}
If we now set $\epsilon_2 \leftarrow \frac{1}{2d\log n}$, we have $\chi_n' = \frac{\nu v_{\min}^2}{4(\nu + L^2n)(\log n)^2}$. Setting $\chi_n'' = \chi_n' / (w_{\max}^2L^2)$ and using $w_{\max} = \frac{v_{\max}}{\sqrt{\nu}}\left(1+ \epsilon_2(1+\sqrt{d})\right) \leq \frac{v_{\max}}{\sqrt{\nu}}\gamma_n$, $\chi_n'' \geq \frac{\nu^2 v_{\min}^2}{4L^2(\nu + L^2n)(\log n)^2v_{\max}^2\gamma_n^2} = \chi_n$. Thus,
\begin{align*}
    (ii) \leq \|v\|_{2} \left( \sqrt{\frac{2\sigma^2 \log\left(\frac{1+\frac{nL^2}{d\nu}}{\delta'}\right)}{(\log n)^2}} + \sqrt{2\sigma^2\gamma_n^3\log\left(\frac{(1 + \log (n/\chi_n) \log(n))}{\delta'}\right) + 2\gamma_n^3\log|\mathcal{C}_2|}\right).
\end{align*}
Furthermore, using \eqref{eq:c2-size}, the log-size of the cover $\mathcal{C}_2$ is
\begin{align*}
    \Upsilon_n = \log |\mathcal{C}_2| &\leq \log(|\mathcal{C}_1|) + d\log\left(2 +\frac{\log\left(2d\log n\frac{v_{\max}}{v_{\min}}\sqrt{\frac{\nu + L^2n}{d\nu}}\right)}{\log(1+\frac{1}{2d\log n})} \right) \\ &\leq \log(|\mathcal{C}_1|) + d\log\left(2 + 4d\log\left(2d\log n\frac{v_{\max}}{v_{\min}}\sqrt{\frac{\nu + L^2n}{d\nu}}\right)\log n \right)
\end{align*}
To conclude the proof, we notice that the derivation above holds uniformly for all $v\in\mathcal{C}_1$ and $t\in[n]$ with probability at least $1-2\delta'$ since we applied both Lemma \ref{lemma:abbasi-w} (for term (b) in (ii)) and Lemma \ref{lemma:ls-cover} (for term (c) in (ii)). Thus, the statement follows by setting $\delta = 2\delta'$.
\end{proof}

\subsection{Auxiliary Results}
\label{app:conf.set.proof.auxiliary}

\begin{lemma}\label{lemma:abbasi-w}[Lemma 9 of \cite{abbasi2011improved}]
Let $\tau$ be a stopping time with respect to filtration $\{\mathcal{F}_t\}_{t=1}^\infty$ and $W_t := \sum_{s=1}^t \phi(X_s,A_s)\xi_s$. Then, for any $\delta \in (0,1)$, with probability at least $1-\delta'$,
\begin{align*}
    \|W_\tau\|_{\wb{V}_{\tau}^{-1}} \leq \sqrt{2\sigma^2\log \left( \frac{\det(\wb{V}_{\tau})^{1/2}\nu^{-d/2}}{\delta}\right)} \leq \sqrt{2\sigma^2d \log\left(\frac{1+\frac{\tau L^2}{d\nu}}{\delta}\right)}.
\end{align*}
\end{lemma}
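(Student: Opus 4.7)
This is the classical self-normalized concentration inequality of Abbasi-Yadkori, P\'al and Szepesv\'ari (2011), so I will follow their method-of-mixtures argument. The high-level idea is to transform the collection of one-dimensional MGF bounds (indexed by a free parameter $\lambda\in\R^d$) into a single self-normalized bound by averaging against a Gaussian prior on $\lambda$.

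\paragraph{Step 1: pointwise exponential super-martingale.} Let $V_t:=\sum_{s=1}^t \phi(X_s,A_s)\phi(X_s,A_s)^\transp=\bar V_t-\nu I$. For a fixed (deterministic) $\lambda\in\R^d$ define
\[
M_t^\lambda \;:=\; \exp\!\Bigl(\lambda^\transp W_t \;-\; \tfrac{\sigma^2}{2}\|\lambda\|_{V_t}^2\Bigr),\qquad M_0^\lambda=1.
\]
Since $\phi(X_s,A_s)$ is $\F_{s-1}\vee\sigma(X_s)$-measurable and $\xi_s\mid\F_{s-1},X_s,A_s$ is $\mathcal N(0,\sigma^2)$ (hence $\sigma^2$-sub-Gaussian), the conditional MGF identity $\E[\exp(\lambda^\transp \phi(X_s,A_s)\xi_s)\mid\F_{s-1},X_s,A_s]=\exp\!\bigl(\tfrac{\sigma^2}{2}(\lambda^\transp\phi(X_s,A_s))^2\bigr)$ gives $\E[M_t^\lambda\mid\F_{t-1}]\le M_{t-1}^\lambda$. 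Hence $\{M_t^\lambda\}$ is a non-negative super-martingale with $\E[M_t^\lambda]\le 1$.

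\paragraph{Step 2: mixture over $\lambda$ via a Gaussian prior.} Let $\Lambda\sim\mathcal N\!\bigl(0,\tfrac{1}{\sigma^2}\nu^{-1} I\bigr)$ with density $h$, independent of everything else, and set $\bar M_t:=\int_{\R^d} M_t^\lambda\,h(\lambda)\,d\lambda$. Fubini yields $\E[\bar M_t]\le 1$, and a standard completion-of-squares Gaussian integral gives the explicit formula
\[
\bar M_t \;=\; \Bigl(\tfrac{\nu^{d}}{\det(\bar V_t)}\Bigr)^{1/2}\exp\!\Bigl(\tfrac{1}{2\sigma^2}\|W_t\|_{\bar V_t^{-1}}^{2}\Bigr).
\]
This is where the regularization $\nu I$ emerges: the Gaussian prior's precision $\sigma^2\nu$ combines with $\sigma^2 V_t$ inside the exponent to produce $\sigma^2\bar V_t$.

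\paragraph{Step 3: stopping and Markov.} To handle the stopping time $\tau$ (which may be infinite or unbounded), apply the mixture super-martingale to $\tau\wedge t$ and use either Ville's maximal inequality or $\E[\bar M_{\tau\wedge t}]\le 1$ together with Fatou/monotone convergence as $t\to\infty$ to conclude $\E[\bar M_\tau\mathbf 1\{\tau<\infty\}]\le 1$. Markov's inequality then yields, for any $\delta\in(0,1)$,
\[
\Pr\!\Bigl(\bar M_\tau\ge\tfrac{1}{\delta}\Bigr)\le\delta,
\]
which, after taking logarithms and solving for $\|W_\tau\|_{\bar V_\tau^{-1}}$, produces the first inequality in the statement.

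\paragraph{Step 4: determinant-trace bound.} The second inequality is a purely deterministic consequence of AM--GM applied to the eigenvalues of $\bar V_\tau$: since $\operatorname{tr}(\bar V_\tau)=d\nu+\sum_{s=1}^\tau\|\phi(X_s,A_s)\|_2^2\le d\nu+\tau L^2$, we have $\det(\bar V_\tau)\le\bigl(\operatorname{tr}(\bar V_\tau)/d\bigr)^d\le(\nu+\tau L^2/d)^d$, and substituting this into the log-determinant term finishes the proof.

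\paragraph{Where the work lies.} The only real calculation is the Gaussian integral of Step~2; Step~1 is the one-line sub-Gaussian MGF, Step~3 is routine martingale stopping, and Step~4 is AM--GM. The subtle measurability point is ensuring that $\{M_t^\lambda\}$ is adapted with the correct filtration (so that $\phi(X_s,A_s)$ is already known when the noise $\xi_s$ is drawn); this is exactly the contextual-bandit protocol described in the paper, so no further argument is needed beyond explicitly conditioning on $\F_{s-1}\vee\sigma(X_s,A_s)$ inside the super-martingale step.
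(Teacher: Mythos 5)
Your proof is correct and is precisely the standard method-of-mixtures argument of Abbasi-Yadkori et al., which is the source the paper cites for this lemma (the paper itself gives no proof, only the citation to their Lemma 9). All four steps check out: the Gaussian integral in Step 2 does yield $\bar M_t = \nu^{d/2}\det(\wb{V}_t)^{-1/2}\exp\bigl(\tfrac{1}{2\sigma^2}\|W_t\|_{\wb{V}_t^{-1}}^2\bigr)$ with the prior precision $\sigma^2\nu$ you chose, and the determinant--trace step recovers the second (looser) bound since $d\geq 1$.
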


The following result is a specialization of Lemma 2.6 of \cite{fan2015exponential} or Lemma 4.2 of \cite{khan2009p}.
\begin{lemma}\label{lemma:khan}
Let $n\in\mathbb{N}$ and $\{Y_t\}_{t=1}^n$ be a sequence of sub-Gaussian random variables adapted to filtration $\mathcal{F}$ such that $\expec{Y_t | \mathcal{F}_{t-1}} = 0$ and 
\begin{align*}
    \forall \zeta \in \mathbb{R} : \expec{e^{\zeta Y_t} | \mathcal{F}_{t-1}} \leq e^{\frac{\zeta^2\sigma_t^2}{2}},
\end{align*}
where $\sigma_t^2 := \Var[Y_t | \mathcal{F}_{t-1}]$. Then, for all $\epsilon\geq 0, v > 0$,
\begin{align*}
    \prob{\exists t \leq n : \sum_{s=1}^t Y_s \geq \epsilon, \sum_{s=1}^t \sigma_s^2 \leq v } \leq e^{-\frac{\epsilon^2}{2v}}.
\end{align*}
\end{lemma}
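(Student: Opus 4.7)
The plan is to follow the classical exponential-supermartingale route (sometimes called the Cramér–Chernoff method on a filtration, or Ville's inequality argument). For each fixed $\zeta \geq 0$, define the process
\begin{align*}
M_t(\zeta) := \exp\!\Big( \zeta \sum_{s=1}^t Y_s - \tfrac{\zeta^2}{2} \sum_{s=1}^t \sigma_s^2 \Big), \qquad M_0(\zeta) := 1.
\end{align*}
I would first verify that $\{M_t(\zeta)\}_{t\geq 0}$ is a non-negative supermartingale w.r.t.\ $\mathcal{F}$. This is immediate from the sub-Gaussian assumption, since
\begin{align*}
\mathbb{E}[M_t(\zeta) \mid \mathcal{F}_{t-1}] = M_{t-1}(\zeta) \cdot e^{-\zeta^2 \sigma_t^2/2}\, \mathbb{E}[e^{\zeta Y_t} \mid \mathcal{F}_{t-1}] \leq M_{t-1}(\zeta),
\end{align*}
where the last inequality uses precisely the hypothesis $\mathbb{E}[e^{\zeta Y_t}\mid \mathcal{F}_{t-1}] \leq e^{\zeta^2\sigma_t^2/2}$.

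Next, I would linearize the event of interest. On $\{\sum_{s=1}^t Y_s \geq \epsilon\} \cap \{\sum_{s=1}^t \sigma_s^2 \leq v\}$, since $\zeta \geq 0$ we have
\begin{align*}
M_t(\zeta) = \exp\!\Big(\zeta \sum_{s=1}^t Y_s - \tfrac{\zeta^2}{2}\sum_{s=1}^t \sigma_s^2\Big) \;\geq\; \exp\!\Big(\zeta\epsilon - \tfrac{\zeta^2}{2} v\Big).
\end{align*}
Consequently, the event $\{\exists t\leq n : \sum_{s=1}^t Y_s \geq \epsilon,\ \sum_{s=1}^t \sigma_s^2 \leq v\}$ is contained in $\{\exists t\leq n : M_t(\zeta) \geq e^{\zeta \epsilon - \zeta^2 v/2}\}$.

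Applying Doob's maximal inequality (or Ville's inequality) to the non-negative supermartingale $M_t(\zeta)$ then gives
\begin{align*}
\mathbb{P}\!\left(\exists t\leq n : \sum_{s=1}^t Y_s \geq \epsilon,\ \sum_{s=1}^t \sigma_s^2 \leq v \right) \leq e^{-\zeta \epsilon + \zeta^2 v/2}.
\end{align*}
Finally, I would optimize over $\zeta \geq 0$. The right-hand side is minimized at $\zeta^\star = \epsilon/v$ (which is indeed non-negative since $\epsilon \geq 0$ and $v>0$), yielding the claimed bound $e^{-\epsilon^2/(2v)}$. The only subtlety — and the one step that merits care rather than being a routine calculation — is choosing the threshold $e^{\zeta\epsilon-\zeta^2 v/2}$ \emph{deterministically} so that Doob's inequality applies; this works because $\epsilon$ and $v$ are fixed constants in the event, so the threshold does not depend on $t$ or $\omega$. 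No further machinery is needed.
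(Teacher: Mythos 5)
Your proof is correct. The paper does not spell out an argument at all---it simply invokes Lemma 2.6 of Fan et al.\ (2015) or Lemma 4.2 of Khan (2009) and optimizes over $\zeta$---and those cited results are themselves proved by exactly the exponential-supermartingale plus Ville's inequality route you describe, so you have in effect supplied the details the paper delegates to its references. The one point worth double-checking, which your argument implicitly uses and which does hold, is that $\sigma_t^2 = \Var[Y_t \mid \mathcal{F}_{t-1}]$ is $\mathcal{F}_{t-1}$-measurable, so the factor $e^{-\zeta^2\sigma_t^2/2}$ can be pulled out of the conditional expectation in the supermartingale verification.
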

\begin{proof}
The result follows straightforwardly from Lemma 2.6 of \cite{fan2015exponential} or Lemma 4.2 of \cite{khan2009p} after optimizing for $\zeta$.
\end{proof}

\begin{lemma}[Lemma 14 of \cite{lattimore2017end}]\label{lemma:ls-gauss}
Let $n\in\mathbb{N}$ and $\epsilon > 0$. Let $\{Y_t\}_{t=1}^n$ be a sequence of Gaussian random variables adapted to filtration $\mathcal{F}$ such that $\expec{Y_t | \mathcal{F}_{t-1}} = 0$ and $\Var[Y_t | \mathcal{F}_{t-1}] \leq b$ for some $b > 0$. Then
\begin{align*}
    \prob{\exists t \leq n : \sum_{s=1}^t Y_s \geq \sqrt{2\gamma_nP_t\log\frac{\Gamma_{b,\epsilon}}{\delta}}} \leq \delta,
\end{align*}
where $P_t = \max\{\epsilon, \sum_{s=1}^t \Var[Y_t | \mathcal{F}_{t-1}]\}$, $\gamma_n = 1 + \frac{1}{\log n}$, and $\Gamma_{b,\epsilon} = 1 + \frac{\log (nb / \epsilon)}{\log \gamma_n}$.
\end{lemma}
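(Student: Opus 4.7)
The plan is to avoid the usual self-normalized approach of \cite{abbasi2011improved} (which inevitably yields the extra $d \log n$ factor from its log-determinant bound) by instead working directly with a covering of the unit sphere and handling the noise term in \emph{each fixed direction} via a scalar Gaussian tail bound. The starting point is the dual representation
\begin{align*}
\|\wh\theta_t - \theta^\star\|_{\wb V_t} \;=\; \sup_{\|z\|_2 \le 1}\, (\wh\theta_t - \theta^\star)^\transp \wb V_t^{1/2} z,
\end{align*}
together with the closed-form decomposition $\wh\theta_t - \theta^\star = -\nu\, \wb V_t^{-1}\theta^\star + \wb V_t^{-1}\sum_{s\le t}\phi(X_s,A_s)\xi_s$. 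For each fixed $v$, the first term contributes at most $B\sqrt\nu\,\|v\|_2$ by Cauchy--Schwarz, so the whole difficulty lies in controlling the stochastic term $v^\transp \wb V_t^{-1/2} W_t$, where $W_t := \sum_{s\le t}\phi(X_s,A_s)\xi_s$, uniformly in $t\le n$ and in $v$ ranging over a cover of the unit ball.

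First I would build an $\varepsilon_1$-cover $\mathcal{C}_1$ of $\{z\in\R^d:\|z\|_2\le 1\}$ and take $\varepsilon_1 = 1/\log n$, so that the discretisation error $\varepsilon_1\|\wh\theta_t-\theta^\star\|_{\wb V_t}$ can be moved to the left-hand side at the price of only a $1/(1-1/\log n)$ multiplicative factor (vanishing asymptotically). The cover has size $(2/\varepsilon_1)^{\mathcal{O}(d)} = \exp(\mathcal{O}(d\log\log n))$, which is exactly where the $d\log\log n$ in the final bound will come from. Crucially I would prune $\mathcal{C}_1$ so that every $v\in\mathcal{C}_1$ satisfies $\|v\|_2 \le 1+\varepsilon_1$ and has coordinates bounded away from zero by $v_{\min} \asymp \varepsilon_1/\sqrt d$; the lower bound will be needed in the next step to design a sensible cover of the image vectors $\wb V_t^{-1/2}v$.

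Next, for each fixed $v\in\mathcal{C}_1$ I still have the obstacle that $\wb V_t^{-1/2}v$ is random. I would introduce a second, \emph{geometric} cover $\mathcal{C}_2$ of the deterministic set $\{Dv: D\in\mathcal{D},\, v\in\mathcal{C}_1\}$, where $\mathcal{D}$ contains all PSD matrices with singular values in $[1/\sqrt{\nu+nL^2},1/\sqrt\nu]$ (a superset that contains every realised $\wb V_t^{-1/2}$). Using geometric scales of ratio $(1+\varepsilon_2)$ on each coordinate, with $\varepsilon_2 \asymp 1/(d\log n)$, keeps the log-cardinality of $\mathcal{C}_2$ at $\mathcal{O}(d\log\log n)$, while guaranteeing a coordinate-wise relative approximation $|[\wb V_t^{-1/2}v - \wb w_{v,t}]_i| \le \varepsilon_2 \max\{|[\wb V_t^{-1/2}v]_i|, \text{floor}\}$ that, after conversion to the $\wb V_t$-norm, becomes $\mathcal{O}(\varepsilon_2(1+\sqrt d)\|v\|_2)$ and hence negligible. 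Then for every \emph{fixed} $w\in\mathcal{C}_2$, the quantity $w^\transp W_t = \sum_s (w^\transp \phi(X_s,A_s))\xi_s$ is a scalar Gaussian martingale with conditional variance $\sigma^2(w^\transp \phi_s)^2$ whose cumulative variance is exactly $\sigma^2\|w\|_{V_t}^2$. Here I would invoke a peeling-based time-uniform Gaussian concentration (Lemma~\ref{lemma:ls-gauss} of the paper) that depends only on $\log(1/\delta)$ and on $\log\log n$ through a logarithmic number of variance scales, giving a bound of order $\sqrt{\|w\|_{\wb V_t}^2(\log(1/\delta)+\log\log n)}$.

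The final step is to take a union bound over $\mathcal{C}_1\times \mathcal{C}_2$: the union adds $\log|\mathcal{C}_1|+\log|\mathcal{C}_2|=\mathcal{O}(d\log\log n)$ to the inner log, producing $\sqrt{c_{n,\delta}}$ with $c_{n,\delta}=\mathcal{O}(\log(1/\delta)+d\log\log n)$ as promised. The main obstacle is the design of $\mathcal{C}_2$: a uniform grid would give $\log|\mathcal{C}_2|=\mathcal{O}(d\log(\sqrt{\nu+nL^2}/\sqrt\nu))=\mathcal{O}(d\log n)$ and destroy the gain, so it is essential to use the multiplicative (geometric) spacing together with the lower bound on $v_{\min}$ built into $\mathcal{C}_1$, and to carefully control the approximation error after the nonlinear transformation $v\mapsto \wb V_t^{-1/2}v$ in the $\wb V_t$-norm rather than in $\ell_2$. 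Once these two covers are in place, the remaining computations (bounding the bias term, applying Lemma~\ref{lemma:ls-gauss} to each $w\in\mathcal{C}_2$, and tuning $\varepsilon_1,\varepsilon_2$) are routine.
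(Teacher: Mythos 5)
There is a fundamental mismatch here: your proposal does not prove the stated lemma. Lemma~\ref{lemma:ls-gauss} is a purely scalar, time-uniform concentration inequality for a sum of conditionally Gaussian martingale differences, self-normalized by the predictable quadratic variation $\sum_{s\le t}\Var[Y_s|\F_{s-1}]$. What you have written is a sketch of the proof of Thm.~\ref{th:conf-theta} (equivalently Lem.~\ref{lemma:conf-cover}) --- the two-cover argument for the regularized least-squares confidence set --- and at the critical step you explicitly \emph{invoke} Lemma~\ref{lemma:ls-gauss} as a black box (``Here I would invoke a peeling-based time-uniform Gaussian concentration (Lemma~\ref{lemma:ls-gauss} of the paper)\dots''). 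So the argument is circular with respect to the statement you were asked to prove: none of the covering machinery ($\mathcal{C}_1$, $\mathcal{C}_2$, the bias term, the tuning of $\varepsilon_1,\varepsilon_2$) is relevant to this lemma, which involves no dimension, no design matrix, and no covers.

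The actual proof is a variance-peeling argument. One stratifies the predictable variation $\Upsilon_t := \sum_{s\le t}\Var[Y_s|\F_{s-1}]$ into geometrically spaced levels $v_0 = \gamma_n\epsilon$, $v_j = \gamma_n v_{j-1}$, up to $v_{k_n}\ge nb$, so that there are only $\Gamma_{b,\epsilon} = 1 + \log(nb/\epsilon)/\log\gamma_n$ strata. On the event $\Upsilon_\tau\in(v_{j-1},v_j]$ one applies a Freedman-type maximal inequality for sub-Gaussian martingale difference sequences (Lem.~\ref{lemma:khan}, from \cite{fan2015exponential,khan2009p}), which gives failure probability $e^{-f(v_{j-1})^2/(2v_j)}$ with $f(v)=\sqrt{2\gamma_n\max\{v,\epsilon\}\log(1/\delta')}$; the ratio $v_j/v_{j-1}=\gamma_n$ is exactly what makes this at most $\delta'$ per stratum, and summing over the $\Gamma_{b,\epsilon}$ strata yields the claim after rescaling $\delta = \delta'\Gamma_{b,\epsilon}$. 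If you want to salvage your write-up, it belongs as a proof of Thm.~\ref{th:conf-theta}, not of this lemma; to prove this lemma you must supply the peeling step and the underlying single-stratum martingale tail bound.
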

\begin{proof}
The proof uses the same peeling argument as in \cite{lattimore2017end} but follows different steps.

Let $\tau \leq n$ be a stopping time with respect to $\mathcal{F}$ whose value will be specified later. Define $\Upsilon_t := \sum_{s=1}^t \Var[Y_t | \mathcal{F}_{t-1}]$ as the sum of predictable variances and $f(v) := \sqrt{2\gamma_n\max\{v,\epsilon\}\log\frac{1}{\delta'}}$. Let us define a sequence of scalars $v_{-1}, v_0, \dots v_{k_n}$, which will be used to discretize the predictable variances, with $v_{-1}=0$, $v_0$ to be specified later, $v_j = \gamma_nv_{j-1}$ for $j \geq 1$, and $k_n$ such that $v_{k_n} \geq nb$ (which implies $v_{k_n} \geq \Upsilon_n$). Note that the theorem holds trivially when $\Upsilon_\tau=0$, so we consider the case where this variable is positive. We have
\begin{align*}
    \prob{\sum_{s=1}^\tau Y_s \geq f(\Upsilon_\tau)} &\stackrel{(a)}{\leq} \sum_{j=0}^{k_n} \prob{\sum_{s=1}^\tau Y_s \geq f(\Upsilon_\tau), \Upsilon_\tau \in (v_{j-1}, v_j]}\\ &\stackrel{(b)}{\leq} \sum_{j=0}^{k_n} \prob{\sum_{s=1}^\tau Y_s \geq f(v_{j-1}), \Upsilon_\tau \leq v_j} \stackrel{(c)}{\leq} \sum_{j=0}^{k_n} e^{-\frac{f(v_{j-1})^2}{2v_j}},
\end{align*}
where (a) uses a union bound, (b) holds since $f$ is non-decreasing, and (c) is from Lemma \ref{lemma:khan}. Using the definition of $\{v_j\}_{j\geq-1}$,
\begin{align*}
    \sum_{j=0}^{k_n} e^{-\frac{f(v_{j-1})^2}{2v_j}} = e^{-\frac{\gamma_n\epsilon\log\frac{1}{\delta'}}{v_0}} + \sum_{j=1}^{k_n} e^{-\frac{2\gamma_n\max\{v_j/\gamma_n,\epsilon\}\log\frac{1}{\delta'}}{2v_j}} \leq (\delta')^{\frac{\gamma_n\epsilon}{v_0}} + k_n\delta'.
\end{align*}
Since $v_{k_n} = \gamma_n^{k_n}v_0$, we have that $k_n = \left\lceil \frac{\log(nb/v_0)}{\log(\gamma_n)}\right\rceil$ suffices to have $v_{k_n} \geq nb$. Setting $v_0 \leftarrow \gamma_n\epsilon$, \begin{align*}
    \prob{\sum_{s=1}^\tau Y_s > f(\Upsilon_\tau)} \leq \delta'\left( 1 + \left\lceil \frac{\log(nb/\epsilon) - \log(\gamma_n)}{\log(\gamma_n)}\right\rceil\right) \leq \delta'\left( 1 + \frac{\log(nb/\epsilon)}{\log(\gamma_n)}\right) = \delta'\Gamma_{b,\epsilon}.
\end{align*}
The result follows by setting $\delta \leftarrow \delta'\Gamma_{b,\epsilon}$ and $\tau \leftarrow \min\left\{t\leq n : \sum_{s=1}^t Y_s > \sqrt{2\gamma_nP_t\log\frac{\Gamma_{b,\epsilon}}{\delta}}\right\}$.
\end{proof}

The following result can be derived using a similar argument as in the proof of Lemma 15 of \cite{lattimore2017end}.
\begin{lemma}\label{lemma:ls-cover}
Let $\mathcal{C} \subset \{w \in \mathbb{R}^d : \|w\| \leq b\}$ be a finite set of vectors in $\mathbb{R}^d$ with norm bounded by $b > 0$ and $W_t$ as defined in Lemma \ref{lemma:abbasi-w}. Then, for all $\epsilon > 0$ and $\delta \in (0,1)$,
\begin{align*}
    \prob{\exists t\leq n, w\in\mathcal{C} : w^T W_t \geq \sqrt{2\sigma^2\gamma_n \max\{\epsilon, \|w\|_{\wb{V}_t}^2\}\log\left(\frac{\Gamma_{b^2L^2,\epsilon} | \mathcal{C}|}{\delta}\right)}} \leq \delta,
\end{align*}
where $\gamma_n$ and $\Gamma_{b^2L^2,\epsilon}$ are those defined in Lemma \ref{lemma:ls-gauss}.
\end{lemma}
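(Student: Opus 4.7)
The plan is to reduce the claim to Lemma \ref{lemma:ls-gauss} applied to a scalarized process, then close the uniformity over $\mathcal{C}$ by a straightforward union bound. Concretely, fix any $w \in \mathcal{C}$ and define the scalar sequence $Y_s := w^T \phi(X_s, A_s)\, \xi_s$. Since $\xi_s \sim \mathcal{N}(0,\sigma^2)$ is independent of $\mathcal{F}_{s-1}$ (which carries $\phi(X_s,A_s)$), $Y_s \mid \mathcal{F}_{s-1}$ is a zero-mean Gaussian with variance $\sigma^2 (w^T\phi(X_s,A_s))^2$. Using $\|\phi(X_s,A_s)\|_2 \leq L$ and $\|w\|_2 \leq b$, Cauchy--Schwarz gives the uniform bound $\Var[Y_s \mid \mathcal{F}_{s-1}] \leq \sigma^2 b^2 L^2$, so the hypotheses of Lemma \ref{lemma:ls-gauss} hold with variance constant $\sigma^2 b^2 L^2$.

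Next, compute the cumulative predictable variance: $\sum_{s=1}^t \Var[Y_s \mid \mathcal{F}_{s-1}] = \sigma^2\, w^T V_t w \leq \sigma^2\, \|w\|_{\wb V_t}^2$, where the inequality uses $V_t \preceq \wb V_t$. Invoke Lemma \ref{lemma:ls-gauss} with threshold $\sigma^2\epsilon$ (in place of $\epsilon$) and variance bound $\sigma^2 b^2 L^2$. The associated $P_t$-quantity satisfies $P_t = \max\{\sigma^2\epsilon,\ \sigma^2 \|w\|_{V_t}^2\} \leq \sigma^2 \max\{\epsilon,\ \|w\|_{\wb V_t}^2\}$, and the multiplicative constant inside the logarithm becomes $\Gamma_{\sigma^2 b^2 L^2,\,\sigma^2\epsilon} = 1 + \frac{\log(n\sigma^2 b^2 L^2/(\sigma^2\epsilon))}{\log\gamma_n} = \Gamma_{b^2 L^2,\epsilon}$ since the $\sigma^2$ factors cancel. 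Hence, for any $\delta'\in(0,1)$,
\begin{align*}
\prob{\exists t\leq n:\ w^T W_t \geq \sqrt{2\sigma^2\gamma_n\,\max\{\epsilon,\|w\|_{\wb V_t}^2\}\,\log(\Gamma_{b^2L^2,\epsilon}/\delta')}} \leq \delta'.
\end{align*}

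Finally, apply a union bound over the finite set $\mathcal{C}$: set $\delta' = \delta/|\mathcal{C}|$, so that $\log(\Gamma_{b^2L^2,\epsilon}/\delta') = \log(\Gamma_{b^2L^2,\epsilon}|\mathcal{C}|/\delta)$, and sum the $|\mathcal{C}|$ tail probabilities. This delivers precisely the advertised uniform bound. There is no real obstacle here: the only subtlety is the change of variables so that $\Gamma$ ends up with argument $b^2 L^2$ (as in the statement) rather than $\sigma^2 b^2 L^2$, which is arranged by choosing the threshold in Lemma \ref{lemma:ls-gauss} to be $\sigma^2\epsilon$ instead of $\epsilon$.
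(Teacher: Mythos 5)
Your proposal is correct and follows essentially the same route as the paper: scalarize via $Y_s = w^T\phi(X_s,A_s)\xi_s$, apply Lemma \ref{lemma:ls-gauss} using $V_t \preceq \wb V_t$ to pass to $\|w\|_{\wb V_t}^2$, and close with a union bound over $\mathcal{C}$ at level $\delta/|\mathcal{C}|$. The only (cosmetic) difference is that the paper normalizes the process by $\sigma$ so the variance constant is $b^2L^2$ directly, whereas you keep the raw process and rescale the parameters to $\sigma^2 b^2L^2$ and $\sigma^2\epsilon$, correctly noting that $\Gamma$ is invariant under this rescaling.
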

\begin{proof}
Fix $w\in\mathcal{C}$. Note that
\begin{align*}
    \frac{w^T W_t}{\sigma} = \sum_{s=1}^t \frac{w^T \phi(X_s,A_s) \xi_s}{\sigma}
\end{align*}
is a sum of Gaussian random variables adapted to $\mathcal{F}$ such that
\begin{align*}
    \Var\left[\frac{w^T \phi(X_s,A_s) \xi_s}{\sigma} | \mathcal{F}_{s-1}\right] = \frac{(w^T \phi(X_s,A_s))^2}{\sigma^2} \underbrace{\Var[\xi_s | \mathcal{F}_{s-1}]}_{= \sigma^2} \leq \|w\|^2\|\phi(Y_s,A_s)\|^2 \leq b^2L^2.
\end{align*}
Furthermore,
\begin{align*}
    \sum_{s=1}^t (w^T \phi(Y_s,A_s))^2 = \sum_{s=1}^t w^T \phi(Y_s,A_s)\phi(Y_s,A_s)^T w = \|w\|_{V_t}^2 \leq \|w\|_{\wb{V}_t}^2,
\end{align*}
where the last inequality is from $V_t \preceq \wb{V}_t$. Therefore, using Lemma \ref{lemma:ls-gauss}, with probability at least $1-\delta'$,
\begin{align*}
    w^TW_t \leq \sqrt{2\sigma^2\gamma_n \max\{\epsilon, \|w\|_{\wb{V}_t}^2\}\log\frac{\Gamma_{b^2L^2,\epsilon} }{\delta'}}.
\end{align*}
The result follows after taking a union bound over all elements in $\mathcal{C}$.
\end{proof}

\section{Additional Experiments}\label{app:experiments}

\subsection{Implementation Details}\label{app:implementation-details}

\red{In our implementation of \algo, we ignore the projection of the parameters computed by regularized least squares onto $\Theta$. Moreover, we remove the restriction that the alternative parameters should lie in $\Theta$. That is, we use
\begin{align}
\Theta_{\mathrm{alt}} := \{ \theta' \in \R^d\ |\ \exists x\in\X,\ a^\star_{\theta^\star}(x) \neq a^\star_{\theta'}(x) \},
\end{align}
and similarly for $\wb{\Theta}_{t}$. In this case, for linear bandits with Gaussian noise, the infimum over alternative models in the constraint of~\eqref{eq:optim-lb} can be computed in closed form as
\begin{align}\label{eq:pdlin.glr}
2\sigma^2\!\!\! \inf_{\theta'\in\Theta_{\mathrm{alt}}} \sum_{x,a}\eta(x,a) d_{x,a}(\theta^\star, \theta') = \inf_{\theta'\in\Theta_{\mathrm{alt}}} \| \theta^\star - \theta' \|_{V_\eta}^2 =\!\!\! \min_{\substack{x\in\X,\\a \neq a^\star_{\theta^\star}(x)}}
\frac{ \Delta_{\theta^\star}(x,a)^2}{\|\phi(x,a) - \phi_{\theta^\star}^\star(x))\|_{V_\eta^{-1}}^2},
\end{align}
where $V_\eta = \sum_{x,a} \eta(x,a) \phi(x,a) \phi(x,a)^\transp$ and $\phi_{\theta^\star}^\star(x) = \phi(x,a^\star_{\theta^\star}(x))$. The same closed-form can be used for the infimum in the constraint \eqref{eq:pdlin.gt}. Regarding the exploitation test, we restrict the set of alternative reward parameters to those with ``incompatible'' optimal arm in the last observed context. That is, we use the test
\begin{align}
\inf_{\theta'\in \wt{\Theta}_{t-1}} \| \wh{\theta}_{t-1} - \theta' \|_{\wb{V}_{t-1}}^2 > \beta_{t-1},
\end{align}
where $\wt{\Theta}_{t-1} = \{ \theta' \in \R^d\ |\ a^\star_{\wh\theta_{t-1}}(X_t) \neq a^\star_{\theta'}(X_t) \}$. Once again, the infimum can be computed in closed form as before (without the minimum over contexts).
}

\subsection{Experiment Configurations}

We provide the detailed configurations of the experiments reported in the main paper. We use the same confidence intervals in all experiments. For \algo, we set $\beta_t = \sigma^2 (\log(t) + d\log\log(n))$ and $\gamma_t = \sigma^2 (\log(S_t) + d\log\log(n))$ as prescribed by Thm.~\ref{th:conf-theta} (without numerical constants). For OAM, we use the same $\beta_t$ for the exploitation test. For LinUCB, we use the confidence set of \citep{abbasi2011improved} without numerical constants. Similarly, we implement LinTS as defined in~\cite{agrawal2013thompson} but without the extra-sampling factor $\sqrt{d}$ used to prove its frequentist regret. All plots are the results of $100$ runs with $95\%$ Student's t confidence intervals. 

In both experiments, for \algo we set $\alpha_\omega=1$, $\alpha_\lambda = 0.5$, and we normalize the gradients by context in $l_2$-norm. We do not reset the optimizer at the beginning of each phase. We use the theoretical exponential schedule for $z_k$ and $p_k$ as defined in Thm.~\ref{thm:regret.bound}. We set $z_0 = 1$, $\lambda_1 = 0$ for the first experiment and $z_0 = |\A|$, $\lambda_1 = 50$ for the second one. The reward noise is $\sigma = 0.5$ in the first experiment and $\sigma = 1$ in the second one.

\paragraph{Generation of Random Problems}

We adopt the following procedure in order to generate the random bandit models for the second experiment. We first randomly sample a sparse $|\X||\A|\times d$ feature matrix and a sparse vector $\theta^\star$ with entries uniformly distributed in $[0,1]$. We then compute the resulting optimal arms for each context and check whether they span $\R^d$. If they do, we discard the generated features/parameter and repeat the previous procedure. Otherwise we keep the bandit problem. Discarding problems where the features of the optimal arms span $\R^d$ is done in order to avoid easy bandit problems in which exploration is not necessary (see \citep{hao2019adaptive})\footnote{Problems that can be solved by a greedy strategy would not reveal any interesting empirical difference between \algo and the other baselines.}.

\subsection{Parameter Analysis}

We provide an empirical study of how different choices for the relevant parameters of SOLID affect the algorithm's performance in the toy problem of Sec. \ref{sec:experiments}. We note that the purpose of this section is to build some intuition on how SOLID behaves with different parameters rather than assessing which configurations are globally better.

We use the two-context toy problem of Sec. \ref{sec:experiments} with $\xi = 0.1$ and $\sigma^2 = 1$. We study the effect of the following parameters, with corresponding default values.
\begin{itemize}
\item $z_0$ (default $30$): the initial normalization factor;
\item $\lambda_1$ (default $0$): the initial multiplier;
\item $\alpha^\omega$ (default $0.1$): learning rate for $\omega$. We keep it fixed instead of decreasing with the phase length as suggested by the theory;
\item $\alpha^\lambda$ (default $0.5$): learning rate for $\lambda$. We keep it fixed as for $\alpha^\omega$;
\item $z_k, p_k$ (default $z_k = z_0e^k$, $p_k = z_k e^{2k}$): the schedule for the phase length. We use the one for which we derive regret guarantees by default but we also experiment with other schedules. By default we do not reset the optimizer at the beginning of each phase.
\end{itemize}
We vary each parameter in a suitable range while keeping all the others fixed to their default values. The results are described in the following paragraphs.

\begin{figure*}[t]
\centering
\includegraphics[height=5cm]{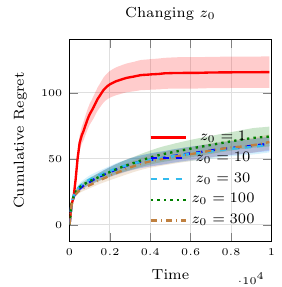}
\includegraphics[height=5cm]{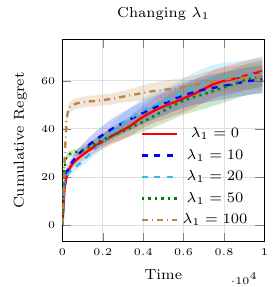}
\caption{The effect of changing $z_0$ (left) and $\lambda_1$ (right).}\label{fig:ablation-z-lambda}
\end{figure*}

\begin{figure*}[t]
\centering
\includegraphics[height=5cm]{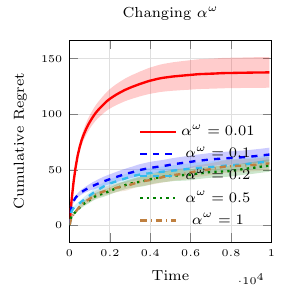}
\includegraphics[height=5cm]{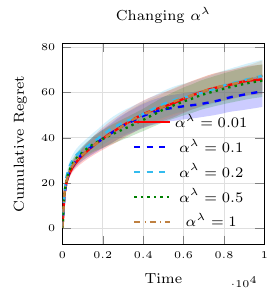}
\caption{The effect of changing $\alpha^\omega$ (left) and $\alpha^\lambda$ (right).}\label{fig:ablation-alpha}
\end{figure*}

\paragraph{Changing $z_0$} As mentioned in the main paper, the initial value of the parameter $z$ controls both the feasibility of the optimization problem and the trade-off between minimizing regret and gathering information about the optimal arms when $t$ is small. While a small value of $z_0$ might lead SOLID to collect a large amount of information, this might bring high finite regret as derived in the regret bound. Fig. \ref{fig:ablation-z-lambda}(left) confirms this claim, where the value $z_0=1$ suffers high initial regret but the resulting curve has a better slope.

\paragraph{Changing $\lambda_1$} Though the initial multiplier has no particular impact on the regret bound, in practice it induces a behavior similar to $z_0$, where larger values lead SOLID to collect more information about $\theta^\star$ in the very first learning steps (see Fig. \ref{fig:ablation-z-lambda}(right)).

\paragraph{Changing the step sizes} Fig. \ref{fig:ablation-alpha} shows the effect of varying $\alpha^\omega$ and $\alpha^\lambda$. In this particular case, $\alpha^\lambda$ seems to have no remarkable effect on SOLID's performance. On the other hand, the algorithm is quite sensible to the choice of $\alpha^\omega$, with very small values performing poorly since the policy is updated rarely and remains close to uniform for a long time. More aggressive step sizes seem to yield the best performance.

\begin{figure*}[t]
\centering
\includegraphics[height=4.5cm]{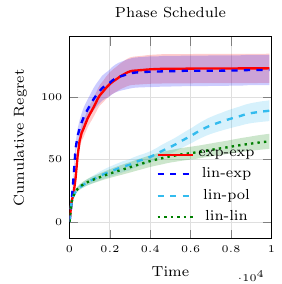}
\includegraphics[height=4.5cm]{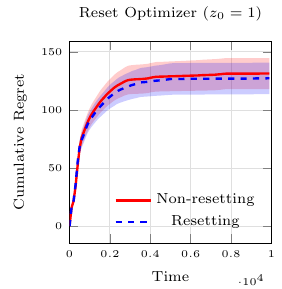}
\includegraphics[height=4.5cm]{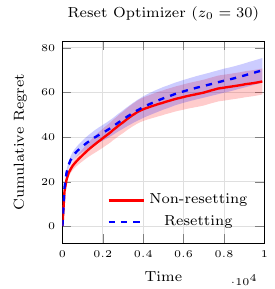}
\caption{Different phase schedules (left) and effect of resetting the optimizer (middle and right plots).}\label{fig:ablation-schedule}
\end{figure*}

\paragraph{Phase schedule} 

We test different schedules for $z_k$ and $p_k$ with respect to the one prescribed by the theory. We have $z_k = z_0 e^k, p_k=z_ke^{2k}$ (exp-exp), $z_k = z_0 (1 + k), p_k=z_ke^{k}$ (lin-exp), $z_k = z_0 (1+k), p_k=z_k(1+k)^2$ (lin-pol), and $z_k = z_0 (1+k), p_k=z_k(1+k)$ (lin-lin). Fig. \ref{fig:ablation-schedule}(left) shows the result (here we set $z_0=1$ to better highlight the contribution of the different schedules). The exponential schedules are as expected more conservative since the algorithm spends more time optimizing with small values of $z$ (i.e., seeks more information).  The linear and polynomial schedules behave, on the other hand, more greedily and suffer less regret, though the resulting curve has larger slope.

We also test the effect of resetting the optimizer (middle and right plots in Fig. \ref{fig:ablation-schedule}). We see that resetting the optimizer does not significantly affect the algorithm's performance both in case $z=1$ and $z=30$. This is likely due to the fact that phases are long (thanks to the exponential schedule) and that the algorithm spends many steps in the exploit phase, where no optimization is performed.

\begin{figure*}[t]
\centering
\includegraphics[height=5cm]{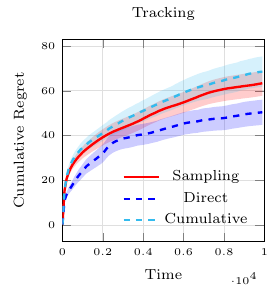}
\includegraphics[height=5cm]{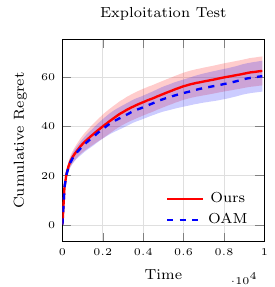}
\caption{Different tracking strategies (left) and comparison with the exploitation test used in OAM.}\label{fig:ablation-track-glrt}
\end{figure*}

\paragraph{Tracking} We compare the sampling strategy adopted by SOLID with the popular direct and cumulative tracking rules. Interestingly, Fig. \ref{fig:ablation-track-glrt}(left) shows that sampling from $\omega$ constitutes a nice trade-off between cumulative tracking and the more aggressive direct tracking. Note that, while our theoretical results can be easily derived for cumulative tracking, we do not know whether the same can be done for direct tracking.

\paragraph{Exploitation test}

We note that the test performed by \algo in order to decide whether to explore or exploit is slightly different from the one adopted in OAM. In fact, the closed-form of the infimum over the alternative set (Eq. \ref{eq:pdlin.glr}) leads to terms of the form $\Delta_{\wh{\theta}_t}(x,a)^2/\|\phi(x,a)-\phi(x)^\star\|_{\wb{V}_t^{-1}}^2$ while OAM uses $\Delta_{\wh{\theta}_t}(x,a)^2/\|\phi(x,a)\|_{\wb{V}_t^{-1}}^2$. We verify empirically (Fig. \ref{fig:ablation-track-glrt}(right)) that the two tests lead to very similar performance.

\subsection{Real Dataset}

We report additional results on real data. We use the Jester Dataset \citep{goldberg2001eigentaste} which consists of joke ratings in a continuous range from $-10$ to $10$ for a total of $100$ jokes and 73421 users. We select a subset of 40 jokes and 19181 users rating all these 40 jokes. 

We build a linear contextual problem as follows. We first extract separate $36$-dimensional user (context) and joke (arm) features via a low-rank matrix factorization. Then, we concatenate these user and joke features (thus obtaining vectors with $72$ entries) and fit a $64\times 64$ neural-network with ReLU non-linearities to predict the ratings of a random subset of $75\%$ of the users, using these feature vectors as inputs. We obtain $R^2 \simeq 0.95$ on the remaining $25\%$ users. Finally, we take the features extracted in the last layer of the network as the features for our bandit problem and the parameters of the same layer as $\theta^\star$. Rewards in our bandit problem are generated from this linear model by perturbing the prediction with $\mathcal{N}(0, 0.5^2)$ noise. We thus obtain a problem with $d=65$ (the $64$ hidden neurons plus the bias term), $40$ arms (the jokes), and a total of $19181$ users.

We run the algorithms for $2 \cdot 10^6$ steps, with each run randomizing a subset of $1\%$ of the total users (hence $|\X|$ = 191) and using all $40$ arms. For \algo, we use the same parameters as in the experiment with random models. Due to the computational bottleneck demonstrated in the previous experiments, we could not run OAM on this problem. The results are shown in Figure \ref{fig:jester} and confirm that \algo achieves superior performance than the other baselines.

\begin{figure*}[t]
\centering
\includegraphics[height=5cm]{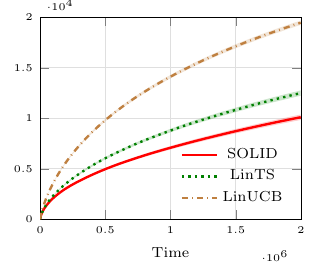}
\caption{Experiment on a real dataset (Jester).}\label{fig:jester}
\end{figure*}

\end{document}